\begin{document}

\newtheorem{restatetheoreminner}{\bf Theorem}
\newenvironment{restatetheorem}[1]{%
  \renewcommand\therestatetheoreminner{#1}%
  \restatetheoreminner
}{\endrestatetheoreminner}

\newtheorem{restatepropositioninner}{\bf Proposition}
\newenvironment{restateproposition}[1]{%
  \renewcommand\therestatepropositioninner{#1}%
  \restatepropositioninner
}{\endrestatepropositioninner}


\title{
A Unified Information-Theoretic Framework for Meta-Learning Generalization}

\author{\name Wen Wen \email wen190329@gmail.com \\
       \addr School of Computer Science of Technology\\
         Xi'an Jiaotong University\\
       Xi'an, Shanxi, China
       \AND
       \name Tieliang Gong \thanks{Corresponding author}  \email adidasgtl@gmail.com \\
       \addr School of Computer Science of Technology\\
         Xi'an Jiaotong University\\
       Xi'an, Shanxi, China
       \AND
       \name Yuxin Dong \email yxdong9805@gmail.com \\
       \addr School of Biomedical Informatics\\
         Ohio State University\\
       Columbus, OH 43210, USA
       \AND
         \name Zeyu Gao \email betpotti@gmail.com \\
       \addr Department of Oncology\\
         University of Cambridge\\
       Cambridge, CB2 1TN, UK
       \AND
       \name Yong-Jin Liu \email liuyongjin@tsinghua.edu.cn \\
       \addr Department of Computer Science and Technology\\
         Tsinghua University\\
       Beijing, China
       }

\editor{My editor}

\maketitle

\begin{abstract}
In recent years, information-theoretic generalization bounds have gained increasing attention for analyzing the generalization capabilities of meta-learning algorithms. However, existing results are confined to two-step bounds, failing to provide a sharper characterization of the meta-generalization gap that simultaneously accounts for environment-level and task-level dependencies. This paper addresses this fundamental limitation by developing a unified information-theoretic framework using a single-step derivation. The resulting meta-generalization bounds, expressed in terms of diverse information measures, exhibit substantial advantages over previous work, particularly in terms of tightness, scaling behavior associated with sampled tasks and samples per task, and computational tractability.  Furthermore, through gradient covariance analysis, we provide new theoretical insights into the generalization properties of two classes of noisy and iterative meta-learning algorithms, where the meta-learner uses either the entire meta-training data (e.g., Reptile), or separate training and test data within the task (e.g., model agnostic meta-learning (MAML)). Numerical results validate the effectiveness of the derived bounds in capturing the generalization dynamics of meta-learning.  
\end{abstract}

\begin{keywords}
  meta-learning, information theory, generalization analysis
\end{keywords}

\section{Introduction}

\label{intro}

Meta-learning, also known as learning to learn, has emerged as a prevalent paradigm for rapidly adapting to new tasks by leveraging prior knowledge extracted from multiple inherently relevant tasks \citep{hospedales2021meta,hu2023learning,lake2023human}. Concretely, the meta-learner has access to training data from tasks observed within a common task environment to learn meta-hyperparameters, which can then fine-tune task-specific parameters for improving performance on an unseen task. 
Significant efforts have recently been dedicated to optimizing their empirical behavior on the meta-training data, with the goal of achieving minimal meta-training error \citep{zhang2022distributed,abbas2022sharp,wang2023improving}. However, lower meta-training errors do not necessarily ensure stronger performance on previously unseen tasks. It is therefore essential to establish upper bounds on the meta-generalization gap—the difference between the population and empirical risks of the meta-hypothesis—to ensure reliable generalization performance of meta-learning.

Information-theoretic generalization bounds \citep{xu2017information,steinke2020reasoning,hellstrom2022new,wang2023tighter}, expressed by distribution- and algorithm-dependent information measures, have attracted widespread attention for their capability of precisely characterizing the generalization properties of learning algorithms. However, the multi-level dependencies among meta-parameters, task-specific parameters, and training samples hinder the direct extension of these bounds from single-task learning to meta-learning, and render classical theoretical tools ineffective in analyzing the interactions among multiple distributions. This has led to a proliferation of work aimed at upper-bounding the meta-generalization gap by decomposing it into task-level and environment-level error terms via the auxiliary loss, and then separately controlling each term within the conventional learning framework \citep{jose2021information,rezazadeh2021conditional,hellstrom2022evaluated}. Unfortunately, the resulting `two-step' bounds typically yield an undesirable scaling rate of $\mathcal{O}(\max\{\frac{1}{\sqrt{n}}, \frac{1}{\sqrt{m}}\})$ in terms of the number $n$ of tasks and the number $m$ of samples per task, while relying on a strict assumption that the meta-parameters and task-specific parameters are mutually independent. Recent substantial advancements have been made by \citep{chen2021generalization, bu2023generalization}, who immediately bound the original meta-generalization gap through the mutual information (MI) between the meta-training data and the task-specific as well as the meta parameters. While such `single-step' bounds achieve an improved scaling rate of $\mathcal{O}(\frac{1}{\sqrt{nm}})$, they also implicitly assume that the meta-parameter distribution is independent of the sample distribution, which is relatively stringent and unrealistic in practice. Building on the conditional mutual information (CMI) framework \citep{steinke2020reasoning}, \cite{hellstrom2022evaluated} establish improved upper bounds by introducing meta-supersample variables, thereby circumventing the assumption of distributional independence.  Notably, all the aforementioned work can be further enhanced and refined within the proposed information-theoretical framework.

In this paper, we provide a unified information-theoretic generalization analysis for meta-learning by leveraging the multi-distribution extension of the Donsker-Varadhan formula and introducing a novel meta-supersample regime with fewer supersample variables, along with a comparison to existing results presented in Table \ref{table1}. The main contributions are summarized as follows:

\begin{table*}[t]
  \centering
  \vskip 0.15in
  \linespread{1.2}
  \begin{small}
      \resizebox{\textwidth}{!}{
  \begin{tabular}{ccccr}
  \toprule
  Information Measure & Related Work   & Analysis Path & Convergence Rate\\
  \midrule
  \multirow{3}{*}{Input-output MI} & \citet{jose2021information} &  Two-step derivation & $\mathcal{O}\big(\frac{1}{\sqrt{n}}+\frac{1}{\sqrt{m}}\big) $ \\
  & \citet{chen2021generalization} & \multirow{2}{*}{Single-step derivation} & \multirow{2}{*}{$\mathcal{O}\big(\frac{1}{\sqrt{nm}}\big) $} \\
  & \citet{bu2023generalization} &  &  \\
 \textbf{Samplewise Input-output MI} & \textbf{Ours (Thm. \ref{theorem3.1})} & \textbf{\makecell[c]{Single-step derivation\\ Random subset}} & $\mathcal{O}\big(\mathbf{\frac{1}{\sqrt{nm}}}\big) $ \\
  \cdashline{1-4} \\
  \multirow{4}{*}{CMI} & \citet{rezazadeh2021conditional} & \makecell[c]{Two-step derivation\\ $(n+nm)$ sample marks} & $\mathcal{O}(\frac{1}{\sqrt{n}}+ \frac{1}{\sqrt{nm}}) $ \\
  & \textbf{Ours (Thm. \ref{theorem3.5})} & \multirow{3}{*}{\textbf{\makecell[c]{Single-step derivation\\ Random subset\\ $\mathbf{(n+m)}$ sample marks}}}  & $\mathcal{O}(\mathbf{\frac{1}{\sqrt{nm}}}) $ \\
  & \textbf{Ours (Thm. \ref{theorem3.6})} &  & $\ast\mathcal{O}(\mathbf{\frac{1}{nm}}) $ \\
  & \textbf{Ours (Thm. \ref{theorem3.8})} &  & $\star\mathcal{O}(\mathbf{\frac{1}{nm}}) $ \\
   \cdashline{1-4} \\
  \multirow{3}{*}{e-CMI} &  \citet{hellstrom2022evaluated} & \makecell[c]{Single-step derivation\\ $(n+nm)$ sample marks} &  $\mathcal{O}(\frac{1}{\sqrt{nm}}) $  \\
  & \textbf{Ours (Thm. \ref{theorem4})} & \multirow{2}{*}{\textbf{\makecell[c]{Single-step derivation\\ $\mathbf{(n+m)}$ sample marks}}}  & $\mathcal{O}(\frac{1}{\sqrt{nm}}) $  \\
  & \textbf{Ours (Thm. \ref{theorem6})} &  & $\ast\mathcal{O}(\mathbf{\frac{1}{nm}}) $  \\
  \cdashline{1-4} \\
 \textbf{Loss difference-based MI/CMI} & \textbf{Ours (Thms. \ref{theorem7}, \ref{theorem8})}   &  \multirow{2}{*}{\textbf{\makecell[c]{Single-step derivation\\ $\mathbf{(n+m)}$ sample marks}}} & $\mathcal{O}(\frac{1}{\sqrt{nm}}) $ \\
 \textbf{Single-loss MI} &  \textbf{Ours (Thms. \ref{theorem10}, \ref{theorem12})}  & & $ \star\mathcal{O}(\mathbf{\frac{1}{nm}})$ \\
  \bottomrule
  \end{tabular}}
  \label{table1}
  \caption{Existing information-theoretic generalization bounds for meta-learning (MI-Mutual Information; CMI-Conditional Mutual Information; $\ast$-binary KL divergence bound; $\star$-fast-rate bound).}
\end{small}
  \vskip -0.1in
  \end{table*}

\begin{itemize}
    \item We establish refined meta-generalization bounds through a single-step derivation, without dependence on distributional independence. Our bounds, expressed by the samplewise MI that simultaneously integrate a meta-parameter, task-specific parameter subsets, and sample subsets, not only tighten existing bounds but also yield a favorable scaling rate of $\mathcal{O}(\frac{1}{\sqrt{nm}})$, where $n$ and $m$ represent the number of tasks and the number of samples per task, respectively. Additionally, we derive more precise CMI bounds that involve fewer supersample variables compared to previous work under the novel meta-supersample setting.
    
    \item We develop novel loss difference-based meta-generalization bounds through both unconditional and conditional information measures. The derived bounds exclusively involve two one-dimensional variables, making them computationally tractable and tighter than previous MI- and CMI-based meta-generalization bounds. Furthermore, we derive a novel fast-rate bound by employing the weighted generalization error, achieving a faster convergence order of $\mathcal{O}(\frac{1}{nm})$ in the interpolating regime. This fast-rate result is further generalized to non-interpolating settings through the development of variance-based bounds. 
    \item Our theoretical framework generally applies to a broad range of meta-learning paradigms, including those using the entire meta-training data as well as those employing within-task train-test partitions. In particular, we derive enhanced algorithm-dependent bounds for both learning paradigms using the conditional gradient variance, exhibiting substantial advancement over bounds based on the gradient norm and gradient variance. Our results provide novel theoretical insights into the learning dynamics of noisy, iterative-based meta-learning algorithms.
    \item Numerical results on synthetic and real-world datasets demonstrate the closeness between the generalization error and the derived bounds.
\end{itemize}

\section{Related Work}
\subsection{Learning Theory for Meta-learning} Theoretical analysis of meta-learning traces back to the seminal work of \citep{baxter2000model}, which formally introduces the notion of the task environment and establishes uniform convergence bounds via the lens of covering numbers. Subsequent research has enriched the generalization guarantees of meta-learning through diverse learning theoretic techniques, including uniform convergence analysis associated with hypothesis space capacity \citep{tripuraneni2021provable,guan2022task,aliakbarpour2024metalearning}, algorithmic stability analysis \citep{farid2021generalization,fallah2021generalization,chen2024stability}, PAC-Bayes framework \citep{rothfuss2021pacoh,rezazadeh2022unified,zakerinia2024more}, information-theoretic analysis \citep{jose2021information,chen2021generalization,rezazadeh2021conditional,hellstrom2022evaluated,bu2023generalization}, etc. Another line of work has investigated the generalization properties from the perspective of task similarity, without relying on task distribution assumptions \citep{tripuraneni2020theory,guan2022task}.

\subsection{Information-theoretic Bounds}  Information-theoretic metrics are first introduced in \citep{xu2017information,russo2019much} to characterize the average generalization error of learning algorithms in terms of the mutual information between the output hypothesis and the input data. This approach has shown a significant advantage in depicting the dynamics of iterative and noisy learning algorithms, exemplified by its application in SGLD \citep{negrea2019information,wang2021analyzing} and SGD \citep{neu2021information,wang2021generalization}. This framework has been subsequently expanded and enhanced through diverse techniques, including conditioning \citep{hafez2020conditioning}, the random subsets \citep{bu2020tightening,rodriguez2021random}, and conditional information measures \citep{steinke2020reasoning,haghifam2020sharpened}. A remarkable advancement is made by \citep{harutyunyan2021information}, who establish generalization bounds in terms of the CMI between the output hypothesis and supersample variables, leading to a substantial reduction in the dimensionality of random variables involved in the bounds and achieving the computational feasibility. Follow-up work \citep{hellstrom2022new,wang2023tighter} further improves this methodology by integrating the loss pairs and loss differences, thereby yielding tighter generalization bounds.

\section{Preliminaries}
\paragraph{Basic Notations.}  We denote random variables by capital letters (e.g., $X$), their specific values by lowercase letters (e.g., $x$), and the corresponding domains by calligraphic letters (e.g., $\mathcal{X}$). Let $P_{X,Y}$ denote the joint distribution of variable $(X,Y)$, $P_{X}$ denote the marginal probability distribution of $X$, and $P_{X|Y}$ be the conditional distribution of $X$ given $Y$, where $P_{X|y}$ denotes the one conditioning on a sepcific value $Y=y$. Similarly, denote by $\mathbb{E}_X$, $\mathrm{Var}_X$, and $\mathrm{Cov}_X$ the expectation, variance, and covariance matrix taken over $X\sim P_X$. Given probability measures $P$ and $Q$, we define the Kullback-Leibler (KL) divergence of $P$ w.r.t $Q$ as $D(P\Vert Q) = \int \log\frac{dP}{dQ} dP$. For two Bernoulli distributions with parameters $p$ and $q$, we refer to $d(p\Vert q)=p\log(\frac{p}{q}) + (1-p)\log(\frac{1-p}{1-q})$ as the binary KL divergence. Let $I(X;Y)$ be the MI between variables $X$ and $Y$, and $I(X;Y|Z) = \mathbb{E}_Z[I^Z(X;Y)] $ be the CMI conditioned on $Z$, where $I^Z(X;Y)=D(P_{X,Y|Z}\Vert P_{X|Z}P_{Y|Z})$ denotes the disintegrated MI. Let $\log$ be the logarithmic function with base $e$ and $\mathbf{1}_d$ denote a $d$-dimensional vector of ones.

\paragraph{Meta-learning.} Meta-learning aims to automatically infer an output hypothesis $U\in\mathcal{U}$ from data across multiple related tasks, enabling rapid and efficient adaptation to novel, previously unseen tasks \citep{chen2021generalization,rezazadeh2021conditional,hellstrom2022evaluated}, where $\mathcal{U}$ represents the parametrized hypothesis space. Consider a common task environment $\mathcal{T}$ defined by a probability distribution $P_{\mathcal{T}}$. Assume that we have $n$ different observation tasks, denoted as $\tau_{\mathbb{N}}=(\tau_1,\ldots,\tau_n)$, independently drawn from the distribution $P_{\mathcal{T}}$. For each task $\tau_i$, where $i\in[n]$, further draw $m$ i.i.d. training samples from a data-generation distribution $P_{Z|\tau_i}$ over the sample space $\mathcal{Z}$ associated with task $\tau_i$, denoted as $T_{\mathbb{M}}^i=\{Z^i_j\}^m_{j=1}$. The complete meta-training dataset consisting of $n$ tasks with $m$ in-task samples is then expressed by $T_{\mathbb{M}}^{\mathbb{N}}=\{T_{\mathbb{M}}^i\}^n_{i=1}$. A widely adopted approach for learning an ideal meta-parameter $U$ is to minimize the overall empirical losses of the task-specific parameters $W_{\mathbb{N}}=(W_1,\ldots,W_n)$ adjusted by $U$ on the dataset $T_{\mathbb{M}}^{\mathbb{N}}$, known as the empirical meta-risk:
\begin{equation*}
    \mathcal{R}(U, T_{\mathbb{M}}^{\mathbb{N}}) = \frac{1}{n}\sum_{i=1}^n \mathbb{E}_{W_i\sim P_{W_i|U, T_{\mathbb{M}}^i}} \Big[  \frac{1}{m} \sum_{j=1}^m \ell(U, W_i,Z^i_{j}) \Big],
\end{equation*}
where the expectation is taken over each task-specific parameter $W_i\in \mathcal{W}$ tuned via $U$ and $T_{\mathbb{M}}^i$ in the parameter space $\mathcal{W}$, and $\ell:\mathcal{U}\times\mathcal{W}\times \mathcal{Z}\rightarrow \mathbb{R}^+$ is a given loss function. The corresponding population meta-risk is defined by 
\begin{equation*}
    \mathcal{R}(U,\mathcal{T}) = \mathbb{E}_{\tau \sim P_{\mathcal{T}}}\mathbb{E}_{T_{\mathbb{M}} \sim P_{Z^{\otimes m}|\tau}}\mathbb{E}_{P_{W|T_{\mathbb{M}},U}} \big[ \mathbb{E}_{Z\sim P_{Z|\tau}} \ell(U,W,Z) \big],
\end{equation*}
which measures the prediction performance on unseen samples $Z \sim P_{Z|\tau}$ after leveraging the meta-parameters $U$ to fine-tune the task-specific parameters $W$ for adapting to a new task $\tau$, thereby evaluating the generalization capability of $U$ for novel tasks.

We define the meta-generalization gap by $\overline{\mathrm{gen}} = \mathbb{E}_{U, T_{\mathbb{M}}^{\mathbb{N}}} [\mathcal{R}(U,\mathcal{T}) - \mathcal{R}(U, T_{\mathbb{M}}^{\mathbb{N}})]$, 
which quantifies the discrepancy between the empirical and population meta-risks. To simplify the notation, let  $\mathcal{R}_{\mathcal{T}} :=\mathbb{E}_{U, T_{\mathbb{M}}^{\mathbb{N}}} \big[ \mathcal{R}(U,\mathcal{T})]$ and $\hat{\mathcal{R}} := \mathbb{E}_{U, T_{\mathbb{M}}^{\mathbb{N}}} \big[ \mathcal{R}(U, T_{\mathbb{M}}^{\mathbb{N}})]$.

\begin{figure*}[t]
    \vskip 0.2in
    \centering
      \includegraphics[width=1\linewidth]{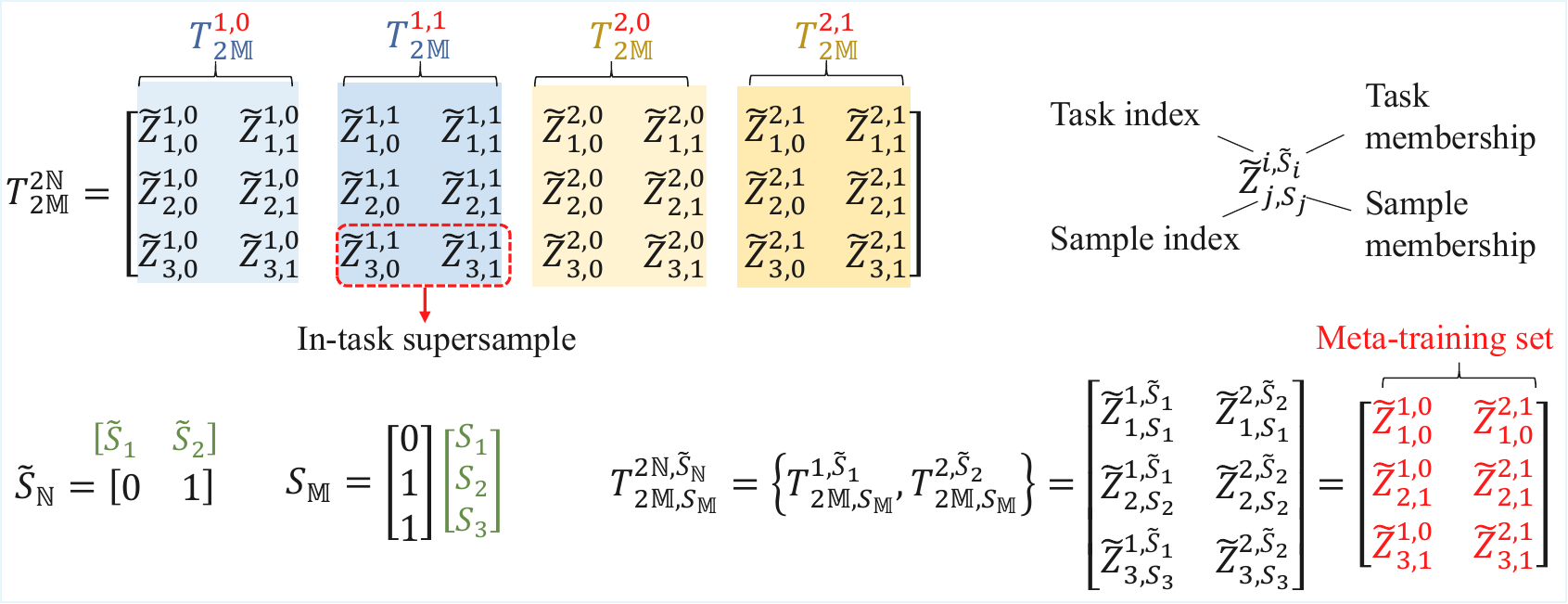}
    \caption{A graphical representation of the notation system. In this example, we chose $n=2$, i.e., $4$ meta tasks, and $m=3$, i.e., $6$ data samples per task. Here, the meta-supersample $T^{2\mathbb{N}}_{2\mathbb{M}}$ involves task pairs $T^{2\mathbb{N}}_{2\mathbb{M}}=\{T_{2\mathbb{M}}^{i,0},T_{2\mathbb{M}}^{i,1} \}_{i=1}^2$, where each task pair is marked in blue and yellow respectively. Each individual task then contains three sample pairs, e.g., the third sample pair $(\tilde{Z}^{1,1}_{3,0},\tilde{Z}^{1,1}_{3,1})$ in the task $T^{1,1}_{2\mathbb{M}}$ marked by the red dotted box. Consequently, each sample $\tilde{Z}^{i,\tilde{S}_i}_{j,S_j}$ can be identified by task index $i$, task membership variable $\tilde{S}_i$, sample index $j$, and sample membership variable $S_j$, for $i=1,2$, $j=1,2,3$, and $\tilde{S}_i,S_j\sim \mathrm{Unif}(0,1)$. Once $\tilde{S}_{\mathbb{N}}=\{\tilde{S}_i\}_{i=1}^2$ and $S_\mathbb{M}=\{S_j\}_{j=1}^3$ are determined, samples can be enumerated from the meta-supersample $T^{2\mathbb{N}}_{2\mathbb{M}}$ according to these vectors to construct meta-training and meta-test datasets. To illustrate this, let $\tilde{S}_{\mathbb{N}}=\{\tilde{S}_1,\tilde{S}_2\}=\{0,1\}$ and $S_{\mathbb{M}}=\{S_1,S_2,S_3\}=\{0,1,1\}$. From the first task-pair, $\tilde{S}_1=0$ indicates that we should select task $0$, i.e., $T^{1,0}_{2\mathbb{M}}$. Then, from the first sample pair in the task, $S_1 = 0$ indicates that we should select the sample $0$, i.e., $\tilde{Z}^{1,0}_{1,0}$. Repeat this process until all samples of the meta-training set $T^{2\mathbb{N},\tilde{S}_{\mathbb{N}}}_{2\mathbb{M},S_{\mathbb{M}}}$ are identified, which are marked in red. The meta-test dateset $T^{2\mathbb{N}, \bar{\tilde{S}}_{\mathbb{N}}}_{2\mathbb{M},\bar{S}_{\mathbb{M}}} =\{\tilde{Z}^{i,\bar{\tilde{S}}_i}_{j,\bar{S}_j}\}_{i,j=1}^{n,m}$ is constructed by an analogous procedure, which is based on $\bar{\tilde{S}}_{\mathbb{N}}=\{1-\tilde{S}_i\}_{i=1}^2$ and $\bar{S}_{\mathbb{M}}=\{1-S_j\}_{j=1}^3$.
  }
  \vskip -0.2in
    \label{fig11} 
  \end{figure*}

\paragraph{CMI-based Framework.} The CMI framework is originally investigated in \citep{steinke2020reasoning} by introducing the supersample setting for generalization analysis. Let $T^{2\mathbb{N}}_{2\mathbb{M}}=\{T^{i,0}_{2\mathbb{M}},T^{i,1}_{2\mathbb{M}}\}_{i=1}^n$ be the meta-supersample dataset across $n\times 2$ different tasks, where $T^{i,0}_{2\mathbb{M}} = \{(\tilde{Z}^{i,0}_{j,0}, \tilde{Z}^{i,0}_{j,1})\}_{j=1}^m, T^{i,1}_{2\mathbb{M}} = \{(\tilde{Z}^{i,1}_{j,0}, \tilde{Z}^{i,1}_{j,1})\}_{j=1}^m \in\mathcal{Z}^{m\times 2}$ consist of $m\times 2$ i.i.d. samples. Further let $\tilde{S}_{\mathbb{N}}=\{\tilde{S}_i\}_{i=1}^n \sim \mathrm{Unif}(\{0,1\}^n)$ and $S_{\mathbb{M}}=\{S_j\}_{j=1}^m \sim \mathrm{Unif}(\{0,1\}^m)$ be the membership vectors independent of $T^{2\mathbb{N}}_{2\mathbb{M}}$, and $\bar{\tilde{S}}_{\mathbb{N}}=\{1-\tilde{S}_i\}_{i=1}^n$ and $\bar{S}_{\mathbb{M}}=\{1-S_j\}_{j=1}^m$ be the modulo-2 complement of $\tilde{S}_{\mathbb{N}}$ and $S_{\mathbb{M}}$, respectively. We utilize the variables $(\tilde{S}_{\mathbb{N}},S_{\mathbb{M}})$ and $(\bar{\tilde{S}}_{\mathbb{N}},\bar{S}_{\mathbb{M}})$ to separate $T^{2\mathbb{N}}_{2\mathbb{M}}$ into the meta-training dataset $T^{2\mathbb{N},\tilde{S}_{\mathbb{N}}}_{2\mathbb{M},S_{\mathbb{M}}}=\{\tilde{Z}^{i,\tilde{S}_i}_{j,S_j}\}_{i,j=1}^{n,m}$ and the meta-test dataset $T^{2\mathbb{N}, \bar{\tilde{S}}_{\mathbb{N}}}_{2\mathbb{M},\bar{S}_{\mathbb{M}}} =\{\tilde{Z}^{i,\bar{\tilde{S}}_i}_{j,\bar{S}_j}\}_{i,j=1}^{n,m}$. Analogously, denote by $T^{2\mathbb{N},\bar{\tilde{S}}_{\mathbb{N}}}_{2\mathbb{M},S_{\mathbb{M}}}$ the training dataset for the meta-test task and $T^{2\mathbb{N},\tilde{S}_{\mathbb{N}}}_{2\mathbb{M},\bar{S}_{\mathbb{M}}}$ the test dataset for the meta-training tasks. In Figure \ref{fig11}, we illustrate $T^{2\mathbb{N}}_{2\mathbb{M}}$ as an $m \times 4n$ matrix for the case of $n=2$ task pairs and $m=3$ sample pairs for each task, and show the process of how to select the corresponding meta-training set $T^{2\mathbb{N},\tilde{S}_{\mathbb{N}}}_{2\mathbb{M},S_{\mathbb{M}}}$ from $T^{2\mathbb{N}}_{2\mathbb{M}}$ using the membership vectors $\tilde{S}_{\mathbb{N}} $ and $S_{\mathbb{M}}$.

Let $L^{i,\tilde{S}_i}_{j,S_j} \triangleq  \ell(U,W_i,\tilde{Z}^{i,\tilde{S}_i}_{j,S_j})$ and $L^{i,\bar{\tilde{S}}_i}_{j,\bar{S}_j} \triangleq  \ell(U,W_i,\tilde{Z}^{i,\bar{\tilde{S}}_i}_{j,\bar{S}_j})$ represent the samplewise losses on the meta-training and meta-test data, respectively. Define $\Psi = \{\Psi_{i,j} = \tilde{S}_i\oplus S_j\}_{i,j=1}^{n,m}$ and $\tilde{\Psi} = \{\tilde{\Psi}_{i,j} = 1 \oplus \Psi_{i,j}\}_{i,j=1}^{n,m}$, where $\oplus$ is the XOR operation. To simplify the notations, we denote $L_{j,1}^{i,\tilde{\Psi}_{i,j}}$ and $L_{j,0}^{i,\tilde{\Psi}_{i,j}}$ as $L_{i,j}^{\Psi_+}$ and $L_{i,j}^{\Psi_-}$, respectively. $L^{\Psi}_{i,j}=\{L_{i,j}^{\Psi_+} , L_{i,j}^{\Psi_-} \}$ then represents a pair of losses, and $\Delta^{\Psi}_{i,j} = L_{i,j}^{\Psi_+} - L_{i,j}^{\Psi_-}$ is their difference. We define the loss pairs on the full meta-supersample set as $L_{2\mathbb{M}}^{2\mathbb{N}}=\{L^{i}_{j}= (L^{i,0}_{j,0},L^{i,1}_{j,1},L^{i,1}_{j,0},L^{i,0}_{j,1})\}_{i,j=1}^{n,m}$.

\section{Information-theoretic Generalization Bounds for Meta-learning} \label{Section3}
In this section, we provide a unified analytical framework for meta-learning by establishing multiple information-theoretic bounds, with the aim of enhancing tightness, improving convergence rates, and ensuring computational tractability. We start by enhancing MI bounds presented in \citep{chen2021generalization,bu2023generalization}, incorporating the random subsets of tasks and within-task samples. Subsequently, we establish a series of novel CMI-based upper bounds under the improved meta-supersample setting, thereby tightening existing results and achieving computational tractability \citep{rezazadeh2021conditional,hellstrom2022evaluated}. Finally, we provide fast-rate bounds through the lens of the weighted generalization error and loss variance. The proofs of all theoretical results are provided in the Appendices.

\subsection{Generalization Bounds with Input-output MI} \label{Section3.1}
Theorem \ref{theorem3.1} below provides improved input-output MI bounds through the information contained in randomly selected subsets of tasks and within-task samples:

\begin{theorem} \label{theorem3.1}
    Let $\mathbb{K}$ and $\mathbb{J}$ be random subsets of $[n]$ and $[m]$ with sizes $\zeta$ and $\xi$, respectively, independent of $T^{\mathbb{N}}_{\mathbb{M}}$ and $(U,W_{\mathbb{N}})$. Assume that $\ell(u,w,Z)$ is $\sigma$-sub-gaussian with respect to $Z\sim P_{Z|\tau}$ and $\tau\sim P_{\mathcal{T}}$ for all $u,w$, then
    \begin{equation*}
       \vert \overline{\mathrm{gen}} \vert \leq \mathbb{E}_{K\sim \mathbb{K}, J\sim\mathbb{J}} \sqrt{\frac{2\sigma^2}{\zeta \xi} I(U,W_K;T^{K}_{J})}.  
    \end{equation*}
\end{theorem}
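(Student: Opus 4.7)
The plan is to extend the single-step mutual-information argument of \citet{chen2021generalization} and \citet{bu2023generalization} by incorporating the random-subset tightening technique of \citet{bu2020tightening} at both the task and within-task levels. First I would use the exchangeability of the $n$ tasks and of the $m$ within-task samples to express the gap as an expectation over random subsets:
\[
\overline{\mathrm{gen}} = \mathbb{E}_{K\sim\mathbb{K},\, J\sim\mathbb{J}}\!\left[\mathbb{E}_{\widetilde{P}_{K,J}}\!\bigl[\bar\ell_{K,J}\bigr] - \mathbb{E}_{P_{K,J}}\!\bigl[\bar\ell_{K,J}\bigr]\right],
\]
where $\bar\ell_{K,J}(U,W_K,T^K_J) = \frac{1}{\zeta\xi}\sum_{i\in K, j\in J}\ell(U,W_i,Z^i_j)$, $P_{K,J} = P_{U,W_K,T^K_J}$ is the joint distribution restricted to the chosen subsets, and $\widetilde{P}_{K,J} = P_{U,W_K}\otimes P_{T^K_J}$ is the decoupled product reference under which $T^K_J$ consists of $\zeta\xi$ i.i.d.\ samples from the marginal $\mu = \int P_{Z\mid\tau}\,dP_{\mathcal{T}}(\tau)$ and is independent of $(U,W_K)$. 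The task- and sample-level i.i.d.\ structure ensures that the expectation of $\bar\ell_{K,J}$ under $\widetilde{P}_{K,J}$ coincides with $\mathcal{R}_{\mathcal{T}}$, while its expectation under $P_{K,J}$ equals $\hat{\mathcal{R}}$.

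Second, for each realization of $(K,J)$ I would invoke the Donsker-Varadhan variational formula together with the $\sigma$-sub-gaussianity hypothesis. Under $\widetilde{P}_{K,J}$, the $\zeta\xi$ samples in $T^K_J$ are i.i.d.\ from $\mu$; conditionally on $(U,W_K)=(u,w_K)$, each summand $\ell(u,w_i,Z^i_j)$ is $\sigma$-sub-gaussian by assumption, so the centered average $\bar\ell_{K,J}(u,w_K,\cdot) - \mathbb{E}_{\widetilde{P}}[\bar\ell_{K,J}(u,w_K,\cdot)]$ is $\sigma/\sqrt{\zeta\xi}$-sub-gaussian. Combining the resulting MGF bound with the Donsker-Varadhan inequality and optimizing the free parameter yields the subset-wise estimate
\[
\bigl|\mathbb{E}_{P_{K,J}}[\bar\ell_{K,J}] - \mathbb{E}_{\widetilde{P}_{K,J}}[\bar\ell_{K,J}]\bigr| \le \sqrt{\frac{2\sigma^2}{\zeta\xi}\,I(U,W_K;T^K_J)}.
\]
Taking expectation over $K,J$ and applying $|\mathbb{E}[\cdot]| \le \mathbb{E}[|\cdot|]$ produces the claimed bound.

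The main obstacle lies in the first step: rigorously justifying that $\mathbb{E}_{\widetilde{P}_{K,J}}[\bar\ell_{K,J}] = \mathcal{R}_{\mathcal{T}}$ requires a careful exchangeability argument that exploits the permutation-invariance of the meta-learner in the task indices together with the fact that the marginal distribution of any meta-training sample $Z^i_j$ is exactly $\mu$, the distribution under which the sub-gaussianity hypothesis is stated. This is precisely why the assumption is formulated with respect to $Z\sim P_{Z\mid\tau}$ jointly with $\tau\sim P_{\mathcal{T}}$ rather than conditionally on a fixed task. Once this identification is in place, the remaining steps are standard applications of the DV-based concentration machinery, tightened through the random-subset averaging to produce the samplewise MI formulation appearing in the theorem.
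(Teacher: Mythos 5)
Your second step (Donsker--Varadhan plus the sub-gaussian MGF bound, then optimizing the free parameter and averaging over $K,J$) is the same machinery the paper uses. The genuine gap is in your first step: with the fully decoupled reference $\widetilde{P}_{K,J}=P_{U,W_K}\otimes P_{T^K_J}$, the identity $\mathbb{E}_{\widetilde{P}_{K,J}}[\bar\ell_{K,J}]=\mathcal{R}_{\mathcal{T}}$ is false, and no exchangeability argument can rescue it. In the population meta-risk, the task-specific parameter $W$ is adapted on training data drawn from the \emph{same} new task $\tau$ from which the test point $Z\sim P_{Z|\tau}$ is then drawn; this coupling between $W$ and the test task is the whole point of meta-learning. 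Under your product measure, $T^K_J$ is resampled independently of $(U,W_K)$, so each $Z^i_j$ comes from a task that is independent of the task on which $W_i$ was adapted; the resulting expectation is the risk of already-adapted parameters on an unrelated environment draw, which can differ arbitrarily from $\mathcal{R}_{\mathcal{T}}$ (think of tasks that each require memorizing a task-specific constant). Your closing remark about the marginal $\mu$ also does not hold as stated: the marginal of $T^K_J$ is not $\zeta\xi$ i.i.d.\ draws from $\mu$, since the $\xi$ samples within a task share the latent $\tau_i$ and are only conditionally i.i.d., which additionally undermines the $\sigma/\sqrt{\zeta\xi}$ sub-gaussianity claim for the average under your reference measure.

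The paper avoids this by choosing a \emph{partially} decoupled reference: it keeps $(\tau_K,T^K_J)$ under their joint law and keeps $W_K\sim P_{W_K|U,T^K_J}$ adapted to the original data, but replaces $U$ by an independent copy $\bar U$ and draws fresh within-task data $\bar T^K_J\sim P_{\bar T^K_J|\tau_K}$ conditioned on the \emph{same} tasks, evaluating $f(\bar U,W_K,\bar T^K_J)$. A relabeling/symmetry computation (the long chain of equalities in the paper's proof) then shows the reference-measure expectation equals $\mathbb{E}_{U,T^{\mathbb{N}}_{\mathbb{M}}}[\mathcal{R}(U,\mathcal{T})]$, and the corresponding KL divergence is $I(U;T^K_J)+I(W_K;T^K_J\mid\tau_K,U)$, which is subsequently upper-bounded by $I(U,W_K;T^K_J)$ to obtain the stated bound. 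So the MI term in the theorem arises as an upper bound on a smaller, structure-respecting KL, not as the KL of the product-of-marginals decoupling you propose; to repair your argument you would need to adopt a reference measure of this partially decoupled form rather than $P_{U,W_K}\otimes P_{T^K_J}$.
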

Theorem \ref{theorem3.1} corroborates a key insight from \citep{xu2017information}: the less the output hypotheses depend on the input data, the better the learning algorithm generalizes. Notice that when $\zeta = n$ and $\xi = m$, the derived bound regarding the entire meta-training dataset includes the result of \citep{chen2021generalization,bu2023generalization} as a special case: $\vert \overline{\mathrm{gen}} \vert \leq \sqrt{2\sigma^2 I(U,W_{\mathbb{N}}; T^{\mathbb{N}}_{\mathbb{M}})/nm}$. However, such an upper bound reflects an `on-average' stability property \citep{shalev2010learnability}, potentially leading to meaningless results when the output algorithm is deterministic, namely, $I(U,W_{\mathbb{N}}; T^{\mathbb{N}}_{\mathbb{M}}) = \infty$ for unique minimizer $(U,W_{\mathbb{N}})$ of the empirical meta-risk. Substituting $\zeta = \xi= 1$ yields the following samplewise MI bound: $$\vert \overline{\mathrm{gen}} \vert \leq \frac{1}{nm}\sum_{i,j=1}^{n,m}\sqrt{2\sigma^2 I(U, W_i; Z^i_{j})},$$ which leads to the `point-wise' stability \citep{raginsky2016information} and thus tighten existing bound \citep{jose2021information,chen2021generalization}.


It is worth noting that the optimal choice of $\zeta$ and $\xi $ values for rigid bounds is not straightforward, as smaller values reduce both the denominator and the MI term. The following proposition demonstrates that the upper bound in Theorem \ref{theorem3.1} is non-decreasing with respect to both $\zeta $ and $\xi$, implying that the smallest $\zeta $ and $\xi$, namely $\zeta =\xi=1$, yield the tightest bound.

\begin{proposition}\label{proposition1}
    Let $\zeta\in[n-1]$, $\xi \in[m-1]$, and $\mathbb{K}$ and $\mathbb{J}$ be random subsets of $[n]$ and $[m]$ with sizes $\zeta$ and $\xi$, respectively. Further, let $\mathbb{K}'$ and $\mathbb{J}'$ be random subsets with sizes $\zeta+1$ and $\xi+1$, respectively. If $g:\mathbb{R}\rightarrow\mathbb{R}$ is any non-decreasing concave function, then 
    \begin{align*}
         \mathbb{E}_{K\sim \mathbb{K}, J \sim \mathbb{J}} g\left(\frac{1}{\zeta \xi} I(U,W_{K}; T^{K}_{J}) \right) 
        \leq \mathbb{E}_{K' \sim \mathbb{K}', J' \sim \mathbb{J}'} g\left(\frac{1}{(\zeta+1)(\xi+1)} I(U,W_{K'}; T^{K'}_{J'}) \right).
    \end{align*}
    
\end{proposition}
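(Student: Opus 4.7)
The plan is to reduce the joint monotonicity in $(\zeta,\xi)$ to two axis-wise monotonicities via a coupling of the random subsets, then use Jensen's inequality and the data-processing inequality to strip away the concave function $g$ and the $W_K$-versus-$W_{K'}$ distinction, and finally invoke a Han-type information inequality separately on the task and sample axes. First, I would couple $K\subset K'$ by drawing $K'\sim\mathbb{K}'$ and then $K$ uniformly over the size-$\zeta$ subsets of $K'$, so that $K\sim\mathbb{K}$ marginally; a symmetric coupling is used for $J\subset J'$. This gives the factorization $\mathbb{E}_{K,J}=\mathbb{E}_{K',J'}\,\mathbb{E}_{K\mid K',\,J\mid J'}$.

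For each fixed $(K',J')$, Jensen's inequality applied to the concave $g$ gives
\[
\mathbb{E}_{K\mid K',\,J\mid J'}\,g\!\left(\tfrac{1}{\zeta\xi}I(U,W_K;T^K_J)\right)\;\le\;g\!\left(\tfrac{1}{\zeta\xi}\,\mathbb{E}_{K\mid K',\,J\mid J'}\,I(U,W_K;T^K_J)\right),
\]
so that, since $g$ is non-decreasing, it suffices to prove the cleaner inner bound
\[
\mathbb{E}_{K\mid K',\,J\mid J'}\,I(U,W_K;T^K_J)\;\le\;\tfrac{\zeta\xi}{(\zeta+1)(\xi+1)}\,I(U,W_{K'};T^{K'}_{J'}).
\]
The data-processing inequality then lets me replace $W_K$ by $W_{K'}$ on the left, since $W_K$ is a sub-tuple of $W_{K'}$ when $K\subset K'$. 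Setting $X:=(U,W_{K'})$, the remaining task is to prove $\sum_{k\in K',\,j\in J'} I\bigl(X;T^{K'\setminus\{k\}}_{J'\setminus\{j\}}\bigr)\le \zeta\xi\,I(X;T^{K'}_{J'})$.

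I would establish this by splitting along axes and invoking a Han-type information inequality on each. For the task axis, the blocks $\{T^k_{J'\setminus\{j\}}\}_{k\in K'}$ are i.i.d.\ across $k$ because tasks are drawn i.i.d.\ and each block depends only on its own task; combining the chain rule for mutual information with Han's inequality then yields, for each fixed $j$,
\[
\sum_{k\in K'}\,I\bigl(X;T^{K'\setminus\{k\}}_{J'\setminus\{j\}}\bigr)\;\le\;\zeta\,I\bigl(X;T^{K'}_{J'\setminus\{j\}}\bigr).
\]
An analogous inequality on the sample axis gives $\sum_{j\in J'} I(X;T^{K'}_{J'\setminus\{j\}})\le \xi\,I(X;T^{K'}_{J'})$, exploiting the within-task i.i.d.\ structure of the samples. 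Chaining the two bounds collapses the double sum into the required product factor $\zeta\xi/((\zeta+1)(\xi+1))$, completing the reduction.

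The main obstacle is the sample-axis Han-type bound: while the within-task samples are i.i.d.\ conditional on the task label, the mixed column blocks $\{T^{K'}_j\}_{j\in J'}$ are not unconditionally i.i.d., so the standard Han inequality does not apply verbatim. I would handle this by performing the Han decomposition within each task, where the i.i.d.\ structure is genuine, and then combining the per-task bounds using the independence of the sampling processes across different tasks. The coupling, Jensen's step, the data-processing step, and the task-axis Han inequality are essentially routine once this sample-axis bound is in place.
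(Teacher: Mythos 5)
Your skeleton is essentially the paper's: couple the subsets, use Jensen and monotonicity of $g$ to reduce the claim to the fixed-$(K',J')$ sum inequality $\sum_{K'_i\in K',J'_j\in J'} I\big(\cdot\,;T^{K'\backslash K'_i}_{J'\backslash J'_j}\big)\le \zeta\xi\, I\big(\cdot\,;T^{K'}_{J'}\big)$, and then prove that by a leave-one-out (Han-type) argument along the task axis and the sample axis. Your one structural deviation—merging $(U,W_{K'})$ into a single variable $X$ by monotonicity of mutual information instead of the paper's chain-rule split $I(U;\cdot)+I(W_{K'};\cdot\mid U)$ with the conditional independence of the $W_i$—is harmless and even a mild simplification; the task-axis step is then correct, since the per-task blocks $T^{k}_{J'\backslash J'_j}$, $k\in K'$, are genuinely mutually independent and the leave-one-out inequality holds for arbitrary $X$.

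The genuine gap is exactly the step you flag and then only gesture at: the sample-axis inequality $\sum_{J'_j\in J'} I\big(X;T^{K'}_{J'\backslash J'_j}\big)\le \xi\, I\big(X;T^{K'}_{J'}\big)$. Your proposed repair rests on a false premise: within a task the samples are i.i.d.\ only \emph{conditionally} on the latent task $\tau_i$; marginally they are exchangeable but dependent (they share $\tau_i$), so there is no ``genuine i.i.d.\ structure within each task'' to which Han's inequality applies unconditionally. Moreover, even granting per-task bounds, ``combining them using independence across tasks'' does not yield a bound on the joint quantity $I\big(X;T^{K'}_{J'\backslash J'_j}\big)$, since mutual information with $X=(U,W_{K'})$ is not additive over blocks that are jointly informative about $X$; and switching to conditional quantities $I(\cdot;\cdot\mid\tau_{K'})$ changes the information measure and does not compare to the unconditional one in the direction you need. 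For reference, the paper's own proof disposes of this step by asserting that the inter-column term $I\big(Z^{K'\backslash K'_i}_{J'_{j+1:\xi+1}};Z^{K'\backslash K'_i}_{J'_j}\mid Z^{K'\backslash K'_i}_{J'_{1:j-1}}\big)$ vanishes, i.e., by treating the per-index columns as independent—precisely the property you correctly note is unavailable without conditioning on the tasks. So as written your proposal does not close the argument: the sample-axis leave-one-out bound is the crux, and neither the standard Han lemma nor your within-task sketch establishes it.
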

Applying $g(x)=\sqrt{x}$ does indeed justify that $\zeta =1 $ and $ \xi=1$ are the optimal values for minimizing the generalization bound in Theorem \ref{theorem3.1}.

\subsection{Generalization Bounds with CMI} \label{Section3.2}
A recent advance by \cite{rezazadeh2021conditional} extends the conditional mutual information (CMI) methodology, originally introduced in \citep{steinke2020reasoning}, to meta-learning, obtaining the following sharper bound:
\begin{lemma}[\cite{rezazadeh2021conditional}]\label{lemma3.2}
    Assume that the loss function takes values in $[0,1]$, then 
    \begin{align}
        \vert \overline{\mathrm{gen}} \vert \leq  & \sqrt{ \frac{2}{n} I(U;\tilde{S}_{\mathbb{N}},S^{\mathbb{N}}_{\mathbb{M}}|T^{2\mathbb{N}}_{2\mathbb{M}})}
         + \frac{1}{n} \sum_{i=1}^n \sqrt{\frac{2}{m} I(W_i;S_{\mathbb{M}}^{i}|T^{2\mathbb{N}}_{2\mathbb{M}}, \tilde{S}_i)}, 
    \end{align}
where $\tilde{S}_{\mathbb{N}}=\{\tilde{S}_i\}_{i=1}^n$,  $S^{\mathbb{N}}_{\mathbb{M}}=\{S_{\mathbb{M}}^{i}\}_{i=1}^n$, and $ S_{\mathbb{M}}^{i}=\{S_j^{i}\}_{j=1}^m\sim \mathrm{Unif}(\{0,1\}^m)$ are the membership vectors.
\end{lemma}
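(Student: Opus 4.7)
The plan is to establish Lemma \ref{lemma3.2} via the classical two-step decomposition of the meta-generalization gap, combined with the disintegrated Donsker-Varadhan variational characterization of CMI. Concretely, I would introduce the auxiliary meta-risk
$$\mathcal{R}_{\mathrm{aux}}(U, \tau_{\mathbb{N}}, T^{\mathbb{N}}_{\mathbb{M}}) = \frac{1}{n}\sum_{i=1}^n \mathbb{E}_{W_i \sim P_{W_i|U, T^i_{\mathbb{M}}}}\mathbb{E}_{Z \sim P_{Z|\tau_i}}\bigl[\ell(U, W_i, Z)\bigr],$$
which shares the meta-training tasks with the empirical risk but uses in-distribution test samples like the population risk, and write $\overline{\mathrm{gen}} = \mathbb{E}[\mathcal{R}_{\mathcal{T}} - \mathcal{R}_{\mathrm{aux}}] + \mathbb{E}[\mathcal{R}_{\mathrm{aux}} - \mathcal{R}(U, T^{\mathbb{N}}_{\mathbb{M}})]$. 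The first piece is environment-level (driven by how $U$ transfers across tasks) and the second is task-level (driven by how each $W_i$ generalizes within its task).

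Next, I would recast both pieces in meta-supersample language. Since $(\tilde{S}_{\mathbb{N}}, S^{\mathbb{N}}_{\mathbb{M}})$ is independent of $T^{2\mathbb{N}}_{2\mathbb{M}}$ and uniform on $\{0,1\}^{n+nm}$, exchangeability allows replacing the fresh tasks in $\mathcal{R}_{\mathcal{T}}$ and the fresh in-task samples in $\mathcal{R}_{\mathrm{aux}}$ by their complementary-membership counterparts within the supersample. This rewrites the environment-level gap as
$$\mathbb{E}\Bigl[\tfrac{1}{n}\sum_{i=1}^n \tfrac{1}{m}\sum_{j=1}^m \bigl(L^{i,\bar{\tilde{S}}_i}_{j,\bar{S}^i_j} - L^{i,\tilde{S}_i}_{j,S^i_j}\bigr)\Bigr]$$
(toggling task membership through $\tilde{S}_{\mathbb{N}}$ and, jointly, sample membership through $S^{\mathbb{N}}_{\mathbb{M}}$) and the task-level gap as
$$\tfrac{1}{n}\sum_{i=1}^n \mathbb{E}\Bigl[\tfrac{1}{m}\sum_{j=1}^m \bigl(L^{i,\tilde{S}_i}_{j,\bar{S}^i_j} - L^{i,\tilde{S}_i}_{j,S^i_j}\bigr)\Bigr]$$
(toggling only the within-task sample membership, with $\tilde{S}_i$ held fixed).

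Finally, for each term I would apply the Donsker-Varadhan inequality after disintegration. For the environment-level term, condition on $T^{2\mathbb{N}}_{2\mathbb{M}}$; the inner average is a function of the independent Bernoulli coordinates of $(\tilde{S}_{\mathbb{N}}, S^{\mathbb{N}}_{\mathbb{M}})$, and since each loss lies in $[0,1]$ the per-coordinate bounded-differences are $O(1/n)$, so Hoeffding's lemma makes the average $1/n$-sub-gaussian under the product measure on the memberships. DV applied between the joint law of $(U, \tilde{S}_{\mathbb{N}}, S^{\mathbb{N}}_{\mathbb{M}})$ and its product, together with Jensen's inequality to pull the square root through the expectation over $T^{2\mathbb{N}}_{2\mathbb{M}}$, yields $\sqrt{2 I(U; \tilde{S}_{\mathbb{N}}, S^{\mathbb{N}}_{\mathbb{M}} \mid T^{2\mathbb{N}}_{2\mathbb{M}})/n}$. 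For the task-level term, the triangle inequality pulls the sum over $i$ outside; for each $i$, condition on $(T^{2\mathbb{N}}_{2\mathbb{M}}, \tilde{S}_i)$ and repeat the same argument with $1/m$-sub-gaussianity to obtain $\sqrt{2 I(W_i; S^i_{\mathbb{M}} \mid T^{2\mathbb{N}}_{2\mathbb{M}}, \tilde{S}_i)/m}$. Summing the two contributions delivers the claimed inequality.

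The main obstacle is verifying the two sub-gaussian proxies with the correct scaling. After conditioning on the supersample (and $\tilde{S}_i$ in the task-level case), one must argue that the coordinates of the relevant membership vector remain mutually independent and uniform, and that each per-coordinate perturbation changes the relevant average by at most $2/n$ or $2/m$; then Hoeffding's lemma gives the variance proxies $1/n$ and $1/m$, respectively. The rest of the argument--decomposition, exchangeability-based rewriting, and DV plus Jensen--is then mechanical, so the whole proof hinges on bookkeeping this independence/boundedness structure cleanly across the environment- and task-level reductions.
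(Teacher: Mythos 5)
First, note that the paper itself contains no proof of Lemma \ref{lemma3.2}: it is quoted from \citet{rezazadeh2021conditional} as the baseline two-step bound, and the paper's own results in this direction (Theorem \ref{theorem3.5} and its proof in Appendix C) deliberately avoid this decomposition in favor of a single-step argument. So your two-step reconstruction is the right genre of argument for the cited statement, but as written it has two concrete gaps.

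(i) The supersample rewriting of the environment-level term is wrong. Since $\mathcal{R}_{\mathrm{aux}}$ evaluates the observed task on \emph{fresh} test samples, its supersample surrogate is $L^{i,\tilde S_i}_{j,\bar S^i_j}$ (selected task, held-out sample); your formula subtracts $L^{i,\tilde S_i}_{j,S^i_j}$, i.e.\ the empirical loss, so your ``environment-level'' quantity is actually the full gap, your decomposition double-counts the task-level piece, and --- more importantly --- under the decoupled (ghost) mask distribution its conditional expectation is not zero (it equals the within-task gap on the fixed supersample), so the Donsker--Varadhan/sub-gaussian step you invoke simply does not apply to it. The correct environment-level term toggles only $\tilde S_i$, with evaluation on $\bar S^i_j$ on both sides, i.e.\ $\frac{1}{nm}\sum_{i,j}\bigl(L^{i,\bar{\tilde S}_i}_{j,\bar S^i_j}-L^{i,\tilde S_i}_{j,\bar S^i_j}\bigr)$. (ii) The sub-gaussianity argument via per-coordinate bounded differences does not give the claimed proxy: the environment-level average depends on all $nm$ sample masks, and flipping a single $S^i_j$ changes the adaptation data (hence $W_i$'s conditional law) and the evaluation set for the whole task-$i$ block, a perturbation of order $1/n$; McDiarmid over $nm$ such coordinates yields a variance proxy of order $m/n$, not $1/n$. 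What actually delivers $1/n$ is the block structure: conditioned on $T^{2\mathbb{N}}_{2\mathbb{M}}$ and for a fixed hypothesis, the $n$ per-task summands depend on disjoint coordinates of the ghost masks, hence are independent, zero-mean and bounded in $[-1,1]$, giving a $1/\sqrt{n}$-sub-gaussian average (and an analogous $m$-block argument gives the $1/m$ proxy for the task-level term). Finally, be careful that your task-level application of Donsker--Varadhan uses only $I(W_i;S^i_{\mathbb{M}}\mid T^{2\mathbb{N}}_{2\mathbb{M}},\tilde S_i)$ while the loss $\ell(U,W_i,Z)$ also depends on $U$, so the integrand is not a function of $(W_i,S^i_{\mathbb{M}})$ given the conditioning; reproducing the cited statement requires either a loss of the form $\ell(W_i,Z)$ (as in the original reference) or additional conditioning on $U$, which would change the information measure.
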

While the improvement is achieved by leveraging the CMI term that integrates low-dimensional membership vectors $\tilde{S}_{\mathbb{N}}$ and $S^{\mathbb{N}}_{\mathbb{M}}$, Lemma \ref{lemma3.2} is confined to a two-step derivation, which neglects the dependencies between the meta-hypothesis $U$ and the task-specific hypotheses $W_i$, $i=1\ldots,n$, leading to an undesirable scaling rate w.r.t the task size $n$ and the sample size $m$ per task. 



To provide a more refined analysis, we improve the aforementioned bounds by establishing the following single-step CMI bound:

\begin{theorem}\label{theorem3.5}
  Let $\mathbb{K}$ and $\mathbb{J}$ be random subsets of $[n]$ and $[m]$ with sizes $\zeta$ and $\xi$, respectively. If the loss function $\ell(\cdot,\cdot,\cdot)$ is bounded within $[0,1]$, then
    \begin{equation*}
        \vert \overline{\mathrm{gen}} \vert \leq  \mathbb{E}_{T_{2\mathbb{M}}^{2\mathbb{N}}, K\sim\mathbb{K}, J\sim\mathbb{J} }\sqrt{\frac{2}{\zeta\xi}I^{T_{2\mathbb{M}}^{2\mathbb{N}}}(U, W_K;  \tilde{S}_K,S_J)},
    \end{equation*}
where $\tilde{S}_K=\{\tilde{S}_i\}_{i=1}^{\zeta}$ and $S_J=\{S_j\}_{j=1}^{\xi}$ are the membership vectors.
\end{theorem}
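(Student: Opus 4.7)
The plan is to derive this single-step CMI bound through a Donsker--Varadhan change-of-measure argument in the meta-supersample framework, exploiting the antisymmetry of pairwise loss differences in place of the two-step decomposition used in Lemma \ref{lemma3.2}. First, I would represent the meta-generalization gap via the supersample: by the symmetry of the construction and the i.i.d.\ draw of tasks and samples, for every realization of the random subsets $(K, J)$ one can write
\[
\overline{\mathrm{gen}}
\;=\;
\mathbb{E}_P\!\left[\,\tfrac{1}{\zeta\xi}\sum_{i\in K,\, j\in J}\bigl(L^{i,1-\tilde{S}_i}_{j,1-S_j}-L^{i,\tilde{S}_i}_{j,S_j}\bigr)\,\right]
\;=:\;
\mathbb{E}_P[f_{T,K,J}],
\]
where $P=P_{U,W_K,\tilde{S}_K,S_J\mid T^{2\mathbb{N}}_{2\mathbb{M}},K,J}$ is the joint conditional distribution and each per-pair difference $h_{ij}(\tilde{S}_i, S_j) = L^{i,1-\tilde{S}_i}_{j,1-S_j}-L^{i,\tilde{S}_i}_{j,S_j}$ lies in $[-1,1]$ by the bounded-loss assumption.

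Next, I would introduce the product reference measure $Q$ that replaces $(\tilde{S}_K, S_J)$ by a fresh uniform copy on $\{0,1\}^{\zeta+\xi}$ independent of $(U, W_K)$ given $T^{2\mathbb{N}}_{2\mathbb{M}}$. Each $h_{ij}$ has the antisymmetry $h_{ij}(1-\tilde{s}_i,1-s_j)=-h_{ij}(\tilde{s}_i,s_j)$, which forces $\mathbb{E}_Q[f_{T,K,J}\mid T,K,J]=0$. Conditioning on $(T^{2\mathbb{N}}_{2\mathbb{M}},K,J)$ and invoking the Donsker--Varadhan variational representation then gives, for every $\lambda>0$,
\[
\lambda\,\mathbb{E}_P[f_{T,K,J}\mid T,K,J]
\;\le\;
\log \mathbb{E}_Q[e^{\lambda f_{T,K,J}}\mid T,K,J]+I^{T^{2\mathbb{N}}_{2\mathbb{M}}}(U,W_K;\,\tilde{S}_K,S_J),
\]
because $(K,J)$ are independent of everything, so the KL divergence between $P$ and $Q$ collapses to the advertised CMI. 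Granted an MGF estimate $\log\mathbb{E}_Q[e^{\lambda f_{T,K,J}}\mid T,K,J]\le \lambda^2/(2\zeta\xi)$, optimizing $\lambda$ and repeating the argument with $-f_{T,K,J}$ yields $|\mathbb{E}_P[f_{T,K,J}\mid T,K,J]|\le\sqrt{(2/\zeta\xi)\,I^{T^{2\mathbb{N}}_{2\mathbb{M}}}(U,W_K;\tilde{S}_K,S_J)}$, and the theorem follows by taking the outer expectation over $T^{2\mathbb{N}}_{2\mathbb{M}}$, $K$, and $J$ together with Jensen's inequality on $|\cdot|$.

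The main technical obstacle is establishing the sub-Gaussian parameter $\sigma^2=1/(\zeta\xi)$ of $f_{T,K,J}$ under $Q$. The $\zeta+\xi$ membership bits are mutually independent under $Q$ and each $h_{ij}\in[-1,1]$, but the $\zeta\xi$ summands are not jointly independent: pairs sharing an index $i$ or $j$ share a Rademacher bit, and a direct McDiarmid/Hoeffding argument on bounded differences yields only $\sigma^2=1/\zeta+1/\xi$. To reach $1/(\zeta\xi)$ one must exploit the antisymmetric structure of $h_{ij}$ alongside the boundedness. My proposed route is a two-stage MGF estimate: first condition on $\tilde{S}_K$ and use the conditional independence of the $S_j$'s together with the vanishing conditional mean (after antisymmetric cancellation) to control the $\xi$-fold product of conditional MGFs; then average over $\tilde{S}_K$ via the $\zeta$ independent task-index bits, arranging the bookkeeping so that the cross-terms cancel rather than accumulate. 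Once this refined MGF bound is secured, the remainder of the proof---the DV optimization in $\lambda$ and the outer expectation over $T^{2\mathbb{N}}_{2\mathbb{M}}, K, J$---is routine.
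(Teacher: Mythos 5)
Your overall architecture coincides with the paper's proof of Theorem \ref{theorem3.5}: condition on $(T^{2\mathbb{N}}_{2\mathbb{M}},K,J)$, take as reference measure the product in which $(\tilde S_K,S_J)$ is replaced by independent uniform bits, note that the antisymmetry of the paired loss differences makes the reference mean vanish, apply the Donsker--Varadhan inequality so that the KL term is the disintegrated CMI $I^{T^{2\mathbb{N}}_{2\mathbb{M}}}(U,W_K;\tilde S_K,S_J)$, optimize over $\lambda$, and finish with Jensen's inequality and the identification of the supersample expression with $\overline{\mathrm{gen}}$ (which the paper verifies by the explicit computations culminating in (\ref{equ20})--(\ref{equ21})). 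The paper packages the DV step through Lemmas \ref{lemmaA.6} and \ref{lemmaA.7} and disposes of your ``main technical obstacle'' in one sentence, by declaring the average of the $\zeta\xi$ zero-mean, $1$-sub-gaussian summands to be $\tfrac{1}{\sqrt{\zeta\xi}}$-sub-gaussian; that assertion implicitly treats the summands as independent, which they are not, so you have correctly isolated the step on which everything hinges.

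However, your proposal does not close that step, and the route you sketch cannot. Each summand can be written exactly as $h_{ij}(\tilde S_i,S_j)=\alpha_{ij}(-1)^{\tilde S_i}+\beta_{ij}(-1)^{S_j}$ with $|\alpha_{ij}|+|\beta_{ij}|\le 1$; this identity is the entire content of ``antisymmetry plus boundedness''. Conditioning on $\tilde S_K$ therefore leaves the nonrandom offset $\tfrac{1}{\zeta\xi}\sum_{i\in K,j\in J}(-1)^{\tilde S_i}\alpha_{ij}$, so the vanishing conditional mean you invoke in the first stage of your two-stage MGF estimate simply is not there, and no bookkeeping of cross-terms can restore it. Worse, the pointwise MGF bound you are after is false: fix $(u,\omega_K)$ and a supersample realization with $\ell(u,\omega_i,\tilde Z^{i,0}_{j,c})=1$ and $\ell(u,\omega_i,\tilde Z^{i,1}_{j,c})=0$ for all $j\in J$ and $c\in\{0,1\}$; then under $Q$ every summand equals $(-1)^{\tilde S'_i}$, so $f=\tfrac1\zeta\sum_{i\in K}(-1)^{\tilde S'_i}$ and $\log\mathbb{E}_Q[e^{\lambda f}]=\zeta\log\cosh(\lambda/\zeta)\sim\lambda^2/(2\zeta)$, which exceeds $\lambda^2/(2\zeta\xi)$ for every $\xi\ge 2$ and small $\lambda$. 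Hence, for fixed hypotheses and supersample, the best attainable variance proxy is of order $1/\zeta+1/\xi$ --- exactly the bounded-difference rate you already identified --- and the $1/(\zeta\xi)$ parameter cannot be reached by any refinement of the conditional-MGF argument; closing the statement would require a genuinely different mechanism (for instance working samplewise as in the loss-based bounds of Theorems \ref{theorem4}--\ref{theorem8}, or controlling the relevant quantities directly at the level of expectations and CMI) rather than the pointwise sub-gaussian estimate that both your sketch and the paper's write-up rely on.
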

Notice that when $\zeta=n$ and $\xi=m$, Theorem \ref{theorem3.5} bounds the meta-generalization gap through the CMI between the hypotheses $(U, W_{\mathbb{N}})$ and binary variables $\tilde{S}_{\mathbb{N}},S_{\mathbb{M}}$, conditioning on the meta-supersample $T_{2\mathbb{M}}^{2\mathbb{N}}$. This result substantially tightens the MI-based bound while achieving a more favorable scaling rate than the existing CMI-based bound. On the one hand, it can be shown that the CMI term $I(U, W_\mathbb{N}; \tilde{S}_{\mathbb{N}},S_{\mathbb{M}} | T_{2\mathbb{M}}^{2\mathbb{N}})$ consistently provides a tighter upper bound than the MI term $I(U, W_\mathbb{N};  T_{2\mathbb{M},S_{\mathbb{M}}}^{2\mathbb{N},\tilde{S}_{\mathbb{N}}})$ in \citep{chen2021generalization,bu2023generalization}: by using the Markov chain $(T_{2\mathbb{M}}^{2\mathbb{N}} ,  \tilde{S}_{\mathbb{N}},S_{\mathbb{M}})- T_{2\mathbb{M},S_{\mathbb{M}}}^{2\mathbb{N},\tilde{S}_{\mathbb{N}}} - (U,W_\mathbb{N})$, it is obvious that $I(U,W_\mathbb{N} ;  T_{2\mathbb{M},S_{\mathbb{M}}}^{2\mathbb{N},\tilde{S}_{\mathbb{N}}})=I(U,W_\mathbb{N} ; T_{2\mathbb{M}}^{2\mathbb{N}},\tilde{S}_{\mathbb{N}},S_{\mathbb{M}})=I(U,W_\mathbb{N} ; \tilde{S}_{\mathbb{N}},S_{\mathbb{M}} |T_{2\mathbb{M}}^{2\mathbb{N}}) + I(U,W_\mathbb{N} ; T_{2\mathbb{M}}^{2\mathbb{N}})\geq I(U,W_\mathbb{N} ; \tilde{S}_{\mathbb{N}},S_{\mathbb{M}} |T_{2\mathbb{M}}^{2\mathbb{N}})$. On the other hand, the resulting bound exhibit a desirable convergence rate of $\mathcal{O}(\frac{1}{\sqrt{nm}})$ instead of $\mathcal{O}(\frac{1}{\sqrt{n}}+\frac{1}{\sqrt{nm}})$ in Lemma \ref{lemma3.2}.  Furthermore, by setting $\zeta=\xi=1$, we obtain the following samplewise CMI bound for meta-learning: $$ \vert \overline{\mathrm{gen}} \vert \leq \mathbb{E}_{T_{2\mathbb{M}}^{2\mathbb{N}}} \frac{1}{nm}\sum_{i,j=1}^{n,m}\sqrt{2 I^{T_{2\mathbb{M}}^{2\mathbb{N}}}(U, W_i; \tilde{S}_i,S_j)}.$$ 
Applying the chain rule and Jensen's inequality, the above bound is further upper-bounded by $\frac{1}{nm}\sum_{i,j=1}^{n,m} \Big(\sqrt{2 I(U;  \tilde{S}_i,S_j|T_{2\mathbb{M}}^{2\mathbb{N}}) }  + \sqrt{2  I( W_i;  \tilde{S}_i,S_j|T_{2\mathbb{M}}^{2\mathbb{N}},U)}  \Big)$, which strengthens the result in Lemma \ref{lemma3.2}.

By extending the analysis of Proposition \ref{proposition1} to the disintegrated MI $I^{T_{2\mathbb{M}}^{2\mathbb{N}}}(U, W_K; \tilde{S}_K, S_J)$, we obtain the following monotonic property related to Theorem \ref{theorem3.5}:
\begin{proposition}\label{proposition2}
    Let $\zeta\in[n-1]$, $\xi \in[m-1]$, and $\mathbb{K}$ and $\mathbb{J}$ be random subsets of $[n]$ and $[m]$ with sizes $\zeta$ and $\xi$, respectively. Further, let $\mathbb{K}'$ and $\mathbb{J}'$ be random subsets with sizes $\zeta+1$ and $\xi+1$, respectively. If $g:\mathbb{R}\rightarrow\mathbb{R}$ is any non-decreasing concave function, then for $T_{2\mathbb{M}}^{2\mathbb{N}} \sim \{P_{Z|\tau_i}\}_{i=1}^n$ over $\mathcal{Z}^{2n\times 2m}$,
    \begin{align*}
         & \mathbb{E}_{K\sim \mathbb{K}, J \sim \mathbb{J}} g\left(\frac{1}{\zeta \xi} I^{T_{2\mathbb{M}}^{2\mathbb{N}}}(U, W_K;  \tilde{S}_K,S_J) \right) \\
         \leq & \mathbb{E}_{K' \sim \mathbb{K}', J' \sim \mathbb{J}'} g\left(\frac{1}{(\zeta+1)(\xi+1)} I^{T_{2\mathbb{M}}^{2\mathbb{N}}}(U, W_{K'};  \tilde{S}_{K'},S_{J'}) \right).
    \end{align*}
\end{proposition}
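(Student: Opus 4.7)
The plan is to mimic the proof of Proposition \ref{proposition1}, adapting it from the input-output MI to the disintegrated CMI appearing in Theorem \ref{theorem3.5}. I work pointwise in $t = T_{2\mathbb{M}}^{2\mathbb{N}}$ (so every MI below is a disintegrated one) and decompose the increment $(\zeta,\xi) \to (\zeta+1,\xi+1)$ into two single-coordinate increments: first enlarging $K$ from size $\zeta$ to $\zeta+1$ with $J$ fixed, then enlarging $J$ from size $\xi$ to $\xi+1$ with $K'$ fixed. For the first increment I couple $K \subset K'$ by drawing $K' \sim \mathbb{K}'$ of size $\zeta+1$ and removing a uniform $i^* \in K'$, so that the marginal of $K$ matches $\mathbb{K}$. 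Jensen's inequality applied to the concave $g$ pushes $\mathbb{E}_{K\mid K'}$ inside, and since $g$ is non-decreasing it suffices to prove the Han-type numerical bound
\[
\sum_{i^* \in K'} I^{t}\!\left(U, W_{K' \setminus \{i^*\}}\,;\, \tilde{S}_{K' \setminus \{i^*\}}, S_J\right) \;\le\; \zeta \cdot I^{t}\!\left(U, W_{K'}\,;\, \tilde{S}_{K'}, S_J\right).
\]
Data processing (since $W_{K'\setminus\{i^*\}}$ is a projection of $W_{K'}$) replaces each $W_{K'\setminus\{i^*\}}$ by the common $W_{K'}$, leaving a shared source $X = (U, W_{K'})$. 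Because $\{\tilde{S}_i\}_{i \in K'}$ are i.i.d.\ Bernoulli and independent of $S_J$, Han's mutual-information inequality applied to these $\zeta+1$ coordinates conditionally on $S_J$ yields the desired sum bound after the chain-rule decomposition $I^{t}(X; \tilde{S}_{K'\setminus\{i^*\}}, S_J) = I^{t}(X; S_J) + I^{t}(X; \tilde{S}_{K'\setminus\{i^*\}} \mid S_J)$ is applied on each side and the pieces are collected.

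The second increment $(\zeta+1,\xi) \to (\zeta+1,\xi+1)$ is entirely symmetric: couple $J \subset J'$ with $J' \sim \mathbb{J}'$ of size $\xi+1$, push $\mathbb{E}_{J \mid J'}$ inside $g$ by Jensen's, use data processing to keep $X = (U, W_{K'})$ as the source, and apply Han's inequality to the independent family $\{S_j\}_{j \in J'}$ conditionally on $\tilde{S}_{K'}$. Composing the two one-coordinate increments delivers the stated two-dimensional monotonicity. The main obstacle I anticipate is the coefficient bookkeeping in the Han reduction: the chain-rule expansion on the mixed family $(\tilde{S}_{K'}, S_J)$ naturally produces a boundary contribution (for example an extra $I^{t}(X; S_J)$ term in the first increment), and absorbing it cleanly into $\zeta \cdot I^{t}(X; \tilde{S}_{K'}, S_J)$ on the right-hand side likely requires an iterated, coordinate-by-coordinate reduction in which one conditions successively on the coordinates already processed so that the residual cross-terms telescope; verifying that the conditional-independence prerequisites for Han's inequality are preserved under these successive conditioning operations should constitute the bulk of the appendix argument.
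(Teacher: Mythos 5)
The decisive step in your plan is the leave-one-out bound $\sum_{i^*\in K'} I^{t}(U,W_{K'\setminus\{i^*\}};\tilde S_{K'\setminus\{i^*\}},S_J)\le \zeta\, I^{t}(U,W_{K'};\tilde S_{K'},S_J)$, and this is exactly where the argument breaks: the bound is false in general. Write $X=(U,W_{K'})$ and split each summand as $I^{t}(X;S_J)+I^{t}(X;\tilde S_{K'\setminus\{i^*\}}\mid S_J)$. The conditional parts do satisfy Han's inequality (the $\tilde S_i$ remain i.i.d.\ given $S_J$ and $t$), contributing at most $\zeta\,I^{t}(X;\tilde S_{K'}\mid S_J)$, but the common block $I^{t}(X;S_J)$ is counted $\zeta+1$ times on the left while the right-hand side supplies only $\zeta\,I^{t}(X;S_J)$. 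This deficit is real, not a cross-term that iterated conditioning can telescope away: suppose that, given $t$, $U$ is determined by a single sample mask $S_{j_0}$ with $j_0\in J$ and is independent of all task masks (e.g.\ $U=\tilde Z^{1,\tilde S_1}_{1,S_1}$ at a supersample whose two values at position $(1,1)$ coincide across the task pair but differ across the sample pair), and let the $W_i$ be constants; then every leave-one-out term equals $I^{t}(U;S_J)=\log 2$ and your inequality reads $(\zeta+1)\log 2\le\zeta\log 2$. The same obstruction hits the second increment (with the common block $\tilde S_{K'}$) and the one-shot double leave-one-out.

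The root cause is structural and worth naming: in Proposition \ref{proposition1} the right-hand argument $T^{K}_{J}$ consists of $\zeta\times\xi$ mutually independent blocks, so the $\tfrac{1}{\zeta\xi}$ normalization matches the leave-one-out count, whereas in Theorem \ref{theorem3.5} the argument $(\tilde S_K,S_J)$ consists of only $\zeta+\xi$ independent bits. If you transplant the paper's derivation literally, the analogue of step (\ref{equ9}) leaves an uncancelled $-I^{t}(U;S_{J'})$ boundary term, so the "same procedure with $T^{K}_{J}$ replaced by $(\tilde S_K,S_J)$" recipe does not go through either; in fact the target monotonicity itself reverses in simple cases. With $g=\mathrm{id}$, constant $W_i$, and the memorization rule $U=\tilde Z^{1,\tilde S_1}_{1,S_1}$ at a generic $t$ (four distinct values at position $(1,1)$), one computes $\mathbb{E}_{K,J}\bigl[\tfrac{1}{\zeta\xi}I^{t}(U,W_K;\tilde S_K,S_J)\bigr]=\log 2\,\bigl(\tfrac{1}{n\xi}+\tfrac{1}{m\zeta}\bigr)$, which strictly decreases as $(\zeta,\xi)$ grows. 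So the gap is not a matter of bookkeeping in your appendix: the key lemma you rely on is false, and no rearrangement of the Han reduction will absorb the extra $I^{t}(X;S_J)$ term.
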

By leveraging Proposition \ref{proposition2} with $g(x)=\sqrt{x}$ and subsequently taking an expectation over $T_{2\mathbb{M}}^{2\mathbb{N}}$, one can observe that Theorem \ref{theorem3.5} is non-decreasing with respect to both variables $\zeta$ and $\xi$. Accordingly, $\zeta=\xi=1$ emerges as the optimal choice for obtaining the tightest meta-generalization bounds.

In parallel with the development in Theorem \ref{theorem3.5}, we proceed to derive an improved upper bound on the binary KL divergence between the expected empirical meta-risk $\hat{\mathcal{R}}$ and the mean of the expected empirical and population meta-risks $(\hat{\mathcal{R}} + \mathcal{R}_{\mathcal{T}}) /2$, as follows:
\begin{theorem}\label{theorem3.6}
    Assume that the loss function $\ell(\cdot,\cdot,\cdot)$ is bounded within $[0,1]$, then
        \begin{equation*}
            d\left(\hat{\mathcal{R}} \Big\Vert \frac{\hat{\mathcal{R}} + \mathcal{R}_{\mathcal{T}}}{2} \right) \leq \frac{1}{nm}\sum_{i,j=1}^{n,m} I(U,W_i; \tilde{S}_i,S_j|T_{2\mathbb{M}}^{2\mathbb{N}}).
        \end{equation*}
\end{theorem}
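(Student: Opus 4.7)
The plan is to extend the single-task binary-KL CMI argument to the meta-supersample setting, in which each sample position is indexed by two independent Bernoulli variables $\tilde{S}_i$ and $S_j$ rather than one. The three key ingredients are a per-sample decomposition, a data-processing inequality for binary KL divergence against a product reference measure, and Jensen's inequality via convexity of $d(\cdot\Vert\cdot)$. First, I rewrite the two risks as $\hat{\mathcal{R}}=\tfrac{1}{nm}\sum_{i,j}\mathbb{E}[L^{i,\tilde{S}_i}_{j,S_j}]$ and $\mathcal{R}_{\mathcal{T}}=\tfrac{1}{nm}\sum_{i,j}\mathbb{E}[L^{i,\bar{\tilde{S}}_i}_{j,\bar{S}_j}]$, so that the empirical and population contributions sit at diagonally opposite cells of the $2\times 2$ membership grid for each $(i,j)$.

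Next, I establish a per-sample binary-KL bound. Condition on $T^{2\mathbb{N}}_{2\mathbb{M}}$, fix $(i,j)$, and consider the $[0,1]$-valued loss $L_{i,j}:=\ell(U,W_i,\tilde{Z}^{i,\tilde{S}_i}_{j,S_j})$. Take $P=P_{U,W_i,\tilde{S}_i,S_j\mid T^{2\mathbb{N}}_{2\mathbb{M}}}$ and the independent-product reference $Q=P_{U,W_i\mid T^{2\mathbb{N}}_{2\mathbb{M}}}\otimes P_{\tilde{S}_i}\otimes P_{S_j}$. Pushing both $P$ and $Q$ through the Bernoulli sampling map $x\mapsto\mathrm{Bern}(x)$ and applying the data-processing inequality for KL divergence yields
\begin{equation*}
d(\mathbb{E}_P[L_{i,j}]\Vert\mathbb{E}_Q[L_{i,j}])\le D(P\Vert Q) = I^{T^{2\mathbb{N}}_{2\mathbb{M}}}(U,W_i;\tilde{S}_i,S_j).
\end{equation*}
By construction $\mathbb{E}_P[L_{i,j}]$ equals the disintegrated $\hat{\mathcal{R}}_{i,j}$, and $\mathbb{E}_Q[L_{i,j}]$ equals the disintegrated midpoint $(\hat{\mathcal{R}}_{i,j}+\mathcal{R}_{\mathcal{T},i,j})/2$ after invoking the symmetry of the meta-supersample under label flips. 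Finally, taking an outer expectation over $T^{2\mathbb{N}}_{2\mathbb{M}}$ and averaging over $(i,j)$, while applying the joint convexity of $d(\cdot\Vert\cdot)$ twice via Jensen's inequality, delivers the stated bound.

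The main obstacle is verifying the identification $\mathbb{E}_Q[L_{i,j}]=(\hat{\mathcal{R}}_{i,j}+\mathcal{R}_{\mathcal{T},i,j})/2$. Unlike the single-membership setting where the product reference averages the loss over only two supersample positions, here the product $P_{\tilde{S}_i}\otimes P_{S_j}$ averages over all four positions $(s,t)\in\{0,1\}^2$; one must carefully invoke both the task-level symmetry within each task pair and the sample-level symmetry within each task of the meta-supersample to argue that the two ``cross'' positions combine with the training and meta-test positions to produce exactly the arithmetic mean of the two risk contributions.
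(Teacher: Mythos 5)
Your overall skeleton (per-sample decomposition of $\hat{\mathcal{R}}$ and $\mathcal{R}_{\mathcal{T}}$ over the supersample cells, a change of measure against $P_{U,W_i\mid T^{2\mathbb{N}}_{2\mathbb{M}}}\otimes P_{\tilde{S}_i,S_j}$, and Jensen via joint convexity of $d(\cdot\Vert\cdot)$) matches the paper, and the Bernoulli-channel data-processing step itself is sound. The gap is exactly the step you flag as the ``main obstacle,'' and it cannot be closed: the identification $\mathbb{E}_Q[L_{i,j}]=(\hat{\mathcal{R}}_{i,j}+\mathcal{R}_{\mathcal{T},i,j})/2$ is false in the meta-supersample setting. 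Under $Q$ the pair $(\tilde{S}_i,S_j)$ is uniform on all four cells of the $2\times 2$ grid and independent of $(U,W_i)$, so $\mathbb{E}_Q[L_{i,j}]$ is the four-cell average. The two off-diagonal (``cross'') cells $(\tilde{S}_i,\bar{S}_j)$ and $(\bar{\tilde{S}}_i,S_j)$ have expectations equal to, respectively, the within-task test risk on a meta-training task and the risk on the training positions of the held-out task; neither equals $\hat{\mathcal{R}}_{i,j}$ or $\mathcal{R}_{\mathcal{T},i,j}$, and their sum does not equal $\hat{\mathcal{R}}_{i,j}+\mathcal{R}_{\mathcal{T},i,j}$ in general (they are exactly the intermediate quantities in the usual environment-level/task-level decomposition of the meta-gap). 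The symmetry of the meta-supersample only tells you the four cells are exchangeable under $Q$, i.e., that each fixed cell's $Q$-expectation equals the same four-cell average; it does not make the cross terms combine into the arithmetic mean of the training and test contributions. So the second argument your DPI step produces is not the midpoint appearing in the theorem, and the resulting inequality is a different (and not obviously comparable) statement.

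The paper avoids this precisely by never comparing against the $Q$-mean of the loss. It applies Donsker--Varadhan with the function $f=d_\gamma\big(\text{selected loss}\,\Vert\,\tfrac{L^{i,\tilde{S}_i}_{j,S_j}+L^{i,\bar{\tilde{S}}_i}_{j,\bar{S}_j}}{2}\big)$, where the reference is the \emph{random diagonal midpoint} determined by the XOR $\Psi_{i,j}=\tilde{S}_i\oplus S_j$; under $P$ its expectation is $(\hat{\mathcal{R}}+\mathcal{R}_{\mathcal{T}})/2$, and under the independent copies the moment term is controlled by Lemma \ref{lemmaA.8}, because conditionally on the diagonal the selected loss is uniform over its two cells and hence has conditional mean equal to that midpoint. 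The cross cells never enter. If you want to keep a DPI-style argument, you would have to randomize only along the diagonal (i.e., condition on $\Psi_{i,j}$), which changes the information measure and leads toward the loss-difference bounds (Theorems \ref{theorem7}--\ref{theorem8}) rather than the stated CMI bound; as written, your route does not prove Theorem \ref{theorem3.6}.
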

The intrinsic properties of CMI guarantee the finiteness of these samplewise CMI bounds in Theorems \ref{theorem3.5} and \ref{theorem3.6}, as $I(U,W_i; \tilde{S}_i,S_j|T_{2\mathbb{M}}^{2\mathbb{N}}) \leq H(\tilde{S}_i,S_j) = 2\log 2$.

We further establish a fast-rate generalization bound for meta-learning by leveraging the weighted generalization error: $\overline{\mathrm{gen}}_{C_1}\triangleq \mathcal{R}_{\mathcal{T}} - (1+C_1)\hat{\mathcal{R}}$, where $C_1$ is prescribed constant. This methodology has been rapidly developed within the information-theoretic framework \cite{hellstrom2021fast, wang2023tighter, dongtowards}, facilitating the attainment of fast scaling rates of the generalization bounds.
\begin{theorem}[\textbf{Fast-rate Bound}]\label{theorem3.8}
    Assume that the loss function $\ell(\cdot,\cdot,\cdot)$ is bounded within $[0,1]$, then for any $0 \leq C_2\leq\log 2$ and $C_1\geq -\frac{\log(2-e^{C_2})}{C_2}-1$,
  \begin{equation*}
    \overline{\mathrm{gen}} \leq C_1 \hat{\mathcal{R}} + \frac{1}{nm}\sum_{i,j=1}^{n,m} \frac{I(U,W_i;\tilde{S}_i,S_j|T_{2\mathbb{M}}^{2\mathbb{N}})}{C_2}.
  \end{equation*}
  In the interpolating setting, i.e., $\hat{\mathcal{R}} = 0$, we have
  \begin{equation*}
    \mathcal{R}_{\mathcal{T}} \leq  \frac{1}{nm}\sum_{i,j=1}^{n,m} \frac{I(U,W_i;\tilde{S}_i,S_j|T_{2\mathbb{M}}^{2\mathbb{N}})}{\log 2}.
  \end{equation*}
\end{theorem}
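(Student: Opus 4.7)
The plan is to extend the single-step CMI analysis of Theorem~\ref{theorem3.6} by replacing the symmetric divergence between $\hat{\mathcal{R}}$ and $(\hat{\mathcal{R}}+\mathcal{R}_\mathcal{T})/2$ with a Donsker--Varadhan application tailored to the asymmetric weighted gap $\mathcal{R}_\mathcal{T}-(1+C_1)\hat{\mathcal{R}}$. The high-level recipe follows the fast-rate information-theoretic template of \citep{hellstrom2021fast,wang2023tighter,dongtowards}, but the key technical adaptation is that the relevant moment generating function (MGF) must be controlled over two independent Rademacher-type variables $(\tilde{S}_i,S_j)$ rather than one, so a univariate exponential inequality is replaced by a bivariate one.

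As a first step, I would use the exchangeability built into the meta-supersample construction to rewrite the weighted gap on a per-sample basis,
\[
\mathcal{R}_\mathcal{T}-(1+C_1)\hat{\mathcal{R}}
=\mathbb{E}\!\left[\frac{1}{nm}\sum_{i,j=1}^{n,m}\!\Big(L^{i,\bar{\tilde{S}}_i}_{j,\bar{S}_j}-(1+C_1)\,L^{i,\tilde{S}_i}_{j,S_j}\Big)\right],
\]
with the outer expectation over the full joint. For each fixed $(i,j)$, I would then condition on the meta-supersample $T^{2\mathbb{N}}_{2\mathbb{M}}$ and invoke the Donsker--Varadhan variational formula for the pair of measures $P_{U,W_i,\tilde{S}_i,S_j\mid T^{2\mathbb{N}}_{2\mathbb{M}}}$ and $P_{U,W_i\mid T^{2\mathbb{N}}_{2\mathbb{M}}}\otimes P_{\tilde{S}_i,S_j}$, whose KL divergence is the disintegrated CMI $I^{T^{2\mathbb{N}}_{2\mathbb{M}}}(U,W_i;\tilde{S}_i,S_j)$. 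Using the test function $C_2\,\big(L^{i,\bar{\tilde{S}}_i}_{j,\bar{S}_j}-(1+C_1)L^{i,\tilde{S}_i}_{j,S_j}\big)$, this yields
\[
C_2\,\mathbb{E}\!\left[L^{i,\bar{\tilde{S}}_i}_{j,\bar{S}_j}-(1+C_1)L^{i,\tilde{S}_i}_{j,S_j}\,\big|\,T^{2\mathbb{N}}_{2\mathbb{M}}\right]\le I^{T^{2\mathbb{N}}_{2\mathbb{M}}}(U,W_i;\tilde{S}_i,S_j)+\log\Phi_{i,j},
\]
where $\Phi_{i,j}$ is the MGF under the product measure.

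The crux is to show $\log\Phi_{i,j}\le 0$ under the stated constraints on $C_1,C_2$. Under the product measure, $(\tilde{S}_i,S_j)$ is uniform on $\{0,1\}^2$ and independent of $(U,W_i)$, so conditional on $(U,W_i,T^{2\mathbb{N}}_{2\mathbb{M}})$ the four per-position losses $L^{i,a}_{j,b}\in[0,1]$ are deterministic. Averaging over $(\tilde{S}_i,S_j)$ and regrouping the four summands by the antipodal pairings $(00,11)$ and $(01,10)$ reduces $\Phi_{i,j}$ to
\[
\tfrac{1}{2}\,h\!\big(L^{i,0}_{j,0},L^{i,1}_{j,1}\big)+\tfrac{1}{2}\,h\!\big(L^{i,0}_{j,1},L^{i,1}_{j,0}\big),\quad h(x,y)\triangleq\tfrac{1}{2}\!\left(e^{C_2 x-C_2(1+C_1)y}+e^{C_2 y-C_2(1+C_1)x}\right).
\]
It therefore suffices to establish the scalar inequality $h(x,y)\le 1$ on $[0,1]^2$. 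Evaluating at the antipodal corner $(0,1)$ gives $h(0,1)=\tfrac{1}{2}(e^{C_2}+e^{-C_2(1+C_1)})\le 1\iff e^{-C_2(1+C_1)}\le 2-e^{C_2}$, which rearranges exactly to the stated constraints $C_2\le\log 2$ (needed so that $2-e^{C_2}\ge 0$) and $C_1\ge-\log(2-e^{C_2})/C_2-1$. I expect this corner analysis to be the main technical obstacle: I must also verify that $(0,1)$ and $(1,0)$ achieve the global maximum of $h$ on the unit square, ruling out the interior and the diagonals $x=y$, which I plan to do by exploiting the symmetry $h(x,y)=h(y,x)$, checking $\nabla h$ for interior critical points, and applying univariate monotonicity on each boundary segment.

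With $\log\Phi_{i,j}\le 0$ in hand, dividing by $C_2$, summing over $(i,j)$, normalizing by $nm$, and taking the outer expectation over $T^{2\mathbb{N}}_{2\mathbb{M}}$ (which promotes each disintegrated CMI to the CMI) yields the first inequality. The interpolating claim is a limiting corollary: when $\hat{\mathcal{R}}=0$, the term $C_1\hat{\mathcal{R}}$ vanishes for every admissible $C_1$, so one may let $C_2\uparrow\log 2$ (and $C_1\uparrow\infty$ harmlessly) to obtain $\mathcal{R}_\mathcal{T}\le\frac{1}{(\log 2)\,nm}\sum_{i,j}I(U,W_i;\tilde{S}_i,S_j\mid T^{2\mathbb{N}}_{2\mathbb{M}})$.
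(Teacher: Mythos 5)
Your proposal is correct and follows essentially the same route as the paper: the same per-sample rewriting of $\mathcal{R}_{\mathcal{T}}-(1+C_1)\hat{\mathcal{R}}$, the same conditional Donsker--Varadhan application with $P_{U,W_i,\tilde{S}_i,S_j\mid T^{2\mathbb{N}}_{2\mathbb{M}}}$ versus $P_{U,W_i\mid T^{2\mathbb{N}}_{2\mathbb{M}}}\otimes P_{\tilde{S}_i,S_j}$ and test function $C_2\big(L^{i,\bar{\tilde{S}}_i}_{j,\bar{S}_j}-(1+C_1)L^{i,\tilde{S}_i}_{j,S_j}\big)$, the same reduction of the MGF to the scalar bound $h(x,y)\le 1$ with the binding corner $(0,1)$ yielding exactly the stated constraints, and the same limit $C_2\uparrow\log 2$, $C_1\uparrow\infty$ for the interpolating case. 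The only cosmetic difference is that your planned gradient/boundary verification of where $h$ is maximized can be shortcut, as the paper does, by noting $h$ is convex in $(x,y)$ so its maximum over $[0,1]^2$ is attained at a corner.
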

Theorem \ref{theorem3.8} gives a fast-decaying meta-generalization bound, in the sense that the bound benefits from a small empirical error. In the interpolation regime, this upper bound exhibits a faster convergence rate at the order of $\frac{1}{nm}$ as opposed to the conventional order of $\frac{1}{\sqrt{nm}}$.

\subsection{Generalization Bounds with e-CMI} \label{Section3.3}
The evaluated CMI (e-CMI) bounds are originally investigated in \citep{hellstrom2022new}, focusing on the CMI between the evaluated loss pairs and the binary variables, conditioned on the supersamples. This methodology has been extended in a follow-up work \citep{hellstrom2022evaluated} to establish tighter CMI bounds for meta-learning:

\begin{lemma} [Theorem 2 in \citep{hellstrom2022evaluated}] \label{lemma3.9}
    With notations in Lemma \ref{lemma3.2}. Let $L^{i}_{j}= \{L^{i,0}_{j,0}, L^{i,0}_{j,1}, L^{i,1}_{j,0}, L^{i,1}_{j,1}\}$ be the evaluated loss over $T^{2\mathbb{N}}_{2\mathbb{M}}$. Assume that the loss function $\ell(\cdot,\cdot,\cdot) \in [0,1]$, then
    \begin{align*}
        \vert \overline{\mathrm{gen}} \vert \leq \frac{1}{nm}\sum_{i,j=1}^{n,m} \mathbb{E}_{T^{2\mathbb{N}}_{2\mathbb{M}}}\sqrt{2I^{T^{2\mathbb{N}}_{2\mathbb{M}}} (L^i_j; \tilde{S}_i,S^i_j)}.
    \end{align*}
 
\end{lemma}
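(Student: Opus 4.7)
The plan is to rewrite the meta-generalization gap in supersample form as a sum of loss-difference terms, then apply the Donsker--Varadhan change-of-measure inequality pointwise against the product of the conditional marginals of the loss tuple $L^i_j$ and the membership pair $(\tilde S_i, S^i_j)$ given the supersample. This mirrors the e-CMI template of \citep{hellstrom2022new,hellstrom2022evaluated}, adapted to the two-level (task / within-task) membership structure.

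First I would establish the supersample identity
\[ \overline{\mathrm{gen}} = \mathbb{E}\Big[\tfrac{1}{nm}\sum_{i,j}\bigl(L^{i,\bar{\tilde S}_i}_{j,\bar S^i_j} - L^{i,\tilde S_i}_{j,S^i_j}\bigr)\Big], \]
with expectation over $(T^{2\mathbb{N}}_{2\mathbb{M}}, \tilde S_{\mathbb{N}}, S^{\mathbb{N}}_{\mathbb{M}}, U, W_{\mathbb{N}})$. The empirical side follows directly because $T^{\mathbb{N}}_{\mathbb{M}} = T^{2\mathbb{N},\tilde S_{\mathbb{N}}}_{2\mathbb{M}, S^{\mathbb{N}}_{\mathbb{M}}}$ and the membership vectors are independent of the supersample. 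The population side follows from the exchangeable i.i.d.\ construction: marginalizing over the uniform $\tilde S_i$ swaps the roles of the two sibling tasks, so the test-side configuration is distributionally equivalent to tuning on a task and evaluating on an independent fresh draw from $P_{\mathcal{T}}$.

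Second, fix $T^{2\mathbb{N}}_{2\mathbb{M}}=t$ and indices $(i,j)$, and let $P = P_{L^i_j, \tilde S_i, S^i_j \mid t}$ and $Q = P_{L^i_j\mid t}\otimes P_{\tilde S_i, S^i_j\mid t}$. Set $f := L^{i,\bar{\tilde S}_i}_{j,\bar S^i_j} - L^{i,\tilde S_i}_{j,S^i_j}$, which is a deterministic function of $(L^i_j, \tilde S_i, S^i_j)$. Under $Q$, $L^i_j$ is independent of the uniform pair $(\tilde S_i, S^i_j)\sim\mathrm{Unif}(\{0,1\}^2)$, so both summands are uniformly distributed over the four entries of $L^i_j$, giving $\mathbb{E}_Q[f]=0$. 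Since $f\in[-1,1]$, Hoeffding's lemma makes $f$ a $1$-subgaussian random variable under $Q$, and the standard subgaussian change-of-measure inequality yields
\[ |\mathbb{E}_P[f]| \leq \sqrt{2\, D(P\Vert Q)} = \sqrt{2\, I^{T^{2\mathbb{N}}_{2\mathbb{M}}=t}(L^i_j;\tilde S_i, S^i_j)}. \]
Applying the triangle inequality to $|\overline{\mathrm{gen}}|$, pushing the absolute value inside the outer expectation over $T^{2\mathbb{N}}_{2\mathbb{M}}$, and plugging in this pointwise bound delivers the stated inequality.

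The principal obstacle is making the population-side identity in step one rigorous, since in the supersample formalism the task-specific hypothesis $W_i$ appearing in $L^{i,\bar{\tilde S}_i}_{j,\bar S^i_j}$ is tuned from data associated with the task indexed by $\tilde S_i$ rather than with the sibling task carrying the evaluation sample. The resolution hinges on the exchangeability of the two task copies and of the $2m$ within-task samples: after conditioning on $U$ and averaging over the uniform memberships, one can relabel the supersample halves so that ``train on $\tilde S_i$, evaluate on $\bar{\tilde S}_i$'' becomes distributionally equivalent to ``train on a fresh task, evaluate on an independent sample from it,'' which matches $\mathcal{R}_{\mathcal{T}}$. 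Once this symmetry is formalized, steps two and three reduce to a standard e-CMI calculation.
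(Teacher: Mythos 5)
Your steps two and three are sound, and they coincide with the template this paper itself uses for its analogous result: note that Lemma \ref{lemma3.9} is not proved in this paper (it is quoted from \citet{hellstrom2022evaluated}), and the in-house counterpart is the proof of Theorem \ref{theorem4}, which does exactly what you propose — rewrite the gap in supersample form, then apply Donsker--Varadhan with $P=P_{L^i_j,\tilde S_i,S_j\mid T^{2\mathbb N}_{2\mathbb M}}$, $Q=P_{L^i_j\mid T^{2\mathbb N}_{2\mathbb M}}P_{\tilde S_i,S_j}$ and the zero-mean, $1$-sub-gaussian loss difference.

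The genuine gap is in your step one, at exactly the point you flagged, and your proposed resolution does not close it. If the $W_i$ entering $L^{i,\bar{\tilde S}_i}_{j,\bar S^i_j}$ is the parameter adapted on the data of the selected meta-training task copy $\tilde S_i$, then averaging over memberships and relabeling the two halves of a task pair only shows that the test-side expectation equals $\mathbb{E}_{U}\mathbb{E}_{\tau,T_{\mathbb M}\mid\tau,W\mid U,T_{\mathbb M}}\mathbb{E}_{\tau'}\mathbb{E}_{Z'\mid\tau'}[\ell(U,W,Z')]$, i.e.\ the risk of a parameter adapted to one task and evaluated on an \emph{independent} task. Exchangeability cannot recreate the dependence between the adaptation data and the evaluation sample that defines $\mathcal{R}(U,\mathcal T)=\mathbb{E}_{\tau}\mathbb{E}_{T_{\mathbb M}\mid\tau}\mathbb{E}_{W\mid U,T_{\mathbb M}}\mathbb{E}_{Z\mid\tau}[\ell(U,W,Z)]$, and the two quantities differ in general (take two tasks with opposite labelings). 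The correct fix is definitional rather than combinatorial: in the evaluated-CMI construction of \citet{hellstrom2022evaluated}, and in the identity this paper establishes for its own setting (see the chain leading to (\ref{equ21}), where the test-side expectation is taken over $W_i$ drawn from $P_{W\mid U,\,T^{i,\bar{\tilde S}_i}_{2\mathbb M,S_{\mathbb M}}}$), the entries of the tuple $L^i_j$ carrying the superscript $\bar{\tilde S}_i$ are computed with a task parameter \emph{re-adapted} from $U$ on the in-task training split of the meta-test task copy, not with the meta-training $W_i$. Under that convention the population-side identity follows immediately from the independence of the four quadrants of the meta-supersample (as in (\ref{equ20})--(\ref{equ21})), and your sub-gaussian step applies verbatim to the resulting loss tuple; under the convention you state, the identity in step one is false, so the argument as written does not establish the lemma.
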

It is noteworthy that under the same setting as \citep{rezazadeh2021conditional}, Lemma \ref{lemma3.9} enhances the upper bound in Lemma \ref{lemma3.2} by deriving a refined single-step bound, which incorporates the CMI between the evaluated losses $L^{i}_{j}$ and both the environment-level and task-level binary variables $\tilde{S}_i$ and $S^i_j$. However, this bound involves $n+nm$ membership variables $\tilde{S}_i$ and $S^i_j$, potentially exhibiting a substantial cumulative effect of $I^{T^{2\mathbb{N}}_{2\mathbb{M}}} (L^i_j; \tilde{S}_i,S^i_j)$ as $n$ and $m$ increase.

Building upon the proposed meta-supersample setting, we improve the aforementioned bounds by leveraging loss-based CMI with fewer $n+m$ membership variables $\tilde{S}_i$ and $S_j$:

\begin{theorem}\label{theorem4}
    Assume that the loss function $\ell(\cdot,\cdot,\cdot) \in [0,1]$, then
    \begin{equation*}
        \vert \overline{\mathrm{gen}} \vert \leq \frac{1}{nm}\sum_{i,j=1}^{n,m} \mathbb{E}_{T^{2\mathbb{N}}_{2\mathbb{M}}}\sqrt{2I^{T^{2\mathbb{N}}_{2\mathbb{M}}} (L^i_j; \tilde{S}_i,S_j)}.
    \end{equation*}
\end{theorem}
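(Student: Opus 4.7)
The plan is to establish a single-step Donsker--Varadhan bound on each samplewise contribution to the meta-generalization gap, working directly with the evaluated loss pair $L^i_j$ and only the minimal membership variables $(\tilde{S}_i, S_j)$ at that index. First I would write the gap in symmetrized supersample form
\[
\overline{\mathrm{gen}} = \frac{1}{nm}\sum_{i=1}^n\sum_{j=1}^m \mathbb{E}\left[L^{i,\bar{\tilde{S}}_i}_{j,\bar{S}_j} - L^{i,\tilde{S}_i}_{j,S_j}\right],
\]
identifying the meta-training risk with the ``selected'' entries and, by the i.i.d.\ structure of each task pair and sample pair together with the exchange $(\tilde{S}_i, S_j) \leftrightarrow (\bar{\tilde{S}}_i, \bar{S}_j)$, identifying the population meta-risk with the ``complement'' entries.

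For each $(i,j)$, let $P = P_{L^i_j, \tilde{S}_i, S_j \mid T^{2\mathbb{N}}_{2\mathbb{M}}}$ and $Q = P_{L^i_j \mid T^{2\mathbb{N}}_{2\mathbb{M}}} \otimes P_{\tilde{S}_i, S_j}$, so that $D(P \Vert Q) = I^{T^{2\mathbb{N}}_{2\mathbb{M}}}(L^i_j; \tilde{S}_i, S_j)$. The key symmetry step is that for any realized tuple $(l_{00}, l_{01}, l_{10}, l_{11})$ of $L^i_j$, the function $(\tilde{s}, s) \mapsto l_{(1-\tilde{s})(1-s)} - l_{\tilde{s}\, s}$ flips sign under the involution $(\tilde{s}, s) \mapsto (1-\tilde{s}, 1-s)$, so averaging over uniform $(\tilde{S}_i, S_j)$ gives $\mathbb{E}_Q[L^{i,\bar{\tilde{S}}_i}_{j,\bar{S}_j} - L^{i,\tilde{S}_i}_{j,S_j} \mid T^{2\mathbb{N}}_{2\mathbb{M}}] = 0$. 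Since $\ell \in [0,1]$, the loss difference lies in $[-1,1]$ and is thus $1$-sub-Gaussian under $Q(\cdot \mid T^{2\mathbb{N}}_{2\mathbb{M}})$, so the disintegrated Donsker--Varadhan inequality, optimized over the dual parameter and symmetrized in sign, yields
\[
\left|\mathbb{E}_P\left[L^{i,\bar{\tilde{S}}_i}_{j,\bar{S}_j} - L^{i,\tilde{S}_i}_{j,S_j} \,\big|\, T^{2\mathbb{N}}_{2\mathbb{M}}\right]\right| \leq \sqrt{2\, I^{T^{2\mathbb{N}}_{2\mathbb{M}}}(L^i_j; \tilde{S}_i, S_j)}.
\]
Taking expectation over $T^{2\mathbb{N}}_{2\mathbb{M}}$, pulling the absolute value through the $(i,j)$-sum by the triangle inequality, and dividing by $nm$ delivers the claimed bound.

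The main obstacle I expect is the first step: $W_i$ is produced by the meta-learning algorithm from the selected training subset and hence depends implicitly on $(\tilde{S}_{\mathbb{N}}, S_{\mathbb{M}})$, while the sample-membership vector $S_{\mathbb{M}}$ is shared across tasks rather than per-task as in Lemma \ref{lemma3.9}. Establishing the supersample identity cleanly will likely require coupling the training- and testing-side task-specific parameters through the symmetry of the supersample distribution, so that the zero-mean argument above applies entrywise on the four-coordinate loss pair $L^i_j$ without the need for extra per-task sample-membership variables.
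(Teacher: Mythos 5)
Your proposal is correct and follows essentially the same route as the paper's proof: the supersample identity expressing $\hat{\mathcal{R}}$ and $\mathcal{R}_{\mathcal{T}}$ via the selected and complement entries (the paper establishes this in equations (20)–(21) of the proof of Theorem \ref{theorem3.5}, which handles exactly the dependence of $W_i$ on the membership vectors that you flag), followed by a per-$(i,j)$ Donsker--Varadhan argument with $P=P_{L^i_j,\tilde{S}_i,S_j|T^{2\mathbb{N}}_{2\mathbb{M}}}$, $Q=P_{L^i_j|T^{2\mathbb{N}}_{2\mathbb{M}}}P_{\tilde{S}_i,S_j}$, using that the loss difference is zero-mean and $1$-sub-Gaussian under $Q$, then Jensen over $T^{2\mathbb{N}}_{2\mathbb{M}}$ and the triangle inequality over indices. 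No substantive gap remains.
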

We further derive the following upper bound based on the MI between the loss $L^i_j$ and the membership variables $ \tilde{S}_i,S_j$, without conditioning on the meta-supersample dataset $T^{2\mathbb{N}}_{2\mathbb{M}}$:
\begin{theorem}\label{theorem5}
    Assume that the loss function $\ell(\cdot,\cdot,\cdot) \in [0,1]$, then
        \begin{equation*}
            \vert \overline{\mathrm{gen}} \vert \leq \frac{1}{nm}\sum_{i,j=1}^{n,m}  \sqrt{2I (L^i_j; \tilde{S}_i,S_j)}.
        \end{equation*}
\end{theorem}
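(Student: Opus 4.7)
The plan is to adapt the Donsker--Varadhan argument underlying Theorem \ref{theorem4} by working directly with the joint distribution $P_{L^i_j, \tilde{S}_i, S_j}$ and the product of marginals $P_{L^i_j} \otimes P_{\tilde{S}_i, S_j}$, rather than their disintegrated versions given $T^{2\mathbb{N}}_{2\mathbb{M}}$. Because the membership variables $(\tilde{S}_{\mathbb{N}}, S_{\mathbb{M}})$ are uniform and independent of the i.i.d.\ supersample, the meta-generalization gap admits the samplewise representation
\begin{equation*}
\overline{\mathrm{gen}} = \frac{1}{nm}\sum_{i,j=1}^{n,m} \mathbb{E}\bigl[L^{i,\bar{\tilde{S}}_i}_{j,\bar{S}_j} - L^{i,\tilde{S}_i}_{j,S_j}\bigr],
\end{equation*}
where each summand is the expectation of a function $g_{i,j}(L^i_j, \tilde{S}_i, S_j)$ of just the loss quadruple and the two binary variables; this places the problem in a per-sample input-output MI setting.

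The second step is to observe that $g_{i,j}$ has zero mean under the product distribution $Q_{i,j} := P_{L^i_j} \otimes P_{\tilde{S}_i, S_j}$. Indeed, for any fixed $L^i_j$, averaging the difference $L^{i,\bar{\tilde{S}}_i}_{j,\bar{S}_j} - L^{i,\tilde{S}_i}_{j,S_j}$ over the four equally likely values of $(\tilde{S}_i, S_j) \in \{0,1\}^2$ cancels term by term, since each of the four loss coordinates appears once with a plus sign and once with a minus sign. Combined with $\ell \in [0,1]$, so that $g_{i,j} \in [-1,1]$, Hoeffding's lemma yields $\log \mathbb{E}_{Q_{i,j}}[e^{\lambda g_{i,j}}] \leq \lambda^2/2$ for all $\lambda \in \mathbb{R}$.

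Applying Donsker--Varadhan to the test function $\lambda g_{i,j}$ with $P = P_{L^i_j, \tilde{S}_i, S_j}$ and $Q = Q_{i,j}$ then gives, for any $\lambda > 0$,
\begin{equation*}
\lambda\, \mathbb{E}_{P}[g_{i,j}] \leq \log \mathbb{E}_{Q}[e^{\lambda g_{i,j}}] + I(L^i_j; \tilde{S}_i, S_j) \leq \frac{\lambda^2}{2} + I(L^i_j; \tilde{S}_i, S_j),
\end{equation*}
and an identical bound for $-g_{i,j}$. Optimizing over $\lambda > 0$ produces $|\mathbb{E}[g_{i,j}]| \leq \sqrt{2\, I(L^i_j; \tilde{S}_i, S_j)}$, and summing over $(i,j)$, dividing by $nm$, and invoking the triangle inequality delivers the stated bound.

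The main conceptual obstacle is the first step: carefully verifying the samplewise representation of $\overline{\mathrm{gen}}$ via the independence and exchangeability built into the supersample construction (an argument already used implicitly in the proofs of Theorems \ref{theorem3.5} and \ref{theorem4}). A subtlety worth flagging is that Theorem \ref{theorem5} does not follow from Theorem \ref{theorem4} by Jensen's inequality, since $I(L^i_j; \tilde{S}_i, S_j)$ and the expected disintegrated MI $\mathbb{E}_{T^{2\mathbb{N}}_{2\mathbb{M}}} I^{T^{2\mathbb{N}}_{2\mathbb{M}}}(L^i_j; \tilde{S}_i, S_j)$ are not comparable in general; the direct derivation against the marginal product $Q_{i,j}$ is therefore essential.
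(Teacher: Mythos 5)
Your proposal is correct and follows essentially the same route as the paper's proof: the paper likewise reduces $\overline{\mathrm{gen}}$ to the samplewise loss-difference representation, then applies the Donsker--Varadhan formula (Lemma \ref{lemmaA.5}) with $P=P_{L^i_j,\tilde{S}_i,S_j}$ and $Q=P_{L^i_j}P_{\tilde{S}_i,S_j}$, using that the difference is bounded in $[-1,1]$ (hence $1$-sub-gaussian) and has zero mean under the product measure, before optimizing the exponent. Your explicit Hoeffding-plus-$\lambda$-optimization and the paper's sub-gaussian lemma are the same calculation, so there is nothing substantive to distinguish the two arguments.
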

Notably, the above bound is strictly tighter than its conditional counterpart in Theorem \ref{theorem4}, as $I(L^i_j;  \tilde{S}_i,S_j)\leq I(L^i_j;  \tilde{S}_i,S_j) + I(T^{2\mathbb{N}}_{2\mathbb{M}}; \tilde{S}_i,S_j| L^i_j) = I(L^i_j ;  \tilde{S}_i,S_j|T^{2\mathbb{N}}_{2\mathbb{M}})$ by the independence between $ \tilde{S}_i,S_j $, and $T^{2\mathbb{N}}_{2\mathbb{M}}$. Theorem \ref{theorem5} thus provides a stronger theoretical guarantee. In a parallel development, we establish the binary KL divergence bound based on the loss-based MI:
\begin{theorem}\label{theorem6}
    Assume that the loss function $\ell(\cdot,\cdot,\cdot) \in [0,1]$, then
        \begin{equation*}
            d\left(\hat{\mathcal{R}} \Big\Vert \frac{\hat{\mathcal{R}} + \mathcal{R}_{\mathcal{T}}}{2} \right) \leq \frac{1}{nm}\sum_{i,j=1}^{n,m} I(L^i_j ; \tilde{S}_i, S_j).
        \end{equation*}
\end{theorem}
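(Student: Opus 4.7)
The plan is to combine the joint convexity of the binary KL divergence, the Donsker-Varadhan variational formula, and the intrinsic exchange symmetry of the meta-supersample construction, in a way that parallels the derivation of Theorem \ref{theorem3.6} but replaces the hypothesis pair $(U,W_i)$ by the lower-dimensional loss tuple $L^i_j$, thereby yielding the tighter loss-based bound.

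First, I would apply the joint convexity of $d(\cdot\Vert\cdot)$ together with Jensen's inequality to decouple the global binary KL divergence into a per-sample average
\begin{equation*}
   d\!\left(\hat{\mathcal{R}} \,\Big\Vert\, \tfrac{\hat{\mathcal{R}}+\mathcal{R}_{\mathcal{T}}}{2}\right)
   \le \frac{1}{nm}\sum_{i,j=1}^{n,m} d\!\left(\hat{\mathcal{R}}_{i,j} \,\Big\Vert\, \tfrac{\hat{\mathcal{R}}_{i,j}+\mathcal{R}_{\mathcal{T},i,j}}{2}\right),
\end{equation*}
where $\hat{\mathcal{R}}_{i,j}=\mathbb{E}[L^{i,\tilde{S}_i}_{j,S_j}]$ and $\mathcal{R}_{\mathcal{T},i,j}=\mathbb{E}[L^{i,\bar{\tilde{S}}_i}_{j,\bar{S}_j}]$ are the per-sample training and test meta-risks, so it suffices to prove the per-sample inequality $d(\hat{\mathcal{R}}_{i,j}\Vert(\hat{\mathcal{R}}_{i,j}+\mathcal{R}_{\mathcal{T},i,j})/2)\le I(L^i_j;\tilde{S}_i,S_j)$.

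For each $(i,j)$ I would invoke the Donsker-Varadhan inequality between $P=P_{L^i_j,\tilde{S}_i,S_j}$ and the product $Q=P_{L^i_j}\otimes P_{\tilde{S}_i,S_j}$, for which $D(P\Vert Q)=I(L^i_j;\tilde{S}_i,S_j)$, with the test function $\phi(L,s,t)=\lambda\, L^{s,t}$ for $\lambda\in\mathbb{R}$. By construction $\mathbb{E}_P[\phi]=\lambda\hat{\mathcal{R}}_{i,j}$; combining the trivial convexity inequality $e^{\lambda L}\le 1-L+Le^\lambda$ valid for $L\in[0,1]$ with linearity in expectation gives the log-MGF bound $\log\mathbb{E}_Q[e^{\lambda L^{i,\tilde{S}_i}_{j,S_j}}] \le \log(1-q_{i,j}+q_{i,j}e^\lambda)$, where $q_{i,j}=\mathbb{E}_Q[L^{i,\tilde{S}_i}_{j,S_j}]$. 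Taking the supremum over $\lambda$ recovers the variational characterization of the binary KL, so $d(\hat{\mathcal{R}}_{i,j}\Vert q_{i,j})\le I(L^i_j;\tilde{S}_i,S_j)$.

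The key step is then to identify $q_{i,j}$ with the desired midpoint $(\hat{\mathcal{R}}_{i,j}+\mathcal{R}_{\mathcal{T},i,j})/2$. This is where the meta-supersample symmetry enters: the two tasks $\tilde{\tau}_{i,0},\tilde{\tau}_{i,1}$ are i.i.d.\ from $P_{\mathcal{T}}$ and, conditionally on each task, the two sample positions are i.i.d.\ from the task-conditional distribution, so the joint law of $(L^i_j,\tilde{S}_i,S_j)$ is invariant under the combined swap of a membership bit and the corresponding data labels. Together with the uniform law of $(\tilde{S}_i,S_j)$, this exchangeability lets one reorganize the four marginal expectations $\{\mathbb{E}_{P_{L^i_j}}[L^{i,s}_{j,t}]\}_{s,t}$ appearing in $q_{i,j}$ into a symmetric pairing of training- and test-type contributions equal to $(\hat{\mathcal{R}}_{i,j}+\mathcal{R}_{\mathcal{T},i,j})/2$, mirroring the familiar single-task identity $\mathbb{E}_Q[L^{S}]=(\hat R+R)/2$.

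The main obstacle will be this last symmetry identification in the $2\times 2$ meta-supersample grid: unlike the single-task case where a single swap suffices, here the algorithm's output $W_i$ depends jointly on $\tilde{S}_i$ and the entire sample-membership vector $S_{\mathbb{M}}$, so one must carefully separate the task-level and sample-level exchange symmetries and argue that marginalizing $(\tilde{S}_i,S_j)$ out of $L^i_j$ under the product measure is consistent with the mean of training and test losses. I expect it to be cleanest to first condition on $T^{2\mathbb{N}}_{2\mathbb{M}}$ (as in the proof of Theorem \ref{theorem3.6}) and perform the symmetry argument at the level of the evaluated losses before integrating the supersample out to obtain the unconditional MI $I(L^i_j;\tilde{S}_i,S_j)$.
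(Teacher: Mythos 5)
Your reduction to a per-sample inequality via joint convexity of $d(\cdot\Vert\cdot)$ is fine and parallels the paper's first step, but the step you yourself flag as the key one---identifying $q_{i,j}=\mathbb{E}_Q\big[L^{i,\tilde S_i}_{j,S_j}\big]$ under $Q=P_{L^i_j}\otimes P_{\tilde S_i,S_j}$ with the midpoint $(\hat{\mathcal{R}}_{i,j}+\mathcal{R}_{\mathcal{T},i,j})/2$---is false in the meta-supersample setting, and no exchange symmetry rescues it. Under $Q$ the masks are uniform and independent of the loss tuple, so $q_{i,j}=\tfrac14\sum_{s,t\in\{0,1\}}\mathbb{E}[L^{i,s}_{j,t}]$; re-indexing the same four entries by the true masks gives, exactly,
\begin{equation*}
q_{i,j}=\tfrac14\Big(\hat{\mathcal{R}}_{i,j}+\mathcal{R}_{\mathcal{T},i,j}+A_{i,j}+B_{i,j}\Big),\qquad A_{i,j}=\mathbb{E}\big[L^{i,\tilde S_i}_{j,\bar S_j}\big],\quad B_{i,j}=\mathbb{E}\big[L^{i,\bar{\tilde S}_i}_{j,S_j}\big],
\end{equation*}
where $A_{i,j}$ is the loss on a held-out sample of a meta-training task and $B_{i,j}$ the loss on a training-position sample of the meta-test task. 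Your identification requires $A_{i,j}+B_{i,j}=\hat{\mathcal{R}}_{i,j}+\mathcal{R}_{\mathcal{T},i,j}$, which is a substantive claim that fails in general (e.g.\ an interpolating meta-learner with small $m$ has $\hat{\mathcal{R}}_{i,j}\approx 0$ and $B_{i,j}\approx 0$, while $A_{i,j}$ is typically strictly smaller than $\mathcal{R}_{\mathcal{T},i,j}$ because the meta-parameter has seen that task). The familiar single-task identity $\mathbb{E}_Q[L^{S}]=(\hat R+R)/2$ works only because the single-task loss tuple has two entries whose roles are exactly train/test; in the $2\times 2$ grid the mask-marginalization mixes in the two cross entries, and the swap symmetry you invoke only shows the four entry-marginals are equal to one another---it does not make the cross risks cancel. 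So your argument, carried through, proves $d(\hat{\mathcal{R}}_{i,j}\Vert q_{i,j})\le I(L^i_j;\tilde S_i,S_j)$ for the wrong center $q_{i,j}$, not the stated theorem.

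The paper avoids this precisely by not using a linear test function. In its Donsker--Varadhan step (as in Theorem \ref{theorem3.6}) the function is $f=d_\gamma\big(L^{i,\tilde S_i}_{j,S_j}\,\big\Vert\,(L^{i,\tilde S_i}_{j,S_j}+L^{i,\bar{\tilde S}_i}_{j,\bar S_j})/2\big)$, i.e.\ the comparison point is the \emph{random}, parity-dependent midpoint of the selected diagonal pair. On the $P$-side the expectation of this midpoint is exactly $(\hat{\mathcal{R}}_{i,j}+\mathcal{R}_{\mathcal{T},i,j})/2$, so joint convexity of $d_\gamma$ and a final $\sup_\gamma$ give the left-hand side of the theorem; on the $Q$-side, conditioning on the parity $\tilde S'_i\oplus S'_j$ and on $L^i_j$, the selected loss is uniform over the two entries of that pair, its conditional mean is the midpoint, and Lemma \ref{lemmaA.8} bounds the exponential term by $1$. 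If you want to keep a linear functional of the loss, you must first pass to the selected pair $L^{\Psi}_{i,j}$ (conditioning on the parity), which is the loss-difference machinery of Theorems \ref{theorem7}--\ref{theorem10}, not a symmetry property of $L^i_j$ together with the raw masks.
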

Theorem \ref{theorem6} exhibits a comparable convergence rate of $\mathcal{O}(1/nm)$ to previous work \citep{hellstrom2022evaluated}, while improving upon their results by employing tighter MI terms $I(L^i_j ; \tilde{S}_i, S_j)$ and reducing the cumulative effect associated with $n+m$ membership variables. In the special case $n=1$, our bounds recover classical single-task generalization results \citep{harutyunyan2021information,hellstrom2022new}. Additionally, we notice that these loss-based upper bounds, originally formulated based on the dependence between loss pairs and two 1-dimensional variables, can be tightened by exploring the dependence between only two one-dimensional variables (loss difference and sample mask), as will be detailed in the subsequent section.

\subsection{Generalization Bounds via Loss Difference}\label{section3.3}
Building on the loss-difference (ld) methodology \citep{wang2023tighter}, we integrate the member variables $\tilde{S}_i$ and $S_j$ into a single variable $\Psi$ and establish the following improved bounds based on loss difference $\Delta_{i,j}^{\Psi}$, where $\Psi$ determines a pair of losses from $L^i_j$ on the meta-training and test data, from which $\Delta_{i,j}^{\Psi}$ is calculated:

\begin{theorem}\label{theorem7}
    Assume that the loss function $\ell(\cdot,\cdot,\cdot) \in [0,1]$, then
    \begin{equation*}
        \vert \overline{\mathrm{gen}} \vert\leq    \frac{1}{nm}\sum_{i,j=1}^{n,m} \sqrt{2I(\Delta_{i,j}^{\Psi};S_j|T_{2\mathbb{M}}^{2\mathbb{N}})}.
    \end{equation*}
\end{theorem}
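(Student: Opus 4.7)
The plan is to follow the loss-difference template of \citet{wang2023tighter}, adapted to the meta-learning supersample via the XOR structure $\Psi_{i,j}=\tilde{S}_i\oplus S_j$. Specifically, I would first establish the key identity
\begin{equation*}
\overline{\mathrm{gen}} \;=\; \frac{1}{nm}\sum_{i,j=1}^{n,m}\mathbb{E}\bigl[(1-2S_j)\,\Delta_{i,j}^{\Psi}\bigr],
\end{equation*}
and then bound each summand by a conditional Donsker--Varadhan argument.

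For the identity, my approach proceeds in two sub-steps. First, within-task exchangeability of the two sample positions $0$ and $1$ allows one to replace the population meta-risk $\mathcal{R}_{\mathcal{T}}$ by the expected loss evaluated at the held-out position $\tilde{Z}^{i,\tilde{S}_i}_{j,\bar{S}_j}$, which, conditionally on the task and on $W_i$, has the same distribution as a fresh test sample drawn from that task. This reduces $\overline{\mathrm{gen}}$ to the within-task loss difference $\tfrac{1}{nm}\sum\mathbb{E}[L^{i,\tilde{S}_i}_{j,\bar{S}_j}-L^{i,\tilde{S}_i}_{j,S_j}]$. Second, a direct case analysis over $(\tilde{S}_i,S_j)\in\{0,1\}^2$, combined with the exchangeability of the two i.i.d.\ tasks inside each supersample pair, shows that this expression coincides with $\mathbb{E}[(1-2S_j)\Delta_{i,j}^{\Psi}]$; the sign $(1-2S_j)$ absorbs the way the base task index $\tilde{\Psi}_{i,j}=1\oplus\tilde{S}_i\oplus S_j$ in $\Delta_{i,j}^{\Psi}$ flips between $\tilde{S}_i$ and $\bar{\tilde{S}}_i$ as $S_j$ varies, and the exchangeability of the task pair ensures the two resulting expectations agree in magnitude.

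With the identity in hand, the bound follows from standard information-theoretic machinery. Conditioning on $T^{2\mathbb{N}}_{2\mathbb{M}}$, the membership variable $S_j$ remains uniform on $\{0,1\}$ and independent of $T^{2\mathbb{N}}_{2\mathbb{M}}$, so under the product of conditional marginals $P_{\Delta_{i,j}^{\Psi}\mid T^{2\mathbb{N}}_{2\mathbb{M}}}\otimes P_{S_j}$ the expectation of $(1-2S_j)\Delta_{i,j}^{\Psi}$ vanishes. Since $(1-2S_j)\Delta_{i,j}^{\Psi}\in[-1,1]$ almost surely, Hoeffding's lemma yields $1$-sub-Gaussianity, and the Donsker--Varadhan variational representation of the KL divergence gives
\begin{equation*}
\bigl|\mathbb{E}\bigl[(1-2S_j)\Delta_{i,j}^{\Psi}\mid T^{2\mathbb{N}}_{2\mathbb{M}}\bigr]\bigr| \;\leq\; \sqrt{2\,I^{T^{2\mathbb{N}}_{2\mathbb{M}}}(\Delta_{i,j}^{\Psi};S_j)}.
\end{equation*}
Taking the outer expectation over $T^{2\mathbb{N}}_{2\mathbb{M}}$, passing the absolute value through by the triangle inequality, applying Jensen's inequality (with $\sqrt{\cdot}$ concave) to replace $\mathbb{E}_{T^{2\mathbb{N}}_{2\mathbb{M}}}\sqrt{I^{T^{2\mathbb{N}}_{2\mathbb{M}}}(\cdot)}$ by $\sqrt{I(\cdot\mid T^{2\mathbb{N}}_{2\mathbb{M}})}$, and summing over $i,j$ then completes the proof.

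The main obstacle is the identity in the first step. The subtle point is that $\Delta_{i,j}^{\Psi}$ is always a within-task difference for the task indexed by $\tilde{\Psi}_{i,j}$, which can be either the meta-training task $\tilde{S}_i$ or the meta-test task $\bar{\tilde{S}}_i$ depending on $S_j$, whereas $\overline{\mathrm{gen}}$ implicitly aggregates both a cross-task component (population over new tasks vs.\ empirical over meta-training tasks) and a within-task component (fresh samples vs.\ training samples). Collapsing the cross-task component requires carefully exploiting the i.i.d.\ structure of the two tasks inside each supersample pair, so that the signed quantity $(1-2S_j)\Delta_{i,j}^{\Psi}$ reproduces the meta-generalization gap in expectation, and the sign-correlation with $S_j$ is precisely what is quantified by $I(\Delta_{i,j}^{\Psi};S_j\mid T^{2\mathbb{N}}_{2\mathbb{M}})$.
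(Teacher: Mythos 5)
Your second half (the Donsker--Varadhan step) is exactly the paper's argument: condition on $T^{2\mathbb{N}}_{2\mathbb{M}}$, note $(-1)^{S_j}\Delta^{\Psi}_{i,j}\in[-1,1]$ is $1$-sub-Gaussian and has zero mean under $P_{\Delta^{\Psi}_{i,j}\mid T^{2\mathbb{N}}_{2\mathbb{M}}}\otimes P_{S_j}$, obtain $\bigl|\mathbb{E}[(-1)^{S_j}\Delta^{\Psi}_{i,j}\mid T^{2\mathbb{N}}_{2\mathbb{M}}]\bigr|\le\sqrt{2I^{T^{2\mathbb{N}}_{2\mathbb{M}}}(\Delta^{\Psi}_{i,j};S_j)}$, and finish with Jensen over $T^{2\mathbb{N}}_{2\mathbb{M}}$. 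The gap is in your derivation of the key identity. Your first sub-step reduces $\overline{\mathrm{gen}}$ to the within-task difference $\frac{1}{nm}\sum_{i,j}\mathbb{E}\bigl[L^{i,\tilde{S}_i}_{j,\bar{S}_j}-L^{i,\tilde{S}_i}_{j,S_j}\bigr]$, i.e., a held-out-sample loss on the \emph{meta-training} task $\tilde{S}_i$. This is not the meta-generalization gap: the population meta-risk $\mathcal{R}_{\mathcal{T}}$ is an expectation over a \emph{fresh} task, fresh adaptation data, and a freshly adapted $W$, and in the meta-supersample representation (equations (\ref{equ20})--(\ref{equ21}) in the proof of Theorem \ref{theorem3.5}) it appears as the loss on $\tilde{Z}^{i,\bar{\tilde{S}}_i}_{j,\bar{S}_j}$ with $W_i$ drawn from the adaptation kernel applied to $T^{i,\bar{\tilde{S}}_i}_{2\mathbb{M},S_{\mathbb{M}}}$, the training positions of the complementary task. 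Exchangeability of the two sample positions cannot justify your replacement, because $U$ is trained using task $\tilde{S}_i$ and its selected samples, so the held-out-position loss on an observed task captures only in-task generalization and omits the environment-level (new-task) component entirely.

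Your second sub-step then tries to repair this by invoking exchangeability of the two i.i.d.\ tasks in each pair, but that symmetry does not hold conditionally on which task was selected for meta-training: the selected task $\tilde{S}_i$ influences $U$ (and $W_i$), while $\bar{\tilde{S}}_i$ does not, and this asymmetry is precisely the cross-task component you flag in your last paragraph. Indeed, if both of your sub-steps were valid you would have proved that the within-task gap on observed tasks equals the full meta-generalization gap, which is false in general, so at least one step must break. The paper avoids this by never passing through the same-task difference: it starts directly from the cross-task, cross-position representation of $\mathcal{R}_{\mathcal{T}}-\hat{\mathcal{R}}$ given by (\ref{equ20})--(\ref{equ21}), performs the case analysis over $(\tilde{S}_i,S_j)$ with the sign $(-1)^{S_j}$ and the XOR variable $\Psi_{i,j}$ to identify the resulting conditional expectation with $\mathbb{E}[(-1)^{S_j}\Delta^{\Psi}_{i,j}\mid T^{2\mathbb{N}}_{2\mathbb{M}}]$, and only then applies the variational bound. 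To fix your proof you would need to carry out this identification at the level of the supersample representation itself (keeping track of which task's training positions the task parameter is adapted on), rather than first collapsing $\mathcal{R}_{\mathcal{T}}$ to a held-out loss of a meta-training task.
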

This bound depends only on the information contained in the scalar $\Delta_{i,j}^{\Psi}$ and a single variable $S_j$, thereby leading to a significant improvement over existing results \citep{chen2021generalization,rezazadeh2021conditional,hellstrom2022evaluated} and enhancing computational tractability. By applying the data-processing inequality on the Markov chain $(\tilde{S}_i,S_{j})-(U,W_i)-(L^j_i,\Psi_{i,j})- \Delta_{i,j}^{\Psi}$ (conditioned on $T^{2\mathbb{N}}_{2\mathbb{M}}$), we have $I(\Delta_{i,j}^{\Psi};S_j|T^{2\mathbb{N}}_{2\mathbb{M}}) \leq I(L^i_j ; \tilde{S}_i,S_{j}|T^{2\mathbb{N}}_{2\mathbb{M}}) \leq I(U,W_i ; \tilde{S}_i,S_{j}|T^{2\mathbb{N}}_{2\mathbb{M}})$, demonstrating the advantage of Theorem \ref{theorem7}.

Similarly, we derive the following unconditional upper bound based on the loss difference:

\begin{theorem}\label{theorem8}
    Assume that the loss function $\ell(\cdot,\cdot,\cdot) \in [0,1]$, then
        \begin{equation*}
            \vert \overline{\mathrm{gen}} \vert \leq \frac{1}{nm}\sum_{i,j=1}^{n,m} \sqrt{2I(\Delta_{i,j}^{\Psi};S_j)}.
        \end{equation*}
\end{theorem}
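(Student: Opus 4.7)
The plan is to derive the stated bound by adapting the loss-difference methodology of \citep{wang2023tighter} to meta-learning, proceeding in parallel with Theorem \ref{theorem7} but working directly with the unconditional mutual information. The key target is to establish the identity
\begin{equation*}
    \overline{\mathrm{gen}} = \frac{1}{nm}\sum_{i,j=1}^{n,m} \mathbb{E}\!\left[(-1)^{S_j}\,\Delta^{\Psi}_{i,j}\right].
\end{equation*}
The definition $\tilde{\Psi}_{i,j} = 1 \oplus \tilde{S}_i \oplus S_j$ is tailored so that when the sample-wise gap $L^{i,\bar{\tilde{S}}_i}_{j,\bar{S}_j} - L^{i,\tilde{S}_i}_{j,S_j}$ is expanded directly, the residual alongside $(-1)^{S_j}\Delta^{\Psi}_{i,j}$ takes the form $(-1)^{\tilde{S}_i}(L^{i,1}_{j,0} - L^{i,0}_{j,0})$, which vanishes in expectation by exploiting the exchangeability of the two iid task copies $(i,0)$ and $(i,1)$ in the meta-supersample.

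Once the identity is in hand, I would apply the Donsker-Varadhan variational inequality to the bounded function $f(\delta, s) = (-1)^s \delta$ on the pair $(\Delta^{\Psi}_{i,j}, S_j)$. Under the product measure $P_{\Delta^{\Psi}_{i,j}} \otimes P_{S_j}$, the uniform law of $S_j$ forces $\mathbb{E}[(-1)^{S_j}] = 0$, so $f$ has zero mean there. The assumption $\ell \in [0,1]$ ensures $\Delta^{\Psi}_{i,j} \in [-1,1]$, so $f \in [-1,1]$ is sub-Gaussian with parameter $1$ by Hoeffding's lemma. Optimizing the Donsker-Varadhan bound in its temperature parameter then yields
\begin{equation*}
    \bigl|\mathbb{E}[(-1)^{S_j}\Delta^{\Psi}_{i,j}]\bigr| \leq \sqrt{2\,I(\Delta^{\Psi}_{i,j}; S_j)},
\end{equation*}
and summing over $(i,j)$ followed by the triangle inequality delivers the claim. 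The fact that the unconditional MI $I(\Delta^{\Psi}_{i,j}; S_j)$ can be used here (rather than the disintegrated MI of Theorem \ref{theorem7}) leverages only the marginal independence of $S_j$ from the meta-supersample $T^{2\mathbb{N}}_{2\mathbb{M}}$.

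The principal technical obstacle is the vanishing of the task-level residual $(-1)^{\tilde{S}_i}(L^{i,1}_{j,0} - L^{i,0}_{j,0})$. In the single-task analogue of \citep{wang2023tighter}, the decomposition $L^{\bar{S}_j} - L^{S_j} = (-1)^{S_j}\Delta_j$ is immediate, but the two-level meta-supersample construction produces this extra cross-term as an apparent obstruction. Handling it requires a measure-preserving symmetry argument on the joint distribution of $(U, W_{\mathbb{N}}, T^{2\mathbb{N}}_{2\mathbb{M}}, \tilde{S}_{\mathbb{N}}, S_{\mathbb{M}})$ under the simultaneous relabeling of the two task copies $(i,0) \leftrightarrow (i,1)$ and the flip $\tilde{S}_i \mapsto 1 - \tilde{S}_i$; carrying this out carefully, while tracking how the meta-algorithm couples $U$ to every task's training data, is the most delicate step of the argument.
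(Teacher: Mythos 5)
Your overall skeleton is the paper's: represent $\overline{\mathrm{gen}}$ as $\frac{1}{nm}\sum_{i,j}\mathbb{E}\big[L^{i,\bar{\tilde S}_i}_{j,\bar S_j}-L^{i,\tilde S_i}_{j,S_j}\big]$ (this is the content of (\ref{equ20})--(\ref{equ21}), which you implicitly assume), rewrite each cell as $\mathbb{E}[(-1)^{S_j}\Delta^{\Psi}_{i,j}]$, and then apply Donsker--Varadhan to $f(\delta,s)=(-1)^{s}\delta$, which has zero mean under $P_{\Delta^{\Psi}_{i,j}}\otimes P_{S_j}$ and is $1$-sub-Gaussian since $\ell\in[0,1]$, using only the marginal independence of $S_j$ from $T^{2\mathbb{N}}_{2\mathbb{M}}$. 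However, the step you flag as the delicate one is where your proposal breaks. In the paper's proof there is no residual at all: as its indicator-function decomposition makes explicit, the pair $L^{\Psi}_{i,j}$ is $\{L^{i,1}_{j,1},L^{i,0}_{j,0}\}$ when $\tilde S_i\oplus S_j=0$ and $\{L^{i,0}_{j,1},L^{i,1}_{j,0}\}$ when $\tilde S_i\oplus S_j=1$, i.e.\ $L^{\Psi_+}_{i,j}=L^{i,\tilde\Psi_{i,j}}_{j,1}$ and $L^{\Psi_-}_{i,j}=L^{i,\Psi_{i,j}}_{j,0}$ (the preliminaries' definition giving both components superscript $\tilde\Psi_{i,j}$ is a typo). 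With this pairing, $L^{\Psi}_{i,j}$ consists of exactly the training loss $L^{i,\tilde S_i}_{j,S_j}$ and the test loss $L^{i,\bar{\tilde S}_i}_{j,\bar S_j}$, $S_j$ indicates which is which, and $L^{i,\bar{\tilde S}_i}_{j,\bar S_j}-L^{i,\tilde S_i}_{j,S_j}=(-1)^{S_j}\Delta^{\Psi}_{i,j}$ holds pointwise, so no symmetry argument is needed.

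Under your literal reading (both losses taken on copy $\tilde\Psi_{i,j}$), your algebra for the residual $(-1)^{\tilde S_i}(L^{i,1}_{j,0}-L^{i,0}_{j,0})$ is correct, but the claim that it vanishes in expectation is false, and the proposed exchangeability argument cannot rescue it. The relabeling that swaps the copies $(i,0)\leftrightarrow(i,1)$ and flips $\tilde S_i\mapsto 1-\tilde S_i$ sends $\mathbb{I}\{\tilde S_i=1\}(L^{i,1}_{j,0}-L^{i,0}_{j,0})$ to $\mathbb{I}\{\tilde S_i=0\}(L^{i,0}_{j,0}-L^{i,1}_{j,0})$; hence the two signed halves of $\mathbb{E}[(-1)^{\tilde S_i}(L^{i,1}_{j,0}-L^{i,0}_{j,0})]$ are equal rather than canceling, and the expectation equals $\mathbb{E}[L^{i,1}_{j,0}-L^{i,0}_{j,0}\mid\tilde S_i=0]$, the average loss gap between a sample of the task copy unseen by the meta-learner and the corresponding sample of the selected copy. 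This is generically nonzero (e.g.\ an interpolating meta-learner has near-zero loss on the selected copy but positive loss on the unseen copy), so the identity you target is actually false under that reading -- the quantity $\mathbb{E}[(-1)^{S_j}\Delta^{\Psi}_{i,j}]$ would then capture only the within-task part of the gap and miss the environment-level part. Once the pairing is corrected, your Donsker--Varadhan step and the triangle-inequality summation reproduce the paper's proof essentially verbatim.
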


Analogous to the analysis in Theorem \ref{theorem5}, it is evidence that the MI between $\Delta_{i,j}^{\Psi}$ and $S_j$ is strictly tighter than its conditional counterpart in Theorem \ref{theorem7} by utilizing the independence between $S_j $ and $T^{2\mathbb{N}}_{2\mathbb{M}}$. In the special case where the loss function $\ell(\cdot,\cdot,\cdot)$ is the zero-one loss, $I(\Delta_{i,j}^{\Psi} ; S_j)$ can be interpreted as the rate of reliable communication over a memoryless channel with input $S_j $ and output $\Delta_{i,j}^{\Psi}$, as discussed in \citep{wang2023tighter}. Consequently, this gives rise to a precise meta-generalization bound under the interpolating regime:
\begin{theorem}\label{theorem9}
    Assume that the loss function $\ell(\cdot,\cdot,\cdot) \in \{0, 1 \}$. In the interpolating setting, i.e., $\hat{\mathcal{R}} = 0$, then
    \begin{equation*}
        \mathcal{R}_{\mathcal{T}} = \frac{1}{nm}\sum_{i,j=1}^{n,m} \frac{I(\Delta_{i,j}^{\Psi};S_j) }{\log 2} = \frac{1}{nm}\sum_{i,j=1}^{n,m} \frac{I(L_{i,j}^{\Psi}; S_j)}{\log 2}.
    \end{equation*}
\end{theorem}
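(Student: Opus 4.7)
The proof will parallel the single-task interpolating analysis of Wang and Mao (2023) by exploiting the narrow support of $L^\Psi_{i,j}$ in the zero-one regime together with two symmetries of the meta-supersample. Under the interpolating hypothesis $\hat{\mathcal{R}}=0$, the in-task training loss at any sample position $S_j$ vanishes, i.e., $L^{i,s}_{j,S_j}=0$ almost surely for both $s\in\{0,1\}$. Unwinding the definition $\tilde{\Psi}_{i,j}=1\oplus\tilde{S}_i\oplus S_j$ then gives $L_{i,j}^{\Psi_-}=0$ when $S_j=0$ and $L_{i,j}^{\Psi_+}=0$ when $S_j=1$. Consequently $L^\Psi_{i,j}\in\{(0,0),(0,1),(1,0)\}$, so the map $L^\Psi_{i,j}\mapsto\Delta^\Psi_{i,j}$ is a bijection onto $\{0,-1,1\}$; combined with the data-processing inequality this yields $I(L^\Psi_{i,j};S_j)=I(\Delta^\Psi_{i,j};S_j)$ and proves the second equality of the theorem.

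For the first equality, I will pin down the conditional distribution of $\Delta^\Psi_{i,j}$ given $S_j$. A case analysis shows $\Delta^\Psi_{i,j}=L^{i,\bar{\tilde{S}}_i}_{j,\bar{S}_j}$ on $\{S_j=0\}$, i.e., the meta-test loss itself, and $\Delta^\Psi_{i,j}=-L^{i,\tilde{S}_i}_{j,\bar{S}_j}$ on $\{S_j=1\}$. Setting $r_{i,j}:=\mathbb{E}[L^{i,\bar{\tilde{S}}_i}_{j,\bar{S}_j}]$, whose average over $(i,j)$ equals $\mathcal{R}_\mathcal{T}$, I will argue that $\mathbb{P}(\Delta^\Psi_{i,j}=1\mid S_j=0)=\mathbb{P}(\Delta^\Psi_{i,j}=-1\mid S_j=1)=r_{i,j}$ by invoking two symmetries: (i) sample-position exchangeability (swap $S_j\leftrightarrow\bar{S}_j$, valid by the i.i.d. structure of the within-task sample pairs) removes the conditioning on $S_j$, and (ii) task-pair exchangeability (swap $\tilde{S}_i\leftrightarrow\bar{\tilde{S}}_i$, valid because $\tau_{i,0}$ and $\tau_{i,1}$ are i.i.d. from $P_\mathcal{T}$ with $\tilde{S}_i$ uniform and independent) equates the losses on the meta-training and meta-test tasks in distribution.

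A direct entropy computation on the resulting ternary distribution $\mathbb{P}(\Delta^\Psi_{i,j}=0)=1-r_{i,j}$ and $\mathbb{P}(\Delta^\Psi_{i,j}=\pm 1)=r_{i,j}/2$ then yields $I(\Delta^\Psi_{i,j};S_j)=r_{i,j}\log 2$; averaging over $(i,j)$ recovers $\mathcal{R}_\mathcal{T}=\frac{1}{nm}\sum_{i,j}I(\Delta^\Psi_{i,j};S_j)/\log 2$. The main obstacle is the bookkeeping required to establish the identity $L^{i,s}_{j,S_j}=0$ for both $s$—this uses the convention that the fine-tuning procedure is applied symmetrically to both meta-training and meta-test tasks and produces an interpolating task-specific hypothesis in each—and to verify the two symmetries rigorously under the joint distribution of data, membership variables, and algorithm randomness.
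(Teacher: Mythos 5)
Your overall strategy is the paper's: use interpolation to kill the training-side loss so that $\Delta^{\Psi}_{i,j}\in\{0,\pm 1\}$ and $L^{\Psi}_{i,j}\mapsto\Delta^{\Psi}_{i,j}$ is a bijection on its support (second equality via data processing), then identify $\mathbb{P}(\Delta^{\Psi}_{i,j}=\pm 1\mid S_j)$ with the expected meta-test loss and do the ternary entropy calculation $I(\Delta^{\Psi}_{i,j};S_j)=\alpha_{i,j}\log 2$. However, two steps as you state them would fail. First, $\hat{\mathcal{R}}=0$ only forces the loss on the actual meta-training samples to vanish, i.e.\ $L^{i,\tilde{S}_i}_{j,S_j}=0$ a.s.; it does not give $L^{i,s}_{j,S_j}=0$ for both $s\in\{0,1\}$, because $\tilde{Z}^{i,\bar{\tilde{S}}_i}_{j,S_j}$ belongs to the unseen task of the pair. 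The ``convention'' you invoke (that fine-tuning also interpolates the meta-test tasks) is an additional hypothesis not contained in the theorem and not implied by $\hat{\mathcal{R}}=0$. Second, your case analysis on $\{S_j=1\}$ is off: there $\Delta^{\Psi}_{i,j}$ equals minus the meta-test loss, $-L^{i,\bar{\tilde{S}}_i}_{j,\bar{S}_j}$, not $-L^{i,\tilde{S}_i}_{j,\bar{S}_j}$ (the held-out loss within the training task). Because of this you need your symmetry (ii) to trade the within-task held-out loss for the meta-test loss, and that symmetry is not valid: exchanging $\tilde{S}_i\leftrightarrow\bar{\tilde{S}}_i$ also changes which task $(U,W_i)$ was adapted on, so it maps a within-task quantity to another within-task quantity and cannot equate $\ell(U,W_i,\tilde{Z}^{i,\tilde{S}_i}_{j,\bar{S}_j})$ with $\ell(U,W_i,\tilde{Z}^{i,\bar{\tilde{S}}_i}_{j,\bar{S}_j})$ in distribution; within-task generalization and generalization to an unseen task differ in general, so the identity you would obtain is not $\mathcal{R}_{\mathcal{T}}$.

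Both problems trace to reading the definition of $L^{\Psi_-}_{i,j}$ literally as $L^{i,\tilde{\Psi}_{i,j}}_{j,0}$; the notation paragraph indeed has a typo, but the way $\Delta^{\Psi}_{i,j}$ is used in the proofs of Theorems \ref{theorem7} and \ref{theorem10} shows the intended convention is $L^{\Psi_+}_{i,j}=L^{i,\tilde{\Psi}_{i,j}}_{j,1}$ and $L^{\Psi_-}_{i,j}=L^{i,\Psi_{i,j}}_{j,0}$, so that $\{L^{\Psi_+}_{i,j},L^{\Psi_-}_{i,j}\}$ is exactly the (meta-test, meta-training) loss pair and $\Delta^{\Psi}_{i,j}=(-1)^{S_j}\big(L^{i,\bar{\tilde{S}}_i}_{j,\bar{S}_j}-L^{i,\tilde{S}_i}_{j,S_j}\big)$. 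With that reading, interpolation alone gives $L^{\Psi_-}_{i,j}=0$ on $\{S_j=0\}$ and $L^{\Psi_+}_{i,j}=0$ on $\{S_j=1\}$, your case analysis becomes $\Delta^{\Psi}_{i,j}=(-1)^{S_j}L^{i,\bar{\tilde{S}}_i}_{j,\bar{S}_j}$, and your symmetry (i) in $S_j$ alone yields $\mathbb{P}(\Delta^{\Psi}_{i,j}=1\mid S_j=0)=\mathbb{P}(\Delta^{\Psi}_{i,j}=-1\mid S_j=1)=r_{i,j}$, which is precisely the symmetry the paper invokes (together with the identity $\frac{1}{nm}\sum_{i,j}r_{i,j}=\mathcal{R}_{\mathcal{T}}$ from the proof of Theorem \ref{theorem3.5}); the remainder of your argument, the bijection plus data processing and the entropy computation, then coincides with the paper's proof.
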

Theorem \ref{theorem9} yields fast-decaying generalization bounds of order $\mathcal{O}(1/nm)$, in the sense that the bound benefits from a small empirical error. In such an interpolating setting, the population meta-risk can be precisely determined via the samplewise MI between $S_j$ and either the loss difference $\Delta_{i,j}^{\Psi}$ or the selected loss pair $L_{i,j}^{\Psi}$. This leads to an essentially tight bound 
 on the meta-generalization gap, which can be further refined in the following theorem to accommodate a broad class of bounded loss functions by utilizing the weighted generalization error: $\overline{\mathrm{gen}}_{C_1}\triangleq \mathcal{R}_{\mathcal{T}} - (1+C_1)\hat{\mathcal{R}}$, where $C_1$ is prescribed constant:

\begin{theorem}[\textbf{Fast-rate Bound}]\label{theorem10}
    Assume that the loss function $\ell(\cdot,\cdot,\cdot)\in [0,1]$. For any $0 < C_2 < \frac{\log 2}{2}$ and $C_1\geq -\frac{\log(2-e^{C_2})}{2C_2}-1$, 
    \begin{equation*}
        \overline{\mathrm{gen}} \leq C_1 \hat{\mathcal{R}} + \frac{1}{nm}\sum_{i,j=1}^{n,m} \frac{\min\{I(L_{i,j}^{\Psi};S_j ), 2 I(L_{i,j}^{\Psi_+};S_j)\}}{C_2}.
    \end{equation*}
    In the interpolating setting, i.e., $\hat{\mathcal{R}} = 0$, we further have 
    \begin{equation*}
        \mathcal{R}_{\mathcal{T}} \leq \frac{1}{nm}\sum_{i,j=1}^{n,m} \frac{2\min\{I(L_{i,j}^{\Psi};S_j ), 2 I(L_{i,j}^{\Psi_+};S_j)\}}{\log 2}.
    \end{equation*}
\end{theorem}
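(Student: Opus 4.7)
The plan is to mirror the weighted-error argument behind Theorem \ref{theorem3.8}, now applied to the loss variables $L_{i,j}^{\Psi_\pm}$ rather than to $(U,W_i)$. First I set up the position-wise identity. By exchangeability of the $2n$ tasks and $2m$ samples in the meta-supersample, conditioning on $S_j = 1$ identifies $(L_{i,j}^{\Psi_+}, L_{i,j}^{\Psi_-})$ in distribution with the (training, test) loss pair of meta-training task $i$ at slot $j$, while conditioning on $S_j = 0$ swaps these roles. Consequently, for every $(i,j)$,
\begin{equation*}
\mathcal{R}_{\mathcal{T}} - (1+C_1)\hat{\mathcal{R}} \;=\; \mathbb{E}\bigl[h_{i,j}(L_{i,j}^\Psi, S_j)\bigr],
\end{equation*}
where $h_{i,j} = (1-S_j)\bigl(L_{i,j}^{\Psi_+} - (1+C_1)L_{i,j}^{\Psi_-}\bigr) + S_j\bigl(L_{i,j}^{\Psi_-} - (1+C_1)L_{i,j}^{\Psi_+}\bigr)$. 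A parallel identity using only $L_{i,j}^{\Psi_+}$, with weights $+2$ and $-2(1+C_1)$ on the two values of $S_j$, yields the single-loss representation needed for the second branch of the $\min$.

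Next I apply the Donsker--Varadhan variational inequality to each representation, using reference measure $Q = P_{L_{i,j}^\Psi}\otimes P_{S_j}$ or $Q = P_{L_{i,j}^{\Psi_+}}\otimes P_{S_j}$, which gives
\begin{equation*}
 C_2 \bigl(\mathcal{R}_{\mathcal{T}} - (1+C_1)\hat{\mathcal{R}}\bigr) \;\leq\; I(L_{i,j}^\Psi; S_j) + \log \mathbb{E}_Q\!\bigl[e^{C_2 h_{i,j}}\bigr],
\end{equation*}
and an analogous bound with $2\, I(L_{i,j}^{\Psi_+}; S_j)$ in place of the MI (the factor of two absorbing the rescaling inherent in the single-loss functional). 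Averaging over $(i,j)$, dividing by $nm$, taking the $\min$ over the two representations, and rearranging $\mathcal{R}_{\mathcal{T}} - (1+C_1)\hat{\mathcal{R}}$ into $\overline{\mathrm{gen}} - C_1\hat{\mathcal{R}}$ produces the stated bound, provided each log-MGF term is $\leq 0$.

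The main obstacle is verifying the MGF condition. Since $S_j$ is uniform on $\{0,1\}$ and independent of the losses under $Q$, the requirement $\mathbb{E}_Q[e^{C_2 h_{i,j}}]\leq 1$ reduces to the pointwise inequality
\begin{equation*}
\tfrac{1}{2}\bigl(e^{C_2(a - (1+C_1)b)} + e^{C_2(b - (1+C_1)a)}\bigr) \leq 1 \quad \text{for all } a,b \in [0,1],
\end{equation*}
a two-variable refinement of the single-loss fast-rate inequality used for Theorem \ref{theorem3.8}. The extremum lies at a vertex of $[0,1]^2$, and at $(a,b)=(1,0)$ the inequality collapses to $e^{C_2} + e^{-C_2(1+C_1)} \leq 2$; the single-loss branch introduces an analogous constraint with $2C_2$ in the exponent. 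Enforcing both simultaneously is what forces the prescribed range $C_2 \in (0, (\log 2)/2)$ together with $C_1 \geq -\log(2-e^{C_2})/(2C_2) - 1$. For the interpolating claim, $\hat{\mathcal{R}} = 0$ lets me set $C_1 = 0$ and pass $C_2 \to (\log 2)/2$ by continuity, which converts each $I/C_2$ into $2I/\log 2$ and yields the stated constant.
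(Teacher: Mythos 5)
Your overall route is essentially the paper's: express the weighted gap $\mathcal{R}_{\mathcal{T}}-(1+C_1)\hat{\mathcal{R}}$ through the supersample losses, apply Donsker--Varadhan with a product reference measure, and kill the log-MGF term by checking a convex function of the losses at the vertices of the cube. The one structural difference is how the $\min$ arises: you run two DV arguments, one directly on the pair functional $h_{i,j}$ (yielding $I(L^{\Psi}_{i,j};S_j)$) and one on the symmetrized single-loss functional with weights $2$ and $-2(1+C_1)$ (yielding the $I(L^{\Psi_+}_{i,j};S_j)$ branch), whereas the paper runs only the single-loss argument, obtains the bound $I(L^{\Psi_+}_{i,j};S_j)/C_2$, and gets the $\min$ for free since $I(L^{\Psi_+}_{i,j};S_j)\le I(L^{\Psi}_{i,j};S_j)$ by data processing and trivially $\le 2I(L^{\Psi_+}_{i,j};S_j)$. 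Your version proves both branches honestly but is slightly weaker: the factor of two you attach to the single-loss branch is not needed (DV there gives $I(L^{\Psi_+}_{i,j};S_j)$ directly, exactly as in the paper). Also, the identity you state ``for every $(i,j)$'' only holds after averaging over $(i,j)$, since individual samples need not be treated symmetrically by the algorithm; your subsequent averaging repairs this, but the statement as written is not literally true.

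Two concrete problems remain. First, the interpolating claim: you cannot ``set $C_1=0$''. For every $C_2>0$ the vertex condition $e^{C_2}+e^{-C_2(1+C_1)}\le 2$ (and a fortiori its $2C_2$ analogue) fails at $C_1=0$ because $e^{x}+e^{-x}\ge 2$; the admissible $C_1$ is strictly positive and blows up as $C_2\uparrow\frac{\log 2}{2}$. The correct move, as in the paper, is to note that $\hat{\mathcal{R}}=0$ makes the term $C_1\hat{\mathcal{R}}$ vanish for \emph{every} admissible $C_1$, so one lets $C_2\uparrow\frac{\log 2}{2}$ with $C_1=C_1(C_2)\to\infty$ along the admissible boundary, which produces the $2/\log 2$ constant. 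Second, the constants: enforcing your own two vertex constraints forces $C_2<\frac{\log 2}{2}$ and $C_1\ge -\frac{\log(2-e^{2C_2})}{2C_2}-1$ (this single-loss constraint dominates the pair constraint $C_1\ge-\frac{\log(2-e^{C_2})}{C_2}-1$, since $2-e^{2C_2}\le(2-e^{C_2})^2$), not the weaker $C_1\ge-\frac{\log(2-e^{C_2})}{2C_2}-1$ you quote. The paper's own proof of Theorem \ref{theorem10} likewise derives the $2C_2$-inside-the-logarithm condition, so the constant in the theorem statement appears to be a typo that you have inherited; but your assertion that ``enforcing both'' constraints yields the quoted condition does not go through, and your writeup should state the stronger requirement.
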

Theorem \ref{theorem10} achieves a convergence rate of $\mathcal{O}(1/nm)$, the same as Theorem \ref{theorem9}, while improving upon this bound by simultaneously taking the minimum between paired-loss MI $I(L_{i,j}^{\Psi};S_j )$ and single-loss MI $2I(L_{i,j}^{\Psi_+};S_j)$. The dependency between these MI terms is depicted by the interaction information $I(L_{i,j}^{\Psi_+};L_{i,j}^{\Psi_-};S_j)$, where $I(L_{i,j}^{\Psi_+};L_{i,j}^{\Psi_-};S_j) = I(L_{i,j}^{\Psi_+};S_j) - I(L_{i,j}^{\Psi_+};S_j|L_{i,j}^{\Psi_-})=2 I(L_{i,j}^{\Psi_+};S_j) - I(L_{i,j}^{\Psi};S_j)$ can be either positive or negative. For the special case of single-task learning, Theorem \ref{theorem10}, which does not impose a definitive ordering on the MI terms, improves upon existing interpolating bounds \citep{hellstrom2022new,wang2023tighter}. 

Given that interpolating bounds typically achieve a fast rate under small or even zero empirical risk, we generalize them to the non-interpolating regime by leveraging the empirical loss variance, defined as follows.

\begin{definition}
    For any $\gamma\in(0,1)$, $\gamma$-variance of the meta-learning, $V(\gamma)$, is defined by 
    \begin{equation*}
      V(\gamma) := \mathbb{E}_{U,T_{\mathbb{M}}^{\mathbb{N}}}\Big[\frac{\sum_{i,j=1}^{n,m} \mathbb{E}_{W_i|U,T_\mathbb{M}^i} (\ell(U,W_i,Z^i_j) - (1+\gamma)  \mathcal{R}(U, T_{\mathbb{M}}^{\mathbb{N}}) )^2}{nm}\Big].
\end{equation*}
\end{definition}
Accordingly, the refined fast-rate bound is provided as follows:
\begin{theorem}[\textbf{Fast-rate Bound}]\label{theorem12}
    Assume that the loss function $\ell(\cdot,\cdot,\cdot)\in\{0,1\}$ and $\gamma\in(0,1)$. Then, for any $0 < C_2 < \frac{\log 2}{2}$ and $C_1\geq -\frac{\log(2-e^{C_2})}{2C_2\gamma^2}-\frac{1}{\gamma^2}$, we have 
    \begin{equation*}
        \overline{\mathrm{gen}} \leq  C_1 V(\gamma) + \frac{1}{nm}\sum_{i,j=1}^{n,m}\frac{\min\{I(L_{i,j}^{\Psi};S_j ), 2 I(L_{i,j}^{\Psi_+};S_j)\}}{C_2}.
    \end{equation*}
\end{theorem}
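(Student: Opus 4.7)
The plan is to replay the fast-rate derivation of Theorem \ref{theorem10} with the slack term $C_1\hat{\mathcal{R}}$ replaced by $C_1 V(\gamma)$: the Donsker--Varadhan (DV) skeleton is unchanged, but the moment-generating-function (MGF) bound that closes the inequality must now be controlled through a second-moment quantity rather than by the empirical meta-risk. I will establish the bound involving $I(L_{i,j}^\Psi; S_j)$ first, obtain the single-loss variant $2 I(L_{i,j}^{\Psi_+}; S_j)$ through a parallel one-sided DV argument (the factor of $2$ accounting for the loss pair being replaced by a single loss), and then retain the minimum of the two branches for each $(i,j)$.

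First, using the supersample identity $\overline{\mathrm{gen}} = \mathbb{E}\big[\frac{1}{nm}\sum_{i,j}(-1)^{S_j}\Delta_{i,j}^\Psi\big]$ (the same identity that drives Theorems \ref{theorem9} and \ref{theorem10}), I apply, for each $(i,j)$, the Donsker--Varadhan inequality with test function $2C_2 (-1)^{S_j}\Delta_{i,j}^\Psi$. Since $S_j$ is uniform on $\{0,1\}$ and, under the DV independent-copy substitution, decouples from $\Delta_{i,j}^\Psi$, the MGF symmetrizes to $\mathbb{E}[\cosh(2C_2\Delta_{i,j}^\Psi)]$. The assumption $\ell\in\{0,1\}$ forces $\Delta_{i,j}^\Psi\in\{-1,0,+1\}$, which keeps the MGF analytically tractable.

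The decisive step is a Bernstein-type pointwise inequality of the form
\begin{equation*}
\cosh(2C_2\Delta_{i,j}^\Psi) \leq (2-e^{C_2}) + 2C_2\gamma^2 \sum_{s\in\{+,-\}}\big(L_{i,j}^{\Psi_s}-(1+\gamma)\mathcal{R}(U,T_{\mathbb{M}}^{\mathbb{N}})\big)^2 + R,
\end{equation*}
where the residual $R$ is linear in the individual losses and therefore absorbable into the weighted-gen offset $(1+C_1)\hat{\mathcal{R}}$. Averaging over $(i,j)$, taking expectations, applying $\log(1+x)\leq x$, and unfolding the definition of $V(\gamma)$ yields $\overline{\mathrm{gen}} \leq C_1 V(\gamma) + \frac{1}{nm}\sum_{i,j}I(L_{i,j}^\Psi;S_j)/C_2$, with the threshold $C_1 \geq -\frac{\log(2-e^{C_2})}{2C_2\gamma^2}-\frac{1}{\gamma^2}$ being exactly what is needed for the rearrangement to close, since it rearranges to $2C_2\gamma^2(1+C_1)+\log(2-e^{C_2})\geq 0$.

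The main obstacle is the pointwise MGF bound in the third step, specifically matching the coefficient $2C_2\gamma^2$ on the variance piece and ensuring that the linear residual $R$ algebraically telescopes against the weighted-gen slack $(1+C_1)\hat{\mathcal{R}}$. Controlling $\cosh(2C_2\Delta)$ on the ternary support $\{-1,0,1\}$ is straightforward in isolation, but producing a form that isolates exactly $(L-(1+\gamma)\mathcal{R})^2$ requires carefully expanding the square using $L^2=L$ (binary losses) and balancing the $(1+\gamma)^2\hat{\mathcal{R}}^2$ and $-2(1+\gamma)\hat{\mathcal{R}}L$ cross terms against the $-1/\gamma^2$ component of the threshold on $C_1$. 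Once this calibration is achieved, the $2I(L_{i,j}^{\Psi_+};S_j)$ branch is produced by the same DV machinery with a one-sided test function (as in the proof of Theorem \ref{theorem10}), whose MGF inherits the factor of $2$, and the final $\min$ is retained branch-by-branch.
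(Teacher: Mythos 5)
There is a genuine gap. Your whole argument hinges on the ``Bernstein-type pointwise inequality'' $\cosh(2C_2\Delta_{i,j}^{\Psi}) \leq (2-e^{C_2}) + 2C_2\gamma^2\sum_{s}\big(L_{i,j}^{\Psi_s}-(1+\gamma)\mathcal{R}(U,T_{\mathbb{M}}^{\mathbb{N}})\big)^2 + R$ with $R$ linear in the losses, and this inequality is never established; as stated it is in fact false: take $L_{i,j}^{\Psi_+}=L_{i,j}^{\Psi_-}=0$ and $\mathcal{R}(U,T_{\mathbb{M}}^{\mathbb{N}})=0$ (the interpolating corner, which is admissible), then the left side equals $1$ while the right side equals $2-e^{C_2}<1$ for any $C_2>0$. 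There is also a structural mismatch you would have to resolve even if some corrected inequality existed: $V(\gamma)$ is built from the \emph{global} empirical meta-risk $\mathcal{R}(U,T_{\mathbb{M}}^{\mathbb{N}})$, which is not a function of the single pair $(L_{i,j}^{\Psi},S_j)$ entering your per-$(i,j)$ Donsker--Varadhan step, so under the decoupled measure $P_{L_{i,j}^{\Psi}}P_{S_j}$ you cannot carry such a quantity through the MGF term and still land on the clean ``$\log$-term $\leq 0$'' structure that makes the Theorem \ref{theorem10} argument close.

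The intended (and much simpler) route bypasses any new MGF estimate: since $\ell\in\{0,1\}$ implies $\ell^2=\ell$, expanding the square in the definition of $V(\gamma)$ gives $V(\gamma)=\hat{\mathcal{R}}-(1-\gamma^2)\,\mathbb{E}_{U,T_{\mathbb{M}}^{\mathbb{N}}}\big[\mathcal{R}^2(U,T_{\mathbb{M}}^{\mathbb{N}})\big]$, and $\mathcal{R}^2(U,T_{\mathbb{M}}^{\mathbb{N}})\leq\mathcal{R}(U,T_{\mathbb{M}}^{\mathbb{N}})$ then yields $\overline{\mathrm{gen}}-C_1V(\gamma)\leq \mathcal{R}_{\mathcal{T}}-(1+C_1\gamma^2)\hat{\mathcal{R}}$. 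At that point one simply reruns the proof of Theorem \ref{theorem10} with the effective weight $C_1\gamma^2$ in place of $C_1$, which is exactly why the admissible range of $C_1$ in the statement is the $\gamma^2$-rescaled version of the one in Theorem \ref{theorem10}. Your closing remark about the threshold rearranging to $2C_2\gamma^2(1+C_1)+\log(2-e^{C_2})\geq 0$ is consistent with this, and your idea of ``absorbing a linear residual into the weighted offset'' gestures at the same reduction, but the proposal as written replaces a two-line algebraic identity with an unproven (and, in the stated form, false) exponential-moment bound, so the proof does not go through.
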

Comparing Theorems \ref{theorem10} and \ref{theorem12} under the same constants $C_1,C_2$ illustrates that this loss variance bound is more stringent than the interpolating bound by at least $C_1(1-\gamma^2)\mathbb{E}_{U,T_{\mathbb{M}}^{\mathbb{N}}}[\mathcal{R}^2(U,T_{\mathbb{M}}^{\mathbb{N}})]$.  Hence, Theorem \ref{theorem12} could reach zero $\gamma$-variance even for non-zero empirical losses, thereby making it applicable to a wide range of application scenarios.
\section{Applications}
In this section, we extend our analysis to two widely used meta-training strategies: one that uses meta-training data jointly, and another that employs separate in-task training and test data.  Our results are associated with the mini-batched noisy iterative meta-learning algorithms, with a particular focus on stochastic gradient Langevin dynamics (SGLD) \citep{welling2011bayesian}.

\begin{figure}[t]
  \vskip 0.2in
  \centering
  \subfigure[Joint Meta-learning Paradigm]{
    \label{aa} 
    \includegraphics[width=0.65\linewidth]{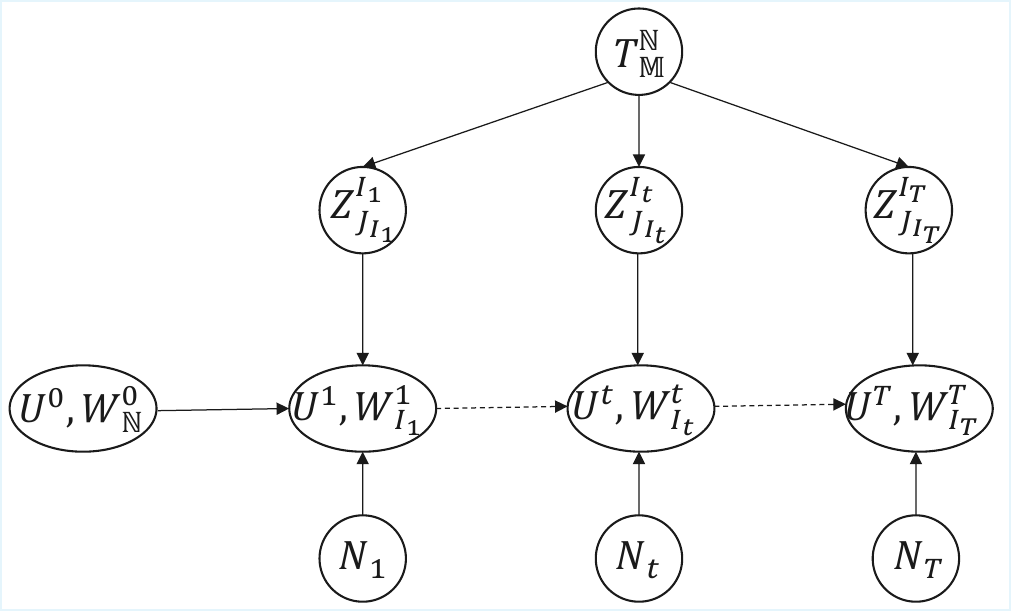}}
  \subfigure[Meta-learning with Separate In-task Training and Test Sets]{
    \label{bb} 
    \includegraphics[width=1.\linewidth]{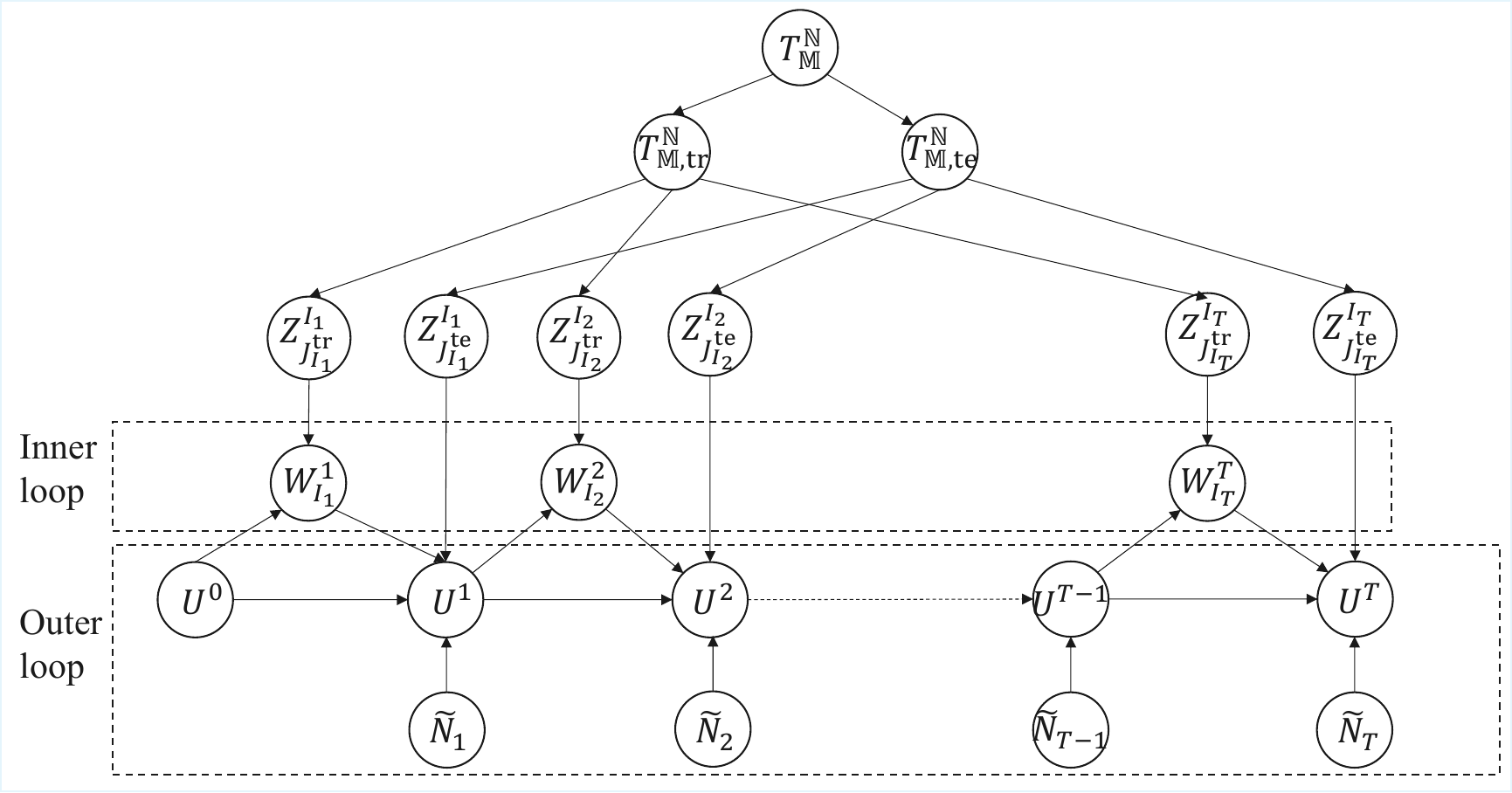}}
  \caption{A graphical representation of the meta-learning strategy through the noisy iterative approach.
}
\vskip -0.2in
  \label{fig_al} 
\end{figure}

\subsection{Algorithm-based Bound for Meta-learning with Joint In-task Training and Test Sets}
In joint meta-learning paradigm \citep{amit2018meta,chen2021generalization}, the meta-parameters $U$ and the task-specific parameters $W_{\mathbb{N}}=\{W_i\}_{i=1}^n$ are jointly updated within the entire meta-training dataset. We denote the training trajectory of SGLD-based meta-learning algorithms across $T$ iterations by $\{(U^t, W^t_{\mathbb{N}})\}_{t=0}^T$, where $U^{0}\in\mathbb{R}^d$ denotes the randomly initialized meta-parameter used for the initial iteration of the task-specific parameters, i.e., $W_1^0 =\cdots W_n^0 = U^0$. At $t$-th iteration, a batch of tasks is selected independently and randomly, indexed by $I_t\subseteq [n]$. For each task $i\in I_t$, we further randomly choose a batch of samples with indices $J_i\subseteq [m]$, and then use $J_{I_t}=\{J_i\}_{i\in I_t}$ to denote the collection of sample indices across the selected tasks. The updating rule with respect to meta and task-specific parameters is formulated by
\begin{equation*}
    (U^t, W^t_{I_t}) = (U^{t-1}, W^{t-1}_{I_t}) + \eta_t G^t + N_{t},
\end{equation*}
where $\eta_t$ is the learning rate, $N_{t}\sim N(0, \sigma^2_t \mathbf{1}_{(\vert I_t \vert +1 ) d   }) $ is the isotropic Gaussian noise injected in $t$-th iteration, $G^t = (G^t_U, G^t_{W_{ I_t}})$ is the average gradient for the batch computed by $G^t_{W_{ I_t}} = \{G^t_{W_{i}}\}_{i\in I_t}$,  $G^t_{W_{i}} = -\frac{1}{\vert J_{i} \vert} \sum_{j\in J_{i}} \nabla_W\ell(U^{t-1}, W^{t-1}_i,Z^i_j)$, and $G^t_U =  \frac{1}{\vert I_t \vert} \sum_{i\in I_t} G^t_{W_{i}}$.

The following theorem elucidates that the generalization error of iterative and noisy meta-learning algorithms can be characterized by the gradient covariance matrices of the determinant trajectory:
\begin{theorem}\label{theorem4.1}
  Assume that $\ell(\cdot,\cdot,\cdot)\in [0,1]$. For the algorithm output $(U, W_{\mathbb{N}})$ after $T$ iterations, the following bound holds:
    \begin{equation*}
      \vert \overline{\mathrm{gen}} \vert \leq \frac{1}{\sqrt{n m}} \sqrt{ \sum_{t=1}^T \frac{1}{2} \log \Big\vert \frac{\eta_t^2}{\sigma^2_t} \mathbb{E}_{U^{t-1},W^{t-1}_{\mathbb{N}}} [\Sigma_t]  + \mathbf{1}_{(\vert I_t \vert +1 ) d   } \Big\vert},
    \end{equation*}
where $\Sigma_t = \mathrm{Cov}_{T_{\mathbb{M}}^{\mathbb{N}},J_{I_t}}[G^t]$.
\end{theorem}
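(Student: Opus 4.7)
The plan is to reduce the problem to controlling the whole-dataset mutual information $I\big((U,W_{\mathbb{N}});T^{\mathbb{N}}_{\mathbb{M}}\big)$ via Theorem \ref{theorem3.1} (with $\zeta=n$, $\xi=m$, exploiting that losses in $[0,1]$ are $\tfrac{1}{2}$-sub-Gaussian), and then unroll this MI along the SGLD trajectory through a chain-rule decomposition followed by a Gaussian-channel entropy argument, mirroring the single-task treatments developed in \citep{negrea2019information,wang2021analyzing}. By the data-processing inequality, the output $(U^T,W^T_{\mathbb{N}})$ is a deterministic function of the full trajectory given the initialization and batch sequence, so
\begin{equation*}
I\big((U,W_{\mathbb{N}});T^{\mathbb{N}}_{\mathbb{M}}\big) \leq I\big((U^{1:T}, W^{1:T}_{\mathbb{N}}); T^{\mathbb{N}}_{\mathbb{M}} \,\big|\, U^0, I_{1:T}, J_{I_{1:T}}\big),
\end{equation*}
which the chain rule decomposes across iterations.

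Each chain-rule summand reduces to $I\big((U^t,W^t_{I_t}); T^{\mathbb{N}}_{\mathbb{M}} \,\big|\, U^{t-1}, W^{t-1}_{\mathbb{N}}, I_t, J_{I_t}\big)$, since the coordinates $W^t_i$ with $i \notin I_t$ simply copy their previous values and carry no fresh information. Conditional on the history and the current batch, the increment $(U^t,W^t_{I_t})-(U^{t-1},W^{t-1}_{I_t}) = \eta_t G^t + N_t$ is a data-dependent vector plus an independent isotropic Gaussian of variance $\sigma_t^2$ in dimension $(|I_t|+1)d$. The standard Gaussian-channel calculation---upper-bounding the conditional differential entropy by that of a Gaussian with matching covariance and using $h(\eta_t G^t + N_t \mid T^{\mathbb{N}}_{\mathbb{M}}, \mathrm{hist.})\geq h(N_t)$ by independence of the noise---then yields the per-iteration bound
\begin{equation*}
I\big((U^t,W^t_{I_t});T^{\mathbb{N}}_{\mathbb{M}} \mid \mathrm{hist.}\big) \leq \tfrac{1}{2} \log\Big|\tfrac{\eta_t^2}{\sigma_t^2}\,\mathrm{Cov}_{T^{\mathbb{N}}_{\mathbb{M}},J_{I_t}}[G^t] + \mathbf{1}_{(|I_t|+1)d}\Big|.
\end{equation*}
Taking expectation over the residual randomness $(U^{t-1},W^{t-1}_{\mathbb{N}}, I_t)$ and invoking concavity of $X \mapsto \log\det X$ on the positive-definite cone via Jensen's inequality pushes the expectation inside the determinant, producing exactly the term $\tfrac{1}{2}\log\big|\tfrac{\eta_t^2}{\sigma_t^2}\mathbb{E}[\Sigma_t]+\mathbf{1}_{(|I_t|+1)d}\big|$ appearing in the statement. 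Summing over $t=1,\dots,T$ and plugging the result back into Theorem \ref{theorem3.1} closes the argument.

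The main technical obstacle I anticipate is careful bookkeeping of the conditional dependencies at each stage: the data $T^{\mathbb{N}}_{\mathbb{M}}$, the trajectory history, the task indices $I_t$, and the sample indices $J_{I_t}$. In particular, $\Sigma_t$ in the theorem statement averages only over $(T^{\mathbb{N}}_{\mathbb{M}}, J_{I_t})$, so $I_t$ must be kept inside the outer expectation throughout; maintaining this alignment while applying the chain rule and Jensen's step is delicate. A secondary subtlety is justifying that the frozen coordinates $W^t_{[n]\setminus I_t}$ do not inflate the log-determinant dimension beyond $(|I_t|+1)d$, which follows from the conditional independence structure of the update but deserves an explicit sentence.
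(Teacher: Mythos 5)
Your proposal is correct and follows essentially the same route as the paper: bound $I(U,W_{\mathbb{N}};T^{\mathbb{N}}_{\mathbb{M}})$ by unrolling the SGLD trajectory via data processing and the chain rule, control each per-iteration term with the Gaussian max-entropy argument ($h(\eta_t G^t+N_t\mid \cdot)-h(N_t)$), push the expectation over past iterates inside the log-determinant by Jensen, and plug the result into Theorem \ref{theorem3.1} with $\zeta=n,\xi=m$. The only cosmetic difference is that the paper conditions on $J_{I_t}$ only in the negative entropy term (so the joint covariance $\mathrm{Cov}_{T_{\mathbb{M}}^{\mathbb{N}},J_{I_t}}[G^t]$ appears directly in the positive term), whereas your conditioning on $J_{I_{1:T}}$ from the outset needs one extra appeal to the law of total covariance and monotonicity of $\log\det$ to recover $\Sigma_t$; either way the stated bound follows.
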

Theoren \ref{theorem4.1} provides a more precise characterization of the relationship between the meta-generalization error and the learning trajectory through the conditional gradient covariance matrix $\Sigma_t$, in contrast to the bounded gradient assumption $\sup_{U,W_{\mathbb{N}}, T_{\mathbb{M}}^{\mathbb{N}}} \Vert G^t \Vert_2\leq L$ with $L>0$ exploited in \citep{chen2021generalization}. This amount of gradient variance quantifies a particular ``sharpness'' of the loss landscape, highly associating with the true generalize dynamics of meta-learning, as supported by empirical evidence in \citep{jiang2019fantastic}. 


\subsection{Algorithm-based Bound for Meta-learning with Separate In-task Training and Test Sets}
Model-Agnostic Meta-Learning (MAML) \citep{finn2017model} is a well-known meta-learning approach that uses separate in-task training and test sets to update meta and task-specific parameters within a nested loop structure. Let $\{(U^t, W^t_{\mathbb{N}})\}_{t=0}^T$ denote the training trajectory of the noisy iterative MAML algorithm over $T$ iterations, where $U^0\in\mathbb{R}^d$ is the initial meta-parameter.  At $t$-th iteration, we randomly select a batch of task indices $I_t\subseteq [n]$ and a batch of sample indices $J_{I_t}=(J^{\mathrm{tr}}_{I_t}, J^{\mathrm{te}}_{I_t}) \subseteq [m]^{\vert I_t\vert}$ across the chosen tasks, where $J^{\mathrm{tr}}_{I_t}$ and $J^{\mathrm{te}}_{I_t}$ represent the index sets of the separate in-task training and test samples for all selected tasks, respectively. In the inner loop, the task-specific parameters are updated by
\begin{equation*}
    W_i^{t-1} = U^{t-1}, \quad W_i^t = W_i^{t-1} + \beta_t \tilde{G}^{\mathrm{tr},t}_{W_i} + \tilde{N}_t,\quad i\in I_t,
\end{equation*}
where $\tilde{G}^{\mathrm{tr},t}_{W_i} = -\frac{1}{\vert J^{\mathrm{tr}}_{i} \vert} \sum_{j\in J^{\mathrm{tr}}_{i}} \nabla_W\ell(U^{t-1}, W^{t-1}_i,Z^i_j)$ is the average gradient for the batch over the separate in-task training samples, $\beta_t$ is the learning rate for task parameter, $\tilde{N}_t\sim N(0, \sigma^2_t \mathbf{1}_{d} )$ is an isotropic Gaussian noise. In the outer loop, the meta parameters are updated by 
\begin{equation*}
    U^{t}  = U^{t-1} + \eta_t \tilde{G}^t_{U} + \tilde{N}_t,
\end{equation*}
where $\tilde{G}^t_{U}= \frac{1}{\vert I_t \vert} \sum_{i\in I_t} \tilde{G}^{\mathrm{te},t}_{W_i}$ and $\tilde{G}^{\mathrm{te},t}_{W_i} = -\frac{1}{\vert J^{\mathrm{te}}_{i} \vert} \sum_{j\in J^{\mathrm{te}}_{i}}\nabla_W \ell(U^{t-1}, W^{t}_i,Z^i_j)$ is the average gradient for the batch over the separate in-task test samples, and $\eta_t$ is the meta learning rate.

Let $\tilde{G}^{\mathrm{tr},t}_{W_{I_t}}=\{\tilde{G}^{\mathrm{tr},t}_{W_i}\}_{i\in I_t}$ and $\tilde{G}^{\mathrm{te},t}_{W_{I_t}}=\{\tilde{G}^{\mathrm{te},t}_{W_i}\}_{i\in I_t}$. In the following theorem, we establish the generalization bound for the noisy, iterative MAML algorithm, offering a theoretical insight into the generalization properties of meta-learning with in-task sample partitioning:
\begin{theorem}\label{theorem4.2}
    Let $T_{\mathbb{M}}^{\mathbb{N}}=\{T_{\mathbb{M},\mathrm{tr}}^{\mathbb{N}}, T_{\mathbb{M},\mathrm{te}}^{\mathbb{N}}\}$ consist of separate in-task training and test datasets, where $\vert T_{\mathbb{M},\mathrm{tr}}^{\mathbb{N}}\vert = m^{\mathrm{tr}}$, $\vert T_{\mathbb{M},\mathrm{te}}^{\mathbb{N}}\vert = m^{\mathrm{te}}$, and $m^{\mathrm{tr}} + m^{\mathrm{te}} =m$. Assume that $\ell(\cdot,\cdot,\cdot)\in [0,1]$, then 
    \begin{align*}
        \vert \overline{\mathrm{gen}} \vert \leq  \frac{1}{\sqrt{nm^{\mathrm{te}}}}\sqrt{\sum_{t=1}^T \log \Big\vert \frac{\beta_t^2}{\sigma^2_t} \mathbb{E}_{U^{t-1}} [\Sigma^{\mathrm{tr}}_t]  + \mathbf{1}_{\vert I_t \vert d   } \Big\vert  +  \sum_{t=1}^T \log \Big\vert \frac{\eta_t^2}{\sigma^2_t} \mathbb{E}_{U^{t-1}, W^t_{I_t}} [\Sigma^{\mathrm{te}}_t]  + \mathbf{1}_{d} \Big\vert },
     \end{align*}
where  $\Sigma^{\mathrm{te}}_t = \mathrm{Cov}_{T_{\mathbb{M},\mathrm{te}}^{\mathbb{N}}, J_{I_t}^{\mathrm{te}}}[\frac{1}{\vert I_t\vert} \sum_{i\in I_t} \tilde{G}^{\mathrm{te},t}_{W_{i}}]$ and $\Sigma^{\mathrm{tr}}_t = \mathrm{Cov}_{T_{\mathbb{M},\mathrm{tr}}^{\mathbb{N}}, J_{I_t}^{\mathrm{tr}}}[\tilde{G}^{\mathrm{tr},t}_{W_{I_t}}]$.
\end{theorem}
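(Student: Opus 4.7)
The plan is to extend the argument used for Theorem \ref{theorem4.1} to MAML's nested train/test structure, carefully tracking the distinct information leakage from inner and outer updates. First, I would invoke the MI-based meta-generalization bound (akin to Theorem \ref{theorem3.1} with $\zeta = n$, $\xi = m^{\mathrm{te}}$, and subgaussian parameter $\sigma = 1/2$ from the $[0,1]$ loss) applied to the in-task test partition $T_{\mathbb{M},\mathrm{te}}^{\mathbb{N}}$, on which the empirical meta-risk is effectively evaluated by the outer loop. This yields a starting inequality of the rough form
\begin{equation*}
|\overline{\mathrm{gen}}| \lesssim \frac{1}{\sqrt{nm^{\mathrm{te}}}}\sqrt{I(U, W_{\mathbb{N}}; T_{\mathbb{M}}^{\mathbb{N}})}.
\end{equation*}
Observe that both train and test components of $T_{\mathbb{M}}^{\mathbb{N}}$ have to enter the MI, because MAML's nested loops couple the two through the trajectory $\{(U^t, W^t_{\mathbb{N}})\}_{t=0}^T$.

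Next I would unfold this MI along the trajectory via the chain rule, separately accounting for the inner and outer updates at each iteration. The inner step $W_i^t = U^{t-1} + \beta_t \tilde{G}^{\mathrm{tr},t}_{W_i} + \tilde{N}_t$ is a Gaussian mechanism whose signal depends on $T_{\mathbb{M},\mathrm{tr}}^{\mathbb{N}}$ through $J^{\mathrm{tr}}_{I_t}$ given $U^{t-1}$, while the outer step $U^t = U^{t-1} + \eta_t \tilde{G}^t_U + \tilde{N}_t$ is a Gaussian mechanism whose signal depends on $T_{\mathbb{M},\mathrm{te}}^{\mathbb{N}}$ through $J^{\mathrm{te}}_{I_t}$ given $(U^{t-1}, W^t_{I_t})$. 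Applying the Gaussian-channel inequality $I(X; X+N) \le \tfrac{1}{2}\log\det(\mathbf{1}_{d'} + \sigma^{-2}\mathrm{Cov}(X))$ together with Jensen's inequality and the concavity of $\log\det$ at each step then yields per-iteration contributions
\begin{align*}
I(W_{I_t}^t; T_{\mathbb{M},\mathrm{tr}}^{\mathbb{N}} \mid U^{t-1}) &\le \tfrac{1}{2}\log\Big| \mathbf{1}_{|I_t| d} + \tfrac{\beta_t^2}{\sigma_t^2}\mathbb{E}_{U^{t-1}}[\Sigma_t^{\mathrm{tr}}] \Big|, \\
I(U^t; T_{\mathbb{M},\mathrm{te}}^{\mathbb{N}} \mid U^{t-1}, W_{I_t}^t) &\le \tfrac{1}{2}\log\Big| \mathbf{1}_{d} + \tfrac{\eta_t^2}{\sigma_t^2}\mathbb{E}_{U^{t-1}, W_{I_t}^t}[\Sigma_t^{\mathrm{te}}] \Big|.
\end{align*}
Summing over $t = 1, \ldots, T$ gives the full MI bound; substituting this back into the starting MI-based meta-generalization bound and absorbing routine constants then recovers the claimed inequality.

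The main obstacle is the careful conditional bookkeeping through the nested loops. One must confirm that the freshly injected noise at each step is independent of the gradient signal conditioned on the past trajectory, that the inner and outer updates within a single iteration can be peeled off sequentially via the chain rule, and that the data-dependences split cleanly into train-only (inner) and test-only (outer) parts under the appropriate conditioning. A secondary but routine check is verifying that the inner-gradient covariance effectively lives in the $|I_t|d$-dimensional product space of sampled tasks while the outer-gradient covariance collapses onto a single $d \times d$ matrix on the meta-parameter, matching the two determinant dimensions stated in the bound.
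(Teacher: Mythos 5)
Your second stage (trajectory chain rule, splitting each iteration into an inner Gaussian mechanism driven by $T_{\mathbb{M},\mathrm{tr}}^{\mathbb{N}}$ and an outer one driven by $T_{\mathbb{M},\mathrm{te}}^{\mathbb{N}}$, then the Gaussian max-entropy/log-determinant bound plus Jensen) is essentially the paper's argument and would go through, modulo the routine bookkeeping you yourself flag (condition on the full past trajectory, and note $W^{t-1}_{I_t}=U^{t-1}$ so the inner/outer data dependences separate cleanly).

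The genuine gap is your starting inequality. You invoke a bound "akin to Theorem \ref{theorem3.1} with $\zeta=n$, $\xi=m^{\mathrm{te}}$" to claim $\vert\overline{\mathrm{gen}}\vert\lesssim \sqrt{I(U,W_{\mathbb{N}};T_{\mathbb{M}}^{\mathbb{N}})/(nm^{\mathrm{te}})}$, but Theorem \ref{theorem3.1} does not apply to the MAML risk structure: there the empirical meta-risk averages over the full per-task data with $W_i\sim P_{W_i\mid U,T^i_{\mathbb{M}}}$, whereas here it averages only over the test split while $W_i\sim P_{W_i\mid U,T^i_{\mathbb{M},\mathrm{tr}}}$, and the random-subset device with $\xi=m^{\mathrm{te}}$ does not produce a fixed train/test partition. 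Moreover, the form you wrote cannot be obtained by naively decoupling $(U,W_{\mathbb{N}})$ from the whole dataset: under that reference measure the ghost expectation is $\mathbb{E}_{U,W}\mathbb{E}_{\tau}\mathbb{E}_{Z\mid\tau}\,\ell(U,W,Z)$ with $W$ no longer re-adapted on fresh training data from the evaluation task, which is not the population meta-risk $\mathcal{R}(U,\mathcal{T})$. The paper closes exactly this hole with a dedicated lemma (Theorem \ref{theoremG.1} in the appendix): a Donsker--Varadhan argument with reference measure $P_{\tau,T_{\mathrm{tr}},T_{\mathrm{te}}}P_{\bar U}P_{W\mid U,T_{\mathrm{tr}}}P_{\bar T_{\mathrm{te}}\mid\tau}$, which keeps the adaptation channel intact and yields the conditional bound $\vert\overline{\mathrm{gen}}\vert\le\sqrt{2\sigma^2\,I(U,W_{\mathbb{N}};T_{\mathbb{M},\mathrm{te}}^{\mathbb{N}}\mid T_{\mathbb{M},\mathrm{tr}}^{\mathbb{N}})/(nm^{\mathrm{te}})}$. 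Your unconditional version then follows only as a weakening of this (via $I(U,W;T_{\mathrm{te}}\mid T_{\mathrm{tr}})\le I(U,W;T_{\mathbb{M}}^{\mathbb{N}})$), so the most MAML-specific piece of the proof is assumed rather than proved in your plan; once that lemma is established, your iteration-wise decomposition of either the conditional or the full MI does recover the two log-determinant sums in the stated bound.
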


The generalization bound above decreases monotonically at the rate of $\mathcal{O}(1/nm^{\mathrm{te}})$ as task size $n$ and separate in-task test data size $m^{\mathrm{te}}$ increase. This result underscores the importance of the in-task train-test split for achieving a smaller generalization error of the MAML algorithm. Furthermore, it is essential to simultaneously ensure small gradient covariances $\mathbb{E}_{U^{t-1}} [\Sigma^{\mathrm{tr}}_t]$ and $\mathbb{E}_{U^{t-1}, W^t_{I_t}} [\Sigma^{\mathrm{te}}_t]$ of task-specific parameters and meta-parameters associated with the separate in-task training and test samples, thereby achieving good generalization performance.

\section{Experiments}\label{AppendixA}

\begin{figure}[t]
  \vskip 0.2in
  \centering
  \subfigure[ Task size $n=2$]{
    \label{aa} 
    \includegraphics[width=65mm]{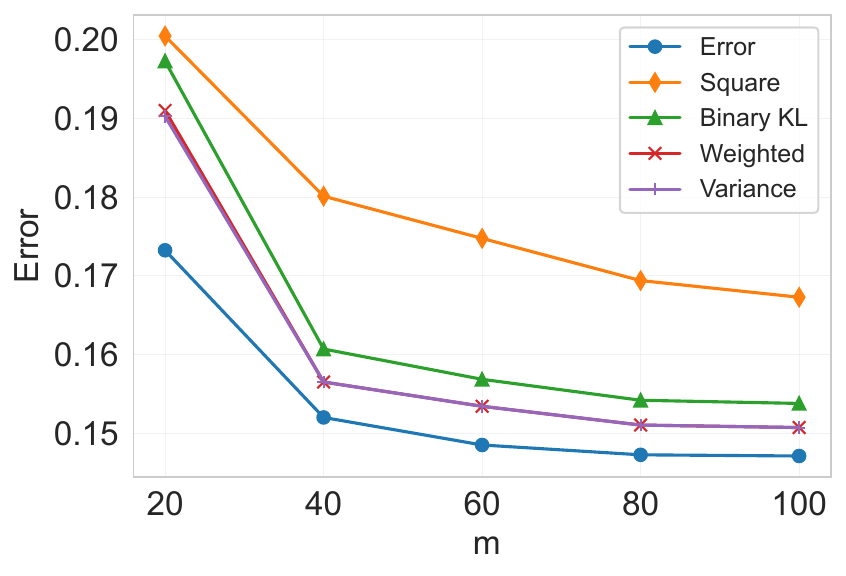}}
  \subfigure[Task size $n=3$]{
    \label{bb} 
    \includegraphics[width=65mm]{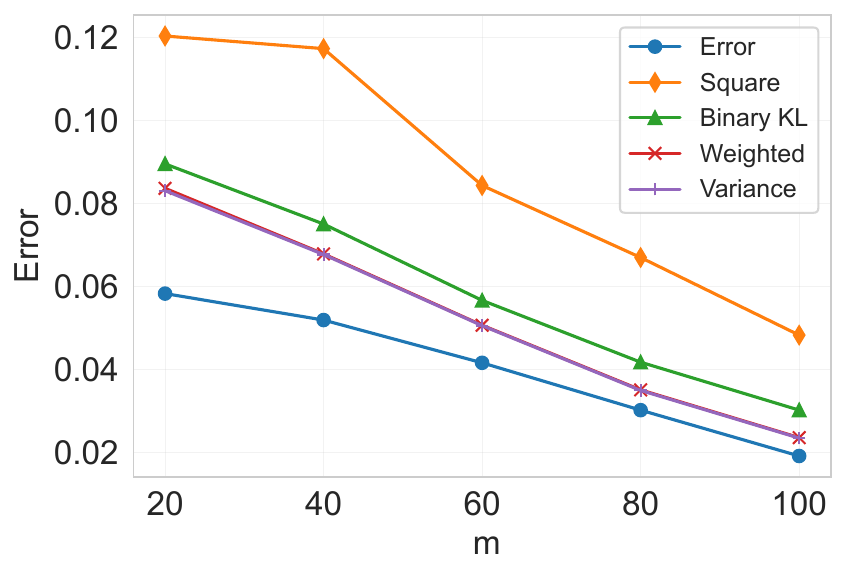}}
    \subfigure[Task size $n=4$]{
    \label{cc} 
    \includegraphics[width=65mm]{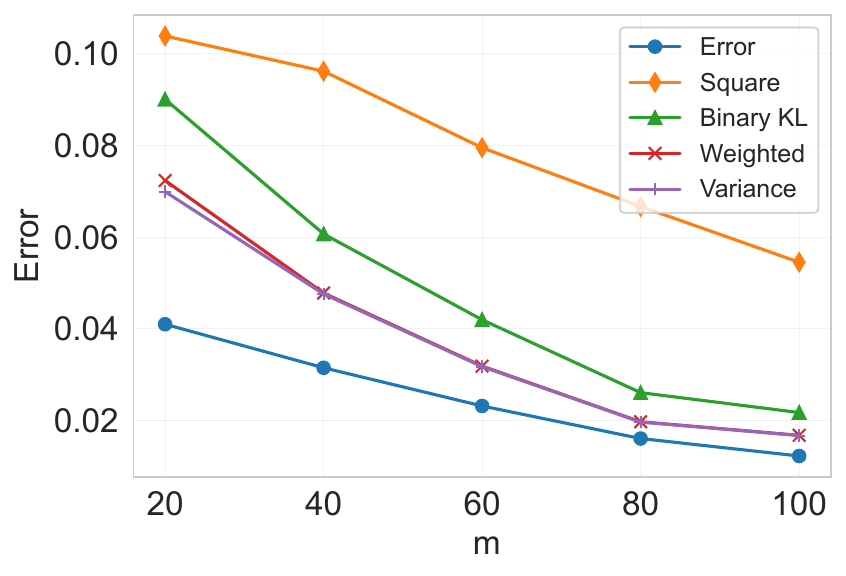}}
  \subfigure[Task size $n=5$]{
    \label{dd} 
    \includegraphics[width=65mm]{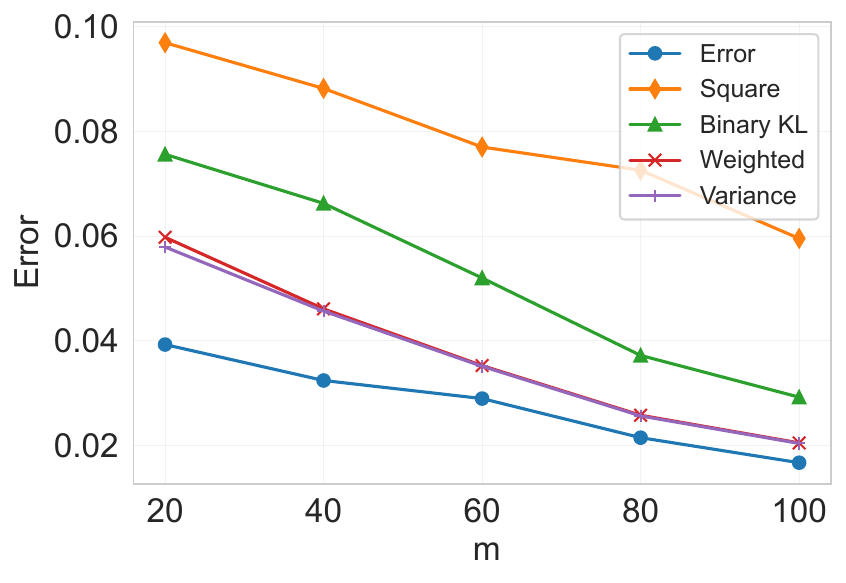}}
  \caption{Comparison of the meta-generalization bounds on synthetic Gaussian datasets.
}
\vskip -0.2in
  \label{fig1} 
\end{figure}

\begin{figure*}[t]
  \vskip 0.2in
  \centering
  \subfigure[MNIST, Adam ($n=10$)]{
    \label{aa} 
    \includegraphics[width=65mm]{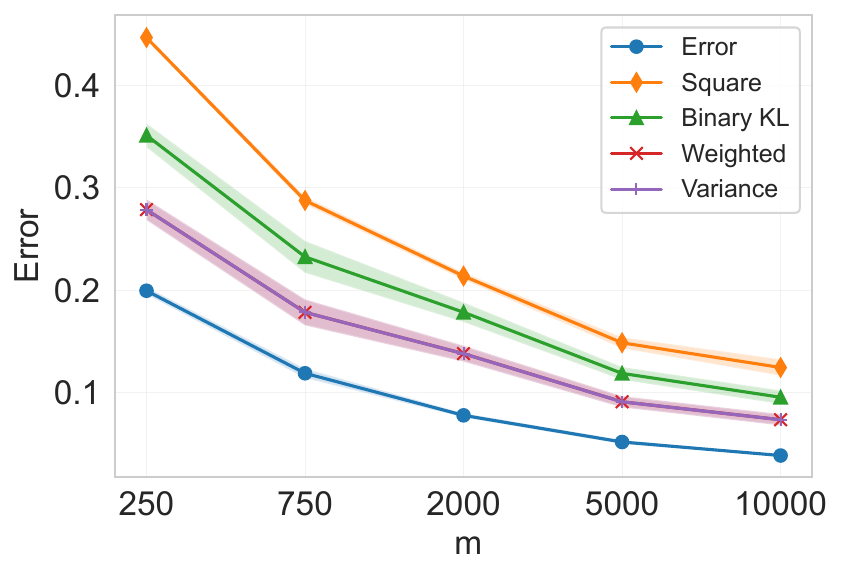}}
  \subfigure[MNIST, SGLD ($n=10$)]{
    \label{bb} 
    \includegraphics[width=65mm]{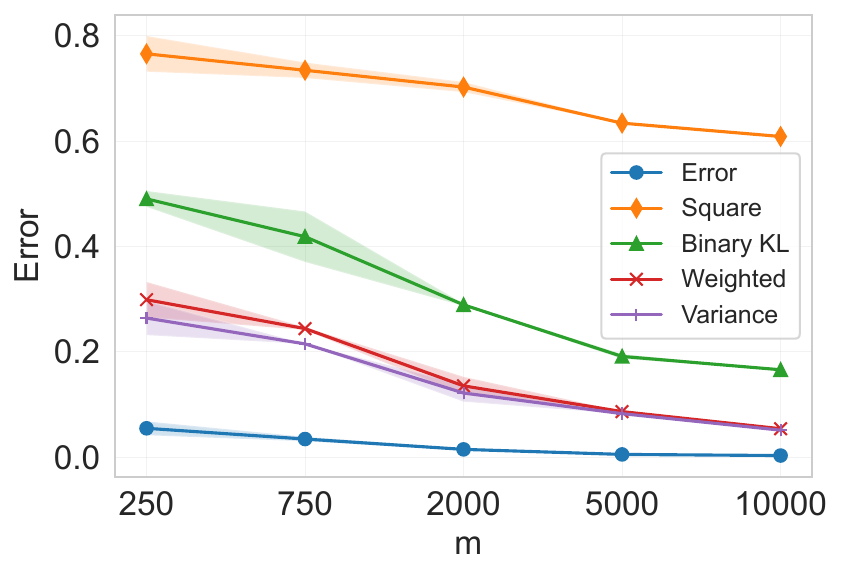}}
    \subfigure[CIFAR-10, SGD ($n=10$)]{
    \label{cc} 
    \includegraphics[width=65mm]{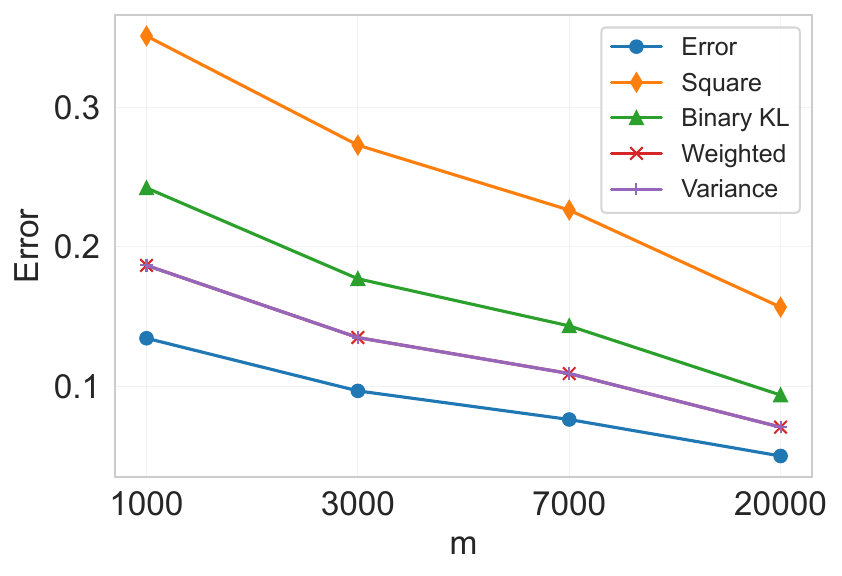}}
  \subfigure[CIFAR-10, SGLD ($n=10$)]{
    \label{dd} 
    \includegraphics[width=65mm]{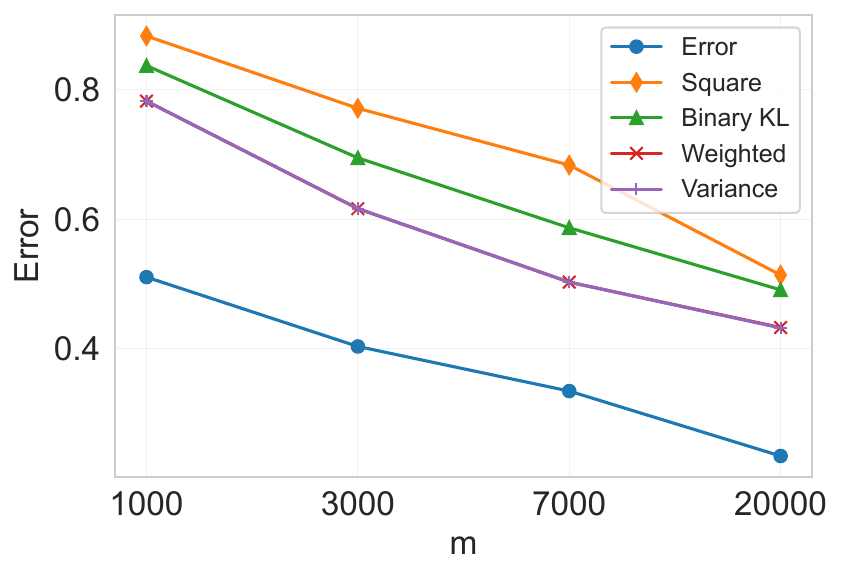}}
  \caption{Comparison of the meta-generalization bounds on real-world datasets with different optimizers.
}
\vskip -0.2in
  \label{fig2} 
\end{figure*}

\begin{figure*}[t]
    \vskip 0.2in
    \centering
    \subfigure[MNIST ($n=10$)]{
      \label{aa} 
      \includegraphics[width=65mm]{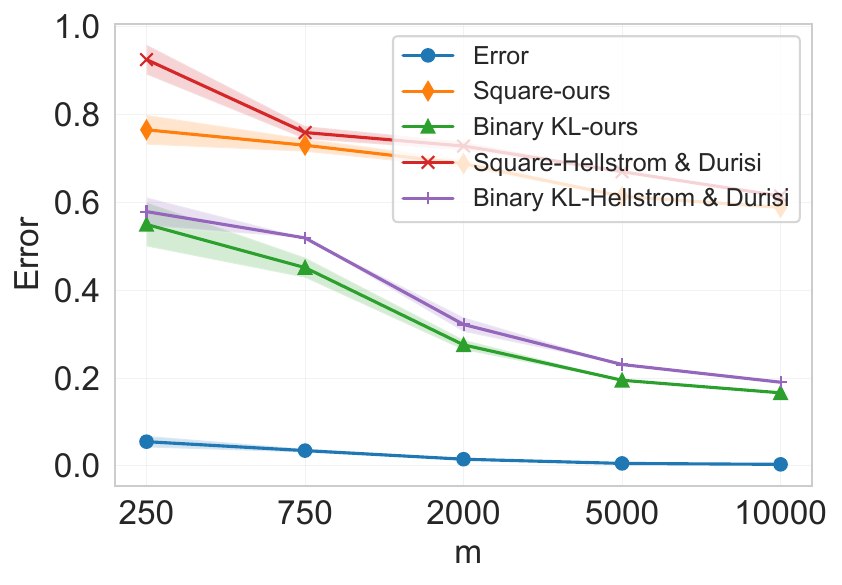}}
      \subfigure[CIFAR-10  ($n=10$)]{
      \label{cc} 
      \includegraphics[width=65mm]{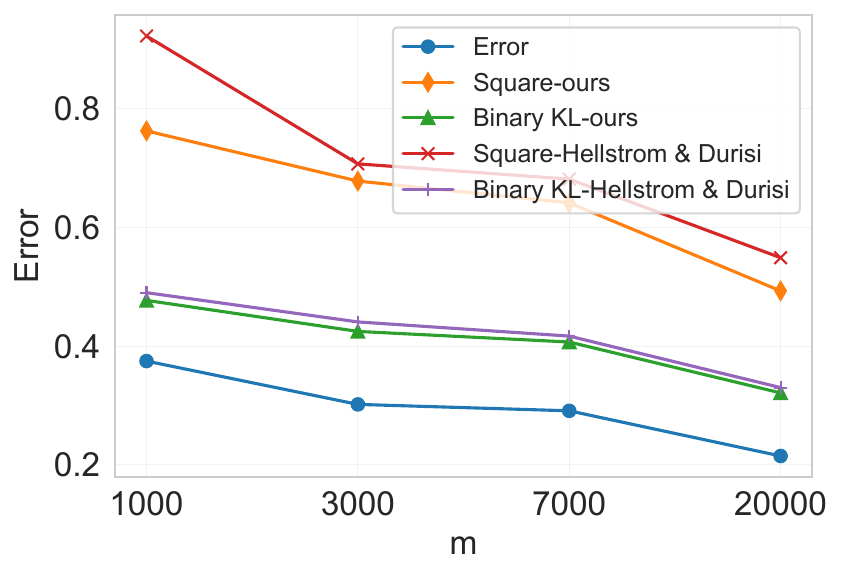}}
   
    \caption{Comparison of the meta-generalization bounds on real-world datasets.
  }
  \vskip -0.2in
    \label{fig3} 
  \end{figure*}

This section presents an empirical evaluation of our theoretical results by comparing the generalization gap with various bounds derived in Section \ref{Section3}. Considering computational feasibility, we focus primarily on the loss-based bounds, including the binary KL bound (Theorem \ref{theorem6}), the square-root bound (Theorem \ref{theorem8}), and the fast-rate bounds (Theorems \ref{theorem10} and \ref{theorem12}). In our experiments, we adopt a binary loss function to measure both empirical and population risks. The experimental setup follows the configuration used in prior work \citep{wang2023tighter} to ensure a consistent and precise quantification of the bounds. All experiments were conducted on a server equipped with an Intel Xeon CPU (2.10GHz, 48 cores), 256GB of memory, and 4 Nvidia Tesla V100 GPUs (32GB each).

\subsection{Experimental Setup}\label{section6.1}

\paragraph{Synthetic Experiments.} We adopt the experimental settings outlined in \citep{wang2023tighter} to generate synthetic Gaussian datasets using the scikit-learn Python package. Class centers for each dataset are positioned at the vertices of a hypercube. Data points are then independently drawn from isotropic Gaussian distributions with a standard deviation of 0.25. A four-layer MLP neural network with ReLU activation serves as the base model, and the generalization performance is evaluated using the binary 0–1 loss.
\paragraph{Real-world Experiments.} Following the experiment settings in \citep{harutyunyan2021information,hellstrom2022new}, we consider four distinct real-world scenarios to evaluate the derived meta-generalization bounds: 1) MNIST classification using Adam, 2) MNIST classification using SGLD, 3) CIFAR-10 classification with fine-tuned ResNet-50 using SGD, 4)  CIFAR-10 classification with fine-tuned ResNet-50 using SGLD. In meta-learning, we consider each category in the dataset as a distinct learning task, where the samples belonging to that category are regarded as in-task data used for task-specific training and evaluation. For each learning task, we sample $t_1$ instances of $T^{2\mathbb{N}}_{2\mathbb{M}}$, by randomly selecting $4nm$ samples from the individual datasets. For each $T^{2\mathbb{N}}_{2\mathbb{M}}$, we further draw $t_2$ samples of the supersample variables $\tilde{S}_{\mathbb{N}},S_{\mathbb{M}}$, culminating in $t_1\times t_2$ independent runs in total.

\subsection{Numerical Results}

\paragraph{Synthetic Datasets.} We begin with $n$-class classification tasks within meta-learning settings, where each class containing $m$ training samples is regarded as a meta-learning training task. Synthetic Gaussian datasets are generated as described in Section \ref{section6.1}. Figure \ref{fig1} shows the empirical generalization gap and the corresponding theoretical bounds on synthetic Gaussian datasets using a simple MLP network. As shown in Figure \ref{fig1}, our bounds adapt effectively to varying numbers $n$ of meta-training tasks and sizes $m$ of per-task samples, coinciding well with the convergence trend of meta-generalization gap: the bounds decrease as $n\times m$ increases. A comparison among these bounds illustrates that the fast-rate bounds serve as the tightest estimates of the generalization gap.

\paragraph{Real-world Datasets} We employ the real-world configurations outlined in Section \ref{section6.1}, covering four learning scenarios with different datasets and optimization methods. Figure \ref{fig2} illustrates the closeness of the generalization gap to various bounds, exhibiting the same trend regarding the size of the training dataset. Notably, the weighted and loss variance bounds have been shown to be the tightest among the upper bounds, aligning with the analysis of their stringency. Additionally, we empirically compare the derived bounds with those established by \citep{hellstrom2022evaluated} on real-world datasets. As shown in Figure \ref{fig3}, both the square root bound and the binary KL bound demonstrate a closer alignment with the true generalization error compared to previous work, thereby underscoring the advantage of using fewer meta-supersample variables within our meta-supersample framework.

\section{Conclusion}
This paper has presented a unified generalization analysis for meta-learning by developing various information-theoretic bounds. The resulting bounds improve upon existing results in terms of tightness, scaling rate, and computational feasibility. Our analysis is modular and broadly applicable to a wide range of meta-learning algorithms. In particular, we establish algorithm-dependent bounds for two widely adopted meta-learning methods, with a focus on MAML and Reptile. Numerical results on both synthetic and real-world datasets demonstrate that our theoretical bounds closely track the true generalization gap, validating their practical relevance.


\newpage

\appendix

\section{Preparatory Definitions and Lemmas}
\begin{definition}[$\sigma$-sub-gaussian]
    A random variable $X$ is $\sigma$-sub-gaussian if for any $\lambda$, $\ln \mathbb{E}[e^{\lambda (X-\mathbb{E}X) }]\leq \lambda^2 \sigma^2/2$.
\end{definition}
\begin{definition}[Binary Relative Entropy]
    Let $p,q\in[0,1]$. Then $d(p\Vert q)$ denotes the relative entropy between two Bernoulli random variables with parameters $p$ and $q$ respectively, defined as $d(p\Vert q)=p\log(\frac{p}{q})+(1-p)\log(\frac{1-p}{1-q})$. Given $\gamma\in\mathbb{R}$, $d_\gamma(p\Vert q)=\gamma p-\log(1-q+qe^{\gamma})$ is the relaxed version of binary relative entropy. One can prove that $\sup_\gamma d_\gamma(p\Vert q)=d(p\Vert q)$.
\end{definition}
\begin{definition}[Kullback-Leibler Divergence]
 Let $P$ and $Q$ be probability distributions defined on the same measurable space such that $P$ is absolutely continuous with respect to $Q$. The Kullback-Leibler (KL) divergence between $P$ and $Q$ is defined as $D(P\Vert Q)\triangleq \int_{\mathcal{X}}p(x)\log (\frac{p(x)}{q(x)})dx$.
\end{definition}
\begin{definition}[Mutual Information]
    For random variables $X$ and $Y$ with joint distribution $P_{X,Y}$ and product of their marginals $P_XP_Y$, the mutual information between $X$ and $Y$ is defined as $I(X;Y) = D(P_{X,Y} \Vert P_XP_Y)$.
\end{definition}
\begin{lemma}[Donsker-Varadhan Formula (Theorem 5.2.1 in \cite{gray2011entropy})] \label{lemmaA.5}
    Let $P$ and $Q$ be probability measures over the same space $\mathcal{X}$ such that $P$ is absolutely continuous with respect to $Q$. For any bounded function $f:\mathcal{X}\rightarrow \mathbb{R}$, 
    \begin{equation*}
        D(P\Vert Q) = \sup_{f} \Big\{ \mathbb{E}_{X\sim P}[f(X)] - \log \mathbb{E}_{X\sim Q} [e^{f(X)}] \Big\},
    \end{equation*}
where $X$ is any random variable such that both $\mathbb{E}_{X\sim P}[f(X)]$ and $\mathbb{E}_{X\sim Q} [e^{f(X)}]$ exist.
\end{lemma}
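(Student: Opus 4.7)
The plan is to prove the Donsker--Varadhan variational identity by first establishing the inequality
$D(P\Vert Q) \ge \mathbb{E}_{P}[f(X)] - \log \mathbb{E}_{Q}[e^{f(X)}]$
for every admissible $f$, and then exhibiting a specific $f$ that attains equality, so that the supremum over $f$ equals $D(P\Vert Q)$. This two-sided approach is natural because the RHS is linear in $f$ while the LHS is an intrinsic quantity, and the matching is controlled through a single auxiliary tilted measure.

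For the inequality direction, I would introduce the tilted probability measure $\tilde{Q}$ defined by its Radon--Nikodym derivative with respect to $Q$, namely
\[
\frac{d\tilde{Q}}{dQ}(x) \;=\; \frac{e^{f(x)}}{\mathbb{E}_{Q}[e^{f(X)}]},
\]
which is well-defined whenever $\mathbb{E}_{Q}[e^{f(X)}]$ is finite and positive (guaranteed here because $f$ is bounded). Since $P \ll Q$ and $\tilde{Q}$ has a strictly positive density with respect to $Q$, we also get $P \ll \tilde{Q}$. The chain rule for Radon--Nikodym derivatives then gives
\[
\log \frac{dP}{dQ}(x) \;=\; \log \frac{dP}{d\tilde{Q}}(x) + f(x) - \log \mathbb{E}_{Q}[e^{f(X)}].
\]
Integrating both sides against $P$ and using the definition of KL divergence yields
\[
D(P\Vert Q) \;=\; D(P\Vert \tilde{Q}) + \mathbb{E}_{P}[f(X)] - \log \mathbb{E}_{Q}[e^{f(X)}].
\]
Non-negativity of KL divergence (Gibbs' inequality) then delivers the desired bound, with equality iff $\tilde{Q}=P$.

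For tightness, I would choose $f^\star(x) = \log \frac{dP}{dQ}(x)$. A direct substitution gives $\mathbb{E}_{Q}[e^{f^\star(X)}] = \mathbb{E}_{Q}\bigl[\tfrac{dP}{dQ}\bigr] = 1$, so $\log \mathbb{E}_{Q}[e^{f^\star(X)}] = 0$, while $\mathbb{E}_{P}[f^\star(X)] = \mathbb{E}_{P}\bigl[\log \tfrac{dP}{dQ}\bigr] = D(P\Vert Q)$ by definition. Hence the RHS of the variational formula equals $D(P\Vert Q)$, so taking a supremum over $f$ recovers the KL divergence exactly.

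The main obstacle is a technical one: the variational formula is stated over \emph{bounded} $f$, whereas the optimizer $f^\star = \log \frac{dP}{dQ}$ need not be bounded. I would handle this by a standard truncation and approximation argument: replace $f^\star$ by $f^\star_n(x) = \max\{-n,\min\{f^\star(x),n\}\}$, apply the inequality direction to each bounded $f^\star_n$, and then invoke monotone/dominated convergence (with uniform integrability justified by the $P$-integrability of $\log \tfrac{dP}{dQ}$ in the case $D(P\Vert Q)<\infty$, and by Fatou's lemma in the case $D(P\Vert Q)=\infty$) to pass to the limit and conclude that the supremum over bounded $f$ still attains $D(P\Vert Q)$. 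This limiting step is the only point where measure-theoretic care is needed; the rest reduces to the clean change-of-measure identity above.
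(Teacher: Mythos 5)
Your proposal is correct: the change-of-measure identity $D(P\Vert Q) = D(P\Vert \tilde{Q}) + \mathbb{E}_{P}[f(X)] - \log \mathbb{E}_{Q}[e^{f(X)}]$ via the tilted measure, combined with non-negativity of $D(P\Vert \tilde{Q})$ and the (truncated) choice $f^\star = \log\frac{dP}{dQ}$, is the standard proof of the Donsker--Varadhan formula, and your truncation step is handled adequately (the negative part of $\log\frac{dP}{dQ}$ is always $P$-integrable, so splitting into positive and negative parts and using monotone convergence closes the limit whether $D(P\Vert Q)$ is finite or not). The paper does not prove this lemma at all---it is imported verbatim from Gray's text (Theorem 5.2.1)---so there is no in-paper argument to compare against; your write-up matches the textbook proof the citation points to.
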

\begin{lemma}[Lemma 1 in \cite{harutyunyan2021information}]\label{lemmaA.6}
    Let $(X,Y)$ be a pair of random variables with joint distribution $P_{X,Y}$, and $\bar{Y}$ be an independent copy of $Y$. If $f(x,y)$ be a measurable function such that $\mathbb{E}_{X,Y}[f(X,Y)]$ exists and $\mathbb{E}_{X,\bar{Y}}[f(X,\bar{Y})]$ is $\sigma$-sub-gaussian, then 
    \begin{equation*}
        \big\vert \mathbb{E}_{X,Y} [f(X,Y)] - \mathbb{E}_{X,\bar{Y}}[f(X,\bar{Y})] \big\vert \leq \sqrt{2\sigma^2 I(X,Y)}.
    \end{equation*}
\end{lemma}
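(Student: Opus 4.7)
The plan is to establish this classical decoupling inequality by combining the Donsker–Varadhan variational characterization of KL divergence (Lemma \ref{lemmaA.5}) with the sub-gaussian moment generating function bound, parameterized by a scalar that we subsequently optimize. The key observation is that the joint distribution $P_{X,Y}$ and the product of marginals $P_X P_Y$ (which is precisely the distribution of $(X,\bar{Y})$) differ in KL divergence by exactly $I(X;Y)$, so DV converts this mutual information into a uniform bound on expectation gaps.

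Concretely, I would first fix an arbitrary $\lambda \in \mathbb{R}$ and apply Lemma \ref{lemmaA.5} with $P = P_{X,Y}$, $Q = P_X P_Y$, and the test function $(x,y) \mapsto \lambda f(x,y)$. This yields
\begin{equation*}
  \lambda\, \mathbb{E}_{X,Y}[f(X,Y)] - \log \mathbb{E}_{X,\bar{Y}}\bigl[e^{\lambda f(X,\bar{Y})}\bigr] \leq D(P_{X,Y} \Vert P_X P_Y) = I(X;Y).
\end{equation*}
Next, by the hypothesis that $f(X,\bar{Y})$ is $\sigma$-sub-gaussian under the product distribution, the centered log-MGF satisfies $\log \mathbb{E}_{X,\bar{Y}}[e^{\lambda(f(X,\bar{Y}) - \mathbb{E}_{X,\bar Y}[f(X,\bar Y)])}] \leq \lambda^2 \sigma^2 / 2$, which after rearranging gives
\begin{equation*}
  \log \mathbb{E}_{X,\bar{Y}}\bigl[e^{\lambda f(X,\bar{Y})}\bigr] \leq \lambda\, \mathbb{E}_{X,\bar{Y}}[f(X,\bar{Y})] + \frac{\lambda^2 \sigma^2}{2}.
\end{equation*}
Substituting this into the DV inequality produces
\begin{equation*}
  \lambda \bigl(\mathbb{E}_{X,Y}[f(X,Y)] - \mathbb{E}_{X,\bar{Y}}[f(X,\bar{Y})]\bigr) \leq I(X;Y) + \frac{\lambda^2 \sigma^2}{2}.
\end{equation*}

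Finally, I would optimize over $\lambda$. For $\lambda > 0$ the minimizer of the right-hand side as a function of $\lambda$ (after dividing by $\lambda$) is $\lambda^* = \sqrt{2 I(X;Y)/\sigma^2}$, yielding the one-sided bound $\mathbb{E}_{X,Y}[f(X,Y)] - \mathbb{E}_{X,\bar{Y}}[f(X,\bar{Y})] \leq \sqrt{2\sigma^2 I(X;Y)}$. Repeating the argument with $\lambda < 0$ (equivalently, replacing $f$ by $-f$, which preserves sub-gaussianity with the same parameter $\sigma$) yields the matching lower bound, and combining the two gives the absolute-value statement. The main technical subtlety, rather than an obstacle, is ensuring that the sub-gaussian hypothesis is invoked under the correct measure $P_X P_Y$ (i.e., for the decoupled pair $(X,\bar{Y})$) and that the centering by $\mathbb{E}_{X,\bar{Y}}[f(X,\bar{Y})]$ is handled consistently so it cancels cleanly in the DV inequality; once that bookkeeping is in place, the proof reduces to the one-line optimization over $\lambda$.
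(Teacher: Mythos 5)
Your proof is correct and is essentially the same argument as the source: the paper does not reprove this lemma but imports it from \citet{harutyunyan2021information}, whose proof is exactly your combination of the Donsker--Varadhan formula (Lemma \ref{lemmaA.5}) applied to $\lambda f$ with $P=P_{X,Y}$, $Q=P_XP_{\bar Y}$, the sub-gaussian bound on the centered log-MGF under the product measure, and optimization over $\lambda$ of both signs. This is also the same template the paper itself follows in the proof of Theorem \ref{theorem3.1}, so there is nothing further to fix.
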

\begin{lemma}[Lemma 3 in \cite{harutyunyan2021information}] \label{lemmaA.7}
    Let $X$ and $Y$ be independent random variables. If $f$ is a measurable function such that $f(x,Y)$ is $\sigma$-sub-gaussian and $\mathbb{E}f(x,Y)=0$ for all $x\in\mathcal{X}$, then $f(X,Y)$ is also $\sigma$-sub-gaussian.
\end{lemma}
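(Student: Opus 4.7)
The plan is to verify the sub-gaussian inequality directly from its definition by conditioning on $X$ and invoking independence. To recap, I need to show that for every $\lambda \in \mathbb{R}$,
\begin{equation*}
\log \mathbb{E}\bigl[e^{\lambda(f(X,Y) - \mathbb{E} f(X,Y))}\bigr] \leq \frac{\lambda^2 \sigma^2}{2}.
\end{equation*}

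First I would dispose of the centering. Since $X$ and $Y$ are independent, Fubini/tower gives $\mathbb{E}[f(X,Y)] = \mathbb{E}_X[\mathbb{E}_Y f(x,Y)|_{x=X}] = \mathbb{E}_X[0] = 0$, using the hypothesis that $\mathbb{E} f(x,Y) = 0$ for every $x \in \mathcal{X}$. Hence the inequality to establish simplifies to $\log \mathbb{E}[e^{\lambda f(X,Y)}] \leq \lambda^2 \sigma^2/2$.

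Next, I would condition on $X$ and use independence again: for a fixed $\lambda$,
\begin{equation*}
\mathbb{E}\bigl[e^{\lambda f(X,Y)}\bigr] = \mathbb{E}_X\!\left[\mathbb{E}_Y\!\left[e^{\lambda f(x,Y)}\right]\Big|_{x=X}\right].
\end{equation*}
By the assumption that $f(x,Y)$ is $\sigma$-sub-gaussian with zero mean (pointwise in $x$), the inner expectation is bounded by $e^{\lambda^2 \sigma^2/2}$ uniformly in $x$. Taking the outer expectation over $X$ preserves this constant bound, so $\mathbb{E}[e^{\lambda f(X,Y)}] \leq e^{\lambda^2\sigma^2/2}$, and taking logarithms yields the claim.

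There is essentially no hard step here — the argument is two lines once the independence structure is used to split the moment generating function. The only subtlety worth flagging is the need to justify exchanging the integrals (Fubini), which follows because $e^{\lambda f(x,Y)} \geq 0$ is measurable and the bound $e^{\lambda^2\sigma^2/2}$ gives integrability after taking expectations; this is why the hypothesis is stated for \emph{every} $x$ rather than almost every $x$, so no null-set issues arise when we evaluate at $x=X$.
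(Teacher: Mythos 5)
Your argument is correct: since the paper only cites this statement (Lemma 3 of \citet{harutyunyan2021information}) without reproducing a proof, there is no in-paper derivation to compare against, and your conditioning-on-$X$ argument — zero mean via the tower property, then bounding the conditional moment generating function $\mathbb{E}_Y[e^{\lambda f(x,Y)}] \leq e^{\lambda^2\sigma^2/2}$ uniformly in $x$ and integrating over $X$ — is exactly the standard proof used in that reference. Nothing is missing; the measurability/Fubini remark is a fine (if optional) touch.
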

\begin{lemma}[Lemma 2 in \cite{hellstrom2022evaluated}] \label{lemmaA.8}
    Let $X_1,\ldots,X_n$ be $n$ independent random variables that for $i\in[n]$, $X_i\sim P_{X_i}$, $\mathbb{E}[X_i]=\mu_i$, $\bar{\mu}=\frac{1}{n}\sum_{i=1}^{n}X_i$, and $\mu = \frac{1}{n}\sum_{i=1}^{n}\mu_i$. Assume that $X_i\in[0,1]$ almost surely. Then, for any $\gamma>0$, $\mathbb{E}[e^{(n d_\gamma(\bar{\mu} \Vert \mu))}]\leq 1$.
\end{lemma}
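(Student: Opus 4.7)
The plan is to expand the exponential explicitly and reduce the claim to an inequality between moment-generating functions and a single deterministic quantity. By definition of $d_\gamma$,
\begin{equation*}
n d_\gamma(\bar{\mu}\Vert\mu) = \gamma\sum_{i=1}^n X_i - n\log(1-\mu+\mu e^\gamma),
\end{equation*}
so the desired inequality is equivalent to
\begin{equation*}
\mathbb{E}\Big[e^{\gamma \sum_{i=1}^n X_i}\Big] \leq \bigl(1-\mu+\mu e^\gamma\bigr)^n.
\end{equation*}
This is the form I would target directly.

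Next, I would exploit independence to factor the left-hand side as $\prod_{i=1}^n \mathbb{E}[e^{\gamma X_i}]$. For each factor, the function $x\mapsto e^{\gamma x}$ is convex on $[0,1]$, so the chord bound gives $e^{\gamma x}\leq (1-x)\cdot 1 + x\cdot e^\gamma = 1 - x + x e^\gamma$ for all $x\in[0,1]$ and any $\gamma$. Taking expectations and using $\mathbb{E}[X_i]=\mu_i$ yields $\mathbb{E}[e^{\gamma X_i}]\leq 1-\mu_i+\mu_i e^\gamma$.

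Now I need to pass from the product $\prod_{i=1}^n (1-\mu_i+\mu_i e^\gamma)$ to $(1-\mu+\mu e^\gamma)^n$ where $\mu$ is the arithmetic mean of the $\mu_i$. The AM-GM inequality does this in one line:
\begin{equation*}
\prod_{i=1}^n \bigl(1-\mu_i+\mu_i e^\gamma\bigr) \leq \Bigl(\frac{1}{n}\sum_{i=1}^n (1-\mu_i+\mu_i e^\gamma)\Bigr)^n = \bigl(1-\mu+\mu e^\gamma\bigr)^n.
\end{equation*}
Chaining these two estimates and dividing through by $(1-\mu+\mu e^\gamma)^n$ gives $\mathbb{E}[e^{n d_\gamma(\bar\mu\Vert\mu)}]\leq 1$.

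There is no real obstacle here; the proof is essentially a two-line combination of the standard chord bound for the exponential on $[0,1]$ (the same device used in Hoeffding's lemma) and AM-GM. The only mildly subtle point worth flagging is that the chord bound, and hence the factor-wise inequality $\mathbb{E}[e^{\gamma X_i}]\leq 1-\mu_i+\mu_i e^\gamma$, requires $X_i\in[0,1]$ almost surely, which is precisely the boundedness hypothesis in the statement; without it, neither step would go through unchanged.
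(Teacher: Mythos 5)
Your proof is correct: the reduction to $\mathbb{E}[e^{\gamma\sum_i X_i}]\leq(1-\mu+\mu e^\gamma)^n$, the factorization by independence, the chord bound $e^{\gamma x}\leq 1-x+xe^\gamma$ on $[0,1]$, and the AM--GM step (equivalently, Jensen applied to the concave map $t\mapsto\log(1-t+te^\gamma)$) all go through, and you correctly flag that boundedness is where the hypothesis enters. The paper does not reprove this lemma but imports it from \cite{hellstrom2022evaluated}, and your argument is essentially the standard proof given there, so there is nothing further to reconcile.
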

\begin{lemma}[Lemma A.11 in \cite{dongtowards}] \label{lemmaA.9}
    Let $X\sim N(0,\Sigma)$ and $Y$ be any zero-mean random vector satisfying $\mathrm{Cov}_Y[Y] = \Sigma$, then $H(Y)\leq H(X)$.
\end{lemma}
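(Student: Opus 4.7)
The plan is to prove this classical maximum-entropy characterization of the Gaussian via the non-negativity of Kullback-Leibler divergence, together with the key observation that the log-density of a Gaussian is a quadratic form, so its expectation under any distribution depends only on the first two moments. Throughout, I treat the generic case first, where $\Sigma$ is positive definite and $Y$ admits a density, and then comment on the edge cases.

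First, assume $\Sigma \succ 0$, so $X$ has a strictly positive density
\[ p_X(x) = (2\pi)^{-d/2}\lvert \Sigma\rvert^{-1/2}\exp\bigl(-\tfrac{1}{2}x^\top \Sigma^{-1} x\bigr) \]
on $\mathbb{R}^d$. If $Y$ admits a density $p_Y$, I would start from the identity
\[ D(P_Y \Vert P_X) = -H(Y) - \mathbb{E}_Y[\log p_X(Y)], \]
which follows from unpacking the definition of KL divergence, and which is non-negative by Gibbs' inequality.

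Next, the key step is to evaluate $\mathbb{E}_Y[\log p_X(Y)]$. Substituting the Gaussian density gives
\[ \mathbb{E}_Y[\log p_X(Y)] = -\tfrac{d}{2}\log(2\pi) - \tfrac{1}{2}\log\lvert \Sigma\rvert - \tfrac{1}{2}\mathbb{E}_Y[Y^\top \Sigma^{-1} Y]. \]
Because $Y$ is zero-mean with covariance $\Sigma$, the quadratic term is
\[ \mathbb{E}_Y[Y^\top \Sigma^{-1} Y] = \mathrm{tr}\bigl(\Sigma^{-1}\mathbb{E}_Y[YY^\top]\bigr) = \mathrm{tr}(\Sigma^{-1}\Sigma) = d, \]
so $\mathbb{E}_Y[\log p_X(Y)] = -\tfrac{d}{2}\log(2\pi e) - \tfrac{1}{2}\log\lvert\Sigma\rvert = -H(X)$, where the last equality is the standard differential-entropy formula for a Gaussian; crucially, the same computation holds with $X$ in place of $Y$ on the left-hand side because only moments up to order two appear. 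Plugging this back into the KL identity yields $0 \le D(P_Y\Vert P_X) = H(X) - H(Y)$, which is the claimed inequality.

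The main obstacle is handling the two edge cases cleanly. (i) If $\Sigma$ is only positive semi-definite, then both $X$ and $Y$ are supported on the affine subspace $\mathrm{range}(\Sigma)$, and the argument above can be repeated on that subspace using the pseudo-determinant of $\Sigma$. (ii) If $Y$ has no density (for example, a discrete distribution whose covariance happens to equal $\Sigma$), then by the usual convention $H(Y) = -\infty$ and the inequality holds trivially; alternatively, one can approximate $Y$ by $Y + \varepsilon Z$ with $Z \sim N(0,I)$ independent and take $\varepsilon \to 0$, using the fact that adding independent noise only increases differential entropy. Neither case carries essential content; the heart of the argument is the moment-matching identity that makes $\mathbb{E}_Y[\log p_X(Y)] = \mathbb{E}_X[\log p_X(X)]$.
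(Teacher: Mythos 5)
Your proof is correct. Note, however, that the paper does not prove this statement at all: Lemma \ref{lemmaA.9} is imported by citation (Lemma A.11 of the referenced work) and used as a black box in the proofs of Theorems \ref{theorem4.1} and \ref{theorem4.2}, so there is no in-paper argument to compare against. What you give is the standard maximum-entropy characterization of the Gaussian: non-negativity of $D(P_Y\Vert P_X)$ plus the observation that $\log p_X$ is quadratic, so $\mathbb{E}_Y[\log p_X(Y)]$ depends only on the first two moments and equals $-H(X)$ under moment matching; this is exactly the textbook route and is sound. The only loose point is your smoothing alternative in edge case (ii): convolving $Y$ with $\varepsilon Z$ changes its covariance to $\Sigma+\varepsilon^2 I$, so the comparison must be made against $N(0,\Sigma+\varepsilon^2 I)$ with a limit $\varepsilon\to 0$ using continuity of $\tfrac{1}{2}\log\vert\Sigma+\varepsilon^2 I\vert$, and the monotonicity claim "adding noise increases entropy" presupposes $H(Y)$ is defined. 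Since your primary treatment (the $H(Y)=-\infty$ convention, and restriction to $\mathrm{range}(\Sigma)$ with the pseudo-determinant in the degenerate case) already covers these situations, and in the paper's application the relevant vectors have Gaussian noise added and hence densities, this does not affect the validity of the argument.
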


\begin{lemma}[Lemma 9 in \cite{dong2023understanding}] \label{lemmaA.10}
    For any symmetric positive-definite matrix $A$, let $A = \Big[\begin{matrix}  B & D^T \\ D & C \end{matrix}\Big] $ be a partition of $A$, where $B$ and $C$ are square matrices, then $\vert A \vert \leq \vert B \vert \vert C \vert$.
\end{lemma}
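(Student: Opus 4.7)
The plan is to reduce the claim $|A| \leq |B|\,|C|$ to a monotonicity statement about determinants of positive definite matrices in the Loewner order, via the Schur complement identity. First I would use the block $LDU$ factorization of $A$: since $A$ is symmetric positive definite and $B$ is a principal submatrix, $B$ is itself positive definite and hence invertible, so one can write
\begin{equation*}
A = \begin{bmatrix} I & 0 \\ DB^{-1} & I \end{bmatrix} \begin{bmatrix} B & 0 \\ 0 & C - DB^{-1}D^T \end{bmatrix} \begin{bmatrix} I & B^{-1}D^T \\ 0 & I \end{bmatrix}.
\end{equation*}
The outer factors are unit triangular with determinant $1$, and the middle factor is block diagonal, so taking determinants yields the Schur complement identity $|A| = |B|\cdot |C - DB^{-1}D^T|$.

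It therefore suffices to show $|C - DB^{-1}D^T| \leq |C|$. Since $B \succ 0$ implies $B^{-1} \succ 0$, the matrix $DB^{-1}D^T$ is symmetric positive semidefinite; consequently $C - DB^{-1}D^T \preceq C$ in the Loewner order. Moreover, $C$ is a principal submatrix of the positive definite $A$, hence $C \succ 0$, and the Schur complement $C - DB^{-1}D^T$ is also positive definite (as it equals $|A|/|B|$ times a positive quantity in the scalar case, or more directly because $A \succ 0$ forces both Schur complements to be positive definite).

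The main (and essentially only) step left is the monotonicity of the determinant on positive definite matrices: if $0 \prec M \preceq N$, then $|M|\leq |N|$. I would prove this by factoring $N = M^{1/2}(I + M^{-1/2}(N-M)M^{-1/2})M^{1/2}$, so that
\begin{equation*}
|N| = |M| \cdot |I + M^{-1/2}(N-M)M^{-1/2}|.
\end{equation*}
The matrix $M^{-1/2}(N-M)M^{-1/2}$ is symmetric positive semidefinite, so all its eigenvalues are nonnegative, which makes every eigenvalue of $I + M^{-1/2}(N-M)M^{-1/2}$ at least $1$; hence its determinant is at least $1$, giving $|N| \geq |M|$. Applying this with $M = C - DB^{-1}D^T$ and $N = C$ yields $|C - DB^{-1}D^T| \leq |C|$, and combining with the Schur identity produces $|A| \leq |B|\,|C|$. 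The only subtlety is ensuring strict positivity of the Schur complement to apply the factorization cleanly; a density argument (perturb $A$ by $\varepsilon I$ and let $\varepsilon\to 0$) handles the degenerate case where $D = 0$ trivially and otherwise is not needed since $A\succ 0$ already forces the Schur complement to be positive definite.
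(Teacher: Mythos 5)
Your argument is correct and complete: the block $LDU$ factorization gives the Schur identity $|A| = |B|\,|C - DB^{-1}D^T|$, the Schur complement is positive definite because it is congruent (via the unit-triangular factors) to a diagonal block of the positive definite $A$, and your eigenvalue argument for determinant monotonicity under the Loewner order is sound, so $|C - DB^{-1}D^T| \leq |C|$ and Fischer's inequality follows. Note that the paper itself supplies no proof of this statement — it is imported verbatim as Lemma 9 of \cite{dong2023understanding} — so there is no in-paper argument to compare against; yours is the standard Schur-complement route and would serve as a self-contained substitute. The only cosmetic point is your closing remark: the perturbation/density argument is indeed unnecessary (and $D=0$ is not a degenerate case needing it), since $A \succ 0$ already guarantees $C - DB^{-1}D^T \succ 0$, exactly as you say.
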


\section{Omitted Proofs [Input-Output MI Bounds]}

\subsection{Proof of Theorem \ref{theorem3.1}}
\begin{restatetheorem}{\ref{theorem3.1}}[Restate]
    Let $\mathbb{K}$ and $\mathbb{J}$ be random subsets of $[n]$ and $[m]$ with sizes $\zeta$ and $\xi$, respectively, independent of $T^{\mathbb{N}}_{\mathbb{M}}$ and $(U,W_{\mathbb{N}})$. Assume that $\ell(u,w,Z)$ is $\sigma$-sub-gaussian with respect to $Z\sim P_{Z|\tau}$ and $\tau\sim P_{\mathcal{T}}$ for all $u,w$, then
    \begin{equation*}
       \vert \overline{\mathrm{gen}} \vert \leq \mathbb{E}_{K\sim \mathbb{K}, J\sim\mathbb{J}} \sqrt{\frac{2\sigma^2}{\zeta \xi} I(U,W_K;T^{K}_{J})}.  
    \end{equation*}
\end{restatetheorem}
\begin{proof}
    Let random subsets $\mathbb{K},\mathbb{J}$ be fixed to $\mathbb{K}=K,\mathbb{J}=J$ with size $\zeta$ and $\xi$, and independent of $(U,W_{\mathbb{N}})$ and $T^{\mathbb{N}}_{\mathbb{M}}$. Let $\tau_K=\{\tau_i\}_{i\in K}$, $T^{K}_{J}=\{Z^i_j\}_{i\in K,j\in J} \subseteq T^{\mathbb{N}}_{\mathbb{M}}$, $W_K = \{W_i\}_{i\in K}$, and 
        \begin{equation*}
        f(U,W_K, T^{K}_{J}) = \frac{1}{\zeta} \sum_{i\in K} \frac{1}{\xi}\sum_{j\in J}\ell(U, W_i, Z^i_j).
    \end{equation*}
Let $\bar{U}$ and $\bar{T}^{\mathbb{N}}_{\mathbb{M}}$ be independent copy of $U$ and $T^{\mathbb{N}}_{\mathbb{M}}$.  Applying Lemma \ref{lemmaA.5} with $P= P_{\tau_K,T^{K}_{J}}P_{U|T^{K}_{J}}P_{W_{K}|U,T^{K}_{J}}$, $Q=  P_{\tau_K,T^{K}_{J}} P_{\bar{U}}P_{W_K|U,T^{K}_{J}}P_{\bar{T}^{K}_{J}|\tau_K}$, and $f = f(U,W_K, T^{K}_{J})$, we get that 
\begin{align}
    &  D(P_{\tau_K,T^{K}_{J}}P_{U|T^{K}_{J}}P_{W_{K}|U,T^{K}_{J}} \Vert P_{\tau_K,T^{K}_{J}} P_{\bar{U}}P_{W_K|U,T^{K}_{J}}P_{\bar{T}^{K}_{J}|\tau_K})\nonumber\\
    \geq & \sup_{\lambda}\Big\{\lambda\Big(\mathbb{E}_{\tau_K,T^{K}_{J},U,W_{K}} [f(U,W_K, T^{K}_{J})] - \mathbb{E}_{\tau_K,T^{K}_{J}}\mathbb{E}_{\bar{U}, \bar{T}^{K}_{J}|\tau_K}\mathbb{E}_{W_K|U,T^{K}_{J}} [f(\bar{U},W_K, \bar{T}^{K}_{J})] \Big)\nonumber\\
    & -  \log\mathbb{E}_{\tau_K,T^{K}_{J}}\mathbb{E}_{\bar{U}, \bar{T}^{K}_{J}|\tau_K}\mathbb{E}_{W_K|U,T^{K}_{J}} \Big[e^{\lambda \big(f(\bar{U},W_K, \bar{T}^{K}_{J}) - \mathbb{E}[f(\bar{U},W_K, \bar{T}^{K}_{J})]\big)} \Big] \Big\}, \label{equ4}
\end{align}
where 
\begin{align}
    &D(P_{\tau_K,T^{K}_{J}}P_{U|T^{K}_{J}}P_{W_{K}|U,T^{K}_{J}} \Vert P_{\tau_K,T^{K}_{J}} P_{\bar{U}}P_{W_K|U,T^{K}_{J}}P_{\bar{T}^{K}_{J}|\tau_K}) \nonumber\\
     =& D(P_{\tau_K,T^{K}_{J}}P_{U|T^{K}_{J}} \Vert P_{\bar{U}}P_{\bar{T}^{K}_{J}|\tau_K} ) + D(P_{W_K|U,T^{K}_{J}}\Vert P_{\bar{W}_K|U,\tau_K} | P_{\tau_K,T^{K}_{J}}P_{U}) \nonumber\\
    =& I(U;T^{K}_{J}) + I(W_K;T^{K}_{J}|\tau_K,U) \label{equ5}
\end{align}
By the sub-gaussian property of the loss function, it is clear that the random variable $f(\bar{U},W_K, \bar{T}^{K}_{J})$ is $\frac{\sigma}{\sqrt{\zeta\xi}}$-sub-gaussian, which implies that 
\begin{align*}
    \log\mathbb{E}_{\tau_K,T^{K}_{J}}\mathbb{E}_{\bar{U}, \bar{T}^{K}_{J}|\tau_K}\mathbb{E}_{W|U,T^{K}_{J}} \Big[e^{\lambda \big(f(\bar{U},W_K, \bar{T}^{K}_{J}) - \mathbb{E}[f(\bar{U},W_K, \bar{T}^{K}_{J})]\big)} \Big] \leq \frac{\lambda^2\sigma^2}{2\xi\zeta}.
\end{align*}
Putting the above back into (\ref{equ4}) and combining with (\ref{equ5}), we have
\begin{align*}
    &  D(P_{\tau_K,T^{K}_{J}}P_{U|T^{K}_{J}}P_{W_{K}|U,T^{K}_{J}} \Vert P_{\tau_K,T^{K}_{J}} P_{\bar{U}}P_{W|U,T^{K}_{J}}P_{\bar{T}^{K}_{J}|\tau_K}) \\
    = & I(U;T^{K}_{J}) + I(W_K;T^{K}_{J}|\tau_K,U) \\
    \geq& \sup_{\lambda}\Big\{\lambda\Big(\mathbb{E}_{\tau_K,T^{K}_{J},U,W_{K}} [f(U,W_K, T^{K}_{J})] - \mathbb{E}_{\tau_K,T^{K}_{J}}\mathbb{E}_{\bar{U}, \bar{T}^{K}_{J}|\tau_K}\mathbb{E}_{W|U,T^{K}_{J}} [f(\bar{U},W_K, \bar{T}^{K}_{J})] \Big) - \frac{\lambda^2\sigma^2}{2\xi\zeta} \Big\}. 
\end{align*}
Solving $\lambda$ to maximize the RHS of the above inequality, we get that 
\begin{align*}
    &\Big\vert \mathbb{E}_{\tau_K,T^{K}_{J},U,W_{K}} [f(U,W_K, T^{K}_{J})] - \mathbb{E}_{\tau_K,T^{K}_{J}}\mathbb{E}_{\bar{U}, \bar{T}^{K}_{J}|\tau_K}\mathbb{E}_{W|U,T^{K}_{J}} [f(\bar{U},W_K, \bar{T}^{K}_{J})]   \Big\vert \\
    = & \bigg\vert \mathbb{E}_{\tau_K,T^{K}_{J},U,W_{K}} \Big[ \frac{1}{\zeta} \sum_{i\in K} \frac{1}{\xi}\sum_{j\in J}\ell(U, W_i, Z^i_j)\Big] - \mathbb{E}_{\tau_K,T^{K}_{J}}\mathbb{E}_{\bar{U}, \bar{T}^{K}_{J}|\tau_K}\mathbb{E}_{W|U,T^{K}_{J}} \Big[\frac{1}{\zeta} \sum_{i\in K} \frac{1}{\xi}\sum_{j\in J}\ell(\bar{U}, W_i, \bar{Z}^i_j) \Big]   \bigg\vert \\
    \leq & \sqrt{\frac{2\sigma^2}{\zeta \xi} \Big(I(U;T^{K}_{J}) + I(W_K;T^{K}_{J}|\tau_K,U)\Big)}  . 
\end{align*}
Taking expectation over $K,J$ on both sides and applying Jensen's inequality on the absolute value function, we have 
\begin{align}
    & \bigg\vert \mathbb{E}_{K\sim \mathbb{K}, J\sim\mathbb{J}}\bigg[ \mathbb{E}_{\tau_K,T^{K}_{J},U,W_{K}} \Big[ \frac{1}{\zeta} \sum_{i\in K} \frac{1}{\xi}\sum_{j\in J}\ell(U, W_i, Z^i_j)\Big] - \mathbb{E}_{\tau_K,T^{K}_{J}}\mathbb{E}_{\bar{U}, \bar{T}^{K}_{J}|\tau_K}\mathbb{E}_{W|U,T^{K}_{J}} \Big[\frac{1}{\zeta} \sum_{i\in K} \frac{1}{\xi}\sum_{j\in J}\ell(\bar{U}, W_i, \bar{Z}^i_j) \Big]  \bigg] \bigg\vert \nonumber\\
    = & \bigg\vert  \mathbb{E}_{\tau_\mathbb{N},T^{\mathbb{N}}_{\mathbb{M}},U,W_\mathbb{N}} \Big[ \frac{1}{nm}\sum_{i,j=1}^{n,m}\ell(U, W_i, Z^i_j)\Big] - \mathbb{E}_{\tau_\mathbb{N},T^{\mathbb{N}}_{\mathbb{M}}}\mathbb{E}_{\bar{U}, \bar{T}^{\mathbb{N}}_{\mathbb{M}}|\tau_\mathbb{N}}\mathbb{E}_{W|U,T^{\mathbb{N}}_{\mathbb{M}}} \Big[\frac{1}{nm}\sum_{i,j=1}^{n,m}\ell(\bar{U}, W_i, \bar{Z}^i_j) \Big]  \bigg\vert \nonumber\\
     \leq &\mathbb{E}_{K\sim \mathbb{K}, J\sim\mathbb{J}} \sqrt{\frac{2\sigma^2}{\zeta \xi} \Big(I(U;T^{K}_{J}) + I(W_K;T^{K}_{J}|\tau_K,U)\Big)}  . \label{equ6}
 \end{align}
Notice that 
\begin{equation*}
    I(W_K;T^{K}_{J}|\tau_K,U)\leq I(W_K;T^{K}_{J}|\tau_K,U) + I(W_K;\tau_K|U) =I(W_K;T^{K}_{J},\tau_K|U) = I(W_K;T^{K}_{J}|U).
\end{equation*}
Putting the above inequality back into (\ref{equ6}), we obtain that 
\begin{align}
    & \bigg\vert  \mathbb{E}_{\tau_\mathbb{N},T^{\mathbb{N}}_{\mathbb{M}},U,W_\mathbb{N}} \Big[ \frac{1}{nm}\sum_{i,j=1}^{n,m}\ell(U, W_i, Z^i_j)\Big] - \mathbb{E}_{\tau_\mathbb{N},T^{\mathbb{N}}_{\mathbb{M}}}\mathbb{E}_{\bar{U}, \bar{T}^{\mathbb{N}}_{\mathbb{M}}|\tau_\mathbb{N}}\mathbb{E}_{W|U,T^{\mathbb{N}}_{\mathbb{M}}} \Big[\frac{1}{nm}\sum_{i,j=1}^{n,m}\ell(\bar{U}, W_i, \bar{Z}^i_j) \Big]  \bigg\vert \nonumber\\
    \leq &\mathbb{E}_{K\sim \mathbb{K}, J\sim\mathbb{J}} \sqrt{\frac{2\sigma^2}{\zeta \xi} \Big(I(U;T^{K}_{J}) + I(W_K;T^{K}_{J}|\tau_K,U)\Big)}  \nonumber\\
    \leq &  \mathbb{E}_{K\sim \mathbb{K}, J\sim\mathbb{J}} \sqrt{\frac{2\sigma^2}{\zeta \xi} \Big(I(U;T^{K}_{J}) + I(W_K;T^{K}_{J}|U)\Big)} \nonumber\\
    = &  \mathbb{E}_{K\sim \mathbb{K}, J\sim\mathbb{J}} \sqrt{\frac{2\sigma^2}{\zeta \xi} I(U,W_K;T^{K}_{J})}. \label{equ7}
\end{align}
We proceed to prove that the LSH of the inequality (\ref{equ7}) is equivalent to the absolute value of the meta-generalization gap. Notice that $W_i$, $i=1,\ldots,n$ are mutually independent given $U$ and $T^{\mathbb{N}}_{\mathbb{M}}$,  $P_{W_{\mathbb{N}}|U,T^{\mathbb{N}}_{\mathbb{M}}}=\prod_{i=1}^n P_{W_i|U,T^{i}_{\mathbb{M}}}$, we have
\begin{align*}
      \mathbb{E}_{\tau_\mathbb{N},T^{\mathbb{N}}_{\mathbb{M}},U,W_\mathbb{N}}  \bigg[\frac{1}{n}\sum_{i=1}^{n} \frac{1}{m}\sum_{j=1}^{m} \ell(U,W_{i},Z^i_j)\bigg]  = & \mathbb{E}_{U,T^{\mathbb{N}}_{\mathbb{M}}}\bigg[\frac{1}{n}\sum_{i=1}^{n} \frac{1}{m}\sum_{j=1}^{m} \mathbb{E}_{W_\mathbb{N}|U,T^{\mathbb{N}}_{\mathbb{M}}} [\ell(U,W_{i},Z^i_j)] \bigg] \\
    = & \mathbb{E}_{U,T^{\mathbb{N}}_{\mathbb{M}}} \bigg[\frac{1}{n}\sum_{i=1}^{n} \frac{1}{m}\sum_{j=1}^{m} \mathbb{E}_{W_{i}|U,T^{i}_{\mathbb{M}}} \ell(U,W_{i},Z^i_j) \bigg] \\
    = & \mathbb{E}_{U,T^{\mathbb{N}}_{\mathbb{M}}}[\mathcal{R}(U, T^{\mathbb{N}}_{\mathbb{M}})].
\end{align*}
For the second term on LSH of the inequality (\ref{equ7}), we have
\begin{align*}
    & \mathbb{E}_{\tau_\mathbb{N},T^{\mathbb{N}}_{\mathbb{M}}}\mathbb{E}_{\bar{U}}\mathbb{E}_{W|U,T^{\mathbb{N}}_{\mathbb{M}}}\mathbb{E}_{\bar{T}^{\mathbb{N}}_{\mathbb{M}}|\tau_\mathbb{N}}\Big[\frac{1}{n}\sum_{i=1}^{n} \frac{1}{m}\sum_{j=1}^{m} \ell(\bar{U},W_{i},\bar{Z}^i_j)\Big] \\
    = & \mathbb{E}_{\bar{U},T^{\mathbb{N}}_{\mathbb{M}}}\bigg[\frac{1}{n}\sum_{i=1}^{n} \frac{1}{m}\sum_{j=1}^{m} \mathbb{E}_{W_{\mathbb{N}}|U,T^{\mathbb{N}}_{\mathbb{M}}}\mathbb{E}_{\tau_\mathbb{N}}\mathbb{E}_{\bar{T}^{\mathbb{N}}_{\mathbb{M}}|\tau_\mathbb{N}}  \ell(\bar{U},W_{i},\bar{Z}^i_j) \bigg] \\
    = & \mathbb{E}_{\bar{U},T^{\mathbb{N}}_{\mathbb{M}}}\bigg[\frac{1}{n}\sum_{i=1}^{n}   \mathbb{E}_{W_i|U,T^{i}_{\mathbb{M}}} \mathbb{E}_{\tau_i\sim P_{\mathcal{T}}}  \mathbb{E}_{Z'\sim P_{Z|\tau_i}} \ell(\bar{U},W_{i},Z') \bigg] \\
    = & \mathbb{E}_{U,T^{\mathbb{N}}_{\mathbb{M}}}\bigg[\frac{1}{n}\sum_{i=1}^{n}   \mathbb{E}_{W_i|\bar{U},T^{i}_{\mathbb{M}}} \mathbb{E}_{\tau_i\sim P_{\mathcal{T}}} \mathbb{E}_{Z'\sim P_{Z|\tau_i}} \ell(U,W_{i},Z') \bigg] \\
    = & \mathbb{E}_{U,T^{\mathbb{N}}_{\mathbb{M}}}\bigg[\frac{1}{n}\sum_{i=1}^{n}    \mathbb{E}_{\tau_i\sim P_{\mathcal{T}}} \mathbb{E}_{\bar{T}^{i}_{\mathbb{M}}\sim P_{Z|\tau_i}}  \mathbb{E}_{\bar{W}_i|U,\bar{T}^{i}_{\mathbb{M}}}  \mathbb{E}_{Z'\sim P_{Z|\tau_i}} \ell(U,\bar{W}_{i},Z') \bigg] \\
    = &\mathbb{E}_{U,T^{\mathbb{N}}_{\mathbb{M}}}\bigg[ \mathbb{E}_{\tau \sim P_{\mathcal{T}}}\mathbb{E}_{T_{\mathbb{M}}|\tau} \mathbb{E}_{W|U,T_{\mathbb{M}}}\mathbb{E}_{Z'\sim P'_{Z|\tau}} \ell(U,W,Z') \bigg] \\
    = &  \mathbb{E}_{U,T^{\mathbb{N}}_{\mathbb{M}}}[\mathcal{R}(U, \mathcal{T})].
\end{align*} 
Plugging the above estimations into (\ref{equ7}), we obtain 
\begin{align*}
    \Big\vert \mathbb{E}_{U,T^{\mathbb{N}}_{\mathbb{M}}}[\mathcal{R}(U, T^{\mathbb{N}}_{\mathbb{M}})] - \mathbb{E}_{U,T^{\mathbb{N}}_{\mathbb{M}}}[\mathcal{R}(U, \mathcal{T})] \Big\vert \leq &  \mathbb{E}_{K\sim \mathbb{K}, J\sim\mathbb{J}} \sqrt{\frac{2\sigma^2}{\zeta \xi} I(U,W_K;T^{K}_{J})}.
\end{align*}
This completes the proof.

\end{proof}

\subsection{Proof of Proposition \ref{proposition1}}
\begin{restateproposition}{\ref{proposition1}}[Restate] 
    Let $\zeta\in[n-1]$, $\xi \in[m-1]$, and $\mathbb{K}$ and $\mathbb{J}$ be random subsets of $[n]$ and $[m]$ with sizes $\zeta$ and $\xi$, respectively. Further, let $\mathbb{K}'$ and $\mathbb{J}'$ be random subsets of $[n]$ and $[m]$ with sizes $\zeta+1$ and $\xi+1$, respectively. If $g:\mathbb{R}\rightarrow\mathbb{R}$ is any non-decreasing concave function, then 
    \begin{align*}
         \mathbb{E}_{K\sim \mathbb{K}, J \sim \mathbb{J}} g\left(\frac{1}{\zeta \xi}I(U,W_K;T^{K}_{J}) \right) \leq \mathbb{E}_{K' \sim \mathbb{K}', J' \sim \mathbb{J}'} g\left(\frac{1}{(\zeta+1)(\xi+1)} I(U,W_{K'};T^{K'}_{J'}) \right).
    \end{align*}
\end{restateproposition}
\begin{proof}
Let subsets $K' = (K'_1,\ldots, K'_{\zeta+1})\sim \mathbb{K}'\subseteq [n]$ and $J' = (J'_1,\ldots, J'_{\xi+1})\sim \mathbb{J}'\subseteq [m]$. By applying the chain rule of MI, we have
    \begin{equation*}
        I(U,W_{K'}; T^{K'}_{J'}) = I(U; T^{K'}_{J'})+I(W_{K'}; T^{K'}_{J'}|U).
    \end{equation*}
For the first term on the RHS, we have 
\begin{align} 
    I(U; T^{K'}_{J'}) = & \sum_{i=1}^{\zeta+1} I(U; T^{K'_i}_{J'}| T^{K'_{1:i-1}}_{J'}) \nonumber\\
     = & \sum_{i=1}^{\zeta+1} I(U, T^{K'_{i+1:\zeta+1}}_{J'} ;T^{K'_i}_{J'}|T^{K'_{1:i-1}}_{J'})  -  I(T^{K'_i}_{J'};T^{K'_{i+1:\zeta+1}}_{J'} |U,T^{K'_{1:i-1}}_{J'}) \nonumber\\
     \leq& \sum_{i=1}^{\zeta+1} I(U, T^{K'_{i+1:\zeta+1}}_{J'} ;T^{K'_i}_{J'}|T^{K'_{1:i-1}}_{J'}) \nonumber\\
     = & \sum_{i=1}^{\zeta+1} I(U;T^{K'_i}_{J'}|T^{K'_{i+1:\zeta+1}}_{J'} , T^{K'_{1:i-1}}_{J'}) +   I(T^{K'_i}_{J'} ; T^{K'_{i+1:\zeta+1}}_{J'} |T^{K'_{1:i-1}}_{J'}) \nonumber\\
     = & \sum_{i=1}^{\zeta+1} I(U;T^{K'_i}_{J'}|T^{K'_{i+1:\zeta+1}}_{J'} , T^{K'_{1:i-1}}_{J'}). \label{equ8}
\end{align}
Leveraging the inequality (\ref{equ8}), we have 
\begin{align}
    I(U; T^{K'}_{J'}) = & \frac{1}{\zeta+1 } \sum_{K'_i \in K'} \big(I(U; T^{K'\backslash K'_i}_{J'}) + I(U;T^{ K'_i}_{J'}| T^{K'\backslash K'_i}_{J'}) \big) \nonumber\\
    \geq &   \frac{1}{\zeta + 1} \Big(\sum_{K'_i \in K'} I(U; T^{K'\backslash K'_i}_{J'}) + I(U; T^{K'}_{J'}) \Big) \nonumber\\
    \geq &  \frac{1}{\zeta} \sum_{K'_i \in K'} I(U; T^{K'\backslash K'_i}_{J'}) . \label{equ9}
\end{align}
Let $T^{K'\backslash K'_i}_{J'}=\{Z^{K'\backslash K'_i}_{J'_1} ,\ldots,Z^{K'\backslash K'_i}_{J'_{\xi+1}}\}$, where $Z^{K'\backslash K'_i}_{J'_j} = \{Z^{K'_1}_{J'_j},\ldots, Z^{K'_{i-1}}_{J'_j}, Z^{K'_{i+1}}_{J'_j},\ldots, Z^{K'_{\zeta+1}}_{J'_j}\}$ for $j\in[\xi+1]$. Again using the chain rule of MI on $I(U; T^{K'\backslash K'_i}_{J'})$, we have 
\begin{align}
    I(U; T^{K'\backslash K'_i}_{J'}) = &  \sum_{j=1}^{\xi+1} I(U;Z^{K'\backslash K'_i}_{J'_j}| Z^{K'\backslash K'_i}_{J'_{1:j-1}}) \nonumber \\
    = &   \sum_{j=1}^{\xi+1}  I(U, Z^{K'\backslash K'_i}_{J'_{j+1:\xi+1}};Z^{K'\backslash K'_i}_{J'_j}| Z^{K'\backslash K'_i}_{J'_{1:j-1}} )  -  \sum_{j=1}^{\xi+1} I( Z^{K'\backslash K'_i}_{J'_{j+1:\xi+1}};Z^{K'\backslash K'_i}_{J'_j}|U, Z^{K'\backslash K'_i}_{J'_{1:j-1}})  \nonumber \\
    \leq &   \sum_{j=1}^{\xi+1} I(U, Z^{K'\backslash K'_i}_{J'_{j+1:\xi+1}};Z^{K'\backslash K'_i}_{J'_j}| Z^{K'\backslash K'_i}_{J'_{1:j-1}}) \nonumber \\
    = &  \sum_{j=1}^{\xi+1} I(U ;Z^{K'\backslash K'_i}_{J'_j}| Z^{K'\backslash K'_i}_{J'_{1:j-1}}, Z^{K'\backslash K'_i}_{J'_{j+1:\xi+1}})  
    +    \sum_{j=1}^{\xi+1} I(Z^{K'\backslash K'_i}_{J'_{j+1:\xi+1}};Z^{K'\backslash K'_i}_{J'_j}|Z^{K'\backslash K'_i}_{J'_{1:j-1}})  \nonumber \\
    = & \sum_{j=1}^{\xi+1} I(U ;Z^{K'\backslash K'_i}_{J'_j}| Z^{K'\backslash K'_i}_{J'_{1:j-1}}, Z^{K'\backslash K'_i}_{J'_{j+1:\xi+1}}) . \label{equ10}
\end{align}
Similarly, we obtain that 
\begin{align}
    I(U; T^{K'\backslash K'_i}_{J'})  =&\frac{1}{\xi+1} \sum_{J'_j \in J'} \big( I(U; T^{K'\backslash K'_i}_{J'\backslash J'_j}) + I(U; Z^{K'\backslash K'_i}_{J'_j}| T^{K'\backslash K'_i}_{J'\backslash J'_j})\big) \nonumber\\
    =& \frac{1}{\xi+1} \sum_{J'_j \in J'} \big( I(U; T^{K'\backslash K'_i}_{J'\backslash J'_j}) + I(U; Z^{K'\backslash K'_i}_{J'_j}| Z^{K'\backslash K'_i}_{J'_{1:j-1}}, Z^{K'\backslash K'_i}_{J'_{j+1:\xi+1}})\big)  \nonumber\\
    \geq &   \frac{1}{\xi+1} \Big(\sum_{J'_j \in J'} I(U; T^{K'\backslash K'_i}_{J'\backslash J'_j}) + I(U; T^{K'\backslash K'_i}_{J'})  \Big)  \nonumber\\
    \geq & \frac{1}{\xi}  \sum_{J'_j \in J'} I(U; T^{K'\backslash K'_i}_{J'\backslash J'_j}). \label{equ11}
\end{align}
Putting (\ref{equ11}) back into (\ref{equ9}) yields
\begin{equation} \label{equ12}
    I(U; T^{K'}_{J'}) \geq \frac{1}{\zeta \xi} \sum_{K'_i \in K', J'_j \in J'} I(U; T^{K'\backslash K'_i}_{J'\backslash J'_j}).
\end{equation}
Analogously analyzing $I(W_{K'}; T^{K'}_{J'}|U)$. Since $W_i$, $i=1,\ldots,n$ are mutually independent given $T^{\mathbb{N}}_{\mathbb{M}}$ and $U$, we get that
\begin{align} 
    I(W_{K'}; T^{K'}_{J'}|U) = & \sum_{i=1}^{\zeta+1} I(W_{K'};T^{K_i'}_{J'}|U, T^{K'_{1:i-1}}_{J'}) \nonumber\\
    = & \sum_{i=1}^{\zeta+1} I(W_{K'}, T^{K'_{i+1:\zeta+1}}_{J'} ;T^{K_i'}_{J'}|U, T^{K'_{1:i-1}}_{J'}) - I(T^{K'_{i+1:\zeta+1}}_{J'} ;T^{K_i'}_{J'}|U, W_{K'}, T^{K'_{1:i-1}}_{J'}) \nonumber\\
    = & \sum_{i=1}^{\zeta+1} I(W_{K'}, T^{K'_{i+1:\zeta+1}}_{J'} ;T^{K_i'}_{J'}|U , T^{K'_{1:i-1}}_{J'})  \nonumber\\
    =  & \sum_{i=1}^{\zeta+1} I(W_{K'_i} ;T^{K_i'}_{J'}|U, T^{K'_{1:i-1}}_{J'}, T^{K'_{i+1:\zeta+1}}_{J'}) + I(T^{K'_{i+1:\zeta+1}}_{J'} ;T^{K_i'}_{J'}|U, T^{K'_{1:i-1}}_{J'}) \nonumber\\
    = & \sum_{i=1}^{\zeta+1} I(W_{K'_i} ;T^{K_i'}_{J'}|U, T^{K'_{1:i-1}}_{J'}, T^{K'_{i+1:\zeta+1}}_{J'}). \label{equ13}
\end{align}
Using the inequality (\ref{equ13}), we have 
\begin{align} 
    I(W_{K'}; T^{K'}_{J'}|U) = &  \frac{1}{\zeta+1} \sum_{K'_i\in K'} \big(I(W_{K' \backslash K'_i}; T^{K' \backslash K'_i}_{J'}|U) + I(W_{K'_i}; T^{K'_i}_{J'}|T^{K' \backslash K'_i}_{J'},U) \big) \nonumber\\
    \geq & \frac{1}{\zeta+1} \Big(\sum_{K'_i\in K'} I(W_{K'\backslash K'_i}; T^{K' \backslash K'_i}_{J'}|U) + I(W_{K'}; T^{K'}_{J'}|U) \Big)  \nonumber\\
    \geq & \frac{1}{\zeta} \sum_{K'_i\in K'} I(W_{K'\backslash K'_i}; T^{K' \backslash K'_i}_{J'}|U) . \label{equ14}
\end{align}
Similar to the proof of the inequality (\ref{equ10}), for the RHS of (\ref{equ14}), we further get that 
\begin{align*}
    I(W_{K'\backslash K'_i}; T^{K' \backslash K'_i}_{J'}|U) = &  \sum_{j=1}^{\xi+1} I(W_{K'\backslash K'_i} ;Z^{K' \backslash K'_i}_{J'_j}|U, Z^{K' \backslash K'_i}_{J'_{1:j-1}})  \\
    \leq &    \sum_{j=1}^{\xi+1} I(W_{K'\backslash K'_i} ;Z^{K' \backslash K'_i}_{J'_j}|U, Z^{K' \backslash K'_i}_{J'_{1:j-1}}, Z^{K' \backslash K'_i}_{J'_{j+1:\xi+1}}),
\end{align*}
which implies that
\begin{align}
   & I(W_{K'\backslash K'_i}; T^{K' \backslash K'_i}_{J'}|U) \\
    = & \frac{1}{\xi +1 } \sum_{J'_j\in J'} \big(I(W_{K'\backslash K'_i}; T^{K' \backslash K'_i}_{J'\backslash J'_j}|U) + I(W_{K'\backslash K'_i} ;Z^{K' \backslash K'_i}_{J'_j}|U, T^{K' \backslash K'_i}_{J'\backslash J'_j}) \big) \nonumber\\
    = & \frac{1}{\xi +1 } \sum_{J'_j\in J'} \big(I(W_{K'\backslash K'_i}; T^{K' \backslash K'_i}_{J'\backslash J'_j}|U) + I(W_{K'\backslash K'_i} ;Z^{K' \backslash K'_i}_{J'_j}|U,  Z^{K' \backslash K'_i}_{J'_{1:j-1}}, Z^{K' \backslash K'_i}_{J'_{j+1:\xi+1}}) \big) \nonumber\\
    \geq &  \frac{1}{\xi +1 } \Big(\sum_{J'_j\in J'} I(W_{K'\backslash K'_i}; T^{K' \backslash K'_i}_{J'\backslash J'_j}|U) + I(W_{K'\backslash K'_i}; T^{K' \backslash K'_i}_{J'}|U) \Big) \nonumber\\
      \geq  &  \frac{1}{\xi} \sum_{J'_j\in J'} I(W_{K'\backslash K'_i}; T^{K' \backslash K'_i}_{J'\backslash J'_j}|U). \label{equ15}
\end{align}
Combining (\ref{equ14}) and (\ref{equ15}), we get that 
\begin{equation}\label{equ16}
    I(W_{K'}; T^{K'}_{J'}|U) \geq \frac{1}{\zeta\xi} \sum_{K'_i\in K',J'_j\in J'} I(W_{K'\backslash K'_i}; T^{K' \backslash K'_i}_{J'\backslash J'_j}|U).
\end{equation}
By using the inequalities (\ref{equ12}) and (\ref{equ16}), one can obtain that 
\begin{align*}
   I(U, W_{K'}; T^{K'}_{J'})  = & I(U; T^{K'}_{J'}) +  I(W_{K'}; T^{K'}_{J'}|U) \\
   \geq &\frac{1}{\zeta\xi} \sum_{K'_i\in K',J'_j\in J'} \big(I(W_{K'\backslash K'_i}; T^{K' \backslash K'_i}_{J'\backslash J'_j}|U) + I(U; T^{K'\backslash K'_i}_{J'\backslash J'_j}) \big) \\
   = & \frac{1}{\zeta\xi} \sum_{K'_i\in K',J'_j\in J'} I(U, W_{K'\backslash K'_i} ;T^{K'\backslash K'_i}_{J'\backslash J'_j} ).
\end{align*}
We further employ the Jensen's inequality on the concave function $g$ and have 
\begin{align}
  &g\Big(\frac{1}{(\zeta+1) (\xi+1)} I(U, W_{K'}; T^{K'}_{J'}) \Big) \nonumber\\
 \geq & g\Big(\frac{1}{(\zeta+1) (\xi+1)}  \Big( \frac{1}{\zeta\xi} \sum_{K'_i\in K',J'_j\in J'} I(U, W_{K'\backslash K'_i} ;T^{K'\backslash K'_i}_{J'\backslash J'_j} ) \Big)\Big) \nonumber\\
    \geq  & \frac{1}{(\zeta+1) (\xi+1)}  \sum_{K'_i\in K',J'_j\in J'} g\Big( \frac{1}{\zeta\xi} I(U, W_{K'\backslash K'_i} ;T^{K'\backslash K'_i}_{J'\backslash J'_j} )  \Big) \label{equ17}
\end{align}
Taking expectation over $K'$ and $J'$ on both sides of (\ref{equ17}), 
\begin{align*}
  & \mathbb{E}_{K'\sim \mathbb{K}', J'\sim \mathbb{J}'} g\Big(\frac{1}{(\zeta+1) (\xi+1)} I(U, W_{K'}; T^{K'}_{J'}) \Big)  \\
     \geq  & \mathbb{E}_{K'\sim \mathbb{K}', J'\sim \mathbb{J}'} \bigg[ \frac{1}{(\zeta+1) (\xi+1)}  \sum_{K'_i\in K',J'_j\in J'} g\Big( \frac{1}{\zeta\xi} I(U, W_{K'\backslash K'_i} ;T^{K'\backslash K'_i}_{J'\backslash J'_j} )  \Big) \bigg] \\
    = &  \frac{1}{C_n^{\zeta} C_m^{\xi}} \sum_{K \in \mathbb{K}, J\in \mathbb{J}}  g\Big( \frac{1}{\zeta\xi} I(U, W_{K} ;T^{K}_{J} )  \Big) \\
    = &  \mathbb{E}_{K\sim \mathbb{K}, J\sim \mathbb{J}} g\Big(\frac{1}{\zeta\xi} I(U, W_{K}; T^{K}_{J})  \Big),
\end{align*}
and this completes the proof.
\end{proof}

\section{Omitted Proofs [CMI Bounds]}

\subsection{Proof of Theorem \ref{theorem3.5}}

\begin{restatetheorem}{\ref{theorem3.5}}[Restate]
    Let $\mathbb{K}$ and $\mathbb{J}$ be random subsets of $[n]$ and $[m]$ with sizes $\zeta$ and $\xi$, respectively. If the loss function $\ell(\cdot,\cdot,\cdot)$ is bounded within $[0,1]$, then
    \begin{equation*}
        \vert \overline{\mathrm{gen}} \vert \leq  \mathbb{E}_{T_{2\mathbb{M}}^{2\mathbb{N}}, K\sim\mathbb{K}, J\sim\mathbb{J} }\sqrt{\frac{2}{\zeta\xi}I^{T_{2\mathbb{M}}^{2\mathbb{N}}}(U, W_K;  \tilde{S}_K,S_J)}.
    \end{equation*}
\end{restatetheorem}

\begin{proof}
    Let us condition on $\mathbb{K}=K$, $\mathbb{J} = J$, and $T_{2\mathbb{M}}^{2\mathbb{N}}$. Further let $\tilde{S}_K = \{\tilde{S}_i\}_{i\in K}\subseteq \tilde{S}_{\mathbb{N}}$, $S_J = \{S_j\}_{j\in J}\subseteq S_{\mathbb{M}}$, and
    \begin{equation*}
        f(u,\omega_K,\tilde{s}_K,s_J) = \frac{1}{\zeta}\sum_{i\in K}\frac{1}{\xi}\sum_{j\in J}  \Big(\ell(u,\omega_{i},\tilde{Z}^{i,\tilde{s}_i}_{j,s_j}) - \ell(u,\omega_{i},\tilde{Z}^{i,\bar{\tilde{s}}_i}_{j,\bar{s}_j}) \Big) .
    \end{equation*}
Let $\tilde{S}'_{\mathbb{N}}$ and $S'_{\mathbb{M}}$ be independent copies of $S_{\mathbb{M}}$ and $\tilde{S}_{\mathbb{N}}$, respectively.  It is noteworthy that each summand of $f(u,\omega_K,\tilde{S}'_K,S'_J)$ is a $1$-sub-gaussian random variable and has zero mean, as $\ell(\cdot,\cdot,\cdot)$ take values in $[0,1]$. Hence, $f(u,\omega_K,\tilde{S}'_K,S'_J)$ is $\frac{1}{\sqrt{\zeta\xi}}$-sub-gaussian with $\mathbb{E}_{\tilde{S}'_{\mathbb{N}},S'_{\mathbb{M}}}[f(u,\omega_K,\tilde{S}'_K,S'_J)]=0$. Following Lemma \ref{lemmaA.7} with $X=(U,W_K)$ and $ Y=(\tilde{S}'_K,S'_J)$, we get that $f(U,W_K,\tilde{S}'_K,S'_J)$ is also zero-mean $\frac{1}{\sqrt{\zeta\xi}}$-sub-gaussian. We further apply Lemma \ref{lemmaA.6} with the choices $X=(U,W_K)$, $ Y=(\tilde{S}_K,S_J)$ and $f(X,Y) = f(U,W_K,\tilde{S}_K,S_J)$, and have
\begin{equation*}
    \bigg\vert  \mathbb{E}_{U,W_\mathbb{N},\tilde{S}_{\mathbb{N}},S_{\mathbb{M}}}\bigg[ \frac{1}{\zeta}\sum_{i\in K}\frac{1}{\xi}\sum_{j\in J} \Big(\ell(U,W_{i},\tilde{Z}^{i,\tilde{S}_i}_{j,S_j}) - \ell(U,W_{i},\tilde{Z}^{i,\bar{\tilde{S}}_i}_{j,\bar{S}_j}) \Big) \bigg] \bigg\vert\leq \sqrt{\frac{2}{\zeta\xi}I^{T_{2\mathbb{M}}^{2\mathbb{N}}}(U, W_K; \tilde{S}_K,S_J)}.
\end{equation*}

Taking expectation over $K$, $J$ and $T_{2\mathbb{M}}^{2\mathbb{N}}$ on both sides and applying Jensen's inequality to swap the order of expectation and absolute value, we obtain 
\begin{align}
    & \bigg\vert \mathbb{E}_{T_{2\mathbb{M}}^{2\mathbb{N}}, K\sim\mathbb{K}, J\sim\mathbb{J} ,U,W_\mathbb{N},\tilde{S}_{\mathbb{N}},S_{\mathbb{M}}}\bigg[ \frac{1}{\zeta}\sum_{i\in K}\frac{1}{\xi}\sum_{j\in J} \Big(\ell(U,W_{i},\tilde{Z}^{i,\tilde{S}_i}_{j,S_j}) - \ell(U,W_{i},\tilde{Z}^{i,\bar{\tilde{S}}_i}_{j,\bar{S}_j}) \Big) \bigg] \bigg\vert \nonumber\\ \leq &  \mathbb{E}_{T_{2\mathbb{M}}^{2\mathbb{N}}, K\sim\mathbb{K}, J\sim\mathbb{J} }\sqrt{\frac{2}{\zeta\xi}I^{T_{2\mathbb{M}}^{2\mathbb{N}}}(U, W_K; \tilde{S}_K,S_J)}, \label{equ18}
\end{align}
which reduces to 
\begin{align}
    &\bigg\vert \mathbb{E}_{T_{2\mathbb{M}}^{2\mathbb{N}},U,W_\mathbb{N},\tilde{S}_{\mathbb{N}},S_{\mathbb{M}}}\bigg[ \frac{1}{nm}\sum_{i,j=1}^{n,m} \Big(\ell(U,W_{i},\tilde{Z}^{i,\tilde{S}_i}_{j,S_j}) - \ell(U,W_{i},\tilde{Z}^{i,\bar{\tilde{S}}_i}_{j,\bar{S}_j}) \Big) \bigg] \bigg\vert \nonumber\\
    \leq & \mathbb{E}_{T_{2\mathbb{M}}^{2\mathbb{N}}, K\sim\mathbb{K}, J\sim\mathbb{J} }\sqrt{\frac{2}{\zeta\xi}I^{T_{2\mathbb{M}}^{2\mathbb{N}}}(U, W_K;  \tilde{S}_K,S_J)},\label{equ19}
\end{align}

We then prove the LSH of the inequality (\ref{equ19}) is equal to the absolute value of the meta-generalization gap. Since $\tilde{S}_{\mathbb{N}} \perp \bar{\tilde{S}}_{\mathbb{N}}$ and $S_{\mathbb{M}} \perp \bar{S}_{\mathbb{M}}$, we have $T_{2\mathbb{M},S_{\mathbb{M}}}^{2\mathbb{N},\tilde{S}_{\mathbb{N}}} \perp T_{2\mathbb{M},\bar{S}_{\mathbb{M}}}^{2\mathbb{N},\tilde{S}_{\mathbb{N}}} \perp T_{2\mathbb{M},S_{\mathbb{M}}}^{2\mathbb{N},\bar{\tilde{S}}_{\mathbb{N}}}\perp T^{2\mathbb{N},\bar{\tilde{S}}_{\mathbb{N}}}_{2\mathbb{M},\bar{S}_{\mathbb{M}}}  $ and $W_i$, $i=1,\ldots,n$ are mutually independent given $U$ and $T_{2\mathbb{M},S_{\mathbb{M}}}^{2\mathbb{N},\tilde{S}_{\mathbb{N}}}$. Then
\begin{align}
    &\mathbb{E}_{T_{2\mathbb{M}}^{2\mathbb{N}},U,W_\mathbb{N},\tilde{S}_{\mathbb{N}},S_{\mathbb{M}}}  \Big[\frac{1}{nm}\sum_{i,j=1}^{n,m} \ell(U,W_{i},\tilde{Z}^{i,\tilde{S}_i}_{j,S_j})\Big] \nonumber\\
    = & \mathbb{E}_{U,W_\mathbb{N}, T_{2\mathbb{M},S_{\mathbb{M}}}^{2\mathbb{N},\tilde{S}_{\mathbb{N}}},  T_{2\mathbb{M},\bar{S}_{\mathbb{M}}}^{2\mathbb{N},\tilde{S}_{\mathbb{N}}}, T_{2\mathbb{M},S_{\mathbb{M}}}^{2\mathbb{N},\bar{\tilde{S}}_{\mathbb{N}}}, T^{2\mathbb{N},\bar{\tilde{S}}_{\mathbb{N}}}_{2\mathbb{M},\bar{S}_{\mathbb{M}}}  }  \Big[ \frac{1}{nm}\sum_{i,j=1}^{n,m}\ell(U,W_{i},\tilde{Z}^{i,\tilde{S}_i}_{j,S_j})\Big] \nonumber\\
    = & \mathbb{E}_{U,W_\mathbb{N},   T_{2\mathbb{M},S_{\mathbb{M}}}^{2\mathbb{N},\tilde{S}_{\mathbb{N}}}} \Big[\frac{1}{nm}\sum_{i,j=1}^{n,m} \ell(U,W_{i},\tilde{Z}^{i,\tilde{S}_i}_{j,S_j})  \Big]\nonumber\\
    = & \mathbb{E}_{U,  T_{2\mathbb{M},S_{\mathbb{M}}}^{2\mathbb{N},\tilde{S}_{\mathbb{N}}}}  \Big[\frac{1}{nm}\sum_{i,j=1}^{n,m} \mathbb{E}_{W_\mathbb{N}|  T_{2\mathbb{M},S_{\mathbb{M}}}^{2\mathbb{N},\tilde{S}_{\mathbb{N}}},U} \ell(U,W_{i},\tilde{Z}^{i,\tilde{S}_i}_{j,S_j}) \Big] \nonumber\\
    = & \mathbb{E}_{U,  T_{2\mathbb{M},S_{\mathbb{M}}}^{2\mathbb{N},\tilde{S}_{\mathbb{N}}}}  \Big[\frac{1}{nm}\sum_{i,j=1}^{n,m} \mathbb{E}_{W_{i}|  T_{2\mathbb{M},S_{\mathbb{M}}}^{i,\tilde{S}_i},U} \ell(U,W_{i},\tilde{Z}^{i,\tilde{S}_i}_{j,S_j})\Big] \nonumber\\
    = & \mathbb{E}_{U, T_{\mathbb{M}}^{\mathbb{N}}}\Big[ \frac{1}{nm}\sum_{i,j=1}^{n,m} \mathbb{E}_{W_{i}|  T_{\mathbb{M}}^{i},U} \ell(U,W_{i},Z^i_j) \Big]\nonumber\\
     = &  \mathbb{E}_{U, T_{\mathbb{M}}^{\mathbb{N}}}[\mathcal{R}(U, T_{\mathbb{M}}^{\mathbb{N}})]. \label{equ20}
\end{align}
Similarly, by using the independence of $T_{2\mathbb{M},S_{\mathbb{M}}}^{2\mathbb{N},\tilde{S}_{\mathbb{N}}}, T_{2\mathbb{M},\bar{S}_{\mathbb{M}}}^{2\mathbb{N},\tilde{S}_{\mathbb{N}}}, T_{2\mathbb{M},\bar{S}_{\mathbb{M}}}^{2\mathbb{N},\bar{\tilde{S}}_{\mathbb{N}}}, T^{2\mathbb{N},\bar{\tilde{S}}_{\mathbb{N}}}_{2\mathbb{M},S_{\mathbb{M}}}$, we have $T^{2\mathbb{N},\bar{\tilde{S}}_{\mathbb{N}}}_{2\mathbb{M},S_{\mathbb{M}}} \perp T_{2\mathbb{M},\bar{S}_{\mathbb{M}}}^{2\mathbb{N},\bar{\tilde{S}}_{\mathbb{N}}} \perp (T_{2\mathbb{M},S_{\mathbb{M}}}^{2\mathbb{N},\tilde{S}_{\mathbb{N}}},U) $ and $W_i$, $i=1,\ldots,n$ are mutually independent given $U$ and $T^{2\mathbb{N},\bar{\tilde{S}}_{\mathbb{N}}}_{2\mathbb{M},S_{\mathbb{M}}}$. We then have 
\begin{align}
    & \mathbb{E}_{T_{2\mathbb{M}}^{2\mathbb{N}},U,W_\mathbb{N},\tilde{S}_{\mathbb{N}},S_{\mathbb{M}}} \Big[\frac{1}{nm}\sum_{i,j=1}^{n,m} \ell(U,W_i,\tilde{Z}^{i,\bar{\tilde{S}}_i}_{j,\bar{S}_j}) \Big] \nonumber\\
    = & \mathbb{E}_{U,W_\mathbb{N}, T_{2\mathbb{M},S_{\mathbb{M}}}^{2\mathbb{N},\tilde{S}_{\mathbb{N}}},  T_{2\mathbb{M},\bar{S}_{\mathbb{M}}}^{2\mathbb{N},\tilde{S}_{\mathbb{N}}}, T_{2\mathbb{M},\bar{S}_{\mathbb{M}}}^{2\mathbb{N},\bar{\tilde{S}}_{\mathbb{N}}}, T^{2\mathbb{N},\bar{\tilde{S}}_{\mathbb{N}}}_{2\mathbb{M},S_{\mathbb{M}}}  }   \Big[ \frac{1}{nm}\sum_{i,j=1}^{n,m} \ell(U,W_i,\tilde{Z}^{i,\bar{\tilde{S}}_i}_{j,\bar{S}_j}) \Big] \nonumber\\
    = &  \mathbb{E}_{U,T_{2\mathbb{M},S_{\mathbb{M}}}^{2\mathbb{N},\tilde{S}_{\mathbb{N}}}} \Big[ \frac{1}{nm}\sum_{i,j=1}^{n,m} \mathbb{E}_{W_\mathbb{N}, T^{2\mathbb{N},\bar{\tilde{S}}_{\mathbb{N}}}_{2\mathbb{M},S_{\mathbb{M}}}|U} \mathbb{E}_{T_{2\mathbb{M},\bar{S}_{\mathbb{M}}}^{2\mathbb{N},\bar{\tilde{S}}_{\mathbb{N}}}}  \ell(U,W_i,\tilde{Z}^{i,\bar{\tilde{S}}_i}_{j,\bar{S}_j}) \Big]  \nonumber\\
    = & \mathbb{E}_{U,T_{2\mathbb{M},S_{\mathbb{M}}}^{2\mathbb{N},\tilde{S}_{\mathbb{N}}}} \Big[ \frac{1}{nm}\sum_{i,j=1}^{n,m}  \mathbb{E}_{ T^{2\mathbb{N},\bar{\tilde{S}}_{\mathbb{N}}}_{2\mathbb{M},S_{\mathbb{M}}}}  \mathbb{E}_{W_\mathbb{N}|U,T^{2\mathbb{N},\bar{\tilde{S}}_{\mathbb{N}}}_{2\mathbb{M},S_{\mathbb{M}}}} \mathbb{E}_{T_{2\mathbb{M},\bar{S}_{\mathbb{M}}}^{2\mathbb{N},\bar{\tilde{S}}_{\mathbb{N}}}}  \ell(U,W_i,\tilde{Z}^{i,\bar{\tilde{S}}_i}_{j,\bar{S}_j}) \Big]  \nonumber\\ 
    = &  \mathbb{E}_{U,T_{2\mathbb{M},S_{\mathbb{M}}}^{2\mathbb{N},\tilde{S}_{\mathbb{N}}}} \Big[ \frac{1}{nm}\sum_{i,j=1}^{n,m}  \mathbb{E}_{ T^{i,\bar{\tilde{S}}_i}_{2\mathbb{M},S_{\mathbb{M}}}}  \mathbb{E}_{W_i|U,T^{i,\bar{\tilde{S}}_i}_{2\mathbb{M},S_{\mathbb{M}}}} \mathbb{E}_{T^{i,\bar{\tilde{S}}_i}_{2\mathbb{M},\bar{S}_{\mathbb{M}}}}  \ell(U,W_i,\tilde{Z}^{i,\bar{\tilde{S}}_i}_{j,\bar{S}_j}) \Big] \nonumber\\  
    = & \mathbb{E}_{U,T_{2\mathbb{M},S_{\mathbb{M}}}^{2\mathbb{N},\tilde{S}_{\mathbb{N}}}} \Big[ \frac{1}{n}\sum_{i=1}^{n} \mathbb{E}_{\tau_i\sim P_{\mathcal{T}}} \mathbb{E}_{ T^{i,\bar{\tilde{S}}_i}_{2\mathbb{M},S_{\mathbb{M}}}\sim P_{Z|\tau_i}}  \mathbb{E}_{W_i|U,T^{i,\bar{\tilde{S}}_i}_{2\mathbb{M},S_{\mathbb{M}}}} \mathbb{E}_{Z'\sim P_{Z|\tau_i}}  \ell(U,W_i,Z') \Big] \nonumber\\ 
    = & \mathbb{E}_{U,T_{2\mathbb{M},S_{\mathbb{M}}}^{2\mathbb{N},\tilde{S}_{\mathbb{N}}}} \big[ \mathbb{E}_{\tau\sim P_{\mathcal{T}}} \mathbb{E}_{T_{\mathbb{M}}\sim P_{Z|\tau}} \mathbb{E}_{W|U,T_{\mathbb{M}}}  \mathbb{E}_{Z'\sim P_{Z|\tau}}  \ell(U,W,Z') \big]  \nonumber\\ 
    = & \mathbb{E}_{U, T_{\mathbb{M}}^{\mathbb{N}}}[\mathcal{R}(U, \mathcal{T})] \label{equ21}
\end{align}
Substituting (\ref{equ20}) and (\ref{equ21}) into (\ref{equ19}), this completes the proof.

\end{proof}

\subsection{Proof of Proposition \ref{proposition2}}

\begin{restateproposition}{\ref{proposition2}}[Restate]
    Let $\zeta\in[n-1]$, $\xi \in[m-1]$, and $\mathbb{K}$ and $\mathbb{J}$ be random subsets of $[n]$ and $[m]$ with sizes $\zeta$ and $\xi$, respectively. Further, let $\mathbb{K}'$ and $\mathbb{J}'$ be random subsets with sizes $\zeta+1$ and $\xi+1$, respectively. If $g:\mathbb{R}\rightarrow\mathbb{R}$ is any non-decreasing concave function, then for $T_{2\mathbb{M}}^{2\mathbb{N}} \sim \{P_{Z|\tau_i}\}_{i=1}^n$ over $\mathcal{Z}^{2n\times 2m}$,
    \begin{align*}
         \mathbb{E}_{K\sim \mathbb{K}, J \sim \mathbb{J}} g\left(\frac{1}{\zeta \xi} I^{T_{2\mathbb{M}}^{2\mathbb{N}}}(U, W_K;  \tilde{S}_K,S_J) \right) \leq \mathbb{E}_{K' \sim \mathbb{K}', J' \sim \mathbb{J}'} g\left(\frac{1}{(\zeta+1)(\xi+1)} I^{T_{2\mathbb{M}}^{2\mathbb{N}}}(U, W_{K'};  \tilde{S}_{K'},S_{J'}) \right).
    \end{align*}
\end{restateproposition}
\begin{proof}
    The proof follows the same procedure as the proof of Proposition \ref{proposition1}, by replacing the MI $I(U,W_{K}; T^{K}_{J})$ with the disintegrated MI $ I^{T_{2\mathbb{M}}^{2\mathbb{N}}}(U,W_{K}; \tilde{S}_K,S_J)$.
\end{proof}

\subsection{Proof of Theorem \ref{theorem3.6}}
\begin{restatetheorem}{\ref{theorem3.6}}[Restate]
        Assume that the loss function $\ell(\cdot,\cdot,\cdot)$ is bounded within $[0,1]$, then
        \begin{equation*}
            d\left(\hat{\mathcal{R}} \Big\Vert \frac{\hat{\mathcal{R}} + \mathcal{R}_{\mathcal{T}}}{2} \right) \leq \frac{1}{nm}\sum_{i,j=1}^{n,m} I(U,W_i; \tilde{S}_i,S_j|T_{2\mathbb{M}}^{2\mathbb{N}}).
        \end{equation*}
Furthermore, in the interpolating setting that $\hat{\mathcal{R}}=0$, we have 
\begin{equation*}
    \mathcal{R}_{\mathcal{T}} \leq \frac{2}{nm}\sum_{i,j=1}^{n,m}  I(U,W_i; \tilde{S}_i,S_j|T_{2\mathbb{M}}^{2\mathbb{N}}).
\end{equation*}
\end{restatetheorem}
\begin{proof} 
    Following the inequalities (\ref{equ20}) and (\ref{equ21}), we can decompose the average empirical and population risks into 
    \begin{align}
         \hat{\mathcal{R}} = \mathbb{E}_{T_{2\mathbb{M}}^{2\mathbb{N}}, K\sim\mathbb{K}, J\sim\mathbb{J} ,U,W_\mathbb{N},\tilde{S}_{\mathbb{N}},S_{\mathbb{M}}} \frac{1}{\zeta}\sum_{i\in K}\frac{1}{\xi}\sum_{j\in J} \ell(U,W_{i},\tilde{Z}^{i,\tilde{S}_i}_{j,S_j}) \label{equ22} \\
        \mathcal{R}_{\mathcal{T}} =\mathbb{E}_{T_{2\mathbb{M}}^{2\mathbb{N}}, K\sim\mathbb{K}, J\sim\mathbb{J} ,U,W_\mathbb{N},\tilde{S}_{\mathbb{N}},S_{\mathbb{M}}} \frac{1}{\zeta}\sum_{i\in K}\frac{1}{\xi}\sum_{j\in J}\ell(U,W_{i},\tilde{Z}^{i,\bar{\tilde{S}}_i}_{j,\bar{S}_j})  \label{equ23}
    \end{align}
Applying Jensen's inequality and the convexity of $d_\gamma(\cdot\Vert \cdot)$, we obtain
\begin{align} 
    & d\left(\hat{\mathcal{R}} \Big\Vert \frac{\hat{\mathcal{R}} + \mathcal{R}_{\mathcal{T}}}{2} \right) \nonumber\\
    = &  \sup_{\gamma} d_\gamma\left(\hat{\mathcal{R}} \Big\Vert \frac{\hat{\mathcal{R}} + \mathcal{R}_{\mathcal{T}}}{2} \right) \nonumber\\ 
     \leq & \sup_{\gamma} \frac{1}{\zeta\xi}  \mathbb{E}_{K\sim\mathbb{K}, J\sim\mathbb{J},T_{2\mathbb{M}}^{2\mathbb{N}},U,W_\mathbb{N},\tilde{S}_{\mathbb{N}},S_{\mathbb{M}}} d_{\gamma}\Bigg( \sum_{i\in K, j\in J} \ell(U,W_{i},\tilde{Z}^{i,\tilde{S}_i}_{j,S_j}) \bigg\Vert \sum_{i\in K, j\in J} \frac{\ell(U,W_{i},\tilde{Z}^{i,\tilde{S}_i}_{j,S_j}) + \ell(U,W_{i},\tilde{Z}^{i,\bar{\tilde{S}}_i}_{j,\bar{S}_j})}{2} \Bigg) \nonumber\\
     = & \sup_{\gamma} \frac{1}{\zeta\xi}  \mathbb{E}_{K\sim\mathbb{K}, J\sim\mathbb{J},T_{2\mathbb{M}}^{2\mathbb{N}}} \mathbb{E}_{U,W_K,\tilde{S}_K,S_J|T_{2\mathbb{M}}^{2\mathbb{N}}} d_{\gamma}\Bigg( \sum_{i\in K, j\in J} \ell(U,W_{i},\tilde{Z}^{i,\tilde{S}_i}_{j,S_j}) \bigg\Vert \sum_{i\in K, j\in J} \frac{\ell(U,W_{i},\tilde{Z}^{i,\tilde{S}_i}_{j,S_j}) + \ell(U,W_{i},\tilde{Z}^{i,\bar{\tilde{S}}_i}_{j,\bar{S}_j})}{2} \Bigg). \label{equ24}
\end{align}
Let $\tilde{S}'_{\mathbb{N}}$ and $S'_{\mathbb{M}}$ be independent copies of $S_{\mathbb{M}}$ and $\tilde{S}_{\mathbb{N}}$. Let us condition on $K,J,T_{2\mathbb{M}}^{2\mathbb{N}}$ and utilize Lemma \ref{lemmaA.5} with $P=P_{U,W_K,\tilde{S}_K,S_J|T_{2\mathbb{M}}^{2\mathbb{N}}}$, $Q = P_{U,W_K|T_{2\mathbb{M}}^{2\mathbb{N}}} P_{\tilde{S}_K,S_J}$, and $f=d_{\gamma}\Big(\sum_{i\in K, j\in J} \ell(U,W_{i},\tilde{Z}^{i,\tilde{S}_i}_{j,S_j}) \Big\Vert \sum_{i\in K, j\in J} \frac{\ell(U,W_{i},\tilde{Z}^{i,\tilde{S}_i}_{j,S_j}) + \ell(U,W_{i},\tilde{Z}^{i,\bar{\tilde{S}}_i}_{j,\bar{S}_j})}{2}\Big)$, we have
\begin{align}
    I^{T_{2\mathbb{M}}^{2\mathbb{N}}}(U,W_K; \tilde{S}_K,S_J) = & D\left( P_{U,W_K,\tilde{S}_K,S_J|T_{2\mathbb{M}}^{2\mathbb{N}}} \Vert P_{U,W_K|T_{2\mathbb{M}}^{2\mathbb{N}}} P_{\tilde{S}_K,S_J} \right) \nonumber\\
    \geq  &\mathbb{E}_{U,W_K,\tilde{S}_K,S_J|T_{2\mathbb{M}}^{2\mathbb{N}}} d_{\gamma}\Bigg( \sum_{i\in K, j\in J} \ell(U,W_{i},\tilde{Z}^{i,\tilde{S}_i}_{j,S_j}) \bigg\Vert \sum_{i\in K, j\in J} \frac{\ell(U,W_{i},\tilde{Z}^{i,\tilde{S}_i}_{j,S_j}) + \ell(U,W_{i},\tilde{Z}^{i,\bar{\tilde{S}}_i}_{j,\bar{S}_j})}{2} \Bigg) \nonumber\\
    & - \log \mathbb{E}_{U,W_K,\tilde{S}'_K,S'_J|T_{2\mathbb{M}}^{2\mathbb{N}}}  e^{d_{\gamma}\Big( \sum_{i\in K, j\in J} \ell\big(U,W_{i},\tilde{Z}^{i,\tilde{S}'_i}_{j,S'_j}\big) \Big\Vert \sum_{i\in K, j\in J} \frac{\ell\big(U,W_{i},\tilde{Z}^{i,\tilde{S}'_i}_{j,S'_j}\big) + \ell\big(U,W_{i},\tilde{Z}^{i,\bar{\tilde{S}}'_i}_{j,\bar{S}^\prime_j}\big)}{2}  \Big)}. \label{equ25}
\end{align}
Notice that 
\begin{align*}
    \mathbb{E}_{\tilde{S}'_K,S'_J}\Big[\sum_{i\in K, j\in J} \ell\big(U,W_{i},\tilde{Z}^{i,\tilde{S}'_i}_{j,S'_j}\big) \Big] =  \sum_{i\in K, j\in J}  \mathbb{E}_{\tilde{S}'_i,S'_j} \big[\ell\big(U,W_{i},\tilde{Z}^{i,\tilde{S}'_i}_{j,S'_j}\big)\big] 
    =  \sum_{i\in K, j\in J} \frac{\ell\big(U,W_{i},\tilde{Z}^{i,\tilde{S}'_i}_{j,S'_j}\big) + \ell\big(U,W_{i},\tilde{Z}^{i,\bar{\tilde{S}}'_i}_{j,\bar{S}'_j}\big)}{2}.
\end{align*}
Utilizing Lemma \ref{lemmaA.8} and the above equation, for any $\gamma>0$, we know that 
\begin{equation}\label{equ26}
    \mathbb{E}_{U,W_K,\tilde{S}'_K,S'_J|T_{2\mathbb{M}}^{2\mathbb{N}}}  e^{d_{\gamma}\Big( \sum_{i\in K, j\in J} \ell\big(U,W_{i},\tilde{Z}^{i,\tilde{S}'_i}_{j,S'_j}\big) \Big\Vert \sum_{i\in K, j\in J} \frac{\ell\big(U,W_{i},\tilde{Z}^{i,\tilde{S}'_i}_{j,S'_j}\big) + \ell\big(U,W_{i},\tilde{Z}^{i,\bar{\tilde{S}}'_i}_{j,\bar{S}^\prime_j}\big)}{2}  \Big)} \leq 1.
\end{equation}
Putting inequality (\ref{equ26}) back into (\ref{equ25}), we have
\begin{align}\label{equ27}
    I^{T_{2\mathbb{M}}^{2\mathbb{N}}}(U,W_K; \tilde{S}_K,S_J) \geq  \mathbb{E}_{U,W_K,\tilde{S}_K,S_J|T_{2\mathbb{M}}^{2\mathbb{N}}} d_{\gamma}\Bigg( \sum_{i\in K, j\in J} \ell(U,W_{i},\tilde{Z}^{i,\tilde{S}_i}_{j,S_j}) \bigg\Vert \sum_{i\in K, j\in J} \frac{\ell(U,W_{i},\tilde{Z}^{i,\tilde{S}_i}_{j,S_j}) + \ell(U,W_{i},\tilde{Z}^{i,\bar{\tilde{S}}_i}_{j,\bar{S}_j})}{2} \Bigg).
\end{align}
Further plugging (\ref{equ27}) back into (\ref{equ24}), we have 
\begin{align*}
    & d\left(\hat{\mathcal{R}} \Big\Vert \frac{\hat{\mathcal{R}} + \mathcal{R}_{\mathcal{T}}}{2} \right) \nonumber\\
     \leq &\sup_{\gamma} \frac{1}{\zeta\xi}  \mathbb{E}_{K\sim\mathbb{K}, J\sim\mathbb{J},T_{2\mathbb{M}}^{2\mathbb{N}}} \mathbb{E}_{U,W_K,\tilde{S}_K,S_J|T_{2\mathbb{M}}^{2\mathbb{N}}} d_{\gamma}\Bigg( \sum_{i\in K, j\in J} \ell(U,W_{i},\tilde{Z}^{i,\tilde{S}_i}_{j,S_j}) \bigg\Vert \sum_{i\in K, j\in J} \frac{\ell(U,W_{i},\tilde{Z}^{i,\tilde{S}_i}_{j,S_j}) + \ell(U,W_{i},\tilde{Z}^{i,\bar{\tilde{S}}_i}_{j,\bar{S}_j})}{2} \Bigg) \nonumber\\
     \leq & \mathbb{E}_{K\sim\mathbb{K}, J\sim\mathbb{J},T_{2\mathbb{M}}^{2\mathbb{N}}} \frac{I^{T_{2\mathbb{M}}^{2\mathbb{N}}}(U,W_K; \tilde{S}_K,S_J)}{\zeta\xi} = \mathbb{E}_{K\sim\mathbb{K}, J\sim\mathbb{J}} \frac{I(U,W_K; \tilde{S}_K,S_J|T_{2\mathbb{M}}^{2\mathbb{N}})}{\zeta\xi}. 
\end{align*}
By taking $\zeta=\xi=1$, 
\begin{equation*}
    d\left(\hat{\mathcal{R}} \Big\Vert \frac{\hat{\mathcal{R}} + \mathcal{R}_{\mathcal{T}}}{2} \right)  \leq \frac{1}{nm}\sum_{i,j=1}^{n,m} I(U,W_i; \tilde{S}_i,S_j|T_{2\mathbb{M}}^{2\mathbb{N}}).
\end{equation*}
When $\hat{\mathcal{R}}=0$, we obtain 
\begin{equation*}
    d\left(\hat{\mathcal{R}} \Big\Vert \frac{\hat{\mathcal{R}} + \mathcal{R}_{\mathcal{T}}}{2} \right) = d\left( 0 \Big\Vert \frac{ \mathcal{R}_{\mathcal{T}}}{2} \right) \geq \frac{ \mathcal{R}_{\mathcal{T}}}{2}. 
\end{equation*}
This completes the proof.

\end{proof}

\subsection{Proof of Theorem \ref{theorem3.8}}
\begin{restatetheorem}{\ref{theorem3.8}}[Restate]
    Assume that the loss function $\ell(\cdot,\cdot,\cdot)$ is bounded within $[0,1]$, then for any $0 < C_2<\log 2$ and $C_1\geq -\frac{\log(2-e^{C_2})}{C_2}-1$,
  \begin{equation*}
    \overline{\mathrm{gen}} \leq C_1 \hat{\mathcal{R}} + \frac{1}{nm}\sum_{i,j=1}^{n,m} \frac{I(U,W_i;\tilde{S}_i,S_j|T_{2\mathbb{M}}^{2\mathbb{N}})}{C_2}.
  \end{equation*}
  In the interpolating setting, i.e., $\hat{\mathcal{R}} = 0$, we have
  \begin{equation*}
    \mathcal{R}_{\mathcal{T}} \leq  \frac{1}{nm}\sum_{i,j=1}^{n,m} \frac{I(U,W_i;\tilde{S}_i,S_j|T_{2\mathbb{M}}^{2\mathbb{N}})}{\log 2}.
  \end{equation*}
\end{restatetheorem}
\begin{proof}
    According to the definition of meta-generalization gap, we have 
    \begin{align}
         \mathcal{R}_{\mathcal{T}}-(1+C_1)\hat{\mathcal{R}} =&  \mathbb{E}_{T_{2\mathbb{M}}^{2\mathbb{N}},U,W_\mathbb{N},\tilde{S}_{\mathbb{N}},S_{\mathbb{M}}}\bigg[ \frac{1}{nm}\sum_{i,j=1}^{n,m} \Big(\ell(U,W_{i},\tilde{Z}^{i,\bar{\tilde{S}}_i}_{j,\bar{S}_j}) - (1+C_1)\ell(U,W_{i},\tilde{Z}^{i,\tilde{S}_i}_{j,S_j}) \Big) \bigg]\nonumber\\
         =& \mathbb{E}_{T_{2\mathbb{M}}^{2\mathbb{N}}} \frac{1}{nm}\sum_{i,j=1}^{n,m} \bigg[\mathbb{E}_{U,W_i,\tilde{S}_i,S_j|T_{2\mathbb{M}}^{2\mathbb{N}}}  \Big(\ell(U,W_{i},\tilde{Z}^{i,\bar{\tilde{S}}_i}_{j,\bar{S}_j}) - (1+C_1)\ell(U,W_{i},\tilde{Z}^{i,\tilde{S}_i}_{j,S_j}) \Big) \bigg]. \nonumber\\
         = & \mathbb{E}_{T_{2\mathbb{M}}^{2\mathbb{N}}} \frac{1}{nm}\sum_{i,j=1}^{n,m} \bigg[\mathbb{E}_{U,W_i,\tilde{S}_i,S_j|T_{2\mathbb{M}}^{2\mathbb{N}}}  \Big(L^{i,\bar{\tilde{S}}_i}_{j,\bar{S}_j} - (1+C_1) L^{i,\tilde{S}_i}_{j,S_j}\Big) \bigg]. 
         \label{equ28}
    \end{align}
    Let $\tilde{S}'_{\mathbb{N}}$ and $S'_{\mathbb{M}}$ be independent copies of $S_{\mathbb{M}}$ and $\tilde{S}_{\mathbb{N}}$, respectively. By leveraging Lemma \ref{lemmaA.5} with $P=P_{U,W_i,\tilde{S}_i,S_j|T_{2\mathbb{M}}^{2\mathbb{N}}}, Q=P_{U,W_i|T_{2\mathbb{M}}^{2\mathbb{N}}}P_{\tilde{S}_i,S_j}$, and $f(U,W_i, \tilde{S}_i,S_j)= L^{i,\bar{\tilde{S}}_i}_{j,\bar{S}_j} - (1+C_1) L^{i,\tilde{S}_i}_{j,S_j}$, we get that  
    \begin{align}
        &I^{T_{2\mathbb{M}}^{2\mathbb{N}}}(U,W_i;\tilde{S}_i,S_j) 
        =  D(P_{U,W_i,\tilde{S}_i,S_j|T_{2\mathbb{M}}^{2\mathbb{N}}} \Vert P_{U,W_i|T_{2\mathbb{M}}^{2\mathbb{N}}}P_{\tilde{S}_i,S_j}) \nonumber\\
        \geq & \sup_{C_2>0}\Big\{\mathbb{E}_{U,W_i,\tilde{S}_i,S_j|T_{2\mathbb{M}}^{2\mathbb{N}}} \Big[C_2 \big(L^{i,\bar{\tilde{S}}_i}_{j,\bar{S}_j} - (1+C_1) L^{i,\tilde{S}_i}_{j,S_j}\big) \Big] - \log \mathbb{E}_{U,W_i,\tilde{S}'_i,S'_j|T_{2\mathbb{M}}^{2\mathbb{N}}} \Big[e^{C_2\big(L^{i,\bar{\tilde{S}}'_i}_{j,\bar{S}'_j} - (1+C_1) L^{i,\tilde{S}'_i}_{j,S'_j}\big)} \Big]   \Big\} \nonumber\\
        = &  \sup_{C_2>0}\Bigg\{\mathbb{E}_{U,W_i,\tilde{S}_i,S_j|T_{2\mathbb{M}}^{2\mathbb{N}}} \Big[C_2 \big(L^{i,\bar{\tilde{S}}_i}_{j,\bar{S}_j} - (1+C_1) L^{i,\tilde{S}_i}_{j,S_j}\big) \Big] \nonumber\\
        &- \log \bigg(\mathbb{I}\{\tilde{S}_i\oplus S_j  = 0\}\frac{\mathbb{E}_{U,W_i|T_{2\mathbb{M}}^{\mathbb{N}}} \Big[ e^{C_2 \big(L^{i,1}_{j,1} - C_2(1+C_1) L^{i,0}_{j,0}\big)}   +  e^{C_2 \big(L^{i,0}_{j,0} - C_2(1+C_1) L^{i,1}_{j,1}\big)}\Big] }{2} \nonumber\\
        &+ \mathbb{I}\{\tilde{S}_i\oplus S_j  = 1\}\frac{\mathbb{E}_{U,W_i|T_{2\mathbb{M}}^{\mathbb{N}}} \Big[ e^{C_2 \big(L^{i,1}_{j,0} - C_2(1+C_1) L^{i,0}_{j,1}\big)}   +  e^{C_2 \big(L^{i,0}_{j,1} - C_2(1+C_1) L^{i,1}_{j,0}\big)}\Big] }{2}\bigg)\Bigg\} \nonumber\\
        = &  \sup_{C_2>0}\bigg\{\mathbb{E}_{U,W_i,\tilde{S}_i,S_j|T_{2\mathbb{M}}^{2\mathbb{N}}} \Big[C_2 \big(L^{i,\bar{\tilde{S}}_i}_{j,\bar{S}_j} - (1+C_1) L^{i,\tilde{S}_i}_{j,S_j}\big) \Big] \nonumber\\ 
        &-  \log \frac{\mathbb{E}_{U,W_i|T_{2\mathbb{M}}^{\mathbb{N}}} \Big[ e^{C_2 \big(L^{\Psi_+}_{i,j} - C_2(1+C_1) L^{\Psi_-}_{i,j}\big)}   +  e^{C_2 \big(L^{\Psi_-}_{i,j}- C_2(1+C_1) L^{\Psi_+}_{i,j}\big)}\Big] }{2} \bigg\} \label{equ29}
    \end{align}
where $\mathbb{I}\{\cdot\}$ is the indicator function. Let $\lambda_{C_1,C_2} = e^{C_2 \big(L^{\Psi_+}_{i,j} - C_2(1+C_1) L^{\Psi_-}_{i,j}\big)}   +  e^{C_2 \big(L^{\Psi_-}_{i,j}- C_2(1+C_1) L^{\Psi_+}_{i,j}\big)}$. We intend to select the values of $C_1,C_2$ such that the $\log$ term of (\ref{equ29}) is guaranteed to less than $0$, that is $\lambda_{C_1,C_2}\leq 2$. By the convexity of exponential function, one could know that the maximum value of this term can be achieved at the endpoints of $L^{\Psi_-}_{i,j} , L^{\Psi_+}_{i,j}\in [0,1]$. It is natural that
\begin{equation*}
    \lambda_{C_1,C_2} = \left\{\begin{array}{ll}
        2 & if \quad L^{\Psi_+}_{i,j}=L^{\Psi_-}_{i,j}=0, \\
        2e^{-C_1C_2}  & if  \quad L^{\Psi_+}_{i,j}=L^{\Psi_-}_{i,j}=1, \\
        e^{C_2} + e^{-C_2(1+C_1)} & if  \quad L^{\Psi_+}_{i,j}=1, L^{\Psi_-}_{i,j}=0 \;\textrm{or} \; L^{\Psi_+}_{i,j}=0, L^{\Psi_-}_{i,j}=1. \\
    \end{array}
    \right.
\end{equation*}
For the case of $L^{\Psi_+}_{i,j}=1$ and $ L^{\Psi_-}_{i,j}=0$ ( or $L^{\Psi_+}_{i,j}=0, L^{\Psi_-}_{i,j}=1$), it suffices to select a large enough $C_1\rightarrow\infty$ such that $\lambda_{C_1,C_2} =  e^{C_2} + e^{-C_2-C_1C_2} \leq 2$, which implies that $C_2\leq \log 2$ and $C_1\geq - \frac{\log(2-e^{C_2})}{C_2}-1$. By the above estimations, we obtain
\begin{align}
    \log \frac{\mathbb{E}_{U,W_i|T_{2\mathbb{M}}^{\mathbb{N}}} \Big[ e^{C_2 \big(L^{\Psi_+}_{i,j} - C_2(1+C_1) L^{\Psi_-}_{i,j}\big)}   +  e^{C_2 \big(L^{\Psi_-}_{i,j}- C_2(1+C_1) L^{\Psi_+}_{i,j}\big)}\Big] }{2} \leq 0. \label{equ30}
\end{align}
Plugging the inequality (\ref{equ30}) back into (\ref{equ29}), 
\begin{align}\label{equ31}
    I^{T_{2\mathbb{M}}^{2\mathbb{N}}}(U,W_i;\tilde{S}_i,S_j)   \geq& \mathbb{E}_{U,W_i,\tilde{S}_i,S_j|T_{2\mathbb{M}}^{2\mathbb{N}}} \Big[C_2 \big(L^{i,\bar{\tilde{S}}_i}_{j,\bar{S}_j} - (1+C_1) L^{i,\tilde{S}_i}_{j,S_j}\big) \Big].
\end{align}
Substituting (\ref{equ31}) into (\ref{equ28}), we have 
\begin{align*} 
  \overline{\mathrm{gen}}  = &  \mathcal{R}_{\mathcal{T}}-(1+C_1)\hat{\mathcal{R}} + C_1 \hat{\mathcal{R}} \\
   \leq & C_1 \hat{\mathcal{R}} +  \frac{1}{nm}\sum_{i,j=1}^{n,m}  \mathbb{E}_{T_{2\mathbb{M}}^{2\mathbb{N}}} \Big[\frac{I^{T_{2\mathbb{M}}^{2\mathbb{N}}}(U,W_i;\tilde{S}_i,S_j)}{C_2}\Big]    \\
   = & C_1 \hat{\mathcal{R}} +    \frac{1}{nm}\sum_{i,j=1}^{n,m} \frac{I(U,W_i;\tilde{S}_i,S_j|T_{2\mathbb{M}}^{2\mathbb{N}})}{C_2} . 
\end{align*}
In the interpolating regime where $\hat{\mathcal{R}}=0$, by letting $C_2\rightarrow \log 2$ and $C_1\rightarrow \infty$, we have 
\begin{align*}
    \mathcal{R}_{\mathcal{T}} \leq \frac{1}{nm}\sum_{i,j=1}^{n,m} \frac{I(U,W_i;\tilde{S}_i,S_j|T_{2\mathbb{M}}^{2\mathbb{N}})}{\log 2}. 
\end{align*}
This completes the proof.
\end{proof}

\section{Omitted Proofs [e-CMI Bounds]}
\subsection{Proof of Theorem \ref{theorem4}}
\begin{restatetheorem}{\ref{theorem4}}
    Assume that the loss function $\ell(\cdot,\cdot,\cdot) \in [0,1]$, then
    \begin{equation*}
        \vert \overline{\mathrm{gen}} \vert \leq \frac{1}{nm}\sum_{i,j=1}^{n,m} \mathbb{E}_{T^{2\mathbb{N}}_{2\mathbb{M}}}\sqrt{2I^{T^{2\mathbb{N}}_{2\mathbb{M}}} (L^i_j; \tilde{S}_i,S_j)}.
    \end{equation*}
\end{restatetheorem}
\begin{proof}
  According to equations (\ref{equ20}) and (\ref{equ21}), we notice that 
    \begin{align}
        \vert \overline{\mathrm{gen}} \vert = &\bigg\vert \mathbb{E}_{U,W_\mathbb{N},T_{2\mathbb{M}}^{2\mathbb{N}},\tilde{S}_{\mathbb{N}},S_{\mathbb{M}}} \Big[ \frac{1}{nm}\sum_{i,j=1}^{n,m} \Big(\ell(U,W_{i},\tilde{Z}^{i,\bar{\tilde{S}}_i}_{j,\bar{S}_j}) - \ell(U,W_{i},\tilde{Z}^{i,\tilde{S}_i}_{j,S_j}) \Big) \Big]  \bigg\vert \nonumber\\
        \leq& \mathbb{E}_{T_{2\mathbb{M}}^{2\mathbb{N}}}  \bigg\vert \mathbb{E}_{U,W_\mathbb{N},\tilde{S}_{\mathbb{N}},S_{\mathbb{M}}|T_{2\mathbb{M}}^{2\mathbb{N}}} \Big[ \frac{1}{nm}\sum_{i,j=1}^{n,m} \Big(\ell(U,W_{i},\tilde{Z}^{i,\bar{\tilde{S}}_i}_{j,\bar{S}_j}) - \ell(U,W_{i},\tilde{Z}^{i,\tilde{S}_i}_{j,S_j}) \Big) \Big]  \bigg\vert \nonumber\\
        \leq &\mathbb{E}_{T_{2\mathbb{M}}^{2\mathbb{N}}}  \frac{1}{nm}\sum_{i,j=1}^{n,m} \bigg\vert \mathbb{E}_{U,W_i,\tilde{S}_i,S_j|T_{2\mathbb{M}}^{2\mathbb{N}}} \Big[ \ell(U,W_{i},\tilde{Z}^{i,\bar{\tilde{S}}_i}_{j,\bar{S}_j}) - \ell(U,W_{i},\tilde{Z}^{i,\tilde{S}_i}_{j,S_j}) \Big]   \bigg\vert \nonumber\\
        \leq &\mathbb{E}_{T_{2\mathbb{M}}^{2\mathbb{N}}}  \frac{1}{nm}\sum_{i,j=1}^{n,m} \Big\vert \mathbb{E}_{L^i_j,\tilde{S}_i,S_j|T_{2\mathbb{M}}^{2\mathbb{N}}} \Big[ L^{i,\bar{\tilde{S}}_i}_{j,\bar{S}_j} - L^{i,\tilde{S}_i}_{j,S_j}\Big]   \Big\vert.  \label{equ32}
    \end{align}
By the assumption of the loss function, we have $L^{i,\bar{\tilde{S}}_i}_{j,\bar{S}_j} - L^{i,\tilde{S}_i}_{j,S_j} \in[-1,1]$, and thus  $L^{i,\bar{\tilde{S}}_i}_{j,\bar{S}_j} - L^{i,\tilde{S}_i}_{j,S_j}$ is $1$-sub-gaussian. Let $\tilde{S}'_{\mathbb{N}}, S'_{\mathbb{M}}$ be independent copies of $\tilde{S}_{\mathbb{N}},S_{\mathbb{M}}$. Applying Lemma \ref{lemmaA.5} with $P=P_{L^i_j,\tilde{S}_i,S_j|T_{2\mathbb{M}}^{2\mathbb{N}}}$, $Q=P_{L^i_j|T_{2\mathbb{M}}^{2\mathbb{N}}}P_{\tilde{S}_i,S_j}$ and $f(L^i_j,\tilde{S}_i,S_j) = L^{i,\bar{\tilde{S}}_i}_{j,\bar{S}_j} - L^{i,\tilde{S}_i}_{j,S_j}$, we have 
\begin{equation}\label{equ33}
    \Big\vert \mathbb{E}_{L^i_j,\tilde{S}_i,S_j|T_{2\mathbb{M}}^{2\mathbb{N}}} \Big[ L^{i,\bar{\tilde{S}}_i}_{j,\bar{S}_j} - L^{i,\tilde{S}_i}_{j,S_j}\Big] - \mathbb{E}_{L^i_j,\tilde{S}'_i,S'_j|T_{2\mathbb{M}}^{2\mathbb{N}}} \Big[ L^{i,\bar{\tilde{S}}'_i}_{j,\bar{S}'_j} - L^{i,\tilde{S}'_i}_{j,S'_j}\Big] \Big\vert \leq \sqrt{2I^{T^{2\mathbb{N}}_{2\mathbb{M}}} (L^i_j; \tilde{S}_i,S_j)}.
\end{equation}
It is easy to prove that $ \mathbb{E}_{L^i_j,\tilde{S}'_i,S'_j|T_{2\mathbb{M}}^{2\mathbb{N}}} \big[ L^{i,\bar{\tilde{S}}'_i}_{j,\bar{S}'_j} - L^{i,\tilde{S}'_i}_{j,S'_j}\big]=0$. Putting the above estimation back into (\ref{equ32}), we obtain 
\begin{equation*}
    \vert \overline{\mathrm{gen}} \vert \leq \frac{1}{nm}\sum_{i,j=1}^{n,m} \mathbb{E}_{T^{2\mathbb{N}}_{2\mathbb{M}}}\sqrt{2I^{T^{2\mathbb{N}}_{2\mathbb{M}}} (L^i_j; \tilde{S}_i,S_j)},
\end{equation*}
and complete the proof.
\end{proof}

\subsection{Proof of Theorem \ref{theorem5}}
\begin{restatetheorem}{\ref{theorem5}}
        Assume that the loss function $\ell(\cdot,\cdot,\cdot) \in [0,1]$, then
        \begin{equation*}
            \vert \overline{\mathrm{gen}} \vert \leq \frac{1}{nm}\sum_{i,j=1}^{n,m}  \sqrt{2I (L^i_j; \tilde{S}_i,S_j)}.
        \end{equation*}
\end{restatetheorem}
\begin{proof}
    In the proof of Theorem \ref{theorem4}, we notice that if we do not move the expectation over $T^{2\mathbb{N}}_{2\mathbb{M}}$ outside of the absolute function and directly take the expectation over $L^i_j$, we will have the opportunity to get rid of the expectation over $T^{2\mathbb{N}}_{2\mathbb{M}}$. By the definition of the meta-generalization gap, we get 
    \begin{align}
        \vert \overline{\mathrm{gen}} \vert = &\bigg\vert \mathbb{E}_{U,W_\mathbb{N},T_{2\mathbb{M}}^{2\mathbb{N}},\tilde{S}_{\mathbb{N}},S_{\mathbb{M}}} \Big[ \frac{1}{nm}\sum_{i,j=1}^{n,m} \Big(\ell(U,W_{i},\tilde{Z}^{i,\bar{\tilde{S}}_i}_{j,\bar{S}_j}) - \ell(U,W_{i},\tilde{Z}^{i,\tilde{S}_i}_{j,S_j}) \Big) \Big]  \bigg\vert \nonumber\\
       = &\Big\vert \frac{1}{nm}\sum_{i,j=1}^{n,m} \mathbb{E}_{T_{2\mathbb{M}}^{2\mathbb{N}},U,W_i,\tilde{S}_i,S_j} \Big[\ell(U,W_{i},\tilde{Z}^{i,\bar{\tilde{S}}_i}_{j,\bar{S}_j}) - \ell(U,W_{i},\tilde{Z}^{i,\tilde{S}_i}_{j,S_j}) \Big]   \Big\vert \nonumber\\
        \leq & \frac{1}{nm}\sum_{i,j=1}^{n,m} \Big\vert \mathbb{E}_{L^i_j,\tilde{S}_i,S_j} \Big[ \ell(U,W_{i},\tilde{Z}^{i,\bar{\tilde{S}}_i}_{j,\bar{S}_j}) - \ell(U,W_{i},\tilde{Z}^{i,\tilde{S}_i}_{j,S_j}) \Big]  \Big\vert \nonumber\\
        = & \frac{1}{nm}\sum_{i,j=1}^{n,m} \Big\vert \mathbb{E}_{L^i_j,\tilde{S}_i,S_j} \Big[L^{i,\bar{\tilde{S}}_i}_{j,\bar{S}_j} - L^{i,\tilde{S}_i}_{j,S_j}  \Big]  \Big\vert .  \label{equ34}
    \end{align}
Analogous proof to the inequality (\ref{equ33}), we can similarly prove that 
\begin{equation*}
    \Big\vert \mathbb{E}_{L^i_j,\tilde{S}_i,S_j} \Big[ L^{i,\bar{\tilde{S}}_i}_{j,\bar{S}_j} - L^{i,\tilde{S}_i}_{j,S_j}\Big] - \mathbb{E}_{L^i_j,\tilde{S}'_i,S'_j}  \Big[ L^{i,\bar{\tilde{S}}'_i}_{j,\bar{S}'_j} - L^{i,\tilde{S}'_i}_{j,S'_j}\Big] \Big\vert \leq \sqrt{2I (L^i_j; \tilde{S}_i,S_j)},
\end{equation*}
and $\mathbb{E}_{L^i_j,\tilde{S}'_i,S'_j}  \big[ L^{i,\bar{\tilde{S}}'_i}_{j,\bar{S}'_j} - L^{i,\tilde{S}'_i}_{j,S'_j}\big] =0$. Substituting the above inequality into (\ref{equ34}) yields
\begin{equation*}
    \vert \overline{\mathrm{gen}} \vert \leq \frac{1}{nm}\sum_{i,j=1}^{n,m}  \sqrt{2I (L^i_j; \tilde{S}_i,S_j)},
\end{equation*}
which completes the proof.
\end{proof}

\subsection{Proof of Theorem \ref{theorem6}}

\begin{restatetheorem}{\ref{theorem6}}
        Assume that the loss function $\ell(\cdot,\cdot,\cdot) \in [0,1]$, then
        \begin{equation*}
            d\left(\hat{\mathcal{R}} \Big\Vert \frac{\hat{\mathcal{R}} + \mathcal{R}_{\mathcal{T}}}{2} \right) \leq \frac{1}{nm}\sum_{i,j=1}^{n,m} I(L^i_j ; \tilde{S}_i, S_j).
        \end{equation*}
\end{restatetheorem}

\begin{proof}
By the definition, we have
\begin{align}
  & d\left(\hat{\mathcal{R}} \Big\Vert \frac{\hat{\mathcal{R}} + \mathcal{R}_{\mathcal{T}}}{2} \right) \nonumber\\
     \leq &  \sup_{\gamma}   \frac{1}{nm}  \mathbb{E}_{U,W_\mathbb{N},T_{2\mathbb{M}}^{2\mathbb{N}},\tilde{S}_{\mathbb{N}},S_{\mathbb{M}}} d_{\gamma}\Bigg( \sum_{i,j=1}^{n,m} \ell(U,W_i,\tilde{Z}^{i,\tilde{S}_i}_{j,S_j}) \bigg\Vert \sum_{i,j=1}^{n,m} \frac{\ell(U,W_i,\tilde{Z}^{i,\tilde{S}_i}_{j,S_j}) + \ell(U,W_i,\tilde{Z}^{i,\bar{\tilde{S}}_i}_{j,\bar{S}_j})}{2} \Bigg) \nonumber\\
     \leq &   \sup_{\gamma}  \frac{1}{nm} \sum_{i,j=1}^{n,m} \mathbb{E}_{U,W_i,\tilde{S}_i,S_j,T_{2\mathbb{M}}^{2\mathbb{N}}} d_{\gamma}\Bigg( \ell(U,W_i,\tilde{Z}^{i,\tilde{S}_i}_{j,S_j}) \bigg\Vert  \frac{\ell(U,W_i,\tilde{Z}^{i,\tilde{S}_i}_{j,S_j}) + \ell(U,W_i,\tilde{Z}^{i,\bar{\tilde{S}}_i}_{j,\bar{S}_j})}{2} \Bigg)  \nonumber\\
     = &   \sup_{\gamma}  \frac{1}{nm} \sum_{i,j=1}^{n,m} \mathbb{E}_{L^i_j ,\tilde{S}_i, S_j} d_{\gamma}\bigg(  L^{i,\tilde{S}_i}_{j,S_j} \bigg\Vert  \frac{L^{i,\tilde{S}_i}_{j,S_j}  +L^{i,\bar{\tilde{S}}_i}_{j,\bar{S}_j}}{2} \bigg). \label{equ35}
\end{align}    
Let $\tilde{S}'_{\mathbb{N}}, S'_{\mathbb{M}}$ be independent copies of $\tilde{S}_{\mathbb{N}},S_{\mathbb{M}}$. Applying Lemma \ref{lemmaA.5} with $P=P_{L^i_j ,\tilde{S}_i, S_j}$, $Q=P_{L_j^i}P_{\tilde{S}_i, S_j}$, and $f(L^i_j ,\tilde{S}_i, S_j)=d_{\gamma}\big( L^{i,\tilde{S}_i}_{j,S_j} \big\Vert  \frac{L^{i,\tilde{S}_i}_{j,S_j}  +L^{i,\bar{\tilde{S}}_i}_{j,\bar{S}_j}}{2} \big)$, we have for any $\gamma>0$
\begin{align}
    I(L^i_j ; \tilde{S}_i, S_j) = & D(P_{L^i_j ,\tilde{S}_i, S_j} \Vert P_{L_j^i}P_{\tilde{S}_i, S_j}) \nonumber\\
    \geq & \mathbb{E}_{L^i_j ,\tilde{S}_i, S_j} d_{\gamma}\bigg(  L^{i,\tilde{S}_i}_{j,S_j} \bigg\Vert  \frac{L^{i,\tilde{S}_i}_{j,S_j}  +L^{i,\bar{\tilde{S}}_i}_{j,\bar{S}_j}}{2} \bigg)- \log \mathbb{E}_{L^i_j ,\tilde{S}'_i, S'_j} \bigg[e^{d_{\gamma}\Big(   L^{i,\tilde{S}'_i}_{j,S'_j} \Big\Vert  \frac{L^{i,\tilde{S}'_i}_{j,S'_j}  +L^{i,\bar{\tilde{S}}'_i}_{j,\bar{S}'_j}}{2} \Big)}\bigg] . \label{equ36}
\end{align}
Notice that $\mathbb{E}_{\tilde{S}'_i, S'_j}[L^{i,\tilde{S}'_i}_{j,S'_j}] = (L^{i,\tilde{S}'_i}_{j,S'_j}  +L^{i,\bar{\tilde{S}}'_i}_{j,\bar{S}'_j})/2$, by Lemma \ref{lemmaA.8}, for any $\gamma\in\mathbb{R}$, we have 
\begin{equation*}
    \mathbb{E}_{L^i_j ,\tilde{S}'_i, S'_j} \bigg[e^{d_{\gamma}\Big(   L^{i,\tilde{S}'_i}_{j,S'_j} \Big\Vert  \frac{L^{i,\tilde{S}'_i}_{j,S'_j}  +L^{i,\bar{\tilde{S}}'_i}_{j,\bar{S}'_j}}{2} \Big)}\bigg] \leq 1.
\end{equation*}
Plugging the above inequality into (\ref{equ36}) implies that 
\begin{equation}\label{equ37}
    \mathbb{E}_{L^i_j ,\tilde{S}_i, S_j} d_{\gamma}\bigg(  L^{i,\tilde{S}_i}_{j,S_j} \bigg\Vert  \frac{L^{i,\tilde{S}_i}_{j,S_j}  +L^{i,\bar{\tilde{S}}_i}_{j,\bar{S}_j}}{2} \bigg) \leq  I(L^i_j ; \tilde{S}_i, S_j).
\end{equation}
Substituting (\ref{equ37}) into (\ref{equ35}), we have 
\begin{equation*}
    d\left(\hat{\mathcal{R}} \Big\Vert \frac{\hat{\mathcal{R}} + \mathcal{R}_{\mathcal{T}}}{2} \right) \leq \frac{1}{nm}\sum_{i,j=1}^{n,m} I(L^i_j ; \tilde{S}_i, S_j),
\end{equation*}
and complete the proof.

\end{proof}
\section{Omitted Proofs [Loss-difference Bounds]}

\subsection{ Proof of Theorem \ref{theorem7}}
\begin{restatetheorem}{\ref{theorem7}}
    Assume that the loss function $\ell(\cdot,\cdot,\cdot) \in [0,1]$, then
    \begin{equation*}
        \vert \overline{\mathrm{gen}} \vert\leq    \frac{1}{nm}\sum_{i,j=1}^{n,m} \sqrt{2I(\Delta_{i,j}^{\Psi};S_j|T_{2\mathbb{M}}^{2\mathbb{N}})}.
    \end{equation*}
\end{restatetheorem}
\begin{proof}
    By the definition of the meta-generalization gap, we have 
    \begin{align}
        \vert \overline{\mathrm{gen}} \vert = &\Big\vert \mathbb{E}_{U,W_\mathbb{N},T_{2\mathbb{M}}^{2\mathbb{N}},\tilde{S}_{\mathbb{N}},S_{\mathbb{M}}} \Big[\frac{1}{nm}\sum_{i,j=1}^{n,m} \Big(\ell(U,W_i,\tilde{Z}^{i,\bar{\tilde{S}}_i}_{j,\bar{S}_j}) - \ell(U,W_i,\tilde{Z}^{i,\tilde{S}_i}_{j,S_j})\Big) \Big]   \Big\vert \nonumber\\
        \leq & \mathbb{E}_{T_{2\mathbb{M}}^{2\mathbb{N}}}\Big\vert \mathbb{E}_{L_{2\mathbb{M}}^{2\mathbb{N}},\tilde{S}_{\mathbb{N}},S_{\mathbb{M}}|T_{2\mathbb{M}}^{2\mathbb{N}}} \Big[\frac{1}{nm}\sum_{i,j=1}^{n,m} \Big(L^{i,\bar{\tilde{S}}_i}_{j,\bar{S}_j} - L^{i,\tilde{S}_i}_{j,S_j}\Big) \Big]     \Big\vert \nonumber\\
        \leq &\frac{1}{nm}\sum_{i,j=1}^{n,m}  \mathbb{E}_{T_{2\mathbb{M}}^{2\mathbb{N}}}\Big\vert \mathbb{E}_{L^i_j,\tilde{S}_i,S_j|T_{2\mathbb{M}}^{2\mathbb{N}}} \big[\mathbb{I}\{\tilde{S}_i\oplus S_j = 0\} (-1)^{S_j} (L^{i,1}_{j,1}-L^{i,0}_{j,0}) \nonumber\\
        & + \mathbb{I}\{\tilde{S}_i\oplus S_j = 1\} (-1)^{S_j} (L^{i,0}_{j,1}-L^{i,1}_{j,0}) \big] \Big\vert \nonumber\\
        = & \frac{1}{nm}\sum_{i,j=1}^{n,m}  \mathbb{E}_{T_{2\mathbb{M}}^{2\mathbb{N}}}\Big\vert \mathbb{E}_{L_{i,j}^{\Psi},\tilde{S}_i,S_j|T_{2\mathbb{M}}^{2\mathbb{N}}} \Big[  (-1)^{S_j} \Big(L_{i,j}^{\Psi_+} - L_{i,j}^{\Psi_-}\Big)\Big] \Big\vert    \nonumber\\
        = &   \frac{1}{nm}\sum_{i,j=1}^{n,m} \mathbb{E}_{T_{2\mathbb{M}}^{2\mathbb{N}}} \Big\vert \mathbb{E}_{\Delta_{i,j}^{\Psi},S_j|T_{2\mathbb{M}}^{2\mathbb{N}}} \big[ (-1)^{S_j} \Delta_{i,j}^{\Psi} \big]   \Big\vert, \label{equ38}
    \end{align}
where $\mathbb{I}\{\cdot\}$ is the indicator function and the inequalities follows from the Jensen's inequality on the absolute function. Since the loss function takes values in $[0,1]$, $\Delta_{i,j}^{\Psi} \in[-1,+1]$ and thus $(-1)^{S'_j} \Delta_{i,j}^{\Psi}$ is $1$-sub-gaussian. Let $S'_{\mathbb{M}}$ be an independent copy of $S_{\mathbb{M}}$. Applying Lemma \ref{lemmaA.5} with $P=P_{\Delta_{i,j}^{\Psi},S_j|T_{2\mathbb{M}}^{2\mathbb{N}}}$, $Q = P_{\Delta_{i,j}^{\Psi}|T_{2\mathbb{M}}^{2\mathbb{N}}} P_{S_j}$ and $f(\Delta_{i,j}^{\Psi},S_j) =(-1)^{S_j} \Delta_{i,j}^{\Psi}$, we have 
\begin{align*}
   \Big\vert \mathbb{E}_{\Delta_{i,j}^{\Psi},S_j|T_{2\mathbb{M}}^{2\mathbb{N}}} \big[(-1)^{S_j} \Delta_{i,j}^{\Psi}\big]  - \mathbb{E}_{\Delta_{i,j}^{\Psi},S'_j|T_{2\mathbb{M}}^{2\mathbb{N}}} \big[(-1)^{S'_j} \Delta_{i,j}^{\Psi}\big] \Big\vert 
    = &  \Big\vert \mathbb{E}_{\Delta_{i,j}^{\Psi},S_j|T_{2\mathbb{M}}^{2\mathbb{N}}} \big[(-1)^{S_j} \Delta_{i,j}^{\Psi}\big] \Big\vert \\
     \leq & \sqrt{2I^{T_{2\mathbb{M}}^{2\mathbb{N}}}(\Delta_{i,j}^{\Psi};S_j)},
\end{align*}
by $\mathbb{E}_{\Delta_{i,j}^{\Psi},S'_j|T_{2\mathbb{M}}^{2\mathbb{N}}} \big[(-1)^{S'_j} \Delta_{i,j}^{\Psi}\big] = 0$. By substituting the above inequality into (\ref{equ38}), we get that
\begin{align*}
    \vert \overline{\mathrm{gen}} \vert \leq &  \frac{1}{nm}\sum_{i,j=1}^{n,m} \mathbb{E}_{T_{2\mathbb{M}}^{2\mathbb{N}}} \Big\vert \mathbb{E}_{\Delta_{i,j}^{\Psi},S_j|T_{2\mathbb{M}}^{2\mathbb{N}}} \big[ (-1)^{S_j} \Delta_{i,j}^{\Psi} \big]   \Big\vert 
    \leq  \frac{1}{nm}\sum_{i,j=1}^{n,m} \mathbb{E}_{T_{2\mathbb{M}}^{2\mathbb{N}}}  \sqrt{2I^{T_{2\mathbb{M}}^{2\mathbb{N}}}(\Delta_{i,j}^{\Psi};S_j)} \\
    \leq &  \frac{1}{nm}\sum_{i,j=1}^{n,m} \sqrt{2I(\Delta_{i,j}^{\Psi};S_j|T_{2\mathbb{M}}^{2\mathbb{N}})}.
\end{align*}

This completes the proof.
\end{proof}

\subsection{ Proof of Theorem \ref{theorem8}}
\begin{restatetheorem}{\ref{theorem8}}
        Assume that the loss function $\ell(\cdot,\cdot,\cdot) \in [0,1]$, then
        \begin{equation*}
            \vert \overline{\mathrm{gen}} \vert \leq \frac{1}{nm}\sum_{i,j=1}^{n,m} \sqrt{2I(\Delta_{i,j}^{\Psi};S_j)}.
        \end{equation*}
\end{restatetheorem}
\begin{proof}
    Notice that 
    \begin{align}
        \vert \overline{\mathrm{gen}} \vert = &\Big\vert \mathbb{E}_{U,W_\mathbb{N},T_{2\mathbb{M}}^{2\mathbb{N}},\tilde{S}_{\mathbb{N}},S_{\mathbb{M}}} \Big[\frac{1}{nm}\sum_{i,j=1}^{n,m} \Big(\ell(U,W_i,\tilde{Z}^{i,\bar{\tilde{S}}_i}_{j,\bar{S}_j}) - \ell(U,W_i,\tilde{Z}^{i,\tilde{S}_i}_{j,S_j})\Big) \Big]   \Big\vert \nonumber\\
        = & \Big\vert \mathbb{E}_{L_{2\mathbb{M}}^{2\mathbb{N}},\tilde{S}_{\mathbb{N}},S_{\mathbb{M}}}\Big[\frac{1}{nm}\sum_{i,j=1}^{n,m} \Big(L^{i,\bar{\tilde{S}}_i}_{j,\bar{S}_j} - L^{i,\tilde{S}_i}_{j,S_j}\Big) \Big]   \Big\vert \nonumber\\
        \leq &  \frac{1}{nm}\sum_{i,j=1}^{n,m}\Big\vert \mathbb{E}_{L^i_j,\tilde{S}_i,S_j} \Big[ L^{i,\bar{\tilde{S}}_i}_{j,\bar{S}_j} - L^{i,\tilde{S}_i}_{j,S_j} \Big]   \Big\vert \nonumber\\
        = & \frac{1}{nm}\sum_{i,j=1}^{n,m}\Big\vert \mathbb{E}_{L_{i,j}^{\Psi},\tilde{S}_i,S_j}  \big[ (-1)^{S_j} \Big(L_{i,j}^{\Psi_+} - L_{i,j}^{\Psi_-}\Big) \big] \Big\vert \nonumber\\
        = & \frac{1}{nm}\sum_{i,j=1}^{n,m} \Big\vert \mathbb{E}_{\Delta_{i,j}^{\Psi},S_j} \big[ (-1)^{S_j} \Delta_{i,j}^{\Psi} \big]   \Big\vert .   \label{equ39}
    \end{align}
For any $i\in[n]$ , $j\in[m]$, it is clear that $\Delta_{i,j}^{\Psi} \in [-1, +1]$, and $ (-1)^{S_j} \Delta_{i,j}^{\Psi}$ are $1$-sub-gaussian. Let $ S'_{\mathbb{M}}$ be independent copy of $S_{\mathbb{M}}$. Notice that $\mathbb{E}_{\Delta_{i,j}^{\Psi},S'_j} \big[ (-1)^{S'_j} \Delta_{i,j}^{\Psi} \big] = 0$.  Applying Lemma \ref{lemmaA.5} with $P=P_{\Delta_{i,j}^{\Psi},S_j}$, $Q = P_{\Delta_{i,j}^{\Psi}} P_{S_j}$ and $f(\Delta_{i,j}^{\Psi}, S_j) =(-1)^{S_j} \Delta_{i,j}^{\Psi} $, we have
\begin{align}\label{equ40}
    \Big\vert \mathbb{E}_{\Delta_{i,j}^{\Psi},S_j} \big[ (-1)^{S_j} \Delta_{i,j}^{\Psi} \big] -\mathbb{E}_{\Delta_{i,j}^{\Psi},S'_j} \big[ (-1)^{S'_j} \Delta_{i,j}^{\Psi} \big] \Big\vert \
    = \Big\vert \mathbb{E}_{\Delta_{i,j}^{\Psi},S_j} \big[ (-1)^{S_j} \Delta_{i,j}^{\Psi} \big] \Big\vert 
    \leq \sqrt{2I(\Delta_{i,j}^{\Psi};S_j)}
\end{align}
Substituting (\ref{equ40}) into (\ref{equ39}), we obtain
\begin{equation*}
    \vert \overline{\mathrm{gen}} \vert \leq \frac{1}{nm}\sum_{i,j=1}^{n,m} \sqrt{2I(\Delta_{i,j}^{\Psi};S_j)},
\end{equation*} 
which completes the proof.

\end{proof}

\subsection{ Proof of Theorem \ref{theorem9}}

\begin{restatetheorem}{\ref{theorem9}}
        Assume that the loss function $\ell(\cdot,\cdot,\cdot) \in \{0, 1 \}$. In the interpolating setting, i.e., $\hat{\mathcal{R}} = 0$, then
        \begin{equation*}
            \mathcal{R}_{\mathcal{T}} = \frac{1}{nm}\sum_{i,j=1}^{n,m} \frac{I(\Delta_{i,j}^{\Psi};S_j) }{\log 2} = \frac{1}{nm}\sum_{i,j=1}^{n,m} \frac{I(L_{i,j}^{\Psi}; S_j)}{\log 2} \leq \frac{1}{nm}\sum_{i,j=1}^{n,m} \frac{2I(L_{i,j}^{\Psi_+}; S_j)}{\log 2}.
        \end{equation*}
\end{restatetheorem}
\begin{proof}
    In the interpolating setting with binary losses, i.e., $\ell(\cdot,\cdot,\cdot)\in\{0,1\}$ and $\hat{\mathcal{R}}=0$, we have 
    \begin{align}
        \mathcal{R}_{\mathcal{T}}  = & \frac{1}{nm}\sum_{i,j=1}^{n,m} \mathbb{E}_{L^i_{j}, \tilde{S}_i,S_j}\big[ L^{i,\bar{\tilde{S}}_i}_{j,\bar{S}_j}\big] \nonumber \\
        = & \frac{1}{nm}\sum_{i,j=1}^{n,m} \frac{1}{4} \Big(\mathbb{E}_{L^{i,1}_{j,1}  |\tilde{S}_i = 0, S_j=0} [L^{i,1}_{j,1}]+ \mathbb{E}_{L^{i,0}_{j,0}  |\tilde{S}_i = 1, S_j=1} [L^{i,0}_{j,0} ] \nonumber \\
        & +  \mathbb{E}_{L^{i,0}_{j,1}  |\tilde{S}_i = 1, S_j=0} [L^{i,0}_{j,1} ]+ \mathbb{E}_{L^{i,1}_{j,0}  |\tilde{S}_i = 0, S_j=1} [L^{i,1}_{j,0} ] \Big)\nonumber \\
        = & \frac{1}{nm}\sum_{i,j=1}^{n,m} \frac{P(\Delta_{i,j}^{\Psi}= 1|S_j=0)+ P(\Delta_{i,j}^{\Psi} = -1|S_j=1)}{2}. \label{equ41}
     \end{align}
Notice that the distribution of $L^{i,\bar{\tilde{S}}_i}_{j,\bar{S}_j}$ and $L^{i,\tilde{S}_i}_{j,S_j}$ should be symmetric regardless of the value $\tilde{S}_i,S_j$.  In other words, we have that $P(\Delta_{i,j}^{\Psi}= 1|S_j=0) = P(\Delta_{i,j}^{\Psi} = -1|S_j=1)$, $P(\Delta_{i,j}^{\Psi}= 0|S_j=0)=P(\Delta_{i,j}^{\Psi} = 0|S_j=1)$, and $P(\Delta_{i,j}^{\Psi}= -1|S_j=0) =  P(\Delta_{i,j}^{\Psi} = 1|S_j=1) = 0$. Let $P(\Delta_{i,j}^{\Psi}= 1|S_j=0)  = \alpha_{i,j}$ and $P(\Delta_{i,j}^{\Psi}= 0|S_j=0)  = 1- \alpha_{i,j}$, then
\begin{align*}
    I(\Delta_{i,j}^{\Psi}; S_j) = &  H(\Delta_{i,j}^{\Psi}) - H(\Delta_{i,j}^{\Psi}|S_j) \\
     =& - \frac{\alpha_{i,j}}{2} \log\big(\frac{\alpha_{i,j}}{2} \big) - (1-\alpha_{i,j}) \log\big(1-\alpha_{i,j} \big)  - \frac{\alpha_{i,j}}{2} \log\big(\frac{\alpha_{i,j}}{2} \big) \\
     &+ \alpha_{i,j} \log\big(\alpha_{i,j} \big) + (1-\alpha_{i,j}) \log\big(1-\alpha_{i,j} \big) \\
     = & \alpha_{i,j} \log\big(\alpha_{i,j} \big) -\alpha_{i,j} \log\big(\frac{\alpha_{i,j}}{2} \big)\\
      = & \alpha_{i,j} \log 2.
\end{align*}
Putting $P(\Delta_{i,j}^{\Psi}= 1|S_j=0) = \alpha_{i,j}  = \frac{I(\Delta_{i,j}^{\Psi}; S_j)}{\log 2} $ back into (\ref{equ41}), we have 
\begin{align}
    \mathcal{R}_{\mathcal{T}}  = & \frac{1}{nm}\sum_{i,j=1}^{n,m} \frac{P(\Delta_{i,j}^{\Psi}= 1|S_j=0)+ P(\Delta_{i,j}^{\Psi} = -1|S_j=1)}{2} \nonumber\\
    = & \frac{1}{nm}\sum_{i,j=1}^{n,m} \alpha_{i,j} =   \frac{1}{nm}\sum_{i,j=1}^{n,m}  \frac{I(\Delta_{i,j}^{\Psi}; S_j)}{\log 2}. \label{equ42}
\end{align}
By assuming $\hat{\mathcal{R}}=0$, we know that $P(L_{i,j}^{\Psi_+}=1,L_{i,j}^{\Psi_-}=1)=0$. Therefore, there exists a bijection between $\Delta_{i,j}^{\Psi}$ and $L_{i,j}^{\Psi}$: $\Delta_{i,j}^{\Psi} = 0 \leftrightarrow L_{i,j}^{\Psi}= \{0,0\}$, $\Delta_{i,j}^{\Psi}  = 1 \leftrightarrow L_{i,j}^{\Psi}= \{1,0\}$, and $\Delta_{i,j}^{\Psi}= -1 \leftrightarrow L_{i,j}^{\Psi}= \{0,1\}$. By the data-processing inequality, we know that $I(\Delta_{i,j}^{\Psi};S_j) = I(L_{i,j}^{\Psi}; S_j)$ and
\begin{align*}
  & I(L_{i,j}^{\Psi_+}; S_j) \\
     = & H(L_{i,j}^{\Psi_+}) - H(L_{i,j}^{\Psi_+} | S_j)  \\
    = & -\frac{\alpha_{i,j}}{2} \log(\frac{\alpha_{i,j}}{2}) - \big( 1- \frac{\alpha_{i,j}}{2}\big) \log\big( 1- \frac{\alpha_{i,j}}{2}\big)  +  \frac{\alpha_{i,j}}{2} \log(\alpha_{i,j}) + \frac{1-\alpha_{i,j}}{2} \log\big( 1- \alpha_{i,j}\big) \\
    \geq & -\frac{\alpha_{i,j}}{2} \log(\frac{\alpha_{i,j}}{2}) + \frac{\alpha_{i,j}}{2} \log(\alpha_{i,j}) = \frac{\alpha_{i,j}}{2}  \log 2,
\end{align*}
where the last inequality is due to Jensen'inequality on the convex function $f(x)=(1-x)\log(1-x)$ such that $\frac{f(0)+f(\alpha_{i,j})}{2}\geq f(\frac{\alpha_{i,j}}{2})$. Therefore, we get that for any $i\in[n],j\in[m]$
\begin{align}\label{equ43}
    I(\Delta_{i,j}^{\Psi};S_j) = I(L_{i,j}^{\Psi}; S_j) \leq 2 I(L_{i,j}^{\Psi_+}; S_j) .
\end{align}
Combining (\ref{equ42}) and (\ref{equ43}), this completes the proof.

\end{proof}

\subsection{ Proof of Theorem \ref{theorem10}}

\begin{restatetheorem}{\ref{theorem10}}
    Assume that the loss function $\ell(\cdot,\cdot,\cdot)\in [0,1]$. For any $0 < C_2 < \frac{\log 2}{2}$ and $C_1\geq -\frac{\log(2-e^{C_2})}{2C_2}-1$, 
    \begin{equation*}
        \overline{\mathrm{gen}} \leq C_1 \hat{\mathcal{R}} + \frac{1}{nm}\sum_{i,j=1}^{n,m} \frac{\min\{I(L_{i,j}^{\Psi};S_j ), 2 I(L_{i,j}^{\Psi_+};S_j)\}}{C_2}.
    \end{equation*}
    In the interpolating setting, i.e., $\hat{\mathcal{R}} = 0$, we further have 
    \begin{equation*}
        \mathcal{R}_{\mathcal{T}} \leq \frac{1}{nm}\sum_{i,j=1}^{n,m} \frac{2\min\{I(L_{i,j}^{\Psi};S_j ), 2 I(L_{i,j}^{\Psi_+};S_j)\}}{\log 2}.
    \end{equation*}
\end{restatetheorem}

\begin{proof}
    Note that 
    \begin{align}
        \mathcal{R}_{\mathcal{T}} - (1+C_1)\hat{\mathcal{R}}
        = & \frac{1}{nm}\sum_{i,j=1}^{n,m}  \mathbb{E}_{L_j^i,\tilde{S}_i,S_j} \big[L^{i,\bar{\tilde{S}}_i}_{j,\bar{S}_j} - (1+C_1)L^{i,\tilde{S}_i}_{j,S_j}\big] \nonumber\\
        = & \frac{1}{nm}\sum_{i,j=1}^{n,m}  \mathbb{E}_{L_j^i,\tilde{S}_i,S_j} \bigg[\Big(1+\frac{C_1}{2}\Big)\Big(L^{i,\bar{\tilde{S}}_i}_{j,\bar{S}_j}- L^{i,\tilde{S}_i}_{j,S_j}\Big) - \frac{C_1}{2}L^{i,\bar{\tilde{S}}_i}_{j,\bar{S}_j}- \frac{C_1}{2}L^{i,\tilde{S}_i}_{j,S_j}\bigg] \nonumber\\
        = & \frac{1}{2nm}\sum_{i,j=1}^{n,m}   \Big(\mathbb{E}_{L^{\Psi}_{i,j},S_j}\Big[(-1)^{S_j}\big(2+C_1\big) L_{i,j}^{\Psi_+} - C_1 L_{i,j}^{\Psi_+} \Big] \nonumber\\
        & +  \mathbb{E}_{L^{\Psi}_{i,j},S_j} \Big[ -(-1)^{S_j}\big(2+C_1\big) L_{i,j}^{\Psi_-} - C_1 L_{i,j}^{\Psi_-}\Big] \Big). \label{equ44}
    \end{align}
Since the distribution of the training or test loss  is invariant to supersample variables, the distributions $P_{L_{i,j}^{\Psi_+},S_j}$ and $P_{L_{i,j}^{\Psi_-},S_j}$ have symmetric property, namely, $P_{L_{i,j}^{\Psi_+}|S_j=1} = P_{L_{i,j}^{\Psi_-}|S_j=0}$ and $P_{L_{i,j}^{\Psi_-}|S_j=1} = P_{L_{i,j}^{\Psi_+}|S_j=0}$. Hence, $P_{L_{i,j}^{\Psi_+}}=P_{L_{i,j}^{\Psi_-}}$, which implies that $\mathbb{E}_{L_{i,j}^{\Psi_+}}[C_1 L_{i,j}^{\Psi_+}] = \mathbb{E}_{L_{i,j}^{\Psi_-}}[C_1 L_{i,j}^{\Psi_-}] $ and  
\begin{align*}
    \mathbb{E}_{L^{\Psi}_{i,j},S_j} \Big[(-1)^{S_j}\big(2+C_1\big) L_{i,j}^{\Psi_+} \Big] 
    = & \frac{\mathbb{E}_{L_{i,j}^{\Psi_+}|S_j=0} \Big[\big(2+C_1\big) L_{i,j}^{\Psi_+} \Big] - \mathbb{E}_{L_{i,j}^{\Psi_+}|S_j=1} \Big[\big(2+C_1\big) L_{i,j}^{\Psi_+} \Big]}{2} \\
    = & \frac{\mathbb{E}_{L_{i,j}^{\Psi_-}|S_j=1} \Big[\big(2+C_1\big) L_{i,j}^{\Psi_-}\Big] - \mathbb{E}_{L_{i,j}^{\Psi_-}|S_j=0} \Big[\big(2+C_1\big) L_{i,j}^{\Psi_-} \Big]}{2} \\
    = & -\mathbb{E}_{L^{\Psi}_{i,j},S_j} \Big[(-1)^{S_j}\big(2+C_1\big)L_{i,j}^{\Psi_-} \Big].
\end{align*}
Plugging the above into (\ref{equ44}), we get that 
\begin{align}
    \mathcal{R}_{\mathcal{T}} - (1+C_1)\hat{\mathcal{R}} =  \frac{1}{nm}\sum_{i,j=1}^{n,m}  \mathbb{E}_{L_{i,j}^{\Psi_+},S_j} \Big[(-1)^{S_j}\big(2+C_1\big)L_{i,j}^{\Psi_+} - C_1 L_{i,j}^{\Psi_+} \Big]. \label{equ45}
\end{align}
Let $S'_{\mathbb{M}}$ be an independent copy of $S_{\mathbb{M}}$. Further leveraging Lemma \ref{lemmaA.5} with $P = P_{L_{i,j}^{\Psi_+},S_j}$, $Q = P_{L_{i,j}^{\Psi_+}}P_{S_j}$ and $f(L_{i,j}^{\Psi_+},S_j) = (-1)^{S_j}\big(2+C_1\big)L_{i,j}^{\Psi_+} - C_1 L_{i,j}^{\Psi_+}$, we then have 
\begin{align}
    I(L_{i,j}^{\Psi_+};S_j) = & D(P_{L_{i,j}^{\Psi_+},S_j} \Vert P_{L_{i,j}^{\Psi_+}}P_{S_j} ) \nonumber\\
     \geq & \sup_{C_2>0} \Big\{\mathbb{E}_{L_{i,j}^{\Psi_+},S_j} \Big[  (-1)^{S_j}C_2\big(2+C_1\big)L_{i,j}^{\Psi_+}- C_2C_1 L_{i,j}^{\Psi_+} \Big]  \nonumber\\
     & - \log \mathbb{E}_{L_{i,j}^{\Psi_+},S'_j} \Big[e^{(-1)^{S'_j}C_2\big(2+C_1\big) L_{i,j}^{\Psi_+} - C_2C_1 L_{i,j}^{\Psi_+}} \Big]  \Big\} \nonumber\\
     = & \sup_{C_2>0} \bigg\{\mathbb{E}_{L_{i,j}^{\Psi_+},S_j} \Big[  (-1)^{S_j}C_2\big(2+C_1\big) L_{i,j}^{\Psi_+} - C_2C_1 L_{i,j}^{\Psi_+} \Big] \nonumber\\
     & - \log \frac{\mathbb{E}_{L_{i,j}^{\Psi_+},S'_j} \Big[e^{2C_2 L_{i,j}^{\Psi_+}} + e^{-2C_2(C_1+1) L_{i,j}^{\Psi_+}} \Big]}{2}  \bigg\}  \label{equ46}
\end{align}
Let $\lambda'_{C_1,C_2} = e^{2C_2 L_{i,j}^{\Psi_+}} + e^{-2C_2(C_1+1) L_{i,j}^{\Psi_+}}$. We intend to select the values of $C_1$ and $C_2$ such that the $\log$ term in the above inequality can be less than zero, i.e., $\lambda'_{C_1,C_2} \leq 2$. Since $\lambda'_{C_1,C_2} $ is convex function w.r.t $L_{i,j}^{\Psi_+}$, the maximum value of $\lambda'_{C_1,C_2} $ can be achieved at the endpoints of $L_{i,j}^{\Psi_+}\in[0,1]$. When $L_{i,j}^{\Psi_+} = 0$, we have $\lambda'_{C_1,C_2} = 2$. When $L_{i,j}^{\Psi_+} = 1$, we can select a large enough $C_1$ such that $\lambda'_{C_1,C_2} \leq 2$, which yields $C_2\leq \frac{\log 2}{2}$ and $C_1\geq - \frac{\log(2-e^{2C_2})}{2C_2}-1$. By the above conditions and the inequality (\ref{equ46}), we get that 
\begin{equation*}
    \mathbb{E}_{L_{i,j}^{\Psi_+},S'_j} \Big[e^{2C_2 L_{i,j}^{\Psi_+}} + e^{-2C_2(C_1+1) L_{i,j}^{\Psi_+}} \Big] \leq 2
\end{equation*}
and
\begin{equation}\label{equ47}
    \mathbb{E}_{L_{i,j}^{\Psi_+},S_j} \Big[  (-1)^{S_j}C_2\big(2+C_1\big)L_{i,j}^{\Psi_+} - C_2C_1 L_{i,j}^{\Psi_+} \Big]  \leq I(L_{i,j}^{\Psi_+};S_j).
\end{equation}
Plugging the inequality (\ref{equ47}) into (\ref{equ45}) and employing (\ref{equ43}), we have 
\begin{align*}
    \overline{\mathrm{gen}}= & \mathcal{R}_{\mathcal{T}} - (1+C_1)\hat{\mathcal{R}} + C_1 \hat{\mathcal{R}}\\
    \leq & C_1 \hat{\mathcal{R}} + \frac{1}{nm}\sum_{i,j=1}^{n,m}  \frac{I(L_{i,j}^{\Psi_+};S_j)}{C_2} \\
    \leq & C_1 \hat{\mathcal{R}} + \frac{1}{nm}\sum_{i,j=1}^{n,m}  \frac{\min\{I(L_{i,j}^{\Psi};S_j),2I(L_{i,j}^{\Psi_+};S_j)\}}{C_2}.
\end{align*}
In the interpolating setting, i.e., $\hat{\mathcal{R}} = 0$, by setting $C_2\rightarrow \frac{\log 2}{2}$ and $C_1\rightarrow\infty$, we get that 
\begin{equation*}
    \mathcal{R}_{\mathcal{T}} \leq \frac{1}{nm}\sum_{i,j=1}^{n,m} \frac{2\min\{I(L_{i,j}^{\Psi};S_j), 2 I(L_{i,j}^{\Psi_+};S_j)\}}{\log 2}.
\end{equation*}
This completes the proof.

\end{proof}

\subsection{ Proof of Theorem \ref{theorem12}}
\begin{restatetheorem}{\ref{theorem12}}
    Assume that the loss function $\ell(\cdot,\cdot,\cdot)\in\{0,1\}$ and $\gamma\in(0,1)$. Then, for any $0 < C_2 < \frac{\log 2}{2}$ and $C_1\geq -\frac{\log(2-e^{C_2})}{2C_2\gamma^2}-\frac{1}{\gamma^2}$, we have 
    \begin{equation*}
        \overline{\mathrm{gen}} \leq  C_1 V(\gamma) +  \frac{1}{nm}\sum_{i,j=1}^{n,m}\frac{\min\{I(L_{i,j}^{\Psi};S_j ), 2 I(L_{i,j}^{\Psi_+};S_j)\}}{C_2}.
    \end{equation*}
\end{restatetheorem}

\begin{proof}
    According to the definition of $\gamma$-variance and the property of zero-one loss, we have
\begin{align*}
    V(\gamma) = & \mathbb{E}_{U,T_{\mathbb{M}}^{\mathbb{N}}}\Big[\frac{1}{nm}\sum_{i,j=1}^{nm} \mathbb{E}_{W_\mathbb{N}|U,T_{\mathbb{M}}^{\mathbb{N}}} \big(\ell(U,W_i,Z^i_j) - (1+\gamma) \mathcal{R}(U, T_{\mathbb{M}}^{\mathbb{N}})\big)^2\Big] \\
    = & \mathbb{E}_{U,T_{\mathbb{M}}^{\mathbb{N}}} \Big[\frac{1}{nm}\sum_{i,j=1}^{nm} \mathbb{E}_{W_i|U,T_\mathbb{M}^i} \Big(\ell^2(U,W_i,Z^i_j) - 2(1+\gamma)\ell(U,W_i,Z^i_j) \mathcal{R}(U, T_{\mathbb{M}}^{\mathbb{N}}) \\
    &+ (1+\gamma)^2 \mathcal{R}^2(U, T_{\mathbb{M}}^{\mathbb{N}}) \Big) \Big] \\
    = & \mathbb{E}_{U,T_{\mathbb{M}}^{\mathbb{N}}} \Big[\frac{1}{nm}\sum_{i,j=1}^{nm} \mathbb{E}_{W_i|U,T_\mathbb{M}^i} \ell(U,W_i,Z^i_j) \Big]  - 2(1+\gamma) \mathbb{E}_{U,T_{\mathbb{M}}^{\mathbb{N}}} \big[ \mathcal{R}^2(U, T_{\mathbb{M}}^{\mathbb{N}})\big] \\
    & + (1+\gamma)^2 \mathbb{E}_{U,T_{\mathbb{M}}^{\mathbb{N}}} \big[\mathcal{R}^2(U, T_{\mathbb{M}}^{\mathbb{N}})]  \\
    = & \hat{\mathcal{R}} - (1-\gamma^2) \mathbb{E}_{U,T_{\mathbb{M}}^{\mathbb{N}}} \big[ \mathcal{R}^2(U, T_{\mathbb{M}}^{\mathbb{N}}) \big]. 
\end{align*}
Recall that $\ell(\cdot,\cdot,\cdot)\in\{0,1\}$, we get $\mathcal{R}(U, T_{\mathbb{M}}^{\mathbb{N}})\in[0,1]$, $\mathcal{R}^2(U, T_{\mathbb{M}}^{\mathbb{N}})\leq \mathcal{R}(U, T_{\mathbb{M}}^{\mathbb{N}})$, and 
\begin{align*}
    \overline{\mathrm{gen}} -C_1 V(\gamma) =  &\mathcal{R}_\mathcal{T} - \hat{\mathcal{R}} -C_1 V(\gamma) \\
     = &\mathcal{R}_\mathcal{T} - \hat{\mathcal{R}}  - C_1 \hat{\mathcal{R}} + C_1 (1-\gamma^2)\mathbb{E}_{U,T_{\mathbb{M}}^{\mathbb{N}}} \big[ \mathcal{R}^2(U, T_{\mathbb{M}}^{\mathbb{N}}) \big] \\
    \leq &\mathcal{R}_\mathcal{T} - \hat{\mathcal{R}}  - C_1 \hat{\mathcal{R}} + C_1 (1-\gamma^2)\mathbb{E}_{U,T_{\mathbb{M}}^{\mathbb{N}}} \big[ \mathcal{R}(U, T_{\mathbb{M}}^{\mathbb{N}}) \big] \\
    = &\mathcal{R}_\mathcal{T} - (1+C_1 \gamma^2) \hat{\mathcal{R}}.
\end{align*}
Analogous proof to Theorem \ref{theorem10} with $C_1 = C_1 \gamma^2$ and $C_2=C_2$, we obtain
\begin{equation*}
    \overline{\mathrm{gen}} -C_1 V(\gamma) \leq \frac{1}{nm}\sum_{i,j=1}^{n,m} \frac{\min\{I(L_{i,j}^{\Psi};S_j ), 2 I(L_{i,j}^{\Psi_+};S_j)\}}{C_2},
\end{equation*}
under the conditions that $0 < C_2 < \frac{\log 2}{2}$ and $C_1\geq -\frac{\log(2-e^{C_2})}{2C_2\gamma^2}-\frac{1}{\gamma^2}$. This completes the proof.

\end{proof}

\section{Omitted Proofs [Algorithm-based Bounds]}

\subsection{ Proof of Theorem \ref{theorem4.1}}

\begin{restatetheorem}{\ref{theorem4.1}}
    For the algorithm output $(U, W_{\mathbb{N}})$ after $T$ iterations, we have 
    \begin{equation*}
        I(U,W_{\mathbb{N}}; T_{\mathbb{M}}^{\mathbb{N}}) \leq  \sum_{t=1}^T \frac{1}{2} \log \Big\vert \frac{\eta_t^2}{\sigma^2_t} \mathbb{E}_{U^{t-1},W^{t-1}_{\mathbb{N}}} [\Sigma_t]  + \mathbf{1}_{(\vert I_t \vert +1 ) d   } \Big\vert,
    \end{equation*}
where $\Sigma_t = \mathrm{Cov}_{T_{\mathbb{M}}^{\mathbb{N}},J_{I_t}}[G^t]$.
\end{restatetheorem}
\begin{proof}
    For any $t\in[T]$, by applying the data-processing inequality on the Markov chain $ T_{\mathbb{M}}^{\mathbb{N}}  \rightarrow  ((U^{T-1}, W^{T-1}_{I_T}), \eta_T G^T + N_{T}) $ $\rightarrow (U^{T-1}, W^{T-1}_{I_T}) + \eta_T G^T + N_{T} $ $\rightarrow (U, W_{\mathbb{N}})$, we have 
    \begin{align*}
        I(U, W_{\mathbb{N}} ; T_{\mathbb{M}}^{\mathbb{N}}) \leq & I\big((U^{T-1}, W^{T-1}_{I_T}) + \eta_T G^T + N_{T} ; T_{\mathbb{M}}^{\mathbb{N}} \big) \\
        \leq & I\big((U^{T-1}, W^{T-1}_{I_T}),  \eta_T G^T + N_{T} ; T_{\mathbb{M}}^{\mathbb{N}}\big) \\
        = & I(U^{T-1}, W^{T-1}_{I_T} ; T_{\mathbb{M}}^{\mathbb{N}}) + I(\eta_T G^T + N_{T} ; T_{\mathbb{M}}^{\mathbb{N}}| U^{T-1}, W^{T-1}_{I_T}) \\
        & \cdots \\
        \leq & \sum_{t=1}^T I(\eta_t G^t + N_{t} ; T_{\mathbb{M}}^{\mathbb{N}}| U^{t-1}, W^{t-1}_{I_t}).
    \end{align*} 
    Since $N_{t}$ is independent of $T_{\mathbb{M}}^{\mathbb{N}},J_{I_t}$, we have 
    \begin{align*}
        \mathrm{Cov}_{T_{\mathbb{M}}^{\mathbb{N}},J_{I_t},N_{t}} [\eta_t G^t + N_{t}]= \mathrm{Cov}_{T_{\mathbb{M}}^{\mathbb{N}},J_{I_t}}[\eta_t G^t] + \mathrm{Cov}_{N_{t}}[N_{t}] = \eta^2_t \Sigma_{t} + \sigma^2_t  \mathbf{1}_{(\vert I_t \vert +1 ) d   }.
    \end{align*}
    By applying Lemma \ref{lemmaA.9} with $\Sigma = \eta^2_t \Sigma_{t} + \sigma^2_t  \mathbf{1}_{(\vert I_t \vert +1 ) d   }$, we have 
    \begin{align*}
        & I^{U^{t-1}, W^{t-1}_{I_t}}(\eta_t G^t + N_{t} ; T_{\mathbb{M}}^{\mathbb{N}}) \\
        = & H(\eta_t G^t + N_{t} | U^{t-1}, W^{t-1}_{I_t}) - H(\eta_t G^t + N_{t} |T_{\mathbb{M}}^{\mathbb{N}}, U^{t-1}, W^{t-1}_{I_t}) \\
        \leq & H(\eta_t G^t + N_{t} | U^{t-1}, W^{t-1}_{I_t}) - H(\eta_t G^t + N_{t} |T_{\mathbb{M}}^{\mathbb{N}},J_{I_t}, U^{t-1}, W^{t-1}_{I_t}) \\
        =& H(\eta_t G^t + N_{t} | U^{t-1}, W^{t-1}_{I_t}) - H( N_{t} ) \\
        \leq & \frac{(\vert I_t \vert +1 ) d  }{2} \log(2\pi e) + \frac{1}{2} \log  \Big\vert  \eta^2_t \Sigma_{t} + \sigma^2_t  \mathbf{1}_{(\vert I_t \vert +1 ) d   } \Big\vert - \frac{(\vert I_t \vert +1 ) d  }{2} \log(2\pi e \sigma_t^2) \\
        \leq & \frac{1}{2}  \log \Big\vert  \frac{\eta^2_t}{\sigma_t^2} \Sigma_{t} +  \mathbf{1}_{(\vert I_t \vert +1 ) d   } \Big\vert.
    \end{align*}
    Combining the above estimation and applying Jensen's inequality on the concave log-determinant function, we have 
    \begin{align*}
        I(U, W_{\mathbb{N}} ; T_{\mathbb{M}}^{\mathbb{N}}) \leq &  \sum_{t=1}^T I(\eta_t G^t + N_{t} ; T_{\mathbb{M}}^{\mathbb{N}}| U^{t-1}, W^{t-1}_{I_t}) \\
        = & \sum_{t=1}^T \mathbb{E}_{U^{t-1}, W^{t-1}_{I_t}} \Big[I^{U^{t-1}, W^{t-1}_{I_t}}(\eta_t G^t + N_{t} ; T_{\mathbb{M}}^{\mathbb{N}}) \Big] \\
        \leq & \sum_{t=1}^T \mathbb{E}_{U^{t-1}, W^{t-1}_{I_t}} \Big[  \frac{1}{2}  \log \Big\vert  \frac{\eta^2_t}{\sigma_t^2} \Sigma_{t} + \mathbf{1}_{(\vert I_t \vert +1 ) d   }\Big\vert  \Big] \\
        \leq & \sum_{t=1}^T  \frac{1}{2}  \log \Big\vert  \frac{\eta^2_t}{\sigma_t^2} \mathbb{E}_{U^{t-1}, W^{t-1}_{I_t}} [\Sigma_{t}] + \mathbf{1}_{(\vert I_t \vert +1 ) d   }\Big\vert.
    \end{align*}
Substituting the upper bound above into Theorem \ref{theorem3.1} with $\zeta =n, \xi= m$ yields the desirable generalization bound for meta-learning with joint in-task training and test dataset.
\end{proof}

\subsection{ Proof of Theorem \ref{theorem4.2}}

Before proving Theorem \ref{theorem4.2}, we first establish the MI-based generalization bound for MAML algorithm. Let the meta-dataset $T_{\mathbb{M}}^{\mathbb{N}}$ be randomly divided into inner training and test datasets, denoted by $T_{\mathbb{M},\mathrm{tr}}^{\mathbb{N}}$ and $T_{\mathbb{M},\mathrm{te}}^{\mathbb{N}}$ respectively. Further let $\vert T_{\mathbb{M},\mathrm{tr}}^{\mathbb{N}}\vert = m^{\mathrm{tr}}$, $\vert T_{\mathbb{M},\mathrm{te}}^{\mathbb{N}}\vert = m^{\mathrm{te}}$ and $m^{\mathrm{tr}} + m^{\mathrm{te}} =m$. The empirical meta-risk can then be rewritten by

\begin{equation*}
    \tilde{\mathcal{R}}(U, T_{\mathbb{M}}^{\mathbb{N}}) = \frac{1}{n}\sum_{i=1}^n \mathbb{E}_{P_{W_i|U, T_{\mathbb{M},\mathrm{tr}}^{i}}} \big[  \frac{1}{m^{\mathrm{te}}} \sum_{j=1}^{m^{\mathrm{te}}} \ell(U, W_i,Z^i_{j})\big].
\end{equation*}
The following theorem provides the generalization bound for meta-learning with in-task sample partitioning:
\begin{theorem}[Input-output MI bound] \label{theoremG.1}
        Let $\mathbb{K}$ and $\mathbb{J}=\{\mathbb{J}_{\mathrm{tr}},\mathbb{J}_{\mathrm{te}}\}$ be random subsets of $[n]$ and $[m]$ with sizes $\zeta$ and $\xi$, respectively, where $\vert \mathbb{J}_{\mathrm{tr}}\vert + \vert \mathbb{J}_{\mathrm{tr}}\vert =\xi^{\mathrm{tr}}+\xi^{\mathrm{te}} =\xi $. Assume that $\ell(u,\omega,Z)$, where $Z\sim P_{Z|T}, T\sim P_{\mathcal{T}}$ is $\sigma$-sub-gaussian for all $u\in\mathcal{U}, \omega\in\mathcal{W}$, then
        \begin{equation*}
           \vert \overline{\mathrm{gen}} \vert \leq  \mathbb{E}_{T_{\mathbb{M},\mathrm{tr}}^{\mathbb{N}}, K\sim \mathbb{K}, J \sim \mathbb{J}} \sqrt{\frac{2\sigma^2}{\zeta \xi^{\mathrm{te}}} I^{T_{J_\mathrm{tr}}^{K}}(U,W_{K}; T^{K}_{J_\mathrm{te}})}.
        \end{equation*}
    When $\zeta=n,\xi =m$, we have 
    \begin{equation*}
        \vert \overline{\mathrm{gen}} \vert \leq  \mathbb{E}_{T_{\mathbb{M},\mathrm{tr}}^{\mathbb{N}}} \sqrt{\frac{2\sigma^2}{nm^{\mathrm{te}}} I^{T_{\mathbb{M},\mathrm{tr}}^{\mathbb{N}}}(U,W_{\mathbb{N}}; T_{\mathbb{M},\mathrm{te}}^{\mathbb{N}})}\leq \sqrt{\frac{2\sigma^2}{nm^{\mathrm{te}}} I(U,W_{\mathbb{N}}; T_{\mathbb{M},\mathrm{te}}^{\mathbb{N}}|T_{\mathbb{M},\mathrm{tr}}^{\mathbb{N}})}.
     \end{equation*}
\end{theorem}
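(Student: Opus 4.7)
The plan is to mirror the single-step derivation used for Theorem \ref{theorem3.1}, but to condition throughout on the in-task training subset so as to exploit the fact that each task-specific parameter $W_i$ depends on $U$ and the inner training split only. This way the empirical meta-risk, which is evaluated on the test split, separates cleanly from the data that produced $W_i$, and the disintegrated mutual information $I^{T^K_{J_\mathrm{tr}}}(U,W_K;T^K_{J_\mathrm{te}})$ becomes the natural complexity term.

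First I fix realizations $K\sim\mathbb{K}$ and $J=(J_\mathrm{tr},J_\mathrm{te})\sim\mathbb{J}$, and define
\[
f(U,W_K,T^K_{J_\mathrm{te}}) \;=\; \frac{1}{\zeta}\sum_{i\in K}\frac{1}{\xi^{\mathrm{te}}}\sum_{j\in J_\mathrm{te}} \ell(U,W_i,Z^i_j).
\]
I then apply the Donsker--Varadhan variational formula (Lemma \ref{lemmaA.5}) conditioned on $T^K_{J_\mathrm{tr}}$, with $P=P_{U,W_K,T^K_{J_\mathrm{te}}\mid T^K_{J_\mathrm{tr}}}$ and $Q=P_{U,W_K\mid T^K_{J_\mathrm{tr}}}\,P_{\bar T^K_{J_\mathrm{te}}\mid T^K_{J_\mathrm{tr}}}$, where $\bar T^K_{J_\mathrm{te}}$ is an independent copy drawn from $P_{Z|\tau_K}$. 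Because the $\zeta\xi^{\mathrm{te}}$ test samples are conditionally i.i.d.\ given $\tau_K$ and each summand is $\sigma$-sub-gaussian, the centered $f$ is $\sigma/\sqrt{\zeta\xi^{\mathrm{te}}}$-sub-gaussian after invoking Lemma \ref{lemmaA.7}; optimizing the tilt parameter in the resulting variational inequality yields
\[
\Big|\mathbb{E}[f(U,W_K,T^K_{J_\mathrm{te}})]-\mathbb{E}[f(\bar U,W_K,\bar T^K_{J_\mathrm{te}})]\Big|
\;\le\; \sqrt{\tfrac{2\sigma^2}{\zeta\xi^{\mathrm{te}}}\, I^{T^K_{J_\mathrm{tr}}}(U,W_K;T^K_{J_\mathrm{te}})}.
\]

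Next I take expectation over $T^K_{J_\mathrm{tr}},K,J$ and use Jensen's inequality to move the absolute value inside and the square root outside. The decisive step is identifying the resulting LHS with $|\overline{\mathrm{gen}}|$: the term with the coupled test split produces $\mathbb{E}[\tilde{\mathcal{R}}(U,T_{\mathbb{M}}^{\mathbb{N}})]$ by the MAML-style definition of $\tilde{\mathcal{R}}$, while the independent-copy term integrates a fresh $Z\sim P_{Z|\tau_i}$ against the distribution of $(U,W_i)$ induced by the inner training split, which by the tower property equals $\mathbb{E}[\mathcal{R}(U,\mathcal{T})]$, exactly as in the derivation after equation (\ref{equ21}). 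The second inequality, for $\zeta=n$ and $\xi=m$, follows by Jensen's inequality on the concave square root together with the identity $I(U,W_{\mathbb{N}};T_{\mathbb{M},\mathrm{te}}^{\mathbb{N}}\mid T_{\mathbb{M},\mathrm{tr}}^{\mathbb{N}})=\mathbb{E}_{T_{\mathbb{M},\mathrm{tr}}^{\mathbb{N}}}[I^{T_{\mathbb{M},\mathrm{tr}}^{\mathbb{N}}}(U,W_{\mathbb{N}};T_{\mathbb{M},\mathrm{te}}^{\mathbb{N}})]$.

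The main obstacle is bookkeeping the conditional independences: the inner training samples must be held fixed when the test samples are resampled, since $W_i$ is a function of $U$ and $T_{\mathbb{M},\mathrm{tr}}^{i}$. Naively resampling the full in-task dataset (as in the joint paradigm of Theorem \ref{theorem3.1}) would break the correspondence between $(U,W_i)$ and the split over which the test expectation is taken, and the independent-copy calculation would no longer collapse to $\mathcal{R}_\mathcal{T}$. Once the conditioning is threaded through both $P$ and $Q$ in the Donsker--Varadhan step, everything else reduces to the sub-gaussian argument already used in the joint case, together with the standard chain-rule inequality $I^{T^K_{J_\mathrm{tr}}}(U,W_K;T^K_{J_\mathrm{te}})\le I(U,W_K;T^K_{J_\mathrm{te}}\mid T^K_{J_\mathrm{tr}})$ for the final simplification.
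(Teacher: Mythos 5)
Your overall template matches the paper's: condition on the in-task training split, apply Donsker--Varadhan to the test-split empirical risk, exploit sub-gaussianity, optimize the tilt, identify the two means with $\hat{\mathcal{R}}$ and $\mathcal{R}_{\mathcal{T}}$, then take expectations with Jensen and use $I(U,W_{\mathbb{N}};T^{\mathbb{N}}_{\mathbb{M},\mathrm{te}}\mid T^{\mathbb{N}}_{\mathbb{M},\mathrm{tr}})=\mathbb{E}_{T^{\mathbb{N}}_{\mathbb{M},\mathrm{tr}}}[I^{T^{\mathbb{N}}_{\mathbb{M},\mathrm{tr}}}(U,W_{\mathbb{N}};T^{\mathbb{N}}_{\mathbb{M},\mathrm{te}})]$. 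The genuine gap is in your choice of reference measure $Q=P_{U,W_K\mid T^K_{J_\mathrm{tr}}}\,P_{\bar T^K_{J_\mathrm{te}}\mid T^K_{J_\mathrm{tr}}}$. Under this $Q$ there is no independent copy of the meta-parameter: $(U,W_K)$ keeps its true conditional law given the observed training split, so $U$ stays correlated with the observed tasks, and the resampled test points are drawn from the posterior predictive $P_{T^K_{J_\mathrm{te}}\mid T^K_{J_\mathrm{tr}}}$, i.e.\ from those same tasks. The decoupled mean is then the expected loss of the trained $(U,W_K)$ on fresh samples from the \emph{observed} tasks, not $\mathcal{R}_{\mathcal{T}}$, which evaluates $U$ on a task drawn from $P_{\mathcal{T}}$ independently of $U$ and adapts $W$ on that fresh task's data. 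There is no general ordering between the two (task-level overfitting makes the former smaller), so the claimed identification ``by the tower property'' fails, and your inequality would control only the within-observed-task portion of the gap. The paper's $Q= P_{\tau_K,T^{K}_{J_{\mathrm{te}}}} P_{\bar{U}}P_{W_K\mid U,T^{K}_{J_{\mathrm{tr}}}}P_{\bar{T}^{K}_{J_{\mathrm{te}}}\mid\tau_K}$ is built precisely to avoid this: it carries $\tau_K$ explicitly, draws an independent copy $\bar U\sim P_U$, re-adapts $W_K$ through the kernel $P_{W_K\mid U,T^K_{J_\mathrm{tr}}}$ evaluated at the decoupled meta-parameter, and resamples the test split i.i.d.\ from $P_{Z\mid\tau_K}$, so the decoupled mean collapses to $\mathbb{E}[\mathcal{R}(U,\mathcal{T})]$ and the KL equals $I^{T^{K}_{J_{\mathrm{tr}}}}(U;T^{K}_{J_{\mathrm{te}}}) + I(W_K;T^{K}_{J_{\mathrm{te}}}\mid\tau_K,U)$, subsequently bounded by $I^{T^{K}_{J_{\mathrm{tr}}}}(U,W_K;T^{K}_{J_{\mathrm{te}}})$.

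A second, related defect is the variance-proxy step: you justify the $\sigma/\sqrt{\zeta\xi^{\mathrm{te}}}$ rate by saying the test samples are ``conditionally i.i.d.\ given $\tau_K$,'' but $\tau_K$ is nowhere in your conditioning. Under $P_{\bar T^K_{J_\mathrm{te}}\mid T^K_{J_\mathrm{tr}}}$ the $\xi^{\mathrm{te}}$ samples of a task are exchangeable yet coupled through the unobserved $\tau_i$, so their average need not be $\sigma/\sqrt{\zeta\xi^{\mathrm{te}}}$-sub-gaussian (and the assumption gives sub-gaussianity under $Z\sim P_{Z\mid\tau},\ \tau\sim P_{\mathcal{T}}$, not under the posterior mixture). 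The tell-tale symptom is the $\bar U$ appearing in your displayed inequality even though your $Q$ never introduces one. Both problems are repaired simultaneously by adopting the paper's factorization of $Q$; your final Jensen step and the disintegration identity for the second display are fine as stated.
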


\begin{proof}
  Analogous to the proof of Theorem \ref{theorem3.1}, let random subsets $\mathbb{K},\mathbb{J}$ be fixed to $\mathbb{K}=K,\mathbb{J}=J$ with size $\zeta$ and $\xi$, where $J=\{J_{\mathrm{tr}},J_{\mathrm{te}}\}$ and $\vert J_{\mathrm{tr}}\vert + \vert J_{\mathrm{tr}}\vert =\xi^{\mathrm{tr}}+\xi^{\mathrm{te}} =\xi$. Let $T^{K}_{J_{\mathrm{tr}}}=\{T^i_{J_{\mathrm{tr}}}\}_{i\in K}, T^{K}_{J_{\mathrm{te}}}=\{T^i_{J_{\mathrm{te}}}\}_{i\in K} \subseteq T^{\mathbb{N}}_{\mathbb{M}}$ where $T^i_{J_{\mathrm{tr}}} = \{Z^i_j\}_{j\in J_{\mathrm{tr}}}, T^i_{J_{\mathrm{te}}} = \{Z^i_j\}_{j\in J_{\mathrm{te}}}$, $W_K = \{W_i\}_{i\in K}$, and 
    \begin{equation*}
        f(U,W_K, T^{K}_{J_{\mathrm{te}}}) = \frac{1}{\zeta} \sum_{i\in K} \frac{1}{\xi^{\mathrm{te}}}\sum_{j\in J_{\mathrm{te}}}\ell(U, W_i, Z^i_j).
    \end{equation*}
    Let $\bar{U}$ and $\bar{T}^{\mathbb{N}}_{\mathbb{M}}$ be independent copy of $U$ and $T^{\mathbb{N}}_{\mathbb{M}}$. Let us condition on $T^{K}_{J_{\mathrm{tr}}}$. Applying Lemma \ref{lemmaA.5} with $P= P_{\tau_K,T^{K}_{J_{\mathrm{te}}}}P_{U|T^{K}_{J_{\mathrm{te}}}}P_{W_{K}|U,T^{K}_{J_{\mathrm{tr}}}}$, $Q=  P_{\tau_K,T^{K}_{J_{\mathrm{te}}}} P_{\bar{U}}P_{W|U,T^{K}_{J_{\mathrm{tr}}}}P_{\bar{T}^{K}_{J_{\mathrm{te}}}|\tau_K}$, and $f = f(U,W_K, T^{K}_{J_{\mathrm{te}}})$, we get that 
    \begin{align}
        &  D( P_{\tau_K,T^{K}_{J_{\mathrm{te}}}}P_{U|T^{K}_{J_{\mathrm{te}}}}P_{W_{K}|U,T^{K}_{J_{\mathrm{tr}}}} \Vert  P_{\tau_K,T^{K}_{J_{\mathrm{te}}}} P_{\bar{U}}P_{W|U,T^{K}_{J_{\mathrm{tr}}}}P_{\bar{T}^{K}_{J_{\mathrm{te}}}|\tau_K})\nonumber\\
        \geq & \sup_{\lambda}\Big\{\lambda\Big(\mathbb{E}_{\tau_K,T^{K}_{J_{\mathrm{te}}},U,W_{K}} [f(U,W_K, T^{K}_{J_{\mathrm{te}}})] 
         - \mathbb{E}_{\tau_K,T^{K}_{J_{\mathrm{te}}}}\mathbb{E}_{\bar{U}, \bar{T}^{K}_{J_{\mathrm{te}}}|\tau_K}\mathbb{E}_{W|U,T^{K}_{J_{\mathrm{tr}}}} [f(\bar{U},W_K, \bar{T}^{K}_{J_{\mathrm{te}}})] \Big) \nonumber\\
        & -  \log\mathbb{E}_{\tau_K,T^{K}_{J_{\mathrm{te}}}}\mathbb{E}_{\bar{U}, \bar{T}^{K}_{J_{\mathrm{te}}}|\tau_K}\mathbb{E}_{W|U,T^{K}_{J_{\mathrm{tr}}}} \Big[e^{\lambda \big(f(\bar{U},W_K, \bar{T}^{K}_{J_{\mathrm{te}}}) - \mathbb{E}[f(\bar{U},W_K, \bar{T}^{K}_{J_{\mathrm{te}}})]\big)} \Big] \Big\}, \label{equ51}
    \end{align}
    where 
    \begin{align*}
        & D( P_{\tau_K,T^{K}_{J_{\mathrm{te}}}}P_{U|T^{K}_{J_{\mathrm{te}}}}P_{W_{K}|U,T^{K}_{J_{\mathrm{tr}}}} \Vert  P_{\tau_K,T^{K}_{J_{\mathrm{te}}}} P_{\bar{U}}P_{W|U,T^{K}_{J_{\mathrm{tr}}}}P_{\bar{T}^{K}_{J_{\mathrm{te}}}|\tau_K}) \nonumber\\
         =& D(P_{\tau_K,T^{K}_{J_{\mathrm{te}}}}P_{U|T^{K}_{J_{\mathrm{te}}}} \Vert P_{U}P_{\bar{T}^{K}_{J_{\mathrm{te}}}|\tau_K} ) + D(P_{W_K|U,T^{K}_{J_{\mathrm{tr}}}}\Vert P_{W_K|U,\tau_K} | P_{\tau_K,T^{K}_{J_{\mathrm{te}}}}P_{U}) \nonumber\\
        =& I^{T^{K}_{J_{\mathrm{tr}}}}(U;T^{K}_{J_{\mathrm{te}}}) + I(W_K;T^{K}_{J_{\mathrm{te}}}|\tau_K,U)
    \end{align*}
    By the sub-Gaussian property of the loss function, it is clear that the random variable $f(\bar{U},W_K, \bar{T}^{K}_{J_{\mathrm{te}}})$ is $\frac{\sigma}{\sqrt{\zeta \xi^{\mathrm{te}}}}$-sub-gaussian, which implies that 
    \begin{align*}
        \log\mathbb{E}_{\tau_K,T^{K}_{J_{\mathrm{te}}}}\mathbb{E}_{\bar{U}, \bar{T}^{K}_{J_{\mathrm{te}}}|\tau_K}\mathbb{E}_{W|U,T^{K}_{J_{\mathrm{tr}}}} \Big[e^{\lambda \big(f(\bar{U},W_K, \bar{T}^{K}_{J_{\mathrm{te}}}) - \mathbb{E}[f(\bar{U},W_K, \bar{T}^{K}_{J_{\mathrm{te}}})]\big)} \Big] \leq \frac{\lambda^2\sigma^2}{2\zeta \xi^{\mathrm{te}}}.
    \end{align*}
    Putting the above back into (\ref{equ51}), we have
    \begin{align*}
        & I^{T^{K}_{J_{\mathrm{tr}}}}(U;T^{K}_{J_{\mathrm{te}}}) + I(W_K;T^{K}_{J_{\mathrm{te}}}|\tau_K,U) \\
        \geq& \sup_{\lambda}\Big\{\lambda\Big(\mathbb{E}_{\tau_K,T^{K}_{J_{\mathrm{te}}}} [f(U,W_K, T^{K}_{J_{\mathrm{te}}})] - \mathbb{E}_{\tau_K,T^{K}_{J_{\mathrm{te}}}}\mathbb{E}_{\bar{U}, \bar{T}^{K}_{J_{\mathrm{te}}}|\tau_K}\mathbb{E}_{W|U,T^{K}_{J_{\mathrm{tr}}}} [f(\bar{U},W_K, \bar{T}^{K}_{J_{\mathrm{te}}})] \Big) - \frac{\lambda^2\sigma^2}{2\zeta \xi^{\mathrm{te}}} \Big\}. 
    \end{align*}
    Solving $\lambda$ to maximize this inequality, we obtain that 
    \begin{align*}
        &\Big\vert\mathbb{E}_{\tau_K,T^{K}_{J_{\mathrm{te}}},U,W_{K}} [f(U,W_K, T^{K}_{J_{\mathrm{te}}})] - \mathbb{E}_{\tau_K,T^{K}_{J_{\mathrm{te}}}}\mathbb{E}_{\bar{U}, \bar{T}^{K}_{J_{\mathrm{te}}}|\tau_K}\mathbb{E}_{W|U,T^{K}_{J_{\mathrm{tr}}}} [f(\bar{U},W_K, \bar{T}^{K}_{J_{\mathrm{te}}})] \Big\vert \\
        \leq & \sqrt{\frac{2\sigma^2}{\zeta \xi} \Big(I^{T^{K}_{J_{\mathrm{tr}}}}(U;T^{K}_{J_{\mathrm{te}}}) + I(W_K;T^{K}_{J_{\mathrm{te}}}|\tau_K,U)\Big)} \\
        \leq & \sqrt{\frac{2\sigma^2}{\zeta \xi} I^{T^{K}_{J_{\mathrm{tr}}}}(U,W_K;T^{K}_{J_{\mathrm{te}}})}.
    \end{align*}
 Taking expectation over $T_{\mathbb{M},\mathrm{tr}}^{\mathbb{N}}$ and $K,J$ on both sides, and then applying Jensen's inequality on the absolute value function, we have 
    \begin{align}
        &\Big\vert\mathbb{E}_{\tau_\mathbb{N},T_{\mathbb{M}}^{\mathbb{N}},U,W_\mathbb{N}} [f(U,W_\mathbb{N}, T^{\mathbb{N}}_{\mathbb{M},{\mathrm{te}}})] - \mathbb{E}_{\tau_\mathbb{N},T_{\mathbb{M}}^{\mathbb{N}}}\mathbb{E}_{\bar{U}, \bar{T}^{\mathbb{N}}_{\mathbb{M}_{\mathrm{te}}}|\tau_\mathbb{N}}\mathbb{E}_{W|U,T^{\mathbb{N}}_{\mathbb{M},{\mathrm{tr}}}} [f(\bar{U},W_\mathbb{N}, \bar{T}^{\mathbb{N}}_{\mathbb{M},{\mathrm{te}}})] \Big\vert \nonumber \\
        \leq  &\mathbb{E}_{T_{\mathbb{M},\mathrm{tr}}^{\mathbb{N}},K\sim \mathbb{K}, J\sim\mathbb{J}} \sqrt{\frac{2\sigma^2I^{T^{K}_{J_{\mathrm{tr}}}}(U,W_K; T^{K}_{J_{\mathrm{te}}})}{\zeta\xi^{\mathrm{te}}}}. \label{b39}
    \end{align}
 Similar to the proof of the inequality (\ref{equ7}), it is easy to prove that the LSH of the above inequality is equivalent to the absolute value of the meta-generalization gap. This completes the proof.

\end{proof}

\begin{restatetheorem}{\ref{theorem4.2}}
    Let $T_{\mathbb{M}}^{\mathbb{N}}=\{T_{\mathbb{M},\mathrm{tr}}^{\mathbb{N}}, T_{\mathbb{M},\mathrm{te}}^{\mathbb{N}}\}$ consist of separate within-task training and test datasets, where $\vert T_{\mathbb{M},\mathrm{tr}}^{\mathbb{N}}\vert = m^{\mathrm{tr}}$, $\vert T_{\mathbb{M},\mathrm{te}}^{\mathbb{N}}\vert = m^{\mathrm{te}}$ and $m^{\mathrm{tr}} + m^{\mathrm{te}} =m$.   Assume that $\ell(\cdot,\cdot,\cdot)\in [0,1]$, then 
    \begin{align*}
        \vert \overline{\mathrm{gen}} \vert \leq  \frac{1}{\sqrt{nm^{\mathrm{te}}}} \sqrt{\sum_{t=1}^T \log \Big\vert \frac{\beta_t^2}{\sigma^2_t} \mathbb{E}_{U^{t-1}} [\Sigma^{\mathrm{tr}}_t]  + \mathbf{1}_{\vert I_t \vert d   } \Big\vert + \log \Big\vert \frac{\eta_t^2}{\sigma^2_t} \mathbb{E}_{U^{t-1}, W^t_{I_t}} [\Sigma^{\mathrm{te}}_t]  + \mathbf{1}_{d} \Big\vert},
     \end{align*}
     where  $\Sigma^{\mathrm{te}}_t = \mathrm{Cov}_{T_{\mathbb{M},\mathrm{te}}^{\mathbb{N}}, J_{I_t}^{\mathrm{te}}}[-\frac{1}{\vert I_t\vert} \sum_{i\in I_t} \tilde{G}^{\mathrm{te},t}_{W_{i}}]$ and $\Sigma^{\mathrm{tr}}_t = \mathrm{Cov}_{T_{\mathbb{M},\mathrm{tr}}^{\mathbb{N}}, J_{I_t}^{\mathrm{tr}}}[\tilde{G}^{\mathrm{tr},t}_{W_{I_t}}]$.
\end{restatetheorem}

\begin{proof}
    Let the meta-dataset $T_{\mathbb{M}}^{\mathbb{N}}$ be randomly divided into inner training and test datasets, denoted by $ T_{\mathbb{M},\mathrm{tr}}^{\mathbb{N}}$ and $T_{\mathbb{M},\mathrm{te}}^{\mathbb{N}}$ respectively, and let $\tilde{N}_{I_t}^t=\{\tilde{N}_t\}^{\vert I_t\vert}$ be the isotropic Gaussian noise for $t\in[T]$. Leveraging the Markov structure, we have 
    \begin{align*}
        T_{\mathbb{M},\mathrm{tr}}^{\mathbb{N}}\rightarrow  J_{I_T}^{\mathrm{tr}} \rightarrow  (W^{T-1}_{I_T} , \beta_T \tilde{G}_{W_{I_T}}^{\mathrm{tr},T} + \tilde{N}_{I_T}^T)  \rightarrow W^{T}_{I_T}  \\
        T_{\mathbb{M},\mathrm{te}}^{\mathbb{N}} \rightarrow J_{I_T}^{\mathrm{te}} \rightarrow (U^{T-1} , \eta_T \tilde{G}_{U}^T + \tilde{N}_T) \rightarrow U^T,  \\
        T_{\mathbb{M},\mathrm{tr}}^{\mathbb{N}} \rightarrow  (W_{\mathbb{N}} , U) \leftarrow T_{\mathbb{M},\mathrm{te}}^{\mathbb{N}}.
    \end{align*}
By applying the data-processing inequality on the above Markov chain, we obtain that 
\begin{align}
    I(U,W_{\mathbb{N}}; T_{\mathbb{M},\mathrm{te}}^{\mathbb{N}}| T_{\mathbb{M},\mathrm{tr}}^{\mathbb{N}})  \leq &  I\big(U^{T-1} + \eta_T \tilde{G}_{U}^T + \tilde{N}_T ,W^{T}_{I_T}; T_{\mathbb{M},\mathrm{te}}^{\mathbb{N}} | T_{\mathbb{M},\mathrm{tr}}^{\mathbb{N}}   \big) \nonumber\\
    \leq & I\big( U^{T-1} , \eta_T \tilde{G}_{U}^T + \tilde{N}_T , W^{T}_{I_T}; T_{\mathbb{M},\mathrm{te}}^{\mathbb{N}}   |T_{\mathbb{M},\mathrm{tr}}^{\mathbb{N}} \big) \nonumber\\ 
    = & I \big(U^{T-1} , W^{T}_{I_T}; T_{\mathbb{M},\mathrm{te}}^{\mathbb{N}} | T_{\mathbb{M},\mathrm{tr}}^{\mathbb{N}} \big) + I(\eta_T \tilde{G}_{U}^T + \tilde{N}_T ; T_{\mathbb{M},\mathrm{te}}^{\mathbb{N}} | U^{T-1} , W^{T}_{I_T}, T_{\mathbb{M},\mathrm{tr}}^{\mathbb{N}}) \nonumber\\
    = & I\big(U^{T-1} , W^{T-1}_{I_{T}} + \beta_{T} \tilde{G}_{W_{I_T}}^{\mathrm{tr},T} +\tilde{N}^T_{I_T} ; T_{\mathbb{M},\mathrm{te}}^{\mathbb{N}} | T_{\mathbb{M},\mathrm{tr}}^{\mathbb{N}} \big)\nonumber\\
    & + I\big(\eta_T \tilde{G}_{U}^T + \tilde{N}_T ; T_{\mathbb{M},\mathrm{te}}^{\mathbb{N}} | U^{T-1} , W^{T}_{I_T}, T_{\mathbb{M},\mathrm{tr}}^{\mathbb{N}} \big) \nonumber\\
    \leq & I\big(U^{T-1} , W^{T-1}_{I_T} ; T_{\mathbb{M},\mathrm{te}}^{\mathbb{N}} |T_{\mathbb{M},\mathrm{tr}}^{\mathbb{N}} \big) \nonumber\\
    & + I\big( \beta_{T} \tilde{G}_{W_{I_T}}^{\mathrm{tr},T} +\tilde{N}^T_{I_T} ; T_{\mathbb{M},\mathrm{te}}^{\mathbb{N}} | U^{T-1} , W^{T-1}_{I_T}, T_{\mathbb{M},\mathrm{tr}}^{\mathbb{N}} \big) \nonumber\\
    &+ I \big(\eta_T \tilde{G}_{U}^T + \tilde{N}_T ; T_{\mathbb{M},\mathrm{te}}^{\mathbb{N}} | U^{T-1} , W^{T}_{I_T}| T_{\mathbb{M},\mathrm{tr}}^{\mathbb{N}} \big) \nonumber\\
    & \cdots \nonumber\\
    \leq & \sum_{t=1}^T \Big(I \big( \beta_{t} \tilde{G}_{W_{I_t}}^{\mathrm{tr},t} +\tilde{N}^t_{I_t} ; T_{\mathbb{M},\mathrm{te}}^{\mathbb{N}} | U^{t-1} , W^{t-1}_{I_{t}}, T_{\mathbb{M},\mathrm{tr}}^{\mathbb{N}}   \big) \nonumber\\
    & + I \big(\eta_t \tilde{G}_{U}^t + \tilde{N}_t ; T_{\mathbb{M},\mathrm{te}}^{\mathbb{N}} | U^{t-1} , W^{t}_{I_t}, T_{\mathbb{M},\mathrm{tr}}^{\mathbb{N}} \big) \Big). \label{equ48}
\end{align}
Since $\tilde{N}_t$ is independent of $T_{\mathbb{M}}^{\mathbb{N}}$ and $J_{I_T}$, we then have
\begin{equation*}
    \mathrm{Cov}_{T_{\mathbb{M},\mathrm{te}}^{\mathbb{N}}, J_{I_T}^{\mathrm{te}}, \tilde{N}_{t}}[\eta_t \tilde{G}_{U}^t + \tilde{N}_t ] = \eta_t^2 \Sigma^{\mathrm{te}}_t + \sigma_t^2 \mathbf{1}_d,
\end{equation*}
\begin{equation*}
    \mathrm{Cov}_{T_{\mathbb{M},\mathrm{tr}}^{\mathbb{N}}, J_{I_T}^{\mathrm{tr}}, \tilde{N}^t_{I_t}}[\beta_{t} \tilde{G}_{W_{I_T}}^{\mathrm{tr},T} +\tilde{N}^t_{I_t} ] = \beta_t^2 \Sigma^{\mathrm{tr}}_t + \sigma_t^2 \mathbf{1}_{\vert  J_t\vert d}.
\end{equation*}

Further using Lemma \ref{lemmaA.9}, we get 
\begin{align} 
    &I^{U^{t-1} , W^{t-1}_{I_{t}}, T_{\mathbb{M},\mathrm{tr}}^{\mathbb{N}} } \big( \beta_{t} \tilde{G}_{W_{I_t}}^{\mathrm{tr},t} +\tilde{N}^t_{I_t} ; T_{\mathbb{M},\mathrm{te}}^{\mathbb{N}} \big)     \nonumber\\
    = & H(\beta_{t} \tilde{G}_{W_{I_t}}^{\mathrm{tr},t}  + \tilde{N}^t_{I_t} | U^{t-1} , W^{t-1}_{I_{t}} , T_{\mathbb{M},\mathrm{tr}}^{\mathbb{N}}) - H(\beta_{t} \tilde{G}_{W_{I_t}}^{\mathrm{tr},t}  + \tilde{N}^t_{I_t} |   W^{t-1}_{I_{t}} , U^{t-1} , T_{\mathbb{M},\mathrm{tr}}^{\mathbb{N}}, T_{\mathbb{M},\mathrm{te}}^{\mathbb{N}}) \nonumber\\
    \leq & H(\beta_{t} \tilde{G}_{W_{I_t}}^{\mathrm{tr},t}  + \tilde{N}^t_{I_t} | U^{t-1} , W^{t-1}_{I_{t}} , T_{\mathbb{M},\mathrm{tr}}^{\mathbb{N}}) -  H(\beta_{t} \tilde{G}_{W_{I_t}}^{\mathrm{tr},t}  + \tilde{N}^t_{I_t} |   W^{t-1}_{I_{t}} , U^{t-1} , T_{\mathbb{M},\mathrm{tr}}^{\mathbb{N}}, T_{\mathbb{M},\mathrm{te}}^{\mathbb{N}}, J_{I_T}^{\mathrm{tr}}) \nonumber\\
    = &H(\beta_{t} \tilde{G}_{W_{I_t}}^{\mathrm{tr},t}  + \tilde{N}^t_{I_t} | U^{t-1} , W^{t-1}_{I_{t}} , T_{\mathbb{M},\mathrm{tr}}^{\mathbb{N}})  -  H( \tilde{N}^t_{I_t}) \nonumber\\
    \leq & \frac{\vert J_t\vert d}{2} \log(2\pi e) + \frac{1}{2}\log \big\vert \beta_t^2 \Sigma^{\mathrm{tr}}_t + \sigma_t^2 \mathbf{1}_{\vert  J_t \vert d} \big\vert - \frac{\vert J_t\vert d}{2} \log(2 \pi e \sigma_t^2) \nonumber\\
    \leq & \frac{1}{2}\log \bigg\vert \frac{\beta_t^2}{\sigma_t^2} \Sigma^{\mathrm{tr}}_t +  \mathbf{1}_{\vert  J_t\vert d} \bigg\vert, \label{equ49}
\end{align}
and 
\begin{align}
  & I^{U^{t-1} , W^{t}_{I_t}, T_{\mathbb{M},\mathrm{tr}}^{\mathbb{N}} } \big(\eta_t \tilde{G}_{U}^t + \tilde{N}_t ; T_{\mathbb{M},\mathrm{te}}^{\mathbb{N}} \big) \nonumber\\
    \leq & H(\eta_t \tilde{G}_{U}^t + \tilde{N}_t| U^{t-1} , W^{t}_{I_t}, T_{\mathbb{M},\mathrm{tr}}^{\mathbb{N}}) - H(\eta_t \tilde{G}_{U}^t + \tilde{N}_t| U^{t-1} , W^{t}_{I_t}, T_{\mathbb{M},\mathrm{tr}}^{\mathbb{N}}, T_{\mathbb{M},\mathrm{te}}^{\mathbb{N}}, J_{I_t}^{\mathrm{te}}) \nonumber\\
    = & H(\eta_t \tilde{G}_{U}^t + \tilde{N}_t| U^{t-1} , W^{t}_{I_t}, T_{\mathbb{M},\mathrm{tr}}^{\mathbb{N}}) - H(\tilde{N}_t) \nonumber\\
    \leq & \frac{ d}{2} \log(2\pi e) + \frac{1}{2}\log \big\vert \eta_t^2 \Sigma^{\mathrm{te}}_t + \sigma_t^2 \mathbf{1}_{ d} \big\vert - \frac{ d}{2} \log(2 \pi e \sigma_t^2) \nonumber\\
    \leq & \frac{1}{2}\log \bigg\vert \frac{\eta_t^2}{\sigma_t^2} \Sigma^{\mathrm{te}}_t +  \mathbf{1}_{d} \bigg\vert. \label{equ50}
\end{align}
Plugging (\ref{equ49}) and (\ref{equ50}) into (\ref{equ48}), we have 

\begin{align*}
  & I(U,W_{\mathbb{N}}; T_{\mathbb{M},\mathrm{te}}^{\mathbb{N}}| T_{\mathbb{M},\mathrm{tr}}^{\mathbb{N}})  \\
    \leq & \sum_{t=1}^T \Big(I \big( \beta_{t} \tilde{G}_{W_{I_t}}^{\mathrm{tr},t} +\tilde{N}^t_{I_t} ; T_{\mathbb{M},\mathrm{te}}^{\mathbb{N}} | U^{t-1} , W^{t-1}_{I_{t}}, T_{\mathbb{M},\mathrm{tr}}^{\mathbb{N}}   \big)  + I \big(\eta_t \tilde{G}_{U}^t + \tilde{N}_t ; T_{\mathbb{M},\mathrm{te}}^{\mathbb{N}} | U^{t-1} , W^{t}_{I_t}, T_{\mathbb{M},\mathrm{tr}}^{\mathbb{N}} \big) \Big) \\
    = & \sum_{t=1}^T \Big(\mathbb{E}_{U^{t-1} , W^{t-1}_{I_{t}}, T_{\mathbb{M},\mathrm{tr}}^{\mathbb{N}}  }\Big[I^{U^{t-1} , W^{t-1}_{I_{t}}, T_{\mathbb{M},\mathrm{tr}}^{\mathbb{N}}  } \big( \beta_{t} \tilde{G}_{W_{I_t}}^{\mathrm{tr},t} +\tilde{N}^t_{I_t} ; T_{\mathbb{M},\mathrm{te}}^{\mathbb{N}} \big) \Big]   \\
    & + \mathbb{E}_{U^{t-1} , W^{t}_{I_t}, T_{\mathbb{M},\mathrm{tr}}^{\mathbb{N}}} \Big[ I^{U^{t-1} , W^{t}_{I_t}, T_{\mathbb{M},\mathrm{tr}}^{\mathbb{N}}} \big(\eta_t \tilde{G}_{U}^t + \tilde{N}_t ; T_{\mathbb{M},\mathrm{te}}^{\mathbb{N}} \big) \Big] \Big)\\
    \leq &  \sum_{t=1}^T \mathbb{E}_{U^{t-1} , W^{t-1}_{I_{t}}, T_{\mathbb{M},\mathrm{tr}}^{\mathbb{N}}  } \bigg[\frac{1}{2}\log \bigg\vert \frac{\beta_t^2}{\sigma_t^2} \Sigma^{\mathrm{tr}}_t +  \mathbf{1}_{\vert I_t\vert d} \bigg\vert \bigg]  + \sum_{t=1}^T \mathbb{E}_{U^{t-1} , W^{t}_{I_t}, T_{\mathbb{M},\mathrm{tr}}^{\mathbb{N}}} \bigg[\frac{1}{2}\log \bigg\vert \frac{\eta_t^2}{\sigma_t^2} \Sigma^{\mathrm{te}}_t +  \mathbf{1}_{d} \bigg\vert \bigg] \\
    \leq & \sum_{t=1}^T  \bigg[\frac{1}{2}\log \bigg\vert \frac{\beta_t^2}{\sigma_t^2} \mathbb{E}_{U^{t-1} } [\Sigma^{\mathrm{tr}}_t] +  \mathbf{1}_{\vert  I_t\vert d} \bigg\vert \bigg]  + \sum_{t=1}^T  \bigg[\frac{1}{2}\log \bigg\vert \frac{\eta_t^2}{\sigma_t^2} \mathbb{E}_{U^{t-1} , W^t_{I_t}}[\Sigma^{\mathrm{te}}_t] +  \mathbf{1}_{d} \bigg\vert \bigg]. 
\end{align*}
Substituting the above inequality into Theorem \ref{theoremG.1} with $\zeta=n,\xi =m$, this completes the proof.

\end{proof}

\vskip 0.2in
\bibliography{reference}

\end{document}